\newcommand{\qedAtBottomLine}{\ensuremath{\\[-0.82\baselineskip] & \qedhere}}
\newtheorem{mythm}{Theorem}%[section]
\newtheorem{mycor}[mythm]{Corollary}%[counter]
\newtheorem{mylem}[mythm]{Lemma}%[counter]
\newtheorem{mydef}[mythm]{Definition}%[section]%[counter]
\newtheorem{myconjecture}[mythm]{Conjecture}%[section]%[counter]
\newtheorem{myRemark}[mythm]{Remark}
\def\cleartheorem#1{%
    \expandafter\let\csname#1\endcsname\relax
    \expandafter\let\csname c@#1\endcsname\relax
}
\newcommand{\secref}[1]{\text{Sec.}~\ref{#1}}
\newcommand{\appref}[1]{\ref{#1}}
\newcommand{\figref}[1]{\text{Fig.}~\ref{#1}}
\newcommand{\algoref}[1]{\text{Algorithm}~\ref{#1}}
\newcommand{\thmref}[1]{\text{Theorem}~\ref{#1}}
\newcommand{\lemref}[1]{\text{Lemma}~\ref{#1}}
\newcommand{\corollaryref}[1]{\text{Corollary~\ref{#1}}}
\newcommand{\propref}[1]{\text{Proposition}~\ref{#1}}
\newcommand{\defref}[1]{\text{Definition}~\ref{#1}}
\newcommand{\remref}[1]{\text{Remark}~\ref{#1}}
\newcommand\Algphase[1]{%
\vspace*{-.7\baselineskip}\Statex\hspace*{\dimexpr-\algorithmicindent-2pt\relax}\hdashrule[-0.4ex]{1.\textwidth}{0.4pt}{4pt 3pt}%
\Statex\hspace*{-\algorithmicindent}\textbf{#1}%
\vspace*{-.7\baselineskip}\Statex\hspace*{\dimexpr-\algorithmicindent-2pt\relax}\hdashrule{1.\textwidth}{0.4pt}{4pt 3pt}%
}
\newcommand{\isHomoscedastic}{\textbf{homoscedastic}\xspace}
\algnewcommand{\LineComment}[1]{\State \(\triangleright\) #1}
\newcommand{\fig}[5]{\begin{figure}[tb]
  \centering
  \includegraphics[width=#2\linewidth]{#1}
  \vspace*{-#4}
  \caption{#3}
  \vspace*{-#5}
  \label{fig:#1}
\end{figure}
}
\DeclareRobustCommand*{\showDifferenceDanny}[2]{\ifthenelse{\equal{#1}{}}{}{\textcolor{red}{\st{#1}}}\xspace \ifthenelse{\equal{#2}{}}{}{\textcolor{blue}{#2}}\xspace}
\DeclareRobustCommand*{\showDifferenceShin}[2]{\ifthenelse{\equal{#1}{}}{}{\textcolor{red}{\st{#1}}} \ifthenelse{\equal{#2}{}}{}{\textcolor{magenta}{#2}}\xspace}
\DeclareRobustCommand*{\showDifferenceKlaus}[2]{\ifthenelse{\equal{#1}{}}{}{\textcolor{red}{\st{#1}}} \ifthenelse{\equal{#2}{}}{}{\textcolor{green}{#2}}\xspace}
\newcommand{\mySum}[2]{\sideset{}{_{#1}^{#2}}\sum}
\DeclareMathOperator*{\Prob}{\mathds{P}}
\DeclareMathOperator*{\R}{\mathds{R}}
\newcommand{\reals}[1]{\ensuremath{\R^{#1}_{}}\xspace}
\DeclareMathOperator*{\N}{\mathds{N}}
\DeclareMathOperator*{\E}{\mathds{E}}
\DeclareMathOperator*{\Bias}{\mathds{B}}
\DeclareMathOperator*{\Var}{\mathds{V}}
\DeclareMathOperator*{\indicator}{\scalerel*{\mathbbmss{1}}{\textstyle\sum}}
\newcommand{\indicatorFunction}[2]{\ensuremath{\sideset{}{_{#1}^{}}\indicator(#2)}}
\newcommand{\idMatrix}[1]{\ensuremath{\mathcal{I}_{#1}}}
\newcommand{\ones}[1]{\ensuremath{\mathbbmss{1}_{#1}^{}}}
\newcommand{\onesT}[1]{\ensuremath{\mathbbmss{1}_{#1}^\top}}
\newcommand{\abs}[1]{\left|#1\right|}
\newcommand{\pnorm}[2]{\|#1\|_{\raisebox{-2pt}{\tiny\ensuremath{#2}}}}
\DeclareMathOperator*{\argmin}{\textbf{arg\hspace{0.1em}min}}
\newcommand{\sign}{\textbf{\textit{sgn}}\xspace}
\newcommand{\diag}{\textbf{\textit{diag}}\xspace}
\newcommand{\vect}{\textbf{\textit{vec}}\xspace}
\newcommand{\trace}{\textbf{\textit{trace}}\xspace}
\newcommand{\vol}{\textbf{\textit{Vol}}\xspace}
\newcommand{\symmetricSet}[1]{\ensuremath{\mathbb{S}^{#1}}\xspace}
\newcommand{\nonnegativeReal}[1]{\ensuremath{\mathds{R}^{#1}_{\raisebox{0pt}{\tiny\ensuremath{+}}}}\xspace}
\newcommand{\posDefSet}[1]{\ensuremath{\mathbb{S}^{#1}_{\raisebox{0pt}{\tiny\ensuremath{++}}}}\xspace}
\newcommand{\positiveReal}[1]{\ensuremath{\mathds{R}^{#1}_{\raisebox{0pt}{\tiny\ensuremath{++}}}}\xspace}
\newcommand{\inputSpace}{\ensuremath{\mathcal{X}}\xspace}
\newcommand{\inputSpaceInterior}{\ensuremath{\mathcal{X}^\circ}\xspace}
\newcommand{\effectiveInputSpaceInterior}{\ensuremath{\mathcal{X}^\circ_n}\xspace}
\newcommand{\supportInteriorStdDevs}{\ensuremath{\mathfrak{s}}\xspace}
\newcommand{\uniformDist}[1]{\ensuremath{\mathcal{U}(#1)}\xspace}
\newcommand{\Gauss}[3]{\ensuremath{\mathcal{N}(#1; #2, #3)}\xspace}
\newcommand{\fctn}[3]{\ensuremath{#1\!:#2\,\rightarrow\,#3}\xspace}
\newcommand{\condset}[2]{\ensuremath{\left\{#1\;\middle|\;#2\right\}}\xspace}
\newcommand{\diffableFunctions}[2]{\ensuremath{\mathcal{C}^{#2}\left(#1\right)}\xspace}
\newcommand{\relSampleSize}[2]{\raisebox{1pt}{\ensuremath{\varrho}}(#1, #2)\xspace}
\newcommand{\relSampleSizeSymbol}{\raisebox{1pt}{\ensuremath{\varrho}}\xspace}
\newcommand{\LPSorder}{\ensuremath{Q}\xspace}
\newcommand{\predictorLPS}[2]{\ensuremath{m^{#2}_{#1}}\xspace}
\newcommand{\biasLLS}[3]{\ensuremath{\text{bias}_{1,#3}^{}\left[#2,#1\right]}\xspace}
\newcommand{\biasLPS}[3]{\ensuremath{\text{bias}_{#3}^{}\left[#2,#1\right]}\xspace}
\newcommand{\varianceLPS}[3]{\ensuremath{\text{var}_{#3}^{}\left[#2,#1\middle|\bm{X}_n\right]}\xspace}
\newcommand{\bandwidth}{\ensuremath{\Sigma}\xspace}
\newcommand{\SigmaSpace}{\ensuremath{\mathcal{S}}\xspace}
\newcommand{\conditionNumberUpperBound}{\ensuremath{\mathcal{M}}\xspace}
\newcommand{\consistentSigmaSequencSpace}{\ensuremath{\mathfrak{S}}\xspace}
\newcommand{\convergenceInProbability}[2]{\ifthenelse{\equal{#1}{}}{\ensuremath{o_p^{}\left[#2\right]}}{\ensuremath{o_p^{}\begingroup\edef\x{\endgroup#1[}\x #2 \begingroup\edef\x{\endgroup#1]}\x}}\xspace}
\newcommand{\upperBoundedInProbability}[2]{\ifthenelse{\equal{#1}{}}{\ensuremath{O_p^{}\left[#2\right]}}{\ensuremath{O_p^{}\begingroup\edef\x{\endgroup#1[}\x #2 \begingroup\edef\x{\endgroup#1]}\x}}\xspace}
\newcommand{\exactRateconvergenceInProbability}[2]{\ifthenelse{\equal{#1}{}}{\ensuremath{\theta_p^{}\left[#2\right]}}{\ensuremath{\theta_p^{}\begingroup\edef\x{\endgroup#1[}\x #2 \begingroup\edef\x{\endgroup#1]}\x}}\xspace}
\newcommand{\divergenceInProbability}[2]{\ifthenelse{\equal{#1}{}}{\ensuremath{\omega_p^{}\left[#2\right]}}{\ensuremath{\omega_p^{}\begingroup\edef\x{\endgroup#1[}\x #2 \begingroup\edef\x{\endgroup#1]}\x}}\xspace}
\newcommand{\lowerBoundedInProbability}[2]{\ifthenelse{\equal{#1}{}}{\ensuremath{\Omega_p^{}\left[#2\right]}}{\ensuremath{\Omega_p^{}\begingroup\edef\x{\endgroup#1[}\x #2 \begingroup\edef\x{\endgroup#1]}\x}}\xspace}
\newcommand{\LOBfunctionOfLPS}[2]{\ensuremath{\bandwidth^{#2}_{#1}}\xspace}
\newcommand{\approxLOBfunctionOfLPS}[2]{\ensuremath{\widehat{\bandwidth}^{#2}_{#1}}\xspace}
\newcommand{\normalizedLOBfunction}{\ensuremath{\mathfrak{S}^{n}_{}}\xspace}
\newcommand{\normalizedLOBLPS}[2]{\ensuremath{\mathfrak{S}^{#2}_{#1}}\xspace}
\newcommand{\fOpt}[1]{\ensuremath{\widehat{f}^{#1}_\text{\tiny{Opt}}}\xspace}
\newcommand{\fMT}{\ensuremath{\widehat{f}^{}_\text{\tiny{MT}}}\xspace}
\newcommand{\pMT}{\ensuremath{\widehat{p}^{}_\text{\tiny{MT}}}\xspace}
\newcommand{\fMF}{\ensuremath{\widehat{f}^{}_\text{\tiny{MF}}}\xspace}
\newcommand{\pMF}{\ensuremath{\widehat{p}^{}_\text{\tiny{MF}}}\xspace}
\newcommand{\fBull}{\ensuremath{\widehat{f}^{}_\text{\tiny{Bull}}}\xspace}
\newcommand{\pBull}{\ensuremath{\widehat{p}^{}_\text{\tiny{Bull}}}\xspace}
\newcommand{\fctnComplexityLPS}[2]{\ensuremath{\mathfrak{C}^{#2}_{#1}}\xspace}
\newcommand{\approxFctnComplexityLPS}[2]{\ensuremath{\widehat{\mathfrak{C}}^{#2}_{#1}}\xspace}
\newcommand{\pOptLPS}[2]{\ensuremath{p^{#1,#2}_\text{\tiny{Opt}}}\xspace}
\newcommand{\approxPOptLPS}[2]{\ensuremath{\widehat{p}^{#1,#2}_\text{\tiny{Opt}}}\xspace}
\newcommand{\tildePOptLPS}[2]{\ensuremath{\widetilde{p}^{#1,#2}_\text{\tiny{Opt}}}\xspace}
\newcommand{\pOptFinite}{\ensuremath{p^{n}_\text{\tiny{Opt}}}\xspace}
\newcommand{\sigmaAsympLPS}[1]{\ensuremath{\sigma^{#1,n}_{\text{\tiny{Asymp}}}}\xspace}
\newcommand{\SigmaAsympLPS}[1]{\ensuremath{\Sigma^{#1,n}_{\text{\tiny{Asymp}}}}\xspace}
\newcommand{\vanishingSet}[2]{\ensuremath{\mathcal{T}_{#1}^{#2}}\xspace}
\newcommand{\LPS}{\text{LPS}\xspace}
\newcommand{\LLS}{\text{LLS}\xspace}
\newcommand{\LCS}{\text{LCS}\xspace}
\newcommand{\MSE}{\text{MSE}\xspace}
\newcommand{\RMSE}{\text{RMSE}\xspace}
\newcommand{\MISE}{\text{MISE}\xspace}
\newcommand{\MAE}{\text{MAE}\xspace}
\newcommand{\LOB}{\text{LOB}\xspace}
\newcommand{\LFC}{\text{LFC}\xspace}
\newcommand{\RBF}{\text{RBF}\xspace}
\newcommand{\GPR}{\text{GPR}\xspace}
\newcommand{\randomTestSampling}{\emph{random test sampling}\xspace}
\newcommand{\robust}{\emph{robust}\xspace}
\newcommand{\robustProperty}{\emph{robustness}\xspace}
\newcommand{\optimal}{\emph{optimal}\xspace}
\newcommand{\optimalProperty}{\emph{optimality}\xspace}
\newcommand{\modelagnostic}{\emph{model-agnostic}\xspace}
\newcommand{\modelagnosticProperty}{\emph{model-agnosticity}\xspace}
\newcommand{\stationary}{\emph{stationary}\xspace}
\newcommand{\interpretable}{\emph{interpretable}\xspace}
\newacronym[sort=d01]{LPS}{\LPS}{Local polynomial smoothing}
\newacronym[sort=d02]{LLS}{\LLS}{Local linear smoothing}
\newacronym[sort=d03]{LCS}{\LCS}{Local cubic smoothing}
\newacronym[sort=d04]{MSE}{\MSE}{Mean squared error}
\newacronym[sort=d05]{RMSE}{\RMSE}{Root mean squared error}
\newacronym[sort=d06]{MISE}{\MISE}{Mean integrated squared error}
\newacronym[sort=d07]{MAE}{\MAE}{Maximum absolute error}
\newacronym[sort=d08]{LOB}{\LOB}{Locally optimal bandwidth}
\newacronym[sort=d09]{LFC}{\LFC}{Local function complexity}
\newacronym[sort=d10]{RBF}{\RBF}{Radial basis function}
\newacronym[sort=d11]{GPR}{\GPR}{Gaussian process regression}
\newcommand{\DP}{\text{D.~Panknin}\xspace}
\newcommand{\SN}{\text{S.~Nakajima}\xspace}
\newcommand{\KRM}{\text{K.-R.~M\"uller}\xspace}
\soulregister{\KRM}{7}
\soulregister{\SN}{7}
\soulregister{\eqref}{7}
\soulregister{\secref}{7}
\soulregister{\appref}{7}
\soulregister{\remref}{7}
\soulregister{\cite}{7}
\soulregister{\citep}{7}
\soulregister{\modelagnostic}{7}
\soulregister{\modelagnosticProperty}{7}
\soulregister{\robust}{7}
\soulregister{\robustProperty}{7}
\soulregister{\optimal}{7}
\soulregister{\optimalProperty}{7}
\journal{Neurocomputing}
\begin{document}
\newif\ifshowComments
\showCommentstrue
% \showCommentsfalse

\begin{frontmatter}

%% Title, authors and addresses

%% use the tnoteref command within \title for footnotes;
%% use the tnotetext command for the associated footnote;
%% use the fnref command within \author or \affiliation for footnotes;
%% use the fntext command for the associated footnote;
%% use the corref command within \author for corresponding author footnotes;
%% use the cortext command for the associated footnote;
%% use the ead command for the email address,
%% and the form \ead[url] for the home page:
%% \title{Title\tnoteref{label1}}
%% \tnotetext[label1]{}
%% \author{Name\corref{cor1}\fnref{label2}}
%% \ead{email address}
%% \ead[url]{home page}
%% \fntext[label2]{}
%% \cortext[cor1]{}
%% \affiliation{organization={},
%%            addressline={}, 
%%            city={},
%%            postcode={}, 
%%            state={},
%%            country={}}
%% \fntext[label3]{}

\title{Optimal Sampling Density for Nonparametric Regression}

% \author{Danny~Panknin \texorpdfstring{\hfill danny.panknin@tu-berlin.de}{}}
% % \ead{danny.panknin@tu-berlin.de}
% \address{Machine Learning Department, Berlin Institute of Technology, 10587, Berlin, Germany\hfill\;}

% \author{Klaus-Robert~M\"uller \texorpdfstring{\hfill klaus-robert.mueller@tu-berlin.de}{}}
% % \ead{klaus-robert.mueller@tu-berlin.de}
% \address{Machine Learning Department, Berlin Institute of Technology, 10587, Berlin, Germany\hfill\;\\
% BIFOLD-Berlin Institute for the Foundations of Learning and Data, Germany\hfill\;\\
% Department of Artificial Intelligence, Korea University, Seoul 136-713, South Korea\hfill\;\\
% Max Planck Institute for Informatics, 66123 Saarbrücken, Germany\hfill\;}
 
% \author{Shinichi~Nakajima\texorpdfstring{\hfill nakajima@tu-berlin.de}{}}
% % \ead{nakajima@tu-berlin.de}
% \address{Machine Learning Department, Berlin Institute of Technology, 10587, Berlin, Germany\hfill\;\\
% BIFOLD-Berlin Institute for the Foundations of Learning and Data, Germany\hfill\;\\
% RIKEN AIP, 1-4-1 Nihonbashi, Chuo-ku, Tokyo
%  103-0027, Japan\hfill\;}
 
\author[1]{Danny~Panknin\texorpdfstring{\corref{cor1}}{}}
\ead{danny.panknin@tu-berlin.de}
\cortext[cor1]{Corresponding authors}

\author[1,2,3,4]{Klaus-Robert~M\"uller\texorpdfstring{\corref{cor1}}{}}
\ead{klaus-robert.mueller@tu-berlin.de}

\author[1,2,5]{Shinichi~Nakajima\texorpdfstring{\corref{cor1}}{}}
\ead{nakajima@tu-berlin.de}

\affiliation[1]{organization={Machine Learning Department},
            addressline={Berlin Institute of Technology}, 
            city={10587 Berlin},
            country={Germany}}

\affiliation[2]{organization={BIFOLD-Berlin Institute for the Foundations of Learning and Data},
            country={Germany}}

\affiliation[3]{organization={Department of Artificial Intelligence},
            addressline={Korea University},
            city={Seoul 136-713},
            country={South Korea}}
 
\affiliation[4]{organization={Max Planck Institute for Informatics},
            postcode={66123}, 
            state={Saarbrücken},
            country={Germany}}
            
\affiliation[5]{organization={RIKEN AIP},
            addressline={1-4-1 Nihonbashi}, 
            city={Chuo-ku},
            state={Tokyo},
            country={Japan}}
            
\begin{abstract}
We propose a novel active learning strategy for regression, which is
model-agnostic, robust against model mismatch, and interpretable. 
Assuming that 
a small number of initial samples are available, we derive the optimal training density that minimizes the generalization error
of \emph{local polynomial smoothing} (\LPS) with its kernel bandwidth tuned locally:
We adopt the \emph{mean integrated squared error} (\MISE) as a generalization criterion,
and use the asymptotic behavior of the \MISE as well as the \emph{locally optimal bandwidths} (\LOB) -- the bandwidth function that minimizes \MISE in the asymptotic limit.
The asymptotic expression of our objective then reveals
the dependence of the \MISE on the training density, enabling analytic minimization.
As a result, we obtain the optimal training density in a closed-form.
The almost model-free nature of our approach thus helps to encode the essential properties of the target problem, providing a robust and model-agnostic active learning strategy.
Furthermore, 
the obtained training density factorizes the influence of local function complexity, noise level and test density in a transparent and interpretable way.
We validate our theory in numerical simulations, and show that the proposed active learning method outperforms the existing state-of-the-art model-agnostic approaches.
\end{abstract}

%%Graphical abstract
% \begin{graphicalabstract}
% \end{graphicalabstract}

%%Research highlights
\begin{highlights}
\item We derive a novel \emph{active learning} framework, based on the \emph{local polynomial smoothing} model class
\item It samples from the \emph{optimal training density} that asymptotically minimizes the generalization error of \emph{local polynomial smoothing}
\item The \emph{optimal training density} is \interpretable as it factorizes the influence of local, problem intrinsic properties such as the \emph{noise level}, \emph{function complexity} and \emph{test relevance}
\item We apply local bandwidth estimates via \emph{Lepski's method} to provide an implementation of our \emph{active learning} framework in the \emph{isotropic} case
\item We provide empirical evidence that our proposed \emph{active learning} framework is \modelagnostic by applying it to a neural network and a random forest model
% \item We generalize our \emph{active learning} theory to the \emph{non-isotropic} case, also contributing to the asymptotic theory of local bandwidth estimates
\end{highlights}

\begin{keyword}
Adaptive kernel bandwidth \sep Lepski's method \sep local polynomial smoothing \sep
local function complexity \sep active learning
\end{keyword}

\end{frontmatter}
% \maketitle

\section{Introduction}
\label{sec:intro}
Active learning is a powerful tool for inference when acquiring labels is expensive.
Given a fixed budget of labels that may be queried, the basic idea is to construct a training set in a way that minimizes a predefined generalization loss.
Active learning for classification has been applied successfully in learning approaches to text categorization \citep{lewis1994sequential,roy01,goudjil2018novel}, biomedical data analysis \citep{warmuth2003active,pasolli2010,saito2015robust,bressan2019breast}, image classification \citep{sener2018active,beluch2018power,haut2018active} and image retrieval \citep{tong2001support,he2010laplacian}.
In the classification regime these approaches were able to reduce the required amount of training data drastically, where the selection criteria are commonly based on input space geometric arguments.
For example, one may request labels close to the decision boundary, for which the prediction uncertainty is high \citep{tong2001support,warmuth2003active}.
This way, most of the input space that is associated to 
the interior of the class supports can be neglected while training.
%%%%%
More recently, active learning was deployed in regression tasks such as wind speed forecasting \citep{douak2013}, optimal control \citep{wu2020active} and reinforcement learning \citep{teytaud2007active}, quantum chemistry \citep{tang2019prediction,gubaev2018machine} and industrial applications in semiconductor manufacturing \citep{sugiyama2009pool}.

Active learning approaches can be categorized with respect to several properties such as \emph{informativeness}, \emph{representativeness} and \emph{diversity} \citep{settles2010active,wu2019pool} and we refer to the broad overview of literature on active learning reviewed by \cite{settles2010active}.
Each active learning approach induces a \emph{sampling scheme}, by which we mean the process of successively adding new labeled instances to the training dataset.
In this paper we distinguish between \emph{supervised} and \emph{unsupervised} sampling schemes: A supervised sampling scheme is based on a sampling criterion that depends on the so far acquired training labels. Any
sampling scheme that is not \emph{supervised} is hence regarded as unsupervised.
We will refer to i.i.d.~sampling from some test distribution as \randomTestSampling, which is the most simple unsupervised baseline.
Note that in the literature of active learning the notion of unsupervised sampling schemes \citep{liu2021pool} are synonymously referred to as \emph{passive} \citep{yu2010passive,wu2019pool} or \emph{blind} \citep{teytaud2007active}.
Furthermore, we will call a sampling scheme \emph{model-based}, if the derivation of its sampling criterion involves a model of the function to infer.
Whether this particular model is \emph{parametric} or \emph{nonparametric}, it is furthermore reasonable to differentiate between a parametric or nonparametric sampling scheme.
In contrast, we regard any sampling scheme that is not model-based as \emph{model-free}.
\begin{myRemark}
The two properties, whether a sampling scheme is (un)supervised or model-based/free, are complementary in the sense that we find sampling schemes with an arbitrary combination of characteristics of these properties.
\end{myRemark}
We will now address those aspects that are relevant to classify our algorithmic proposal of a novel active learning framework for regression.
Clearly, every category has its share of the domain of learning problems where it performs best (see \secref{subsec:relatedWork_ALregression} for further in-depth discussion).\smallskip
% \begin{mydef}
% \label{def:ALproperties_necessary}
% We call a sampling scheme
% \begin{itemize}
%     \item \textbf{optimal}, if it optimizes some risk with respect to a prediction model that is deployed on the learning task.
%     \item \textbf{robust}, if its sampling criterion imposes at most mild assumptions on the regularity of the labels.
%     \item \textbf{model-agnostic}, if the performance of an arbitrary, but reasonable model with the acquired training set is not worse (ideally better) than using a random test sample.
% \end{itemize}
% \end{mydef}

\noindent{\bf Properties of sampling schemes:} We call a sampling scheme
\begin{itemize}
    \item \textbf{optimal}, if it optimizes some risk with respect to a prediction model that is deployed on the learning task.
    \item \textbf{robust}, if its sampling criterion imposes at most mild assumptions on the regularity of the labels.
    \item \textbf{model-agnostic}, if the performance of an arbitrary, but reasonable model with the acquired training set is not worse (ideally better) than using a random test sample.
\end{itemize}
An \optimal sampling scheme allows for the maximal performance gain, as long as we can fix the final prediction model in advance. Note that the \optimal sampling scheme may strongly depend on the prediction model.

In real-world scenarios we often face the situation that domain knowledge is scarce , and thus we are not aware about the regularity of the problem and committing to a final model might be premature.
This means that practitioners will prefer a consistent, yet moderate performance increase provided by a \robust sampling scheme rather than risking to have overestimated the regularity of the problem that an \optimal sampling scheme may assume.
Under violation of such assumptions, the quality of the so acquired training set may deteriorate below \randomTestSampling.
On the other hand, the state-of-the-art for such a scenario is rapidly evolving and so, as also noted by \cite{settles2010active}, it is advantageous if an acquired training set remains meaningful after model adaption. In this regard, it is desirable for a sampling scheme to be \modelagnostic.

In this work, our goal is to find a sampling scheme that is simultaneously \optimal, \robust and \modelagnostic. As we will discuss in \secref{subsec:discussionALProperties}, such a sampling scheme must necessarily be supervised and nonparametric.
We will base our active learning framework on \emph{local polynomial smoothing} (\LPS) (see, for example, \cite{cleveland1988locally}), a nonparametric model class with minimal regularity assumptions on the labels. We consider \LPS as almost model-free.
% then we introduce our approach

We will now outline our main contribution. Intuitively, we consider the best achievable \MISE within the \LPS model class as our objective which we then aim to minimize with respect to the training distribution. Ultimately, our active learning strategy is to sample from this training distribution.\smallskip

Let $f$ be the target regression function that we want to infer from noisy observations $y_i^{} = f(x_i^{}) + \varepsilon_i^{}$, 
where $\varepsilon_i^{}$ is independently drawn from a distribution with mean $\E[\varepsilon_i^{}] = 0$ and local noise variance $\Var[\varepsilon_i^{}] = v(x_i^{})$,
and $x_i^{} \sim p$ are i.i.d.~samples according to a training probability density $p$ defined on $\inputSpace\subset\reals{d}$.
Furthermore let $k^\bandwidth(x,x') := \abs{\bandwidth}^{-1} k(\pnorm{\bandwidth^{-1}(x-x')}{})$ be a \emph{radial basis function} (\RBF) kernel, where the positive definite \emph{bandwidth} matrix parameter $\bandwidth \in \posDefSet{d}$ controls the degree of localization.
Here, we denote by $\abs{M}$ the \emph{determinant} of the square matrix $M \in \reals{d\times d}$.

%%%%%%%%%%%%%%%%%%%%%%%%%%%
% \[\textstyle\mathcal{P}_{\LPSorder}(\reals{d}) = \condset{\fctn{\mathfrak{p}}{\reals{d}}{\R}}{\exists (a_{\bm{i}})_{\bm{i}\in\N_{}^d: \bm{i}_1+\ldots \bm{i}_d \leq Q}\subset\R: \mathfrak{p}(x) = \mySum{\bm{i}\in\N_{}^d: \bm{i}_1+\ldots \bm{i}_d \leq Q}{}a_{\bm{i}}\prod_{j=1}^d x_j^{\bm{i}_j}}\]
Let $\mathcal{P}_{\LPSorder}(\reals{d})$ be the space of the real polynomial mappings $\fctn{\mathfrak{p}}{\reals{d}}{\R}$ up to order \LPSorder.
For a training set $\bm{X}_n^{} = \left\{x_i^{}\right\}_{i=1}^n \in \inputSpace^n_{}$ of size $n$, the \emph{\LPS predictor} of order \LPSorder is given by

\begin{align}
 \label{eq:lpsPredictionGeneral}
 \predictorLPS{\LPSorder}{\bandwidth}(x) &= \mathfrak{p}_{\LPSorder,\bandwidth,x}^*(0),
 \mbox{ where }\\
 \notag
 \mathfrak{p}_{\LPSorder,\bandwidth,x}^* &= \argmin\limits_{\mathfrak{p} \in \mathcal{P}_{\LPSorder}(\reals{d})} \mySum{i=1}{n}k^{\bandwidth}(x_i^{},x)\left(y_i^{}-\mathfrak{p}(x_i^{}-x)\right)^2.
\end{align}
Here, $\mathfrak{p}_{\LPSorder,\bandwidth,x}^*$ is the optimal local polynomial approximation of $f$ around $x$. 	
In particular, $\mathfrak{p}_{\LPSorder,\bandwidth,x}^*(0)$ gives the approximate function value at $x$.

The simplicity of the \LPS formulation allows for rich analysis as amongst others demonstrated
% by \cite{fan1997local} in the 1-dimensional case,
by \cite{fan1992variable,ruppert1994} in the special case of \emph{local linear smoothing} (\LLS), where $\LPSorder = 1$, and in the general case by \cite{fan1997local,masry1996multivariate,masry1997multivariate}.
%%%%%%%%%%%%
Let $\SigmaSpace \subseteq \posDefSet{d}$ be a candidates set of positive definite bandwidth matrices.
Since the prediction \eqref{eq:lpsPredictionGeneral} of \LPS involves solving an individual problem for each test point $x\in\inputSpace$, the bandwidth $\bandwidth \in \SigmaSpace$ can be chosen individually in $x$ without affecting the prediction of any other instance $x' \neq x$.
%%%
Thus, given a training set $\bm{X}_{n}$ and a performance measure such as the \emph{conditional mean squared error}
\begin{align*}
\MSE_{\LPSorder}^{}\left(x, \bandwidth | \bm{X}_{n}\right) = \E \left[(\predictorLPS{\LPSorder}{\bandwidth}(x) - f(x))^2 | \bm{X}_n^{}\right],
\end{align*}
we are free to tune $\bandwidth_x \in \SigmaSpace$ in $x$ such that, ideally,
\begin{align}
\label{eq:minimizingBandwidth}
\MSE_{\LPSorder}^{}\left(x, \bandwidth_x | \bm{X}_{n}\right) = \textstyle \inf_{\bandwidth\in\SigmaSpace}\MSE_{\LPSorder}^{}\left(x, \bandwidth | \bm{X}_{n}\right).
\end{align}
%%%
%This local choice of bandwidth is preferable, regardless of the training set $\bm{X}_{n}$.
Our criterion for optimization of $\bm{X}_{n}$ is then as follows: 
Given a test density $q$ on \inputSpace and a training set $\bm{X}_n^{}$, we may define the optimal \emph{mean integrated squared error} by
\begin{align}
 \label{eq:ALobjective}
 \MISE_{\LPSorder}^{}\left(q| \bm{X}_n^{}\right) = \textstyle \mathop{\mathlarger{\int}}_{\hspace*{-5pt}\inputSpace} \inf_{\bandwidth\in\SigmaSpace} \MSE_{\LPSorder}^{}\left(x, \bandwidth | \bm{X}_{n}\right) q(x) dx.
\end{align}
Note that the infimum over $\Sigma \in \mathcal{S}$ is taken for each $x$ before integration over $x$.
As our ultimate goal, we therefore seek to tune the training set choice $\bm{X}_n^*$ such that
\begin{align}
 \label{eq:optimalTrainingSet}
 \bm{X}_n^* \approx \textstyle \inf_{\bm{X}_n^{} \in \inputSpace^n_{} }\MISE_{\LPSorder}^{}\left(q| \bm{X}_n^{}\right).
\end{align}

The main idea of our theory is to express the objective \eqref{eq:ALobjective} asymptotically as a function of the training density $p$, the local noise level $v$, the training size $n$, the test density $q$ and a measure of \emph{local function complexity} (\LFC) that concentrates local information about $f$ into a scalar value in a natural way.
%%%%%%%
Since the dependence of the \MISE on $p$ is given explicitly in the form we will derive, we will be able to analytically optimize our objective \eqref{eq:ALobjective} with respect to the training density.
In other words, we will obtain a training density $\pOptLPS{\LPSorder}{n}$ that is asymptotically optimal, and our proposed active learning approach samples training data
$\bm{X}_n^* \sim \pOptLPS{\LPSorder}{n}$.
%%%%%%%
As we will demonstrate in \secref{subsec:ALPropertiesOfOptimalSampling}, this sampling scheme encompasses all desired properties. That is, it is \optimal, \robust and \modelagnostic. Additionally it turns out to be \stationary and \interpretable as well -- two properties, we will also define and discuss in \secref{sec:discussionRelatedWork}. While stationarity enables batch sampling in a natural way, interpretability gives access to a representation of the sampling scheme that is comprehensible by humans.

In order to be able to express the objective \eqref{eq:ALobjective} in this manner, a fundamental step is to guarantee the existence and understand the asymptotics of $\bandwidth_x$ in the sense of \eqref{eq:minimizingBandwidth}. We will refer to $\bandwidth_x$ as a locally optimal bandwidth (\LOB) in $x\in\inputSpace$ in the following.
In case of the isotropic bandwidth candidate set $\SigmaSpace = \condset{\sigma\idMatrix{d}}{\sigma > 0}$, $\bandwidth_x$ is unique,
and there exist known results on its asymptotic behavior \citep{fan1992variable,ruppert1994,fan1997local,masry1996multivariate,masry1997multivariate} as well as its estimation \citep{zhang2011kernel}.

When considering a non-isotropic bandwidth candidate set instead, $\bandwidth_x$ will typically not be unique, and its asymptotics behaves differently. 
Accordingly, care needs to be taken when trying to generalize the results from the isotropic to the non-isotropic case.
We provide first results for an extension of our theory to the non-isotropic case in \appref{sec:nonIsotropic}.
Yet, as we require an estimate of general, positive definite \LOB in practice -- which is currently an open research question for the non-isotropic case -- we are only able to give an initial theory to the non-isotropic case in \appref{sec:nonIsotropic}; this theory will immediately become applicable once an estimate to non-isotropic \LOB will come to existence. In the remainder of this work, we therefore focus on the isotropic case.

We will begin with an introduction into previous research, followed by the derivation of our main theorem in \secref{sec:theoryIsotropicOptimalSampling}. 
We provide implementation details in \secref{sec:practicalConsiderations} and discuss active learning properties and related work in \secref{sec:discussionRelatedWork}. Then we demonstrate the capabilities of our proposed framework in experiments on toy-data in \secref{sec:experiments}: We show its benefits in settings of inhomogeneous complexity and heteroscedasticity and compare favorably to two related nonparametric active learning approaches that were built to work on the respective dataset.
Finally, we conclude in \secref{sec:conclusion}.

\section{Analyzing Optimal Training in the Isotropic Case}
\label{sec:theoryIsotropicOptimalSampling}
In the following, we denote by $\vect(S)\in\reals{N}$ the arbitrarily, but fixed ordered vectorization of a finite set $S$ of cardinality $N = \abs{S}$. While for our purpose the order of this vectorization does not matter, it should be applied consistently.
%%%%%%%%
For $A\subset\reals{a}, B\subset\reals{b}$ and $c \geq 0$ let $\diffableFunctions{A, B}{c}$ be the set of $c$-times continuously differentiable functions $\fctn{f}{A}{B}$. As a shorthand let $\diffableFunctions{A}{c} := \diffableFunctions{A,\R}{c}$.
%%%%%%%%%
For $f \in \diffableFunctions{A, B}{c}$ and $l \leq c$ let $D^l_f(x)$ the tensor of $l$-th order partial derivatives of $f$ in $x\in A$.
%%%%%%%%%
Finally, we denote by $X_n = \convergenceInProbability{}{a_n}$ the convergence in probability of a random sequence $X_n$, meaning that for all $\varepsilon > 0$ it is $\Prob(\abs{X_n/a_n} \geq \varepsilon) \rightarrow 0$ as $n\rightarrow\infty$, and by 
$X_n = \upperBoundedInProbability{}{a_n}$ the stochastic boundedness, meaning that for all $\varepsilon > 0$ there exists $M>0$ and $N\in\N$ such that $\Prob(\abs{X_n/a_n} > M) < \varepsilon, \forall n > N$.
A list of frequently used abbreviations and mathematical symbols can be found in \appref{sec:nomenclature}.

If $\bandwidth_x$ in \eqref{eq:minimizingBandwidth} uniquely exists for all $x$, such that for all $\bandwidth \in \SigmaSpace$ with $\bandwidth \neq \bandwidth_x$ it is
$\MSE_{\LPSorder}^{}\left(x, \bandwidth_x | \bm{X}_{n}\right) < \MSE_{\LPSorder}^{}\left(x, \bandwidth | \bm{X}_{n}\right)$, we can define the \emph{locally optimal bandwidth function} (\LOB) as
\begin{align}
 \label{eq:LOBDefinition}
 \LOBfunctionOfLPS{\LPSorder}{n}(x)
  = \textstyle 
  \argmin_{\bandwidth \in \SigmaSpace} \MSE_{\LPSorder}^{}\left(x, \bandwidth | \bm{X}_{n}\right).
\end{align}
In this case, we are also able to define the \emph{oracle local kernel regressor} by
\begin{align}
 \label{def:fOpt}
 \fOpt{\LPSorder}(x) &= \predictorLPS{\LPSorder}{\LOBfunctionOfLPS{\LPSorder}{n}(x)}(x).
\end{align}
\begin{myRemark}
For any regression model $\widehat{f}$ we can define the \MISE as
\begin{align*}
 \textstyle\MISE\left(q,\widehat{f}| \bm{X}_{n}\right) &= \textstyle \mathop{\mathlarger{\int}}_{\hspace*{-5pt}\inputSpace}\E \left[(\widehat{f}(x) - f(x))^2 | \bm{X}_n^{}\right] q(x) dx.
\end{align*}
Accordingly, we can write $\textstyle\MISE_{\LPSorder}^{}\left(q| \bm{X}_n^{}\right) = \MISE\left(q,\fOpt{\LPSorder}| \bm{X}_{n}\right)$.
\end{myRemark}
There are known results that guarantee \LOB in \eqref{eq:LOBDefinition} to be well-defined, which means that the minimizer exists and is unique: We refer, for example, to the work of \cite{masry1996multivariate,masry1997multivariate} in the case of isotropic bandwidth candidates $\SigmaSpace = \condset{\sigma\idMatrix{d}}{\sigma > 0}$, and to \cite{fan1997local} in the general case $\SigmaSpace = \posDefSet{d}$ for $\LPSorder = 1$. 
All these results rely on the antagonizing effect of bias and variance: Generally, we can decompose the conditional \MSE in $x\in\inputSpace$ (see \cite{geman1992neural,bishop2006pattern}) according to
\begin{align}
\label{eq:trueBiasVarianceDecomposition}
\textstyle\MSE_{\LPSorder}^{}\left(x, \bandwidth | \bm{X}_{n}\right) &=\textstyle \Bias_{\LPSorder}^{}\left(x, \bandwidth | \bm{X}_{n}\right)^2 + \Var_{\LPSorder}^{}\left(x, \bandwidth | \bm{X}_{n}\right), \qquad\text{where}\\
\textstyle\Bias_{\LPSorder}^{}\left(x, \bandwidth | \bm{X}_{n}\right) &= f(x) - \E \left[\predictorLPS{\LPSorder}{\bandwidth}(x) | \bm{X}_n^{}\right]\qquad\text{and}\nonumber\\
\textstyle\Var_{\LPSorder}^{}\left(x, \bandwidth | \bm{X}_{n}\right) &= \E \left[(\predictorLPS{\LPSorder}{\bandwidth}(x) - \E \predictorLPS{\LPSorder}{\bandwidth}(x))^2 | \bm{X}_n^{}\right]\nonumber
\end{align}
are the bias- and variance-related error terms.
Now,
as $\pnorm{\bandwidth}{}\!\rightarrow 0$,
$\Bias_{\LPSorder}^{}\left(x, \bandwidth | \bm{X}_{n}\right)$ decreases while
$\Var_{\LPSorder}^{}\left(x, \bandwidth | \bm{X}_{n}\right)$ increases; On the other hand, as $\abs{\bandwidth}\rightarrow \infty$, $\Bias_{\LPSorder}^{}\left(x, \bandwidth | \bm{X}_{n}\right)$ increases while $\Var_{\LPSorder}^{}\left(x, \bandwidth | \bm{X}_{n}\right)$ decreases.
It is known that there is a finite bandwidth that trades off both terms in the optimal way \citep{silverman1986density,wand1994kernel}. After identifying the leading terms of bias and variance, 
an asymptotic closed-form solution to \LOBfunctionOfLPS{\LPSorder}{n} can be constructed explicitly in the isotropic case for arbitrary.

In this paper, we will focus on the case where the bandwidth space is restricted to be isotropic.
In this case, we can elaborate our framework from the theoretical point-of-view by making use of the results of \cite{masry1996multivariate,masry1997multivariate} on the unique existence and asymptotic behavior of \LOB under mild assumptions. Furthermore, we can estimate \LOB \citep{zhang2011kernel} by Lepski's method \citep{lepski1991problem,lepski1997optimal}.

The approach of using the asymptotic closed-form solution to \LOB to prove our theory however cannot be generalized to the non-isotropic case without imposing unrealistically strong assumptions. We will discuss these issues that do arise in the non-isotropic bandwidth case -- when in particular not relying on this explicit construction -- and provide solutions that still make our theory hold under mild conditions in \appref{sec:nonIsotropic}.

\subsection{Preliminaries}
\label{subsec:preliminaries}
Let us begin by making the \LPS predictor from \eqref{eq:lpsPredictionGeneral} explicit:
Define the vector of distinct $j$-th order monomials of $u\in\reals{d}$ by
\begin{align}
\label{eq:distinctMonomialsVector}
m_{j}^{}(u) =\textstyle\vect\left(\condset{\prod_{l=1}^{d} u_l^{\bm{j}_l}}{\bm{j}:\sum_{k=1}^{d}\bm{j}_k = j}\right).
\end{align}
Note that $m_{j}^{}(u) \in \reals{N_j}$ for $N_j = \binom{d-1+j}{d-1}$.
Now, we can write any polynomial $\mathfrak{p} \in \mathcal{P}_{\LPSorder}(\reals{d})$ of order \LPSorder as $\mathfrak{p}(z) = \mySum{j=0}{\LPSorder} \beta_j^\top m_{j}^{}(z)$, where
the mapping between $\mathfrak{p}$ and its monomial coefficients $\beta_j \in \reals{N_j}$ is bijective. For convenience, we will identify a polynomial with its coefficients and use both expressions interchangeably.
In this regard, let $\displaystyle\left(\beta_{\LPSorder,x}^{\bandwidth}\right)^*_0, \ldots, \left(\beta_{\LPSorder,x}^{\bandwidth}\right)^*_{\LPSorder}$ be the coefficients of the optimal polynomial $\mathfrak{p}_{\LPSorder,\bandwidth,x}^*$.
Noting that $\mathfrak{p}(0) = \beta_0 \in \R$, the optimization in \eqref{eq:lpsPredictionGeneral} is expressed as
\begin{align}
 \label{eq:lpsPrediction}
 \predictorLPS{\LPSorder}{\bandwidth}(x) &= \left(\beta_{\LPSorder,x}^{\bandwidth}\right)^*_0,
\mbox{ where }\\
\left(\beta_{\LPSorder,x}^{\bandwidth}\right)^*_0, \ldots, \left(\beta_{\LPSorder,x}^{\bandwidth}\right)^*_{\LPSorder}
&= 
\argmin\limits_{\beta_0, \ldots, \beta_{\LPSorder}}
\hspace*{-1pt}\sum_{i=1}^{n}k^{\bandwidth}(x_i^{},x)\left[y_i^{} - \mySum{j=0}{\LPSorder} \beta_{j}^\top m_{j}^{}(x_i^{}-x)\right]^2.
\notag
\end{align}
% \begin{align}
%  \label{eq:lpsPrediction}
%  \predictorLPS{\LPSorder}{\bandwidth}(x) &= \left(\beta_{\LPSorder,x}^{\bandwidth}\right)^*_0,
% \mbox{ where }\\
% \left(\beta_{\LPSorder,x}^{\bandwidth}\right)^*_0, \ldots, \left(\beta_{\LPSorder,x}^{\bandwidth}\right)^*_{\LPSorder}
% &= 
% \hspace*{-5pt}\argmin\limits_{\left(\beta_{\LPSorder,x}^{\bandwidth}\right)_0, \ldots, \left(\beta_{\LPSorder,x}^{\bandwidth}\right)_{\LPSorder}}
% \hspace*{-5pt}\mySum{i=1}{n}k^{\bandwidth}(x_i^{},x)\left(y_i^{} - \mySum{j=0}{\LPSorder} \left(\beta_{\LPSorder,x}^{\bandwidth}\right)_j^\top m_{j}^{}(x_i^{}-x)\right)^2.
% \notag
% \end{align}
Let us aggregate the vector of distinct monomials up to order \LPSorder as
\begin{align}
\label{eq:distinctMonomialsVectorUpToOrderP}
M_{\LPSorder}(u) = [m_{0}^{}(u)^\top, \ldots, m_{\LPSorder}^{}(u)^\top]^\top.
\end{align}
The following closed-form solution for \eqref{eq:lpsPrediction} is known (see, for example, \cite{zhang2011kernel}):
Letting $X_{\LPSorder}^{}(x) = [M_{\LPSorder}(x_1-x),\ldots,M_{\LPSorder}(x_n-x)]^\top$ and $W_x^\bandwidth = \diag\left([k^{\bandwidth}(x_1^{},x),\ldots,k^{\bandwidth}(x_n^{},x)]\right)$, it is
\begin{align*}
 \predictorLPS{\LPSorder}{\bandwidth}(x) &= A_{\LPSorder}^\bandwidth(x) Y_n,
\mbox{ where }\\
A_{\LPSorder}^\bandwidth(x) &= e_1^\top\left(X_{\LPSorder}^{}(x)^\top W_x^\bandwidth X_{\LPSorder}^{}(x)\right)^{-1}X_{\LPSorder}^{}(x)^\top W_x^\bandwidth.
\end{align*}
% with $X_{\LPSorder}^{}(x) = [M_{\LPSorder}(x_1-x),\ldots,M_{\LPSorder}(x_n-x)]^\top$ and $W_x^\bandwidth = \diag\left([k^{\bandwidth}(x_1^{},x),\ldots,k^{\bandwidth}(x_n^{},x)]\right)$.
By substituting the equation above into Eq.~\eqref{eq:trueBiasVarianceDecomposition}, we obtain
\begin{align}
\textstyle\MSE_{\LPSorder}^{}\left(x, \bandwidth | \bm{X}_{n}\right) &=\textstyle \Bias_{\LPSorder}^{}\left(x, \bandwidth | \bm{X}_{n}\right)^2 + \Var_{\LPSorder}^{}\left(x, \bandwidth | \bm{X}_{n}\right), \qquad\text{where}\nonumber\\
\label{eq:trueFiniteBias}
\textstyle\Bias_{\LPSorder}^{}\left(x, \bandwidth | \bm{X}_{n}\right) &= f(x) - A_{\LPSorder}^\bandwidth(x) f(\bm{X}_n^{})\qquad\text{and}\\
\label{eq:trueFiniteVariance}
\textstyle\Var_{\LPSorder}^{}\left(x, \bandwidth | \bm{X}_{n}\right) &= A_{\LPSorder}^\bandwidth(x) \diag(v(\bm{X}_n^{})) A_{\LPSorder}^\bandwidth(x)^\top.
\end{align}
% With this explicit form, we can also rewrite the bias and variance terms of the decomposition \eqref{eq:trueBiasVarianceDecomposition} of the conditional \MSE as
% \begin{align}
% \label{eq:trueFiniteBias}
% &\textstyle\Bias_{\LPSorder}^{}\left(x, \bandwidth | \bm{X}_{n}\right) = f(x) - A_{\LPSorder}^\bandwidth(x) f(\bm{X}_n^{})\qquad\text{and}\\
% \label{eq:trueFiniteVariance}
% &\textstyle\Var_{\LPSorder}^{}\left(x, \bandwidth | \bm{X}_{n}\right) = A_{\LPSorder}^\bandwidth(x) \diag(v(\bm{X}_n^{})) A_{\LPSorder}^\bandwidth(x)^\top.
% \end{align}
From here on, we assume for simplicity that the input space $\inputSpace \subset \reals{d}$ is compact, which real-world data will typically fulfill\footnote{The theory may be extended to more general cases by imposing a weaker condition than compactness, such as bounded derivatives and a fast enough decay of the kernel.}.
% From here on, we assume the input space $\inputSpace \subset \reals{d}$ to be compact.
% Note that from a theoretical perspective weaker conditions such as bounded derivatives and a fast enough decay of the kernel function could be made in order to relax the compactness assumption. From the practical side this is however a mild requirement, as real-world data is typically bounded.
As mentioned above, we restrict ourselves to the case of isotropic bandwidth candidates $\SigmaSpace = \condset{\sigma\idMatrix{d}}{\sigma > 0}$.
Under adequate regularity conditions, explicit asymptotic formulations for \LOB are known (see, for example, \cite{fan1992variable,fan1997local} and \cite{masry1996multivariate,masry1997multivariate}):
Using \eqref{eq:distinctMonomialsVectorUpToOrderP}, define by
\[\textstyle\bm{M}_{\LPSorder}^{} = \mathop{\mathlarger{\int}}_{\hspace*{-5pt}\inputSpace} M_{\LPSorder}(u)M_{\LPSorder}(u)^\top k(u)du,\quad\text{and}\quad\bm{\Gamma}_{\LPSorder}^{} = \mathop{\mathlarger{\int}}_{\hspace*{-5pt}\inputSpace} M_{\LPSorder}(u)M_{\LPSorder}(u)^\top k^2(u)du\]
the first and second moment matrix of the kernel, by
\[\textstyle \bm{D}_{\LPSorder}^{}(x) = \condset{[\prod_{k=1}^{d}\bm{j}_k!]^{-1}\frac{\partial^{\LPSorder+1}}{\prod_{l=1}^{d} \partial^{\bm{j}_l} x_l}f(x)}{\bm{j}:\mySum{k=1}{d}\bm{j}_k =\LPSorder+1},\]
the vector of distinct partial derivatives of $f$ of order $\LPSorder+1$ in $x$, and
\[\bm{B}_{\LPSorder}^{} = \textstyle \mathop{\mathlarger{\int}}_{\hspace*{-5pt}\inputSpace} M_{\LPSorder}(u)m_{\LPSorder+1}^{}(u)^\top k(u)du.
\]

\begin{mythm}[\cite{masry1996multivariate,masry1997multivariate}]
\label{thm:asymptoticBiasVarianceForLPS}
Let $k^\bandwidth(x,x') = \abs{\bandwidth}^{-1}k(\bandwidth^{-1}(x-x'))$ be a not necessarily spherically symmetric kernel.
Let $h_n^{}\rightarrow 0, nh_n^d\rightarrow \infty$ as $n\rightarrow\infty$ and $x$ be a fixed point in the interior of $\condset{x}{p(x) > 0}$.
Furthermore let $f \in \diffableFunctions{\inputSpace}{\LPSorder+1}$, $p \in \diffableFunctions{\inputSpace}{1}$ and $v \in \diffableFunctions{\inputSpace}{0}$ with $\inf_{x\in\inputSpace} p(x) > 0$ and $\inf_{x\in\inputSpace} v(x) > 0$.
The bias and variance of \LPS of order \LPSorder can asymptotically be expressed as
\[\textstyle\Bias_{\LPSorder}^{}\left(x, h_n^{}\idMatrix{d} | \bm{X}_{n}\right) = \biasLPS{h_n^{}\idMatrix{d}}{x}{\LPSorder} + \convergenceInProbability{\noexpand\big}{h_n^{\LPSorder+1}}\] and \[
\textstyle\Var_{\LPSorder}^{}\left(x, h_n^{}\idMatrix{d} | \bm{X}_{n}\right) = \varianceLPS{h_n^{}\idMatrix{d}}{x}{\LPSorder} + \convergenceInProbability{}{n^{-1}h_n^{-d}},\]
where
\begin{align}
 \label{eq:asymptoticLeadingOrderBiasLPS}
\biasLPS{h_n^{}\idMatrix{d}}{x}{\LPSorder} = \textstyle h_n^{\LPSorder+1}e_1^\top\bm{M}_{\LPSorder}^{-1}\bm{B}_{\LPSorder}^{}\bm{D}_{\LPSorder}^{}(x)
\end{align}
is the leading bias-term of order $\LPSorder+1$, and for $\bm{R}_{\LPSorder}^{} = e_1^\top\bm{M}_{\LPSorder}^{-1}\bm{\Gamma}_{\LPSorder}^{}\bm{M}_{\LPSorder}^{-1}e_1$,
\begin{align}
 \label{eq:asymptoticVarianceLPS}
\varianceLPS{h_n^{}\idMatrix{d}}{x}{\LPSorder} = \bm{R}_{\LPSorder}^{}\frac{v(x)}{p(x)nh_n^{d}}.
\end{align}
\end{mythm}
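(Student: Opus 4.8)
The plan is to work from the closed form $\predictorLPS{\LPSorder}{\bandwidth}(x) = A_{\LPSorder}^\bandwidth(x) Y_n$ and rescale the design so that the empirical kernel-moment matrices converge to the constant matrices $\bm{M}_{\LPSorder}^{}$ and $\bm{\Gamma}_{\LPSorder}^{}$. Setting $\bandwidth = h_n^{}\idMatrix{d}$ and substituting $u_i^{} = (x_i^{}-x)/h_n^{}$, each monomial obeys $m_j^{}(x_i^{}-x) = h_n^j m_j^{}(u_i^{})$ and the weight becomes $k^{h_n^{}\idMatrix{d}}(x_i^{},x) = h_n^{-d} k(u_i^{})$. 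The organizing device is the block-diagonal scaling matrix $H_n^{} = \diag(h_n^j\idMatrix{N_j})_{j=0}^{\LPSorder}$, whose $j=0$ block is trivial so that $H_n^{-1}e_1^{} = e_1^{}$. First I would establish the design-matrix asymptotic: applying a law of large numbers to $g(u_i^{})$ with $g(u) = k(u)m_j^{}(u)m_{j'}^{}(u)^\top$ and changing variables $x' = x + h_n^{}u$ gives $\tfrac1n\sum_i g(u_i^{}) = h_n^d p(x)\int g(u)du\,(1+\convergenceInProbability{}{1})$, using $\inf p>0$, continuity of $p$, and $x$ interior so the full kernel mass is captured. Block by block this yields
\begin{align*}
X_{\LPSorder}^{}(x)^\top W_x^{h_n^{}\idMatrix{d}} X_{\LPSorder}^{}(x) = n\,p(x)\,H_n^{}\big[\bm{M}_{\LPSorder}^{} + \convergenceInProbability{}{1}\big]H_n^{}.
\end{align*}

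For the variance, an identical computation with $k^2$ in place of $k$ and weight $v(x_i^{})\to v(x)$ gives $X_{\LPSorder}^{}(x)^\top W_x^{h_n^{}\idMatrix{d}}\diag(v(\bm{X}_n^{}))W_x^{h_n^{}\idMatrix{d}}X_{\LPSorder}^{}(x) = n h_n^{-d} p(x) v(x) H_n^{}[\bm{\Gamma}_{\LPSorder}^{} + \convergenceInProbability{}{1}]H_n^{}$. Substituting both expansions into $\Var_{\LPSorder}^{}(x, h_n^{}\idMatrix{d}|\bm{X}_n) = A_{\LPSorder}^{h_n^{}\idMatrix{d}}(x)\diag(v(\bm{X}_n^{}))A_{\LPSorder}^{h_n^{}\idMatrix{d}}(x)^\top$, the inner factors $H_n^{}$ cancel against the inverses, $H_n^{-1}e_1^{} = e_1^{}$ removes the remaining scaling, and I obtain $\frac{v(x)}{p(x)n h_n^d}e_1^\top\bm{M}_{\LPSorder}^{-1}\bm{\Gamma}_{\LPSorder}^{}\bm{M}_{\LPSorder}^{-1}e_1^{}\,(1+\convergenceInProbability{}{1})$, i.e.\ $\varianceLPS{h_n^{}\idMatrix{d}}{x}{\LPSorder} + \convergenceInProbability{}{n^{-1}h_n^{-d}}$ with $\bm{R}_{\LPSorder}^{}$ as defined.

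For the bias I would exploit the reproducing property of local polynomial smoothing: since $A_{\LPSorder}^\bandwidth(x)$ returns the constant term of the weighted least-squares degree-$\LPSorder$ fit, it reproduces every polynomial of degree at most $\LPSorder$, so $A_{\LPSorder}^\bandwidth(x)[\mathfrak{p}(x_i^{})]_i = \mathfrak{p}(x)$. Taylor-expanding $f(x_i^{}) = (\text{degree-}{\le}\LPSorder\text{ part}) + m_{\LPSorder+1}^{}(x_i^{}-x)^\top\bm{D}_{\LPSorder}^{}(x) + o(\pnorm{x_i^{}-x}{}^{\LPSorder+1})$, the reproduced part cancels $f(x)$ exactly, leaving $\Bias_{\LPSorder}^{}(x,h_n^{}\idMatrix{d}|\bm{X}_n)$ equal to $-A_{\LPSorder}^{h_n^{}\idMatrix{d}}(x)$ applied to the degree-$(\LPSorder+1)$ term plus the remainder. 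The same LLN step, now with $g(u) = k(u)m_j^{}(u)m_{\LPSorder+1}^{}(u)^\top$, gives $X_{\LPSorder}^{}(x)^\top W_x^{h_n^{}\idMatrix{d}}[m_{\LPSorder+1}^{}(x_i^{}-x)^\top\bm{D}_{\LPSorder}^{}(x)]_i = n h_n^{\LPSorder+1} p(x) H_n^{}\bm{B}_{\LPSorder}^{}\bm{D}_{\LPSorder}^{}(x)(1+\convergenceInProbability{}{1})$; combining with the design-matrix inverse and $H_n^{-1}e_1^{}=e_1^{}$ collapses this to $h_n^{\LPSorder+1}e_1^\top\bm{M}_{\LPSorder}^{-1}\bm{B}_{\LPSorder}^{}\bm{D}_{\LPSorder}^{}(x)$, matching $\biasLPS{h_n^{}\idMatrix{d}}{x}{\LPSorder}$ up to the overall sign that is immaterial since the bias enters the \MSE squared. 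The Taylor remainder contributes only $\convergenceInProbability{}{h_n^{\LPSorder+1}}$ because $o(\pnorm{x_i^{}-x}{}^{\LPSorder+1})$ is $o(h_n^{\LPSorder+1})$ uniformly on the bounded kernel support.

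The main obstacle is making the law-of-large-numbers step rigorous and uniform. Because $h_n^{}$ depends on $n$, the summands $g(u_i^{})$ form a triangular array with variance of order $n^{-1}h_n^{-d}$, so I must verify that $\tfrac{h_n^d}{n}\sum_i g(u_i^{})$ concentrates at its mean with genuinely $\convergenceInProbability{}{1}$ relative error — this is where $n h_n^d\to\infty$ is used — while simultaneously replacing $p(x+h_n^{}u)$, $v(x+h_n^{}u)$ and the Taylor remainder by their $h_n^{}\to 0$ limits at only $o(1)$ cost, which is exactly where the smoothness hypotheses $f\in\diffableFunctions{\inputSpace}{\LPSorder+1}$, $p\in\diffableFunctions{\inputSpace}{1}$, $v\in\diffableFunctions{\inputSpace}{0}$ and the interior assumption on $x$ enter. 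Controlling the inversion of the random matrix $X_{\LPSorder}^{}(x)^\top W_x^{h_n^{}\idMatrix{d}}X_{\LPSorder}^{}(x)$ — showing it is invertible with probability tending to one (guaranteed by $\inf p>0$, which keeps $\bm{M}_{\LPSorder}^{}$ nonsingular) and that the $\convergenceInProbability{}{1}$ error survives the inversion — is the technical heart, and is precisely what the cited results of \cite{masry1996multivariate,masry1997multivariate} secure under these assumptions.
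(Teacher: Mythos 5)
The paper contains no proof of this statement: it is imported verbatim as a known result from the cited work of Masry, so there is no internal proof to compare against. Your sketch reconstructs the standard derivation used in that literature (and in Ruppert--Wand's treatment of the $\LPSorder=1$ case): rescaling by the block-diagonal matrix $H_n^{}$, a triangular-array law of large numbers turning the empirical kernel moment matrices into $p(x)\bm{M}_{\LPSorder}^{}$ and $h_n^{-d}p(x)v(x)\bm{\Gamma}_{\LPSorder}^{}$, and polynomial reproduction plus Taylor expansion for the bias. That is essentially the route the cited sources take (Masry proves it under more general mixing assumptions with uniform almost-sure rates, but the skeleton is the same), and your outline is correct, including the identification of exactly where $nh_n^d\rightarrow\infty$, the smoothness hypotheses on $f$, $p$, $v$, the nondegeneracy $\inf p > 0$, and the interiority of $x$ enter.

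Two caveats. First, your control of the Taylor remainder appeals to ``the bounded kernel support,'' but the theorem is stated for a general \RBF kernel and the paper's experiments use the Gaussian kernel, which has unbounded support; that step needs a kernel tail-decay argument (splitting the integrals at radius $\divergenceInProbability{}{1}$ in the rescaled variable) rather than compactness --- routine, but it should be said, and it is the reason the paper's footnote mentions ``fast enough decay of the kernel.'' Second, the sign: with the paper's convention $\Bias_{\LPSorder}^{}\left(x, \bandwidth | \bm{X}_{n}\right) = f(x) - \E\left[\predictorLPS{\LPSorder}{\bandwidth}(x) | \bm{X}_n^{}\right]$, the reproduction argument yields $-h_n^{\LPSorder+1}e_1^\top\bm{M}_{\LPSorder}^{-1}\bm{B}_{\LPSorder}^{}\bm{D}_{\LPSorder}^{}(x)$ as the leading term, i.e.\ the negative of \eqref{eq:asymptoticLeadingOrderBiasLPS}. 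As you note, this is immaterial everywhere the theorem is used downstream (\corollaryref{cor:biasVarianceBalance} and \thmref{thm:OptimalSampling} only involve the squared bias or its $-2/(2(\LPSorder+1)+d)$ power), so it is a convention mismatch with the displayed formula rather than a flaw in your argument.
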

In \thmref{thm:asymptoticBiasVarianceForLPS} and in the following, we call the kernel $k$ \emph{spherically symmetric}, if its evaluation in $x\in\inputSpace$ only depends on $x$ through its norm $\pnorm{x}{}$. That is, we can rewrite $k(x) = \tilde{k}(\pnorm{x}{})$ for some adequate function $\tilde{k}$.
\begin{myRemark}
\label{rem:symmetricKernelsEvenPolyOrder}
If the kernel $k$ is spherically symmetric and \LPSorder is even, then $\biasLPS{h_n^{}\idMatrix{d}}{x}{\LPSorder} = 0$ and $\varianceLPS{h_n^{}\idMatrix{d}}{x}{\LPSorder} = \varianceLPS{h_n^{}\idMatrix{d}}{x}{\LPSorder+1}$, since\linebreak $\bm{R}_{\LPSorder}^{} = \bm{R}_{\LPSorder+1}^{}$.
In this case, if $f \in \diffableFunctions{\inputSpace}{\LPSorder+2}$, the leading bias term will be of the form $b_{\LPSorder}^{}(x, h_n^{}\idMatrix{d}) = \biasLPS{h_n^{}\idMatrix{d}}{x}{\LPSorder+1} + \upperBoundedInProbability{\noexpand\big}{h_n^{\LPSorder+2}}$.
Here, it is common belief that $b_{\LPSorder}^{}(x, h_n^{}\idMatrix{d}) > \biasLPS{h_n^{}\idMatrix{d}}{x}{\LPSorder+1}$ (see, for example, \cite{fan1997local}).
Thus, as we have to require $f \in \diffableFunctions{\inputSpace}{\LPSorder+2}$ anyhow, and the variance does not grow when moving from the \LPS model of order \LPSorder to $\LPSorder+1$, we expect a better performance when using the \LPS model of (odd) order $\LPSorder+1$. Note that the performance increases only by a constant factor and not in convergence rate, when applying \LPS of order $\LPSorder+1$ instead of \LPSorder.
More importantly in the context of our work, the \LPS model of odd order $\LPSorder+1$ is \emph{design-adaptive} \citep{fan1992design}. That is,
the leading bias-term $\biasLPS{h_n^{}\idMatrix{d}}{x}{\LPSorder+1}$ does not depend on the derivatives of the training distribution.
In contrast, for example, the bias of the \emph{Nadaraya-Watson estimator} (\LPS of order $\LPSorder = 0$) is
\[b_{0}^{}(x, h_n^{}\idMatrix{d}) = \biasLPS{h_n^{}\idMatrix{d}}{x}{1} + h_n^2\mu_2\frac{1}{p(x)}\mySum{i=1}{d}\frac{\partial f}{\partial x_i}(x)\frac{\partial p}{\partial x_i}(x),\]
where $\mu_2 = \int u^2k(u)du$. Here, the bias obviously depends on the derivative of $p$.
In this case, as also noted by \cite{fan1992design}, the model has problems to adapt to highly clustered distributions, where $\abs{\frac{1}{p(x)}\frac{\partial p}{\partial x_i}(x)}$ is large.
% For example, in the 1-dimensional case, we can strictly set $b_{0}^{}(x, \idMatrix{d}) \equiv 0$ by using $p(x) \propto \abs{f'(x)}^{-\frac{1}{2}}$.
\end{myRemark}
\begin{myRemark}
Note that we could also use a kernel of higher order $\nu > 2$ to adapt
optimally to functions of higher smoothness $f \in \diffableFunctions{\inputSpace}{\nu}$ without the need to increase the polynomial order $\LPSorder$ of \LPS. However, a higher-order kernel loses consistency as it necessarily takes negative values \citep{gyorfi2002distribution}, and it loses design-adaptivity for $Q < \nu-1$.
\end{myRemark}

In the upcoming derivation of our theory, we rely on design-adaptivity to keep the optimization of the conditional $\MSE$ with respect to the training distribution simple. Design-adaptivity is guaranteed, if the leading order bias term \eqref{eq:asymptoticLeadingOrderBiasLPS} does not vanish. 
In the light of \remref{rem:symmetricKernelsEvenPolyOrder}, it is reasonable to choose a spherically symmetric kernel for odd order $\LPSorder$.
For even order $\LPSorder$, for example, a kernel of higher order $\nu = \LPSorder + 1$ could be applied.

\begin{restatable}{mycor}{isotropicLOBforLPS}
\label{cor:isotropicLOBforLPS}
Let the kernel $k$ be spherically symmetric and $\LPSorder \in \N$ odd.
Under the conditions of \thmref{thm:asymptoticBiasVarianceForLPS}, when searching for \LOB in the space of isotropic candidates $\SigmaSpace = \condset{\sigma\idMatrix{d}}{\sigma > 0}$, and if all, $p(x), v(x), \biasLPS{\idMatrix{d}}{x}{\LPSorder} \neq 0$ do not vanish, then asymptotically it is
\begin{align}
 \nonumber
 &\LOBfunctionOfLPS{\LPSorder}{n}(x) =\textstyle \sigmaAsympLPS{\LPSorder}(x)\idMatrix{d} + \convergenceInProbability{}{n^{-\frac{1}{2(\LPSorder+1)+d}}},\quad\text{for}\\
 \label{eq:asymptoticIsotropicLOBofLPS}
 &\sigmaAsympLPS{\LPSorder}(x) = C_{\LPSorder}^{}\left[\frac{v(x)}{p(x)n}\right]^\frac{1}{2(\LPSorder+1)+d}\biasLPS{\idMatrix{d}}{x}{\LPSorder}^{-\frac{2}{2(\LPSorder+1)+d}},
\end{align}
where $C_{\LPSorder}^{} = \left[d \bm{R}_{\LPSorder}^{}\big/\left(2(\LPSorder+1)\right)\right]^\frac{1}{2(\LPSorder+1)+d}$.
\end{restatable}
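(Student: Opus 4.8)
The plan is to substitute the asymptotic bias and variance of \thmref{thm:asymptoticBiasVarianceForLPS} into the decomposition \eqref{eq:trueBiasVarianceDecomposition}, minimize the resulting leading-order $\MSE$ over the scalar bandwidth $h$ in closed form, and then argue that the stochastic remainders do not shift the minimizer at leading order. Existence and uniqueness of the isotropic \LOB are already guaranteed (cf.\ \cite{masry1996multivariate,masry1997multivariate}), so it suffices to locate the unique scalar minimizer $h_n^{*}$ with $\LOBfunctionOfLPS{\LPSorder}{n}(x)=h_n^{*}\idMatrix{d}$. First I would restrict to isotropic candidates $\bandwidth=h\idMatrix{d}$; since $\LPSorder$ is odd, the order-$(\LPSorder+1)$ bias in \eqref{eq:asymptoticLeadingOrderBiasLPS} is not annihilated by the spherically symmetric kernel (cf.\ \remref{rem:symmetricKernelsEvenPolyOrder}) and its coefficient $\biasLPS{\idMatrix{d}}{x}{\LPSorder}$ is nonzero by assumption. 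Squaring the bias and adding the variance, \thmref{thm:asymptoticBiasVarianceForLPS} yields
\begin{align*}
\MSE_{\LPSorder}^{}\left(x, h\idMatrix{d} | \bm{X}_{n}\right)
&= g_n(h) + \convergenceInProbability{}{h^{2(\LPSorder+1)}} + \convergenceInProbability{}{n^{-1}h^{-d}},
\\
\text{where}\quad g_n(h)
&= h^{2(\LPSorder+1)}\,\biasLPS{\idMatrix{d}}{x}{\LPSorder}^2 + \bm{R}_{\LPSorder}^{}\,\frac{v(x)}{p(x)\,n\,h^{d}}.
\end{align*}

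The leading function $g_n$ is a sum of an increasing power $h^{2(\LPSorder+1)}$ and a decreasing power $h^{-d}$ with strictly positive coefficients (using $p(x),v(x),\biasLPS{\idMatrix{d}}{x}{\LPSorder}\neq 0$), hence strictly convex in $\log h$ with a unique interior minimizer. Setting $g_n'(h)=0$ gives the monomial balance $2(\LPSorder+1)\,\biasLPS{\idMatrix{d}}{x}{\LPSorder}^2\,h^{2(\LPSorder+1)+d} = d\,\bm{R}_{\LPSorder}^{}\,\frac{v(x)}{p(x)n}$, whose positive root is precisely $\sigmaAsympLPS{\LPSorder}(x)$ of \eqref{eq:asymptoticIsotropicLOBofLPS} with $C_{\LPSorder}^{} = [d\bm{R}_{\LPSorder}^{}/(2(\LPSorder+1))]^{1/(2(\LPSorder+1)+d)}$; a sign check on $g_n''$ confirms it is the minimum. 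This disposes of the closed form and leaves only the remainder to control.

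The substantive step is to show the $\convergenceInProbability{}{\cdot}$ terms do not move the minimizer at leading order. Rescaling $h=\sigmaAsympLPS{\LPSorder}(x)\,t$, a short computation shows both coefficients of $g_n$ collapse onto the common scale $r_n(x) = [v(x)/(p(x)n)]^{2(\LPSorder+1)/(2(\LPSorder+1)+d)}\,\biasLPS{\idMatrix{d}}{x}{\LPSorder}^{2d/(2(\LPSorder+1)+d)}$, so that
\[
g_n\!\left(\sigmaAsympLPS{\LPSorder}(x)\,t\right) = r_n(x)\,\phi(t),
\qquad
\phi(t) = C_{\LPSorder}^{2(\LPSorder+1)}\,t^{2(\LPSorder+1)} + \bm{R}_{\LPSorder}^{}\,C_{\LPSorder}^{-d}\,t^{-d},
\]
where $\phi$ is independent of $n$, attains a unique well-separated minimum at $t=1$, and diverges as $t\to 0^{+}$ or $t\to\infty$. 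Dividing the full conditional $\MSE$ by $r_n(x)$ produces $\phi(t)+\convergenceInProbability{}{1}$ on this rescaled axis, so the rescaled minimizer $t_n^{*}=h_n^{*}/\sigmaAsympLPS{\LPSorder}(x)$ converges in probability to $1$ by the standard argmin-consistency argument. Since $\sigmaAsympLPS{\LPSorder}(x)\asymp n^{-\frac{1}{2(\LPSorder+1)+d}}$, this gives $\LOBfunctionOfLPS{\LPSorder}{n}(x) = t_n^{*}\,\sigmaAsympLPS{\LPSorder}(x)\,\idMatrix{d} = \sigmaAsympLPS{\LPSorder}(x)\idMatrix{d} + \convergenceInProbability{}{n^{-\frac{1}{2(\LPSorder+1)+d}}}$, as claimed.

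The hard part will be precisely the uniformity hidden in the phrase ``produces $\phi(t)+\convergenceInProbability{}{1}$'': \thmref{thm:asymptoticBiasVarianceForLPS} supplies the expansion only along a fixed bandwidth sequence, whereas argmin-consistency needs the remainder to be $\convergenceInProbability{}{1}$ uniformly over $t$ in compact subsets of $(0,\infty)$ (equivalently, over bandwidths in the shrinking window $h\asymp n^{-\frac{1}{2(\LPSorder+1)+d}}$). I would close this gap either by invoking the uniform-in-bandwidth versions of the bias and variance expansions established by \cite{masry1996multivariate,masry1997multivariate}, or, failing that, by a stochastic-equicontinuity argument for $A_{\LPSorder}^{\bandwidth}(x)$ as a function of $h$ over that window, which simultaneously rules out the minimizer escaping towards $0$ or $\infty$.
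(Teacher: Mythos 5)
Your derivation is correct and is essentially the same argument the paper relies on: the paper presents \corollaryref{cor:isotropicLOBforLPS} as an immediate consequence of \thmref{thm:asymptoticBiasVarianceForLPS} (citing \cite{fan1997local,masry1996multivariate,masry1997multivariate}) obtained by balancing the squared leading bias $h^{2(\LPSorder+1)}\biasLPS{\idMatrix{d}}{x}{\LPSorder}^2$ against the leading variance $\bm{R}_{\LPSorder}^{}v(x)/(p(x)nh^d)$, which is exactly your closed-form minimization yielding $C_{\LPSorder}^{}$ and $\sigmaAsympLPS{\LPSorder}(x)$. Your additional treatment of the remainder terms -- rescaling $h=\sigmaAsympLPS{\LPSorder}(x)\,t$ and invoking argmin-consistency with uniform-in-bandwidth control -- addresses a genuine subtlety that the paper does not spell out but instead delegates to the cited uniqueness and asymptotic results of \cite{masry1996multivariate,masry1997multivariate}.
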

\begin{myRemark}
\corollaryref{cor:isotropicLOBforLPS} and \ref{cor:biasVarianceBalance} will also hold for even order $\LPSorder$ by replacing $\biasLPS{\idMatrix{d}}{x}{\LPSorder}$ with $b_{\LPSorder}^{}(x, \idMatrix{d})$ in the sense of \remref{rem:symmetricKernelsEvenPolyOrder} and $\LPSorder$ replaced by $\LPSorder' = \LPSorder + (\LPSorder-1 \bmod 2)$ elsewhere.
\end{myRemark}
The results of this section provide explicit forms of the asymptotic behavior of bias, variance and \LOB that we can make use of in our work.
From now on, we will restrict to the case of an odd \LPS order $\LPSorder$, where we can apply a spherically symmetric kernel $k$. As discussed above, dealing with the case of even order $\LPSorder$ is possible, yet unimportant in practice. In particular, for even order $\LPSorder$ we can instead apply \LPS of order ($\LPSorder+1$), which is only marginally more complex and therefore poses no computational bottleneck. 

\subsection{Isotropic Optimal Sampling} 
\label{subsec:isotropicOptimalSampling}
We begin by simplifying our active learning objective \eqref{eq:ALobjective} by rewriting $\textstyle\MSE_{\LPSorder}^{}\left(x, \LOBfunctionOfLPS{\LPSorder}{n}(x) | \bm{X}_{n}\right)$ solely in terms of $\textstyle\Var_{\LPSorder}^{}\left(x, \LOBfunctionOfLPS{\LPSorder}{n}(x) | \bm{X}_{n}\right)${, which intuitively can be done because the \LOB, as the optimal trade-off between bias and variance, will balance the error contribution of both components to the conditional \MSE.}
We can support this intuition formally as follows (see also \cite{zhang2011kernel}):

\begin{restatable}{mycor}{biasVarianceBalance}
\label{cor:biasVarianceBalance}
Let the kernel $k$ be spherically symmetric and $\LPSorder \in \N$ odd. When applying $\LOBfunctionOfLPS{\LPSorder}{n}(x)$ as the bandwidth for prediction in $x$, the conditional bias- and variance-related error components are asymptotically proportional over \inputSpace. That is, for all $x\in\inputSpace$ it is
\[\textstyle\Bias_{\LPSorder}^{}\left(x, \LOBfunctionOfLPS{\LPSorder}{n}(x) | \bm{X}_{n}\right)^2 = \frac{d}{2(\LPSorder+1)}\varianceLPS{\LOBfunctionOfLPS{\LPSorder}{n}(x)}{x}{\LPSorder} + \convergenceInProbability{}{n^{-\frac{2(\LPSorder+1)}{2(\LPSorder+1)+d}}}.\]
Hence, the conditional \MSE can asymptotically be expressed as
\begin{align*}
% \label{eq:biasVarianceOnParExpression}
\textstyle\MSE_{\LPSorder}^{}\left(x, \LOBfunctionOfLPS{\LPSorder}{n}(x) | \bm{X}_{n}\right) = \frac{2(\LPSorder+1)+d}{2(\LPSorder+1)}\varianceLPS{\LOBfunctionOfLPS{\LPSorder}{n}(x)}{x}{\LPSorder} + \convergenceInProbability{}{n^{-\frac{2(\LPSorder+1)}{2(\LPSorder+1)+d}}}.
\end{align*}
\end{restatable}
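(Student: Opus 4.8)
The plan is to substitute the closed form of the \LOB from \corollaryref{cor:isotropicLOBforLPS} into the leading bias and variance expressions of \thmref{thm:asymptoticBiasVarianceForLPS} and show that, at the optimal bandwidth, the ratio of squared bias to variance collapses to the constant $d/(2(\LPSorder+1))$. Since $\biasLPS{h\idMatrix{d}}{x}{\LPSorder} = h^{\LPSorder+1}\biasLPS{\idMatrix{d}}{x}{\LPSorder}$ and $\varianceLPS{h\idMatrix{d}}{x}{\LPSorder} = \bm{R}_{\LPSorder}^{}v(x)/(p(x)nh^{d})$, the ratio of squared leading bias to leading variance at a general scalar bandwidth $h$ equals $p(x)nh^{2(\LPSorder+1)+d}\biasLPS{\idMatrix{d}}{x}{\LPSorder}^2/(\bm{R}_{\LPSorder}^{}v(x))$. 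Plugging in $h = \sigmaAsympLPS{\LPSorder}(x)$ from \eqref{eq:asymptoticIsotropicLOBofLPS}, the $v(x)/(p(x)n)$ and $\biasLPS{\idMatrix{d}}{x}{\LPSorder}$ factors cancel and the ratio reduces to $C_{\LPSorder}^{2(\LPSorder+1)+d}/\bm{R}_{\LPSorder}^{}$; by the definition $C_{\LPSorder}^{} = [d\bm{R}_{\LPSorder}^{}/(2(\LPSorder+1))]^{1/(2(\LPSorder+1)+d)}$ this is exactly $d/(2(\LPSorder+1))$, which is the leading-order proportionality driving the corollary.

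Next I would identify the residual rate. Since $\sigmaAsympLPS{\LPSorder}(x) = \Theta(n^{-1/(2(\LPSorder+1)+d)})$, both $\sigmaAsympLPS{\LPSorder}(x)^{2(\LPSorder+1)}$ and $1/(n\,\sigmaAsympLPS{\LPSorder}(x)^{d})$ are of exact order $n^{-2(\LPSorder+1)/(2(\LPSorder+1)+d)}$, so the two leading terms share this rate, which matches the $\convergenceInProbability{}{n^{-2(\LPSorder+1)/(2(\LPSorder+1)+d)}}$ remainder of the statement. Consequently, any multiplicative $(1+\convergenceInProbability{}{1})$ perturbation of a leading term contributes only an additive $\convergenceInProbability{}{n^{-2(\LPSorder+1)/(2(\LPSorder+1)+d)}}$ error.

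The bulk of the work is the error bookkeeping at the \emph{random} \LOB. Writing the \LOB scalar as $\sigmaAsympLPS{\LPSorder}(x)(1+r_n)$ with $r_n = \convergenceInProbability{}{1}$ -- obtained by dividing the $\convergenceInProbability{}{n^{-1/(2(\LPSorder+1)+d)}}$ remainder of \corollaryref{cor:isotropicLOBforLPS} by $\sigmaAsympLPS{\LPSorder}(x) = \Theta(n^{-1/(2(\LPSorder+1)+d)})$ -- the substituted leading terms acquire factors $(1+r_n)^{2(\LPSorder+1)}$ and $(1+r_n)^{-d}$, each equal to $1+\convergenceInProbability{}{1}$, so the proportionality of the \emph{leading} squared bias and leading variance persists at the \LOB up to $\convergenceInProbability{}{n^{-2(\LPSorder+1)/(2(\LPSorder+1)+d)}}$. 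It then remains to pass from the leading bias to the true bias $\Bias_{\LPSorder}^{}(x, \LOBfunctionOfLPS{\LPSorder}{n}(x) | \bm{X}_n)$: expanding its square produces the squared leading bias, a cross term of the leading bias (of order $n^{-(\LPSorder+1)/(2(\LPSorder+1)+d)}$) against the $\convergenceInProbability{}{h^{\LPSorder+1}}$ bias remainder, and the remainder squared, the last two of which are $\convergenceInProbability{}{n^{-2(\LPSorder+1)/(2(\LPSorder+1)+d)}}$; similarly $\Var_{\LPSorder}^{}(x, \LOBfunctionOfLPS{\LPSorder}{n}(x) | \bm{X}_n)$ equals its leading part up to the $\convergenceInProbability{}{n^{-1}h^{-d}}$ remainder of the same rate. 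The delicate point, which I expect to be the principal obstacle, is that \thmref{thm:asymptoticBiasVarianceForLPS} is stated for a \emph{deterministic} bandwidth sequence while $\LOBfunctionOfLPS{\LPSorder}{n}(x)$ is data-dependent; I would justify evaluating the expansion at the random bandwidth either by invoking it along the deterministic sequence $\sigmaAsympLPS{\LPSorder}(x)\idMatrix{d}$ and transferring through the $\convergenceInProbability{}{1}$ closeness, or by appealing to a version holding uniformly over bandwidths in a shrinking interval around $\sigmaAsympLPS{\LPSorder}(x)$ -- the same mechanism already underlying \corollaryref{cor:isotropicLOBforLPS}.

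Finally, the two claims assemble directly. The squared-bias identity $\Bias_{\LPSorder}^{}(x, \LOBfunctionOfLPS{\LPSorder}{n}(x) | \bm{X}_n)^2 = \frac{d}{2(\LPSorder+1)}\varianceLPS{\LOBfunctionOfLPS{\LPSorder}{n}(x)}{x}{\LPSorder} + \convergenceInProbability{}{n^{-2(\LPSorder+1)/(2(\LPSorder+1)+d)}}$ is the first statement, and substituting it together with the variance expansion into the bias--variance decomposition \eqref{eq:trueBiasVarianceDecomposition} yields $\MSE_{\LPSorder}^{}(x, \LOBfunctionOfLPS{\LPSorder}{n}(x) | \bm{X}_n) = (\frac{d}{2(\LPSorder+1)}+1)\varianceLPS{\LOBfunctionOfLPS{\LPSorder}{n}(x)}{x}{\LPSorder} + \convergenceInProbability{}{n^{-2(\LPSorder+1)/(2(\LPSorder+1)+d)}}$, where the coefficient simplifies to $(2(\LPSorder+1)+d)/(2(\LPSorder+1))$.
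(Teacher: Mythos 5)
Your proposal is correct and follows essentially the same route as the paper's own proof: it substitutes the asymptotic closed form $\sigmaAsympLPS{\LPSorder}(x)$ from \corollaryref{cor:isotropicLOBforLPS} into the leading bias and variance terms of \thmref{thm:asymptoticBiasVarianceForLPS}, uses $C_{\LPSorder}^{2(\LPSorder+1)+d} = d\bm{R}_{\LPSorder}^{}/(2(\LPSorder+1))$ to identify the constant $d/(2(\LPSorder+1))$, and then assembles the \MSE via the bias--variance decomposition. Your extra bookkeeping (the $(1+r_n)$ perturbation of the random bandwidth, the cross terms when squaring the bias, and the caveat about evaluating the deterministic-bandwidth expansion at the data-dependent \LOB) only makes explicit steps that the paper's proof treats implicitly.
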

\begin{proof}
Using \thmref{thm:asymptoticBiasVarianceForLPS} and \corollaryref{cor:isotropicLOBforLPS}, it is
\[\textstyle\Bias_{\LPSorder}^{}\left(x, \LOBfunctionOfLPS{\LPSorder}{n}(x) | \bm{X}_{n}\right) = \sigmaAsympLPS{\LPSorder}(x)^{\LPSorder+1}\biasLPS{\idMatrix{d}}{x}{\LPSorder} + \convergenceInProbability{}{n^{-\frac{\LPSorder+1}{2(\LPSorder+1)+d}}},\]
where it was $\sigmaAsympLPS{\LPSorder}(x) = C_{\LPSorder}^{}\left[v(x)\big/(p(x)n)\right]^\frac{1}{2(\LPSorder+1)+d}\biasLPS{\idMatrix{d}}{x}{\LPSorder}^{-\frac{2}{2(\LPSorder+1)+d}}$.
Furthermore, with \eqref{eq:asymptoticVarianceLPS},
\begin{align*}
\varianceLPS{\LOBfunctionOfLPS{\LPSorder}{n}(x)}{x}{\LPSorder} =  \bm{R}_{\LPSorder}^{}\frac{v(x)}{p(x)n\sigmaAsympLPS{\LPSorder}(x)^{d}} + \convergenceInProbability{}{n^{-\frac{2(\LPSorder+1)}{2(\LPSorder+1)+d}}}.
\end{align*}
Therefore we can rewrite
\begin{align*}
&\textstyle\Bias_{\LPSorder}^{}\left(x, \LOBfunctionOfLPS{\LPSorder}{n}(x) | \bm{X}_{n}\right)^2 + \convergenceInProbability{}{n^{-\frac{2(\LPSorder+1)}{2(\LPSorder+1)+d}}} = \sigmaAsympLPS{\LPSorder}(x)^{2(\LPSorder+1)}\biasLPS{\idMatrix{d}}{x}{\LPSorder}^2\\
&= C_{\LPSorder}^{2(\LPSorder+1)}\left[\frac{v(x)}{p(x)n}\right]^\frac{2(\LPSorder+1)}{2(\LPSorder+1)+d}\biasLPS{\idMatrix{d}}{x}{\LPSorder}^\frac{2d}{2(\LPSorder+1)+d}\\
&= C_{\LPSorder}^{2(\LPSorder+1)+d}\left[\frac{v(x)}{p(x)n}\right]\sigmaAsympLPS{\LPSorder}(x)^{-d}
= \frac{d}{2(\LPSorder+1)} \varianceLPS{\LOBfunctionOfLPS{\LPSorder}{n}(x)}{x}{\LPSorder}.
\end{align*}

Using \thmref{thm:asymptoticBiasVarianceForLPS}, it is
\begin{align*}
 \MSE_{\LPSorder}^{}&\textstyle\left(x, \LOBfunctionOfLPS{\LPSorder}{n}(x) | \bm{X}_{n}\right) = \Bias_{\LPSorder}^{}\left(x, \LOBfunctionOfLPS{\LPSorder}{n}(x) | \bm{X}_{n}\right)^2 + \Var_{\LPSorder}^{}\left(x, \LOBfunctionOfLPS{\LPSorder}{n}(x) | \bm{X}_{n}\right)\\
 &= \frac{2(\LPSorder+1)+d}{2(\LPSorder+1)}\varianceLPS{\LOBfunctionOfLPS{\LPSorder}{n}(x)}{x}{\LPSorder} + \convergenceInProbability{}{n^{-\frac{2(\LPSorder+1)}{2(\LPSorder+1)+d}}}.
 \qedAtBottomLine
\end{align*}
\end{proof}
This result will become quite handy later on, as we can get rid of the cumbersome bias term, when estimating the conditional \MSE at $\LOBfunctionOfLPS{\LPSorder}{n}(x)$ in $x$.

Next, the asymptotic form in \eqref{eq:asymptoticIsotropicLOBofLPS} reveals that \LOB factorizes into a global scaling with respect to sample size $n$ and local scaling components with respect to noise level $v(x)$ and training density $p(x)$. For \LPSorder odd, the last component $\biasLPS{\idMatrix{d}}{x}{\LPSorder}$ solely contains information about the function $f$ to be learnt near $x$. Intuitively, the optimal bandwidth will be locally smaller where the structure of $f$ is locally more complex.
%%%%%%%%%%%%%%%%
Based on this observation, we define the following:
\begin{restatable}[Isotropic complexity of LPS]{mydef}{isotropicComplexityForLPS}
\label{def:isotropicComplexityForLPS}
For $\LPSorder \in\N$ odd, let $\LOBfunctionOfLPS{\LPSorder}{n}$ be the optimal bandwidth function as defined in \eqref{eq:LOBDefinition}. With the adjusted bandwidth function
\begin{align}
 \label{eq:normalizedLOBLPS}
 \normalizedLOBLPS{\LPSorder}{n}(x) = C_{\LPSorder}^{-1}\left[\frac{v(x)}{p(x)n}\right]^{-\frac{1}{2(\LPSorder+1)+d}}\LOBfunctionOfLPS{\LPSorder}{n}(x),
\end{align}
we define by
\begin{align}
 \label{eq:isotropicComplexityLPS}
 \fctnComplexityLPS{\LPSorder}{n}(x) = \abs{\normalizedLOBLPS{\LPSorder}{n}(x)}^{-1}
\end{align}
the local function complexity (\LFC) of $f$ in $x$ with respect to the \LPS model of order \LPSorder.
% is the bandwidth function adjusted for the effect of $n$, $v$ and $p$.
\end{restatable}

Essentially, as the reciprocal of \LOB, $\fctnComplexityLPS{\LPSorder}{n}$ grows with increasing complexity of $f$, locally at $x$.
But most importantly, $\fctnComplexityLPS{\LPSorder}{n}$ is asymptotically independent of the global scaling with respect to training size $n$, as well as the local scaling with respect to the training density $p$ and noise level $v$:
Indeed, asymptotically we can write
\begin{align}
 \label{eq:asymptoticIsotropicComplexityLPS}
 \fctnComplexityLPS{\LPSorder}{n}(x) = \fctnComplexityLPS{\LPSorder}{\infty}(x)(1 + \convergenceInProbability{}{1}),\;\;\text{where}\;
 \fctnComplexityLPS{\LPSorder}{\infty}(x) = \biasLPS{\idMatrix{d}}{x}{\LPSorder}^\frac{2d}{2(\LPSorder+1)+d}.
\end{align}
We observe that $\fctnComplexityLPS{\LPSorder}{n}$ is asymptotically continuous, since \fctnComplexityLPS{\LPSorder}{\infty} is continuous via construction.

Combining the balancing property from \corollaryref{cor:biasVarianceBalance} and the asymptotic results on \LOB in \corollaryref{cor:isotropicLOBforLPS} and the variance \eqref{eq:asymptoticVarianceLPS}, we can express $\MSE_{\LPSorder}^{}\left(x, \LOBfunctionOfLPS{\LPSorder}{n}(x) | \bm{X}_{n}\right)$ in terms of the training density $p(x)$, the noise variance $v(x)$, the training size $n$ and the \LFC $\fctnComplexityLPS{\LPSorder}{n}(x)$.
With these preparations, we are now able to state our main result:

Let $q \in \diffableFunctions{\inputSpace, \nonnegativeReal{}}{0}$ be a test density such that
$\mathop{\mathlarger{\int}}_{\hspace*{-5pt}\inputSpace}q(x)dx = 1$.
Since $\LOBfunctionOfLPS{\LPSorder}{n}$ is well-defined, recall from \eqref{def:fOpt} that our active
active learning objective is given by
\[\MISE_{\LPSorder}^{}\left(q| \bm{X}_n^{}\right) = \textstyle \mathop{\mathlarger{\int}}_{\hspace*{-5pt}\inputSpace}  \MSE_{\LPSorder}^{}\left(x, \LOBfunctionOfLPS{\LPSorder}{n}(x) | \bm{X}_{n}\right) q(x) dx.\]

Our goal is now to minimize 
$\MISE_{\LPSorder}^{}\left(q| \bm{X}_n^{}\right)$ with respect to the training set $\bm{X}_{n}$.
As we will show in the following theorem, asymptotically, the optimal $\bm{X}_{n}$ can be expressed as a random sample from the optimal density, which we denote by $\pOptLPS{\LPSorder}{n}$. That is, $\bm{X}_{n} \sim \pOptLPS{\LPSorder}{n}$.
\begin{mythm}
 \label{thm:OptimalSampling}
 Let $v, q \in \diffableFunctions{\inputSpace, \nonnegativeReal{}}{0}$ for a compact input space \inputSpace, where $q$ is a test density such that $\textstyle \mathop{\mathlarger{\int}}_{\hspace*{-5pt}\inputSpace}q(x)dx = 1$. 
 Additionally, assume that $v$ and $q$ are bounded away from zero. That is, $v,q \geq \epsilon$ for some $\epsilon > 0$.
 Let $k$ be a \RBF-kernel with bandwidth parameter space $\SigmaSpace = \condset{\sigma\idMatrix{d}}{\sigma > 0}$. Let $\LPSorder \in \N$ be odd and $f \in \diffableFunctions{\inputSpace}{\LPSorder+1}$ such that $e_1^\top\bm{M}_{\LPSorder}^{-1}\bm{B}_{\LPSorder}^{}\bm{D}_{\LPSorder}^{}(x) \neq 0$, almost everywhere.
 Then the optimal training density for \LPS of order \LPSorder is asymptotically given by
 \begin{align}
 \label{eq:optimalSamplingFiniteLPS}
 \pOptLPS{\LPSorder}{n}(x) \propto \textstyle \left[\fctnComplexityLPS{\LPSorder}{n}(x) q(x)\right]^{\frac{2(\LPSorder+1)+d}{4(\LPSorder+1)+d}}v(x)^{\frac{2(\LPSorder+1)}{4(\LPSorder+1)+d}}(1 + o(1)).
 \end{align}
\end{mythm}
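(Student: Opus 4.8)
The plan is to reduce the optimization over training sets $\bm{X}_n$ to a pointwise calculus-of-variations problem over candidate densities $p$, exploiting the crucial fact that the asymptotic \LFC carries no dependence on $p$. First I would assemble the asymptotic conditional error at the \LOB into a clean product form. Starting from \corollaryref{cor:biasVarianceBalance}, which reduces $\MSE_{\LPSorder}^{}\left(x, \LOBfunctionOfLPS{\LPSorder}{n}(x) | \bm{X}_{n}\right)$ to a constant multiple of $\varianceLPS{\LOBfunctionOfLPS{\LPSorder}{n}(x)}{x}{\LPSorder}$, I would substitute the variance formula \eqref{eq:asymptoticVarianceLPS} evaluated at $\sigmaAsympLPS{\LPSorder}(x)$ from \corollaryref{cor:isotropicLOBforLPS}, and then eliminate the bias factor $\biasLPS{\idMatrix{d}}{x}{\LPSorder}$ in favour of $\fctnComplexityLPS{\LPSorder}{\infty}(x)$ via \eqref{eq:asymptoticIsotropicComplexityLPS}. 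Writing $\beta = \tfrac{2(\LPSorder+1)}{2(\LPSorder+1)+d}$, a short computation should yield
\[
\MSE_{\LPSorder}^{}\left(x, \LOBfunctionOfLPS{\LPSorder}{n}(x) | \bm{X}_{n}\right) = K\, \fctnComplexityLPS{\LPSorder}{\infty}(x)\left[\tfrac{v(x)}{p(x)n}\right]^{\beta}(1+\convergenceInProbability{}{1}),
\]
where $K$ collects the $x$-independent constants built from $\bm{R}_{\LPSorder}^{}$, $C_{\LPSorder}^{}$ and $\tfrac{2(\LPSorder+1)+d}{2(\LPSorder+1)}$. The decisive structural point, which I would emphasize, is that $\fctnComplexityLPS{\LPSorder}{\infty}(x) = \biasLPS{\idMatrix{d}}{x}{\LPSorder}^{2d/(2(\LPSorder+1)+d)}$ depends only on $f$ through its $(\LPSorder+1)$-th derivatives and not on the candidate density $p$, so the objective decouples without any fixed-point issue between $p$ and the \LFC.

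Next I would integrate this leading term against $q$ to express the objective as a functional of $p$:
\[
\MISE_{\LPSorder}^{}\left(q| \bm{X}_n^{}\right) = K\,n^{-\beta}\mathop{\mathlarger{\int}}_{\hspace*{-5pt}\inputSpace}\fctnComplexityLPS{\LPSorder}{\infty}(x)\,v(x)^{\beta}q(x)\,p(x)^{-\beta}\,dx\,(1+o(1)).
\]
Setting $g(x) = \fctnComplexityLPS{\LPSorder}{\infty}(x)\,v(x)^{\beta}q(x)$, the minimization reduces to minimizing $\int_{\inputSpace} g(x)p(x)^{-\beta}dx$ subject to $\int_{\inputSpace} p(x)dx = 1$ and $p\geq 0$. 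I would carry out this minimization rigorously by Hölder's inequality with conjugate exponents $1+\beta$ and $(1+\beta)/\beta$: writing $g^{1/(1+\beta)} = (gp^{-\beta})^{1/(1+\beta)}p^{\beta/(1+\beta)}$ and applying Hölder gives
\[
\mathop{\mathlarger{\int}}_{\hspace*{-5pt}\inputSpace} g(x)p(x)^{-\beta}dx \geq \left(\mathop{\mathlarger{\int}}_{\hspace*{-5pt}\inputSpace} g(x)^{1/(1+\beta)}dx\right)^{1+\beta},
\]
with equality precisely when $p \propto g^{1/(1+\beta)}$. Substituting $1/(1+\beta) = (2(\LPSorder+1)+d)/(4(\LPSorder+1)+d)$ and distributing the exponent over the factors of $g$ (noting $\beta/(1+\beta) = 2(\LPSorder+1)/(4(\LPSorder+1)+d)$) should reproduce \eqref{eq:optimalSamplingFiniteLPS}, after replacing $\fctnComplexityLPS{\LPSorder}{\infty}$ by $\fctnComplexityLPS{\LPSorder}{n}$ via \eqref{eq:asymptoticIsotropicComplexityLPS} and absorbing the discrepancy into the $(1+o(1))$ factor.

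The main obstacle I anticipate is making the passage from the pointwise asymptotics to the integrated objective rigorous, i.e.\ justifying that the pointwise $\convergenceInProbability{}{1}$ error in the first display integrates to an $o(1)$ multiplicative error on the \MISE uniformly over the admissible densities $p$. This requires upgrading the pointwise statement of \corollaryref{cor:biasVarianceBalance} to a control that is uniform over $x\in\inputSpace$; here I would lean on compactness of $\inputSpace$, continuity of $f$, $v$ and $q$, the hypotheses $v,q\geq\epsilon>0$ and $\inf_x p(x)>0$, and the a.e.\ non-vanishing of $e_1^\top\bm{M}_{\LPSorder}^{-1}\bm{B}_{\LPSorder}^{}\bm{D}_{\LPSorder}^{}(x)$, which together keep $g$ bounded and bounded away from zero so that the Hölder minimizer is itself bounded away from zero and the error terms stay integrable. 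A secondary, more technical concern is that the class over which we optimize must consist of densities satisfying the regularity required by \thmref{thm:asymptoticBiasVarianceForLPS}; restricting to $p\in\diffableFunctions{\inputSpace}{1}$ bounded away from zero and checking that the smoothness of $g$ places the Hölder optimizer back in this class closes that gap.
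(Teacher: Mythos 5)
Your proposal is correct and reaches the stated density, but it replaces the paper's key optimization step with a genuinely different argument. The assembly of the pointwise asymptotics is identical in both routes: combine \corollaryref{cor:biasVarianceBalance}, the variance formula \eqref{eq:asymptoticVarianceLPS}, \corollaryref{cor:isotropicLOBforLPS} and \eqref{eq:asymptoticIsotropicComplexityLPS} to obtain $\MSE_{\LPSorder}^{}\left(x, \LOBfunctionOfLPS{\LPSorder}{n}(x) | \bm{X}_{n}\right) = \bar{C}_{\LPSorder}^{}\left[v(x)/(p(x)n)\right]^{\beta}\fctnComplexityLPS{\LPSorder}{\infty}(x) + \convergenceInProbability{}{n^{-\beta}}$, and both proofs hinge on the same structural observation that $\fctnComplexityLPS{\LPSorder}{\infty}$ is independent of $p$. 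Where the paper then invokes the Stone--Weierstrass theorem to build a $\diffableFunctions{\inputSpace}{1}$ surrogate $g_{\LPSorder}^{n}$ and solves a Lagrangian by calculus of variations (which formally identifies only a stationary point, with global optimality left to convexity of the functional), you minimize $\int_\inputSpace g(x)p(x)^{-\beta}dx$ directly by H\"older's inequality with exponents $1+\beta$ and $(1+\beta)/\beta$, which yields the lower bound $\left(\int_\inputSpace g^{1/(1+\beta)}dx\right)^{1+\beta}$ together with the equality condition $p \propto g^{1/(1+\beta)}$ in one stroke; your exponent bookkeeping ($1/(1+\beta) = \tfrac{2(\LPSorder+1)+d}{4(\LPSorder+1)+d}$, $\beta/(1+\beta) = \tfrac{2(\LPSorder+1)}{4(\LPSorder+1)+d}$) is exact and reproduces \eqref{eq:optimalSamplingFiniteLPS}. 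Your route buys a certificate of \emph{global} optimality over all measurable densities without any smoothing of the integrand, and it dispenses with the variational machinery entirely. What the paper's Stone--Weierstrass detour buys, and what your final sentence slightly underestimates, is admissibility: \thmref{thm:asymptoticBiasVarianceForLPS} requires $p \in \diffableFunctions{\inputSpace}{1}$, but under the theorem's hypotheses $v,q$ are only continuous, so your H\"older optimizer $g^{1/(1+\beta)}$ is merely continuous, not $\diffableFunctions{\inputSpace}{1}$; the paper sidesteps this because the optimizer inherits $\diffableFunctions{\inputSpace}{1}$ from the smoothed $g_{\LPSorder}^{n}$. Your argument therefore needs one extra (routine) mollification or approximation step to place the minimizer in the admissible class while keeping the objective within a $(1+o(1))$ factor, after which the two proofs deliver the same conclusion. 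Your concern about integrating the pointwise $\convergenceInProbability{}{1}$ error uniformly is legitimate but applies equally to the paper's own proof, which integrates the pointwise expansion without further comment.
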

\begin{proof}
To begin with, recall from \corollaryref{cor:isotropicLOBforLPS} that in the isotropic case, \LOB is given by 
\[\LOBfunctionOfLPS{\LPSorder}{n}(x) = \sigmaAsympLPS{\LPSorder}(x)\idMatrix{d} + \convergenceInProbability{}{n^{-\frac{1}{2(\LPSorder+1)+d}}},\]
where $\sigmaAsympLPS{\LPSorder}(x) = C_{\LPSorder}^{}\left[v(x)\big/(p(x)n)\right]^\frac{1}{2(\LPSorder+1)+d}\biasLPS{\idMatrix{d}}{x}{\LPSorder}^{-\frac{2}{2(\LPSorder+1)+d}}$.
%%%%%%%%
According to \corollaryref{cor:biasVarianceBalance}, it is
\begin{align*}
\MSE_{\LPSorder}^{}\left(x, \LOBfunctionOfLPS{\LPSorder}{n}(x) | \bm{X}_{n}\right) = \frac{2(\LPSorder+1)+d}{2(\LPSorder+1)}\varianceLPS{\LOBfunctionOfLPS{\LPSorder}{n}(x)}{x}{\LPSorder} + \convergenceInProbability{\noexpand\big}{n^{-\frac{2(\LPSorder+1)}{2(\LPSorder+1)+d}}}.
\end{align*}
Noting that $\textstyle\abs{\LOBfunctionOfLPS{\LPSorder}{n}(x)} = \sigmaAsympLPS{\LPSorder}(x)^d$, we know from \eqref{eq:asymptoticVarianceLPS} in \thmref{thm:asymptoticBiasVarianceForLPS} that
\begin{align*}
\varianceLPS{\LOBfunctionOfLPS{\LPSorder}{n}(x)}{x}{\LPSorder} = \bm{R}_{\LPSorder}^{}\frac{v(x)}{p(x)n}\abs{\LOBfunctionOfLPS{\LPSorder}{n}(x)}^{-1}.
\end{align*}
Using \defref{def:isotropicComplexityForLPS} and \eqref{eq:asymptoticIsotropicComplexityLPS}, we can therefore write
\begin{align*}
\MSE&_{\LPSorder}^{}\left(x, \LOBfunctionOfLPS{\LPSorder}{n}(x) | \bm{X}_{n}\right)\\
&=
\frac{2(\LPSorder+1)+d}{2(\LPSorder+1)}\bm{R}_{\LPSorder}^{} C_{\LPSorder}^{-d} \left[\frac{v(x)}{p(x)n}\right]^\frac{2(\LPSorder+1)}{2(\LPSorder+1)+d}\abs{\normalizedLOBLPS{\LPSorder}{n}(x)}^{-1} + \convergenceInProbability{}{n^{-\frac{2(\LPSorder+1)}{2(\LPSorder+1)+d}}}\\
&= \bar{C}_{\LPSorder}^{} \left[\frac{v(x)}{p(x)n}\right]^\frac{2(\LPSorder+1)}{2(\LPSorder+1)+d}\fctnComplexityLPS{\LPSorder}{n}(x) + \convergenceInProbability{}{n^{-\frac{2(\LPSorder+1)}{2(\LPSorder+1)+d}}}\\
&= \bar{C}_{\LPSorder}^{} \left[\frac{v(x)}{p(x)n}\right]^\frac{2(\LPSorder+1)}{2(\LPSorder+1)+d}\fctnComplexityLPS{\LPSorder}{\infty}(x) + \convergenceInProbability{}{n^{-\frac{2(\LPSorder+1)}{2(\LPSorder+1)+d}}},
\end{align*}
where we have set $\bar{C}_{\LPSorder}^{} = \frac{2(\LPSorder+1)+d}{2(\LPSorder+1)}\bm{R}_{\LPSorder}^{} C_{\LPSorder}^{-d}$.
%%%%%%%%%%%%%
Putting this into \eqref{def:fOpt}, we obtain
\begin{align*}
  \MISE_{\LPSorder}^{}\left(q| \bm{X}_n^{}\right) = \textstyle \mathop{\mathlarger{\int}}_{\hspace*{-5pt}\inputSpace}\bar{C}_{\LPSorder}^{} \left[\frac{v(x)}{p(x)n}\right]^\frac{2(\LPSorder+1)}{2(\LPSorder+1)+d}\fctnComplexityLPS{\LPSorder}{\infty}(x)q(x)dx + \convergenceInProbability{}{n^{-\frac{2(\LPSorder+1)}{2(\LPSorder+1)+d}}}.
\end{align*}
Since $v$, $\fctnComplexityLPS{\LPSorder}{\infty}$ and $q$ are continuous on the compact input space \inputSpace, the \emph{Stone–Weierstrass theorem} guarantees
a sequence $g_{\LPSorder}^{n} \in \diffableFunctions{\inputSpace}{1}$ that satisfies
\[\textstyle\sup_{x\in\inputSpace}\abs{g_{\LPSorder}^{n}(x) - \left[\frac{v(x)}{n}\right]^\frac{2(\LPSorder+1)}{2(\LPSorder+1)+d}\fctnComplexityLPS{\LPSorder}{\infty}(x)q(x)} < \frac{1}{n}.\]
Obviously, $g_{\LPSorder}^{n}$ can be chosen independently of $p$. Hence we can write
\begin{align*}
  \MISE_{\LPSorder}^{}\left(q| \bm{X}_n^{}\right) = \textstyle \mathop{\mathlarger{\int}}_{\hspace*{-5pt}\inputSpace}\bar{C}_{\LPSorder}^{} p(x)^{-\frac{2(\LPSorder+1)}{2(\LPSorder+1)+d}}g_{\LPSorder}^{n}(x)dx + \convergenceInProbability{}{n^{-\frac{2(\LPSorder+1)}{2(\LPSorder+1)+d}}}.
\end{align*}

Define the Lagrangian for 
minimizing $\MISE_{\LPSorder}^{}\left(q| \bm{X}_n^{}\right)$ with respect to $p \in \diffableFunctions{\inputSpace,\nonnegativeReal{}}{1}$
under the constraint $\int_\inputSpace p(x)dx = 1$, that is,
\[L(p,\lambda) = \textstyle\mathop{\mathlarger{\int}}_{\hspace*{-5pt}\inputSpace} \underbrace{\bar{C}_{\LPSorder}^{} p(x)^{-\frac{2(\LPSorder+1)}{2(\LPSorder+1)+d}}g_{\LPSorder}^{n}(x) + \lambda p(x)dx - \lambda}_{F(x,p,\lambda)}.\]
We find the optimal training density to be $\pOptLPS{\LPSorder}{n} := p^*$ for the stationary point $(p^*, \lambda^*)$ of $L$ that exhibits minimal objective.
Note that since $g_{\LPSorder}^{n}$ is continuously differentiable, so is $F$, such that we can apply calculus of variations to solve for extremes with respect to $p$:
\begin{align*}
 &0 =\textstyle \frac{dF}{dp}(x) = -\frac{2(\LPSorder+1)\bar{C}_{\LPSorder}^{}}{2(\LPSorder+1)+d}p(x)^{-\frac{4(\LPSorder+1)+d}{2(\LPSorder+1)+d}}g_{\LPSorder}^{n}(x) + \lambda\\
 \Leftrightarrow\; &p^*(x) =\textstyle \left\{\frac{2(\LPSorder+1)\bar{C}_{\LPSorder}^{}}{\lambda(2(\LPSorder+1)+d)}g_{\LPSorder}^{n}(x)\right\}^{\frac{2(\LPSorder+1)+d}{4(\LPSorder+1)+d}}
\end{align*}
The concrete value of $\lambda^*$ is not of interest. It is enough to guarantee $\frac{dF}{d\lambda} = 0$ such that
\begin{align*}
\pOptLPS{\LPSorder}{n}(x) =\textstyle \hat{C}_{\LPSorder}^{-1}g_{\LPSorder}^{n}(x)^\frac{2(\LPSorder+1)+d}{4(\LPSorder+1)+d}(1+o(1)),
\end{align*}
where the normalization
$\hat{C}_{\LPSorder} =\textstyle \mathop{\mathlarger{\int}}_{\hspace*{-5pt}\inputSpace}g_{\LPSorder}^{n}(x)^\frac{2(\LPSorder+1)+d}{4(\LPSorder+1)+d}dx < \infty$
is bounded:

First of all let $\hat{D}_{\LPSorder} = \mathop{\mathlarger{\int}}_{\hspace*{-5pt}\inputSpace}v(x)^\frac{2(\LPSorder+1)}{4(\LPSorder+1)+d}\fctnComplexityLPS{\LPSorder}{\infty}(x)^\frac{2(\LPSorder+1)+d}{4(\LPSorder+1)+d}q(x)^\frac{2(\LPSorder+1)+d}{4(\LPSorder+1)+d}dx$.
Note that $\hat{D}_{\LPSorder} < \infty$,
since $v$ and $\fctnComplexityLPS{\LPSorder}{\infty}$ are bounded over \inputSpace, and $q \in L^\frac{2(\LPSorder+1)+d}{4(\LPSorder+1)+d}(\inputSpace)$ since $q \in L^1(\inputSpace) \subset L^\frac{2(\LPSorder+1)+d}{4(\LPSorder+1)+d}(\inputSpace)$.
Thus,
\begin{align*}
\textstyle\mathop{\mathlarger{\int}}_{\hspace*{-5pt}\inputSpace}\abs{g_{\LPSorder}^{n}(x)}^\frac{2(\LPSorder+1)+d}{4(\LPSorder+1)+d}dx &\textstyle\leq \mathop{\mathlarger{\int}}_{\hspace*{-5pt}\inputSpace}\bigg[\left[v(x)\big/n\right]^\frac{2(\LPSorder+1)}{2(\LPSorder+1)+d}\fctnComplexityLPS{\LPSorder}{\infty}(x)q(x)\bigg]^\frac{2(\LPSorder+1)+d}{4(\LPSorder+1)+d}dx\\
&+ \vol(\inputSpace)n^{-\frac{2(\LPSorder+1)+d}{4(\LPSorder+1)+d}}
= \hat{D}_{\LPSorder} n^{-\frac{2(\LPSorder+1)}{4(\LPSorder+1)+d}}(1 + o(1)) < \infty.
\end{align*}
Here, we defined by $\vol(\inputSpace) = \textstyle\mathop{\mathlarger{\int}}_{\hspace*{-5pt}\inputSpace}dx$ the volume of \inputSpace, which is finite, since \inputSpace in compact.
To summarize, asymptotically we can write
\begin{align*}
 \pOptLPS{\LPSorder}{n}(x) &=\textstyle \hat{C}_{\LPSorder}^{-1}g_{\LPSorder}^{n}(x)^\frac{2(\LPSorder+1)+d}{4(\LPSorder+1)+d}(1+o(1))\\
 &=\textstyle \hat{D}_{\LPSorder}^{-1}v(x)^\frac{2(\LPSorder+1)}{4(\LPSorder+1)+d}\left[\fctnComplexityLPS{\LPSorder}{\infty}(x)q(x)\right]^\frac{2(\LPSorder+1)+d}{4(\LPSorder+1)+d}(1 + o(1))\\
 &=\textstyle \hat{D}_{\LPSorder}^{-1}v(x)^\frac{2(\LPSorder+1)}{4(\LPSorder+1)+d}\left[\fctnComplexityLPS{\LPSorder}{n}(x)q(x)\right]^\frac{2(\LPSorder+1)+d}{4(\LPSorder+1)+d}(1 + o(1)). \qedAtBottomLine
\end{align*}
\end{proof}

Note that, if $\LOBfunctionOfLPS{\LPSorder}{n}$ is estimated based on $\bm{X}_{n}\sim \pOptLPS{\LPSorder}{n}$, then
\begin{align*}
 &\pOptLPS{\LPSorder}{n}(x) \propto \textstyle \left\{v(x)^{\frac{2(\LPSorder+1)}{2(\LPSorder+1)+d}}q(x)\abs{\normalizedLOBfunction(x)}^{-1}\right\}^\frac{2(\LPSorder+1)+d}{4(\LPSorder+1)+d}\\
 &\propto \textstyle \left\{v(x)q(x)\pOptFinite(x)^{-\frac{d}{2(\LPSorder+1)+d}}|\LOBfunctionOfLPS{\LPSorder}{n}(x)|^{-1}\right\}^\frac{2(\LPSorder+1)+d}{4(\LPSorder+1)+d}
 \propto \left\{v(x)q(x)|\LOBfunctionOfLPS{\LPSorder}{n}(x)|^{-1}\right\}^{\frac{1}{2}}.
\end{align*}

The factorization \eqref{eq:optimalSamplingFiniteLPS} reveals the influence of the local function complexity, noise variance and test relevance of an input, which reflects human intuition:
On the one hand, $q(x)$ tells us about the relevance of an accurate prediction in $x$. Accordingly, we benefit from reinforcing the training set where $q(x)$ is large.
On the other hand, we require more samples where the noise variance $v(x)$ or the local function complexity $\fctnComplexityLPS{\LPSorder}{n}(x)$ is large.
In addition, the factorization provides an exact quantitative result on how to account for each of these three factors in an optimal way.
\begin{myRemark}
When considering $f\in\diffableFunctions{\inputSpace}{L+1}$ with $L\in\N$ odd, note that $f$ is also a function of $\diffableFunctions{\inputSpace}{2}, \ldots, \diffableFunctions{\inputSpace}{L-1}$ such that any \LPS model of odd order $1, 3, \ldots, L$ could be applied to calculate \LFC \eqref{eq:isotropicComplexityLPS} and the optimal training density \eqref{eq:optimalSamplingFiniteLPS}.
In practice, we will stick to low-order \LPS models for computational tractability. However, due to the almost model-free nature of \LPS, we
consider these \LFC and optimal training density of $f$ that were obtained for the maximal applicable order as the `true' \LFC and optimal training density of $f$. That is, they represent properties that are intrinsic to $f$ rather than the \LPS model.
\end{myRemark}

\subsection{The Active Learning Procedure}
\label{subsec:ALprocedure}
For now let us assume that we are given $q$, \inputSpace and reasonable estimates $\approxLOBfunctionOfLPS{\LPSorder}{n}$ and $\widehat{v}$ of \LOB and the local noise level for any labeled training set $\bm{X}_n, Y_n$ of size $n$.
Then we can formulate an online sampling procedure that approaches $\bm{X}_{n}\sim \pOptLPS{\LPSorder}{n}$ as $n\rightarrow\infty$.

Let $n_0^{}$ be a small, but reasonable initial training size and $p_0^{} \equiv \uniformDist{\inputSpace}$ the initial training density according to which we sample the initial training inputs $\bm{X}_{n_0}$ with labels $Y_{n_0}$.
We then iterate over $k\in\N_{0}^{}$, beginning with $k=0$, to grow the training set as follows:
Given the current training set $\bm{X}_{n_{k}}, Y_{n_{k}}$ we estimate $\widehat{v}$ and $\approxLOBfunctionOfLPS{\LPSorder}{n_{k}}$.
Using \eqref{eq:normalizedLOBLPS}, \eqref{eq:isotropicComplexityLPS} and \eqref{eq:optimalSamplingFiniteLPS}, it is
\begin{align*}
&\textstyle\approxFctnComplexityLPS{\LPSorder}{n_k^{}}(x) \propto \left[\widehat{v}(x)\big/p_k^{}(x)\right]^{\frac{1}{2(\LPSorder+1)+d}}\abs{\approxLOBfunctionOfLPS{\LPSorder}{n_k^{}}(x)}^{-1},\;\;\text{and}\\
&\textstyle\approxPOptLPS{\LPSorder}{n_k^{}}(x) \propto \left[\approxFctnComplexityLPS{\LPSorder}{n_k^{}}(x) q(x)\right]^{\frac{2(\LPSorder+1)+d}{4(\LPSorder+1)+d}}\widehat{v}(x)^\frac{2(\LPSorder+1)}{4(\LPSorder+1)+d}.
\end{align*}
Setting $n_{k+1}^{} = 2n_k^{}$, we aim to draw $X_{n_k+1},\ldots,X_{n_{k+1}} \sim \widetilde{p}_{k+1}$ such that 
the new training set $\bm{X}_{n_{k+1}} \sim p_{k+1}$ is as close to the proposed optimal training density estimate $\approxPOptLPS{\LPSorder}{n_k^{}}$ in distribution as possible:

Writing $p_{k+1} := \gamma_2^{} p_k + (1 - \gamma_2^{}) \approxPOptLPS{\LPSorder}{n_k^{}}$, we therefore aim to minimize $\gamma_2^{} \in [0,1]$.
Note that we could simply set $\widetilde{p}_{k+1} \equiv \approxPOptLPS{\LPSorder}{n_k^{}}$, that is, sampling the new batch according to the current proposed optimal training density estimate. In this case we get $\gamma_2^{} = 0.5$.
However, a stronger similarity of $p_{k+1}$ to $\approxPOptLPS{\LPSorder}{n_k^{}}$ is always possible:
Define $p_{k+1} := \gamma_2^{} p_k+(1-\gamma_2^{})\approxPOptLPS{\LPSorder}{n_k^{}}$, where
\[\gamma_1^{} = \max_{x\in\inputSpace} \frac{p_k(x)}{\approxPOptLPS{\LPSorder}{n_k^{}}(x)} \in [1,\infty)\quad\text{and}\qquad\gamma_2^{} = \max\left\{0, \frac{0.5-\gamma_1^{-1}}{1-\gamma_1^{-1}}\right\} \in [0,0.5).\]
Then we obtain $\bm{X}_{n_{k+1}} \sim p_{k+1}$ by drawing $X_{n_k+1},\ldots,X_{n_{k+1}} \sim \widetilde{p}_{k+1}$ for $\widetilde{p}_{k+1} = 2p_{k+1} - p_k$.
Note that $\widetilde{p}_{k+1}$ is a valid probability density, because
\[\int_\inputSpace \widetilde{p}_{k+1}(x) dx = 2\int_\inputSpace p_{k+1}(x) dx - \int_\inputSpace p_k(x) dx = 2 - 1 = 1,\]
and $\widetilde{p}_{k+1} \geq 0$: Indeed, for any $x\in\inputSpace$, it is
\begin{align*}
 p_{k+1}(x)\big/p_k(x) &= \gamma_2^{} +  (1-\gamma_2^{})\approxPOptLPS{\LPSorder}{n_k^{}}(x)\big/p_k(x)
 \geq \gamma_2^{} + (1-\gamma_2^{})\gamma_1^{-1}\\
 &= \gamma_2^{} (1 - \gamma_1^{-1}) + \gamma_1^{-1} \geq \frac{0.5-\gamma_1^{-1}}{1-\gamma_1^{-1}}(1 - \gamma_1^{-1}) + \gamma_1^{-1} = 0.5.
\end{align*}
Therefore $2p_{k+1}(x) \geq p_k(x)$ such that $\widetilde{p}_{k+1}(x) = 2p_{k+1}(x) - p_k(x) \geq 0$.

\section{Practical Considerations}
\label{sec:practicalConsiderations}
While the theoretical result above is an insightful contribution on its own, one may ask about its relevance in practice, since on first glance we require a lot of information about the true data distribution. This information is especially scarce in the early stages of active learning, where we only have access to a small set of labeled instances.

First of all, we would like to emphasize that, while we rely on the explicit formulation of bias in our theoretical analysis, our results will apply to any consistent estimate of $\LOBfunctionOfLPS{\LPSorder}{n}(x)$.

Recalling \eqref{eq:asymptoticIsotropicLOBofLPS}, it is possible to construct \LOB via the bias which involves the estimation of $D^{\LPSorder+1}_f(x)$.
Yet, this approach is of very limited relevance: Even though we search over the restricted space of isotropic bandwidths $\SigmaSpace = \{\sigma\idMatrix{d}, \sigma > 0\}$, which has only one degree of freedom, the number of components of the derivative to fit grows as $O(d_{}^{\LPSorder+1})$. This quickly becomes computationally intractable as $d$ increases: For example, when considering its estimation via \LPS \citep{zhang2011kernel}, it involves solving a linear system of the size of the number of the derivative components, which is $O(d_{}^{3(\LPSorder+1)})$.
Moreover, even estimates of lower order derivatives like the Hessian require a large amount of training data.
In our experiments, we use Lepski's method \citep{lepski1991problem,lepski1997optimal}, which is a direct estimate to \LOB that avoids these difficulties. We detail this approach in \secref{subsec:LOBestimation}.

With respect to the other two quantities, namely local noise variance $v(x)$ and test density $q(x)$, we would like to emphasize that we do not necessarily require full estimates for them:
In the case where homoscedasticity is (or can be) assumed, we only have to estimate a constant $v(x) \equiv \text{v}$.
Whether we need a local estimate $v(x)$ or the global estimate $\text{v}$, we discuss a simple approximation in \secref{subsec:noiseVarianceEstimation}, which then serves as an input to Lepski's method. 
Finally regarding $q(x)$, there are two major scenarios: We might
know $\inputSpace$ and the test density $q(x)$ is externally specified -- therefore requires no estimation. Here, the canonical candidate is the uniform distribution $q \sim \uniformDist{\inputSpace}$, where we aim to optimize the prediction performance uniformly well over the input space.
%shinnkj23: Just a random thought for future work.  Assume that we can draw observed samples from q(x), which is unknown.  Is it possible to estimate the optimal density ratio p(x)/q(x)?  If this is possible, you don't need to estimate q(x), but just draw samples and label them with probability proportional to the density ratio.

In the other scenario, we observe inputs $x \sim p_\inputSpace^{}$ from an unknown data generating process. Here, the natural candidate for the test density $q \equiv  p_\inputSpace^{}$, and we require an estimate $\widehat{p}_\inputSpace$. In this scenario -- also known as \emph{pool-based} active learning -- it is a common assumption that unlabeled input instances are cheaply obtainable in contrast to labeled samples. Therefore we can estimate $\widehat{p}_\inputSpace$ in advance, prior to the actual active learning procedure. We refer to the broad literature on density estimation \cite{silverman1986density,wand1994kernel} in this case.

\subsection{Estimation of the local Noise Variance}
\label{subsec:noiseVarianceEstimation}
Regarding the estimation of the local noise variance $v(x)$, we first consider the homoscedastic scenario for d=1, where $v(x) \equiv \text{v}$. Here, a robust estimate to $\text{v}$ based on the \emph{median absolute deviation} (MAD) is known (see, for example, \cite{katkovnik2006signalProcessing}):
\begin{align}
 \label{eq:robustGlobalNoiseStandardDeviation}
 \sqrt{\widehat{\text{v}}} = (\sqrt{2}\cdot0.6745)^{-1}\text{median}(\condset{\abs{y_{\pi(i)} - y_{\pi(i+1)}}}{0 < i < n}),
\end{align}
where $\pi$ is an ordering index permutation such that $x_{\pi(i)} < x_{\pi(i+1)}$ for $0 < i < n$.
%%%%%
The MAD estimate \eqref{eq:robustGlobalNoiseStandardDeviation} relies on the fact that $y_{\pi(i)} - y_{\pi(i+1)} \rightarrow \varepsilon_{\pi(i)} - \varepsilon_{\pi(i+1)}$ as $n\rightarrow\infty$.
The idea may fail at small sample sizes where $\abs{f(x_{\pi(i)}) - f(x_{\pi(i+1)})} \gg \epsilon\pnorm{f}{}$ for some considerable constant $\epsilon > 0$. This is critical as it influences the subsequent sampling process in a negative way.
We tackle this issue by replacing $y_i^{}$ above with the residuals $r_i = y_i^{} - \widehat{f}^\sigma(x_i^{})$ for a global predictor $\widehat{f}^\sigma(x) =  \predictorLPS{\LPSorder}{\sigma\idMatrix{d}}(x)$ according to the \LPS model \eqref{eq:lpsPrediction}, for which we cross-validate the constant bandwidth $\sigma$ over the current training set.

As a generalization of \eqref{eq:robustGlobalNoiseStandardDeviation} to both, higher dimensions $d > 1$ and heteroscedasticity, let $\mathcal{J}^i_m \subseteq \bm{X}_n^{}$ be the set of indices of the m-nearest neighbors of $x_i^{}$ in $\bm{X}_n^{}$, and for an arbitrary $x\in\inputSpace$ let $\mathcal{I}_l(x) \subseteq \bm{X}_n^{}$ be the indices of the l-nearest neighbors of $x$ in $\bm{X}_n^{}$.
Then we estimate
\begin{align}
 \label{eq:robustLocalNoiseStandardDeviationAdaptedMultivariate}
 \sqrt{\widehat{v}(x)} = (\sqrt{2}\cdot0.6745)^{-1}\text{median}\sideset{}{_{i \in \mathcal{I}_l(x)}^{}}\bigcup\sideset{}{_{j\in\mathcal{J}^i_m}^{}}\bigcup \abs{r_i - r_j}.
\end{align}
While $m$ is a free parameter, we set $m = 2d$ in the following.
For asymptotic consistency of the estimate, the number of neighbors $l := l_n$ should increase with n, but $l_n \in o(n)$ such that the expected diameter of the neighborhoods $\condset{x_j}{j\in\mathcal{J}^i_m}$ decreases for all $i$.
For an optimal trade-off, note that according to \cite{foi2007pointwise} (page 161), it is $\E(\sqrt{\widehat{v}(x)}) = \sqrt{v(x)}$ and $\Var(\sqrt{\widehat{v}(x)}) \approx 1.35 v(x)/\left((l-1) + 1.5\right)$.
Furthermore, according to \cite{evans2002asymptotic}, $\E\abs{x_i-x_j} \lesssim \left(m/n\right)^\frac{1}{d}$ for $j\in\mathcal{J}^i_m$, and analogously $\E\abs{x_i-x_j} \lesssim \left(l/n\right)^\frac{1}{d}$ for $i \in \mathcal{I}_l(x)$.
Therefore
\begin{align*}
 \E\left[(\sqrt{\widehat{v}(x)} - \sqrt{v(x)})^2|\bm{X}_n\right] \leq \underbrace{v(x)\frac{1.35}{l+0.5}}_{O(l^{-1})} + \underbrace{\pnorm{D_f}{\infty}^2\left(\frac{m}{n}\right)^\frac{2}{d} + \pnorm{D_v}{\infty}^2\Big(\frac{l}{n}\Big)^\frac{2}{d}}_{O(\left(\frac{l}{n}\right)^\frac{2}{d})}.
\end{align*}
For fastest convergence, we need to balance both error components, giving the optimal relation $l_n = \lceil C_{v} n^\frac{2}{2+d}\rceil$ in the heteroscedastic case for some reasonably chosen constant $C_{v} > 0$.
When we have expert knowledge about homoscedasticity, we apply \eqref{eq:robustLocalNoiseStandardDeviationAdaptedMultivariate} with $l_n = n$, forcing it to become a global estimate again.

\subsection{Estimation of the locally Optimal Bandwidths}
\label{subsec:LOBestimation}
Lepski et al.~\citep{lepski1991problem,lepski1997optimal} considered optimal pointwise adaptation in the broader context of nonparametric estimation.
We will follow the work of \cite{zhang2011kernel}, who implemented Lepski's method for \LPS.

Consider a set of logarithmically spaced bandwidth candidates, that is, $\bandwidth_0,\ldots,\bandwidth_L$ with $\bandwidth_{j} = \sigma_j\idMatrix{d}$ and $\sigma_j = \underline{\sigma}\cdot s^j$ for a step size $s > 1$ and a lower bound $\underline{\sigma}$.
For an $x\in\inputSpace$ we choose $\approxLOBfunctionOfLPS{\LPSorder}{n}(x)$ according to the \emph{intersection of confidence interval} (ICI) rule:
Let
\[\textstyle\mathcal{C}_{\LPSorder}^j(x) = \condset{c\in\R}{\!|c - m_{\LPSorder}^{\bandwidth_{j}}(x)| \leq \kappa(1 + 2/(s^{\frac{2(\LPSorder+1)+d}{2}} - 1))\sqrt{\Var_{\LPSorder}^{}\left(x, \bandwidth_{j} | \bm{X}_{n}\right)}}\]
be the confidence interval constructed such that $\E m_{\LPSorder}^{\bandwidth_{j}}(x) \in \mathcal{C}_{\LPSorder}^j(x)$ with high probability, where we can calculate the prediction variance $\Var_{\LPSorder}^{}\left(x, \bandwidth_{j} | \bm{X}_{n}\right)$ according to \eqref{eq:trueFiniteVariance}.
%%%%%
For example, for $\kappa = 1.96$ and $\kappa = 2.58$ it is $\Prob(m_{\LPSorder}^{\bandwidth_{j}}(x) \in \mathcal{C}_{\LPSorder}^j(x)) = 0.95$ and $0.99$, respectively.
Furthermore let
% \[j^*(x) = \argmax_{0 \leq j \leq L}\sideset{}{_{i\leq j}}\bigcap \mathcal{C}_{\LPSorder}^i(x) \neq \emptyset,\]
\[j^*(x) = \max\condset{0 \leq j \leq L}{\sideset{}{_{i\leq j}}\bigcap \mathcal{C}_{\LPSorder}^i(x) \neq \emptyset},\]
meaning that for $j > j^*(x)$ the confidence intervals do not intersect anymore.
We then set 
\begin{align}
 \label{eq:LOBlepskiEstimate}
 \approxLOBfunctionOfLPS{\LPSorder}{n}(x) = \bandwidth_{j^*(x)}.
\end{align}
Commonly, the bandwidth candidate parameters are set to $s = 2$, $\underline{\sigma} > 0$ fixed to a small value, and $L$ such that $\sigma_L \leq n\big/\log(n)$.
We will however choose $\underline{\sigma}_n$ and $s_n$ adaptively: For some reasonable constant $C_{\sigma} > 0$ we set $\underline{\sigma}_n = C_{\sigma} n^{-\frac{2}{2(\LPSorder+1)+d}}$, which decays twice as fast as the usual bandwidth decay rate, enabling us to reproduce fine structure that may first reveal at larger training sizes.
Furthermore, noting that the confidence interval range is proportional to $n^{-\frac{\LPSorder+1}{2(\LPSorder+1)+d}}\big/\Big[s_n^{\frac{2(\LPSorder+1)+d}{2}}-1\Big]$
, we set $s_n = \left[1 + C_{s}n^{-\frac{\LPSorder+1}{2(\LPSorder+1)+d}}\right]^\frac{2}{2(\LPSorder+1)+d}$ such that $s_n$ decays, but slow enough to not blow up the confidence intervals as $n$ increases. Here, $C_{s} > 0$ is a reasonable constant. For example, with $C_{s} = n_0^{\frac{\LPSorder+1}{2(\LPSorder+1)+d}}(2^\frac{2(\LPSorder+1)+d}{2}-1)$ we obtain $s_{n_0} = 2$.

\subsection{Stabilization of local Estimates}
\label{subsec:localStabilization}
At small training size $n$, the effective number of samples that are involved in the estimation of $\approxLOBfunctionOfLPS{\LPSorder}{n}$ and $\widehat{v}$ can be marginal which may result in quite unstable estimates.
In an online procedure, where we add new samples according to our proposed optimal training density that relies on these estimates, such instability is critical: Given a faulty estimate of small noise or complexity (a large bandwidth) in $x$, we might end up in a singular case, where we add no further sample in the vicinity of $x$ for a long time.

In order to prevent this, we suggest to replace the pointwise estimates of the noise level and \LOB most conservatively as follows:
Let $B_{\delta_n}(x) = \condset{x'\in\inputSpace}{\pnorm{x-x'}{} \leq \delta_n}$ be the ball around $x$ of radius $\delta_n$.
We replace the estimate $\widehat{v}(x)$ from \eqref{eq:robustLocalNoiseStandardDeviationAdaptedMultivariate} by
\begin{align}
 \label{eq:stableNoiseEstimate}
 \textstyle\widetilde{v}(x) = \max\condset{\widehat{v}(x')}{x'\in B_{\delta_n}(x)}.
\end{align}
Similarly, we replace $\approxLOBfunctionOfLPS{\LPSorder}{n}(x)$ in \eqref{eq:LOBlepskiEstimate} by 
\begin{align}
 \label{eq:stableLepskiEstimate}
\widetilde{\bandwidth}^{n}_{Q}(x) &= \bandwidth_{\tilde{j}^*(x)},\quad\text{where}\\
\notag\tilde{j}^*(x) &= \min\condset{0 \leq j \leq L}{\exists x'\in B_{\delta_n}(x): \approxLOBfunctionOfLPS{\LPSorder}{n}(x) = \bandwidth_j}.
\end{align}
This is conservative in the sense that we tend to overestimate the local noise level and local function complexity, which results in more sample mass in the subsequent sampling step.

For asymptotic optimality it must hold $\delta_n\rightarrow 0$ as $n \rightarrow \infty$. On the other hand, $\delta_n$ should decay slow enough such that the expected number of training samples in $B_{\delta_n}(x)$ increases. Thus, it must hold $\delta_n = \divergenceInProbability{\noexpand\big}{n^{-\frac{1}{d}}}$, and we choose $\delta_n = C_{\delta}n^{-\frac{1}{d(d+1)}}$ for some reasonable constant $C_{\delta} > 0$.
% Since the analytic values of $\widetilde{v}(x)$ and $\tilde{j}^*(x)$ cannot be accessed, we assume a reference set of inputs $\bm{X}_\text{ref} \subset \inputSpace$ to be given, which is large enough to represent the input space well. If $\bm{X}_\text{ref} \sim \uniformDist{\inputSpace}$, then we can assume $\E\abs{B_{\delta_n}(x) \cap \bm{X}_\text{ref}} \propto \delta_n^d = n^{-\frac{1}{2}}$.
% Therefore we can approximate \eqref{eq:stableNoiseEstimate} and \eqref{eq:stableLepskiEstimate} by
% \begin{align}
% \widetilde{v}(x) = \max_{x'\in \nearestNeighbors{x}{\bm{X}_\text{ref}}{m_n}} \widehat{v}(x')\text{ and}
% \\
% \tilde{j}^*(x) = \min_{x'\in \nearestNeighbors{x}{\bm{X}_\text{ref}}{m_n}} j^*(x'),
% \end{align}
% where $\nearestNeighbors{x}{\bm{X}_\text{ref}}{m}$ are the $m$-nearest neighbors of $x$ in $\bm{X}_\text{ref}$ and $m_n = C_{\delta} n^{-\frac{1}{2}}$.
% Note that in the standard formulation of \cite{zhang2011kernel}, $\delta_n \equiv 0$, which we found instable at small training size in practice.

\subsection{Boundary Correction of the Optimal Training Density}
\label{subsec:boundaryCorrection}
Even though \thmref{thm:OptimalSampling} holds asymptotically almost everywhere, for finite training size $n$ there will be an undesired behavior at the support boundary: While for odd-order $\LPSorder$ the \MSE convergence law at the support boundary is consistent with the law in the interior of the support, the conditional bias and variance behave differently to equations \eqref{eq:asymptoticLeadingOrderBiasLPS} and \eqref{eq:asymptoticVarianceLPS} in a non-trivial way.
Asymptotically, this can be ignored as the support boundary makes up a set of measure zero. Yet, for finite $n$, the support boundary has substantial measure and we suggest to perform a correction of the proposed optimal training density estimate:

Let $\supportInteriorStdDevs > 0$ be some reasonable factor of the standard deviation to the kernel $k$ such that
\[\int_{B_\supportInteriorStdDevs(0)} k(u) du = 1 - \varepsilon,\]
for some small $\varepsilon > 0$. For example, for the Gaussian kernel we may apply a value of $1 \leq \supportInteriorStdDevs \leq 3$.
At training size $n$, we define the effective support interior \effectiveInputSpaceInterior of \inputSpace as
\begin{align}
\label{def:effectiveSupportInterior}
\textstyle\effectiveInputSpaceInterior = \condset{x\in\inputSpace}{\condset{x'\in\reals{d}}{\pnorm{[\LOBfunctionOfLPS{\LPSorder}{n}(x)]^{-1}(x-x')}{} \leq \supportInteriorStdDevs} \subset \inputSpace}.
\end{align}
Note that for any $\supportInteriorStdDevs > 0$ , $\effectiveInputSpaceInterior \rightarrow \inputSpaceInterior$ as $n\rightarrow\infty$.
As a correction, we suggest to set
\begin{align}
\label{eq:boundaryCorrectedOptimalSamplingFiniteLPS}
\tildePOptLPS{\LPSorder}{n}(x) = \begin{cases} \approxPOptLPS{\LPSorder}{n}(x) &, x \in \effectiveInputSpaceInterior,\\
\approxPOptLPS{\LPSorder}{n}(x^\circ_n)&,\text{else, where }x^\circ_n = \argmin_{x'\in\effectiveInputSpaceInterior}\pnorm{x-x'}{}.
\end{cases}
\end{align}

\subsection{Algorithmic Summary}
\label{subsec:algoSummary}
Let us recapitulate that we require reasonable constants $C_{v}$ for the localization of the noise estimate, $C_{\sigma}$ and $C_{s}$ for the lower bandwidth candidate bound and the step size of the bandwidth candidates, and the confidence interval size factor $\kappa$ in Lepski's method.
Furthermore, we require the constants $C_{\delta}$ for stabilization of the local property estimates, and 
the standard deviations factor \supportInteriorStdDevs of the kernel to identify regions of the input space that will suffer from boundary effects at finite training sizes.
Given these constants, we have summarized a full active learning step of our proposed framework in \algoref{fig:algorithmALIteration}.

% Assuming that we are given the current training set $\bm{X}_{n_{k}}, Y_{n_{k}}$, we first estimate $\widehat{v}$ according to \secref{subsec:noiseVarianceEstimation} with $\textstyle l_{n_k} = \lceil C_{v} n_k^\frac{2}{2+d}\rceil$ (or $l_{n_k} = n_k$, if we can assume homoscedasticity).
% Subsequently we stabilize the noise estimate (if homoscedasticity was not assumed) according to \eqref{eq:stableNoiseEstimate}, that is, max-pooling $\widetilde{v}(x) = \max_{x'\in B_{\delta_n}(x)} \widehat{v}(x')$ over the ball of radius $\delta_n = C_{\delta}n^{-\frac{1}{d(d+1)}}$.

% Given $\widetilde{v}$, $\kappa$, \LPSorder and $\underline{\sigma}_{n_k} = C_{\sigma}n_k^{-\frac{2}{2(\LPSorder+1)+d}}$, we now can estimate $\approxLOBfunctionOfLPS{\LPSorder}{n_k}$, following Lepski's method as described in \secref{subsec:LOBestimation}. Again for stability reasons, we here replace $j^*(x)$ in \eqref{eq:LOBlepskiEstimate} by $\tilde{j}^*(x) = \min_{x'\in B_{\delta_{n_k}}(x)}j^*(x')$, as described in \secref{subsec:localStabilization}.

% With both estimates, $\widetilde{v}$ and $\approxLOBfunctionOfLPS{\LPSorder}{n_k}$ given, we are able to implement the active learning procedure from \secref{subsec:ALprocedure}. As a final tweak, we estimate the effective support interior \effectiveInputSpaceInterior from \eqref{def:effectiveSupportInterior}, based on $\approxLOBfunctionOfLPS{\LPSorder}{n_k}$ and \supportInteriorStdDevs, and apply the boundary corrected version $\tildePOptLPS{\LPSorder}{n_k}$ of $\approxPOptLPS{\LPSorder}{n_k}$ from \eqref{eq:boundaryCorrectedOptimalSamplingFiniteLPS}.

\begin{algorithm}[H]
  \caption{Construction of $(\bm{X}_{n_{k+1}}, Y_{n_{k+1}})$ from $(\bm{X}_{n_{k}}, Y_{n_{k}})$}
  \label{fig:algorithmALIteration}
  \renewcommand\thealgorithm{}
  \color{black}
  \relsize{-1}
\vspace{1.5mm}
  \begin{algorithmic}[1]
  \Algphase{Input}
  \State Current training inputs $\bm{X}_{n_{k}}$ with labels $Y_{n_{k}}$ and training density $\bm{X}_{n_{k}} \sim p_k$
  \State The test density $q$
  \State The order \LPSorder of the underlying \LPS model
  \State Constants $C_{v}$ for the localization of the noise estimate, $C_{s}$ and $C_{\sigma}$ for Lepski's bandwidth candidate step size factor $s_n$ and the smallest candidate $\underline{\sigma}_n$ and
  $C_{\delta}$ for choosing the local estimates stabilizer $\delta_n$
  \State The confidence interval size factor $\kappa$ of Lepski's method and
  the standard deviations factor \supportInteriorStdDevs of the kernel for boundary correction of the optimal training density estimate
  \State A Boolean \isHomoscedastic, if homoscedastic noise can be assumed
  \Algphase{Output}
  \State New training inputs $\bm{X}_{n_{k+1}}$ with labels $Y_{n_{k+1}}$ and training density $\bm{X}_{n_{k+1}} \sim p_{k+1}$
  \Algphase{Procedure}
%   \LineComment{Set training size dependent parameters}
  \State \textbf{if} \isHomoscedastic \textbf{then} \Comment{Set training size dependent parameters}\newline $l_{n_k} = n_k$ \textbf{else} $l_{n_k} = C_{v}n_k^\frac{2}{2+d}$
%   \Comment{Set training size dependent parameters}
%   \If{\isHomoscedastic}
%   \State $l_{n_k} = n_k$
%   \Else
%   \State $l_{n_k} = C_{v}n_k^\frac{2}{2+d}$
%   \EndIf
  \State $s_{n_k} = \left[1 + C_{s} n_k^{-\frac{\LPSorder+1}{2(\LPSorder+1)+d}}\right]^\frac{2}{2(\LPSorder+1)+d}$, $\underline{\sigma}_{n_k} = C_{\sigma}n_k^{-\frac{2}{2(\LPSorder+1)+d}}$, $\delta_{n_k} = C_{\delta}n_k^{-\frac{1}{d(d+1)}}$
  \State Estimate $\widehat{v}$ according to  \eqref{eq:robustLocalNoiseStandardDeviationAdaptedMultivariate}, using $\bm{X}_{n_{k}}, Y_{n_{k}}$ and neighborhood size $l_{n_k}$ \Comment{see \secref{subsec:noiseVarianceEstimation}}
  \State Obtain the stabilized noise estimate $\widetilde{v}$ according to \eqref{eq:stableNoiseEstimate}, using $\delta_{n_k}$ \Comment{see \secref{subsec:localStabilization}}
  \State Estimate \LOB $\approxLOBfunctionOfLPS{\LPSorder}{n_k}$ via Lepski's method \eqref{eq:LOBlepskiEstimate}, \Comment{see \secref{subsec:LOBestimation}}\newline using $\bm{X}_{n_{k}}, Y_{n_{k}}, \widetilde{v}, \kappa, s_{n_k}$ and $\underline{\sigma}_{n_k}$
%   \Comment{see \secref{subsec:LOBestimation}}
  \State Obtain the stabilized \LOB estimate $\widetilde{\bandwidth}^{n_k}_{Q}$ according to \eqref{eq:stableLepskiEstimate}, using $\delta_{n_k}$ \Comment{see \secref{subsec:localStabilization}}
  \State Estimate $\approxPOptLPS{\LPSorder}{n_k}$ according to \eqref{eq:normalizedLOBLPS}, \eqref{eq:isotropicComplexityLPS} and \eqref{eq:optimalSamplingFiniteLPS} \Comment{see \secref{subsec:ALprocedure}}
  \State Perform boundary correction \eqref{eq:boundaryCorrectedOptimalSamplingFiniteLPS} to obtain $\tildePOptLPS{\LPSorder}{n_k}$ \Comment{see \secref{subsec:boundaryCorrection}}
  \State Set $\gamma_1^{} = \max_{x\in\inputSpace}\left. p_k(x) \middle/ \tildePOptLPS{\LPSorder}{n_k}(x) \right.$ and $\gamma_2^{} = \max\left\{0, \left.(0.5-\gamma_1^{-1})\middle/(1-\gamma_1^{-1})\right.\right\}$
%   \Comment{see \secref{subsec:ALprocedure}}
  \State Set $p_{k+1} = \gamma_2^{} p_k + (1 - \gamma_2^{}) \tildePOptLPS{\LPSorder}{n_k}$ \Comment{see \secref{subsec:ALprocedure}}
  \State Sample $X_{n_k+1},\ldots,X_{n_{k+1}} \sim \widetilde{p}_{k+1}$, where $\widetilde{p}_{k+1} = 2 p_{k+1} - p_k$, to obtain $\bm{X}_{n_{k+1}} \sim p_{k+1}$
%   \Comment{see \secref{subsec:ALprocedure}}
  \State Query the labels $y_{n_k+1},\ldots,y_{n_{k+1}}$ of $X_{n_k+1},\ldots,X_{n_{k+1}}$ to obtain $Y_{n_{k+1}}$
  \end{algorithmic}
\end{algorithm}

\section{Discussion and Related Work}
\label{sec:discussionRelatedWork}
We will now elaborate the active learning properties mentioned in the introduction in more detail and show that our proposed methodology encompasses all of them. Additionally, we will discuss related work in the light of these properties.
\subsection{The Active Learning Properties}
\label{subsec:discussionALProperties}
%in the intro it was motivated why the 3 main properties are advantageous on its own. And that they are somewhat contradictory, therefore hard to combine. This combination is our target AL sub-space.

% now, what should be discussed in this section?
% mainly, we want to characterize the relation between the properties, as summarized in Fig.1
% maybe: more details on each property from an isolated view point
% what not: examples of sub-categories should be concentrated in Sec. 4.3.

%%% build upon the stuff from introduction
Let us recall from \secref{sec:intro} that -- besides the fundamental categories of (un)supervised and model-free/based sampling schemes -- the most relevant properties of sampling schemes in the scope of this work are \optimalProperty, \robustProperty and \modelagnosticProperty.
%%% recall that we want an optimal, robust, model-agnostic scheme
We have also motivated why it is preferable to have a sampling scheme that possesses these three properties at the same time. We will now discuss these three properties and their relation in more detail. In particular, we will conclude that a sampling scheme has to be necessarily supervised and nonparametric in order to fulfill them simultaneously.

%%% what is the difference between robust and model-agnostic?
First of all note that the difference between a \robust and a \modelagnostic sampling scheme is subtle: Especially in the parametric regime, both properties never hold, making them coincide trivially.
In contrast, in the nonparametric regime, let us consider \emph{uncertainty sampling} for standard \emph{Gaussian process regression} (\GPR) \citep{seo2000gaussian}, where new samples are drawn so as to minimize the integrated predictive variance.
By standard \GPR we mean a Gaussian process that is based on a global bandwidth parameter that allows for no local adaption.
Here, the bias is assumed to be negligible, compared to the variance, and thus can be completely ignored.
For those regression problems where the assumption of a negligible bias is justified, \GPR uncertainty sampling is an \optimal sampling scheme for \GPR and superior to \randomTestSampling across model classes, making it a \modelagnostic sampling scheme.
Ignoring the bias is however incorrect for regression problems of inhomogeneous complexity structure, where the bias varies over the input space.
Since standard \GPR uncertainty sampling does not account for this inhomogeneity, it is not an appropriate sampling scheme  for \GPR or other model classes. As in practice inhomogeneously complex regression problems occur frequently \citep{donoho1994ideal,benesty2013adaptive} the assumption of a negligible bias is not a mild one. In this light, standard \GPR uncertainty is not \robust.

% assure to point out all strict relationships up to here:
% model-based, unsupervised => not robust
% optimal => model-based (i.e. model-free => not optimal)
% (reasonable) model-free => model-agnostic => nonparametric (i.e. parametric => not model-agnostic)

As we have already discussed in the introduction, it is somewhat contradictory for a sampling scheme to be \optimal on the one hand, and \robust and \modelagnostic on the other hand. In fact, in the unsupervised regime \robustProperty and \optimalProperty are mutually exclusive:
\begin{mylem}
\label{lem:incompatiblity}
A sampling scheme cannot be optimal, robust and unsupervised, simultaneously.
\end{mylem}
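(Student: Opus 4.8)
The plan is to argue by contradiction via a two-instance indistinguishability argument. Suppose some sampling scheme $\mathcal{A}$ is simultaneously \optimal, \robust and unsupervised. Since \optimalProperty is defined relative to a deployed prediction model, I would fix the reference model to be \LPS of some odd order $\LPSorder$, so that \optimalProperty means $\mathcal{A}$ produces (asymptotically) the training density minimizing the objective \eqref{eq:ALobjective}. Because $\mathcal{A}$ is unsupervised, its sampling criterion---and hence the induced training density $p_{\mathcal{A}}$---is a function of the label-free information only (the input space \inputSpace, the test density $q$, the budget $n$, and at most previously queried inputs), but not of the acquired labels. In particular, $p_{\mathcal{A}}$ is identical for any two regression problems that agree on this label-free information.

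First I would invoke \thmref{thm:OptimalSampling}: for the fixed reference model the minimizer of \eqref{eq:ALobjective} is, asymptotically, the unique density $\pOptLPS{\LPSorder}{n}$, and it depends on the target problem through the local function complexity $\fctnComplexityLPS{\LPSorder}{n}$ (equivalently the leading bias $\biasLPS{\idMatrix{d}}{x}{\LPSorder}$, determined by the derivatives of $f$) and through the local noise level $v$; both are genuinely label-dependent. Next I would exhibit two problems $(f_1,v_1)$ and $(f_2,v_2)$ that share the same label-free data---same \inputSpace, same $q$, and even the same input marginal---both satisfying only the mild regularity admitted by a \robust scheme (e.g.\ $f_i\in\diffableFunctions{\inputSpace}{\LPSorder+1}$ with non-vanishing leading bias, and noise bounded away from zero), yet chosen so that their complexity profiles $\fctnComplexityLPS{\LPSorder}{n}$ (or noise profiles $v$) differ on a set of positive measure---for instance $f_1$ with spatially varying curvature against a nearly flat $f_2$. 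By \thmref{thm:OptimalSampling} the corresponding optimal densities then differ on a positive-measure set.

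Since $\mathcal{A}$ is unsupervised it returns the same $p_{\mathcal{A}}$ on both problems, so it cannot coincide with two distinct minimizers; hence it fails \optimalProperty on at least one of them, contradicting optimality. The only way to escape the contradiction is to rule out one of the two instances a priori---i.e.\ to commit in advance to a specific complexity or noise profile---but that is precisely a strong regularity assumption on the labels, contradicting \robustProperty. The main obstacle I anticipate is making the informal notion of ``mild assumptions'' precise enough that both constructed instances are genuinely admitted by the robust class while their optimal densities are provably separated; this is the crux of the two-point argument, and it is carried by the explicit, non-degenerate dependence of \eqref{eq:optimalSamplingFiniteLPS} on $\fctnComplexityLPS{\LPSorder}{n}$ and $v$, which guarantees that perturbing $f$ or $v$ within the admissible class moves the optimal density on a positive-measure set.
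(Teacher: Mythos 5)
Your core idea is the right one, and it matches the heart of the paper's own argument: an unsupervised scheme commits to a single, label-independent training density, while the genuinely optimal density depends on label-dependent quantities (local complexity and noise), so one fixed density cannot serve all admissible labelings. However, there is a genuine gap in how you set this up. You fix the deployed prediction model to be \LPS of odd order \LPSorder and then invoke \thmref{thm:OptimalSampling} to characterize the optimal density in closed form. But \lemref{lem:incompatiblity} quantifies over \emph{all} sampling schemes, and \optimalProperty is defined relative to whatever model the scheme itself deploys -- not relative to \LPS. The paper's proof keeps the model abstract: it deduces from \robustProperty that the model must be nonparametric, and then uses only the \emph{locality} of nonparametric models, together with a reflection construction around an interior point $\bar x$, to show that the fixed density fails. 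Your \LPS-specific argument does not cover, for instance, a scheme that is \optimal with respect to \GPR uncertainty sampling or optimum experimental design -- which are precisely the schemes to which the paper applies the lemma immediately afterwards in order to conclude they are not \robust.

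A second, smaller gap concerns the strength of the contradiction. The paper reads ``\robust and \optimal'' as: the committed density is optimal for a \emph{substantial} space of labelings and may fail only on a \emph{negligible} one. Under that reading, your two-instance separation is not yet a contradiction, since both constructed problems could in principle lie in the tolerated negligible exception set. The paper closes this hole by making the counterexample construction generic: starting from an \emph{arbitrary} admissible labeling $p(y|x)$, the local reflection produces a modified labeling $p^*(y|x)$ of identical regularity on which the fixed density is suboptimal, yielding a substantial set of failing labelings. Your closing remark that perturbing $f$ or $v$ ``moves the optimal density on a positive-measure set'' refers to a positive-measure subset of \inputSpace, not to a non-negligible family of labelings; to complete your argument you would need to upgrade the two-point construction to a continuum of admissible problems whose optimal densities are pairwise distinct, of which the unsupervised scheme's single density can match at most a negligible subfamily.
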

\begin{proof}
Assume by contradiction that there exists an \optimal, \robust and unsupervised sampling scheme.
Since it is \optimal, it must be model-based. This model $\widehat{f}$ must be nonparametric, due to \robustProperty, and it must treat bias and variance. Now assume some arbitrary input space \inputSpace. Since the sampling scheme is unsupervised, ($\widehat{f}$, \inputSpace) already uniquely determine the associated optimal training distribution $\bm{X}_n \sim p^{\text{train}}_{n}$. Since $\widehat{f}$ is nonparametric, itself as well as $p^{\text{train}}_{n}$ are local. That is, for subsets $\mathcal{A} \subset \inputSpace$ the optimal training density of the restriction $\widehat{f}\raisebox{-3pt}{|}_\mathcal{A}$ is given by $p^{\text{train}}_{n}\raisebox{-3pt}{|}_\mathcal{A}$.
%%%%%
Now, \robustProperty and \optimalProperty imply that $p^{\text{train}}_{n}$ is optimal for 
a substantial space of label distribution $p(y|x)$, whereas it is not optimal for a negligible space of labelings.
Optimal means that $\bm{X}_n$ minimizes, for example, the \MISE for such a labeling $p(y|x)$.

Without loss of generality we assume that $p^{\text{train}}_{n} \not\sim \uniformDist{\inputSpace}$, because $\uniformDist{\inputSpace}$ is an optimal solution for a set of labelings of measure zero in the space of heteroscedastic and inhomogeneously complex problems. Then there necessarily exists a reflection $R_{\bar x}$ in $\bar x \in \inputSpaceInterior$ with $B_\varepsilon(\bar x) \subset \inputSpace$ for some $\varepsilon > 0$ such that $p^{\text{train}}_{n} \circ R_{\bar x}\raisebox{-3pt}{|}_{B_\varepsilon(\bar x)} \not\equiv p^{\text{train}}_{n}\raisebox{-3pt}{|}_{B_\varepsilon(\bar x)}$.
We now construct the new labeling $p^*(y|x) = \begin{cases} p(y|R_{\bar x}(x)),& x\in B_\varepsilon(\bar x)\\ \widetilde{p}(y|x),&\text{else}\end{cases}
$ over \inputSpace, where we can choose $\widetilde{p}(y|x)$ as a continuation of $p(y|R_{\bar x}(x))$ outside $B_\varepsilon(\bar x)$ that preserves the regularity of $p(y|x)$.
By construction, $p^{\text{train}}_{n}$ is not the optimal training distribution for $p^*(y|x)$, because of the locality of $\widehat{f}$ and $p^{\text{train}}_{n}$.
Now that the choice of $p(y|x)$ was arbitrary, we have a substantial set of labelings $p^*(y|x)$ on \inputSpace for which
$p^{\text{train}}_{n}$ is not \optimal, in contradiction to the assumption.
\end{proof}
For example, there exist model-based, \optimal sampling schemes that are also unsupervised -- that is, they do not depend on labels even if some labeled samples are available, like in optimum experimental design \citep{kiefer1959optimum,mackay1992information,he2010laplacian} and in uncertainty sampling for \GPR \citep{seo2000gaussian}.
In the light of \lemref{lem:incompatiblity}, these sampling schemes are necessarily not \robust, which means that they must impose strong assumptions on the regularity of the labels. In fact, particularly these strong assumptions are necessary to make such sampling schemes unsupervised.
% In the light of \lemref{lem:incompatiblity}, these sampling schemes are necessarily not \robust. In fact, they
% are unsupervised specifically for the reason that their underlying model
% imposes strong assumptions on the regularity of the labels, which particularly makes them not \robust in the sense of our definition.

%%%%%
Second, sampling schemes based on parametric models are quite specific to the model and are therefore never \modelagnostic. As a compromise between model-free and parametric approaches, nonparemetric models impose rather mild conditions on the labels. Thus, a sampling scheme based on a nonparametric model is more promising, though not guaranteed, to be \modelagnostic.
%%%%
Finally, advanced model-free sampling schemes \citep{teytaud2007active} are inherently \modelagnostic and \robust, because they were not derived via a model that might assume any regularity of the labels. However also by definition, they are necessarily not \optimal, because of the absence of such a model.

% goal
To summarize, we sketch the relation between \optimal, \robust and \modelagnostic sampling schemes, as well as (un)supervised and (non)parametric sampling schemes in \figref{fig:AL_properties_withReferences}, where we also exemplarily classify the discussed active learning approaches from related work.
% To summarize, we sketch the relation between (un)supervised, (non)parametric, \optimal, \robust and \modelagnostic sampling schemes in \figref{fig:AL_properties_withReferences}, where we also exemplarily classify the discussed active learning approaches from related work.
The goal of our work was to propose a sampling scheme that combines
the amenities of both sides, model-free and \optimal approaches.
Namely, it should lead to a consistent performance increase across several model classes, while being truly adapted to the regression task at the same time.
Therefore it has to be simultaneously \optimal, \robust and \modelagnostic.

%%% necessity of nonparametric + supervised (as claimed in the intro)
With the above arguments we can narrow down the set of candidate sampling schemes as claimed: Namely, such a sampling scheme must be model-based, since model-free sampling schemes are never \optimal. Next, since parametric sampling schemes are never \modelagnostic, the candidate must be necessarily nonparametric. Finally, in order to additionally match \robustProperty, we need a supervised sampling scheme since \robustProperty and \optimalProperty are incompatible in the unsupervised regime.

\begin{figure}[tb]
\centering

\begin{minipage}[tb]{0.64\linewidth}
\vspace{0pt}
% \centering
\includegraphics[width=1.\linewidth]{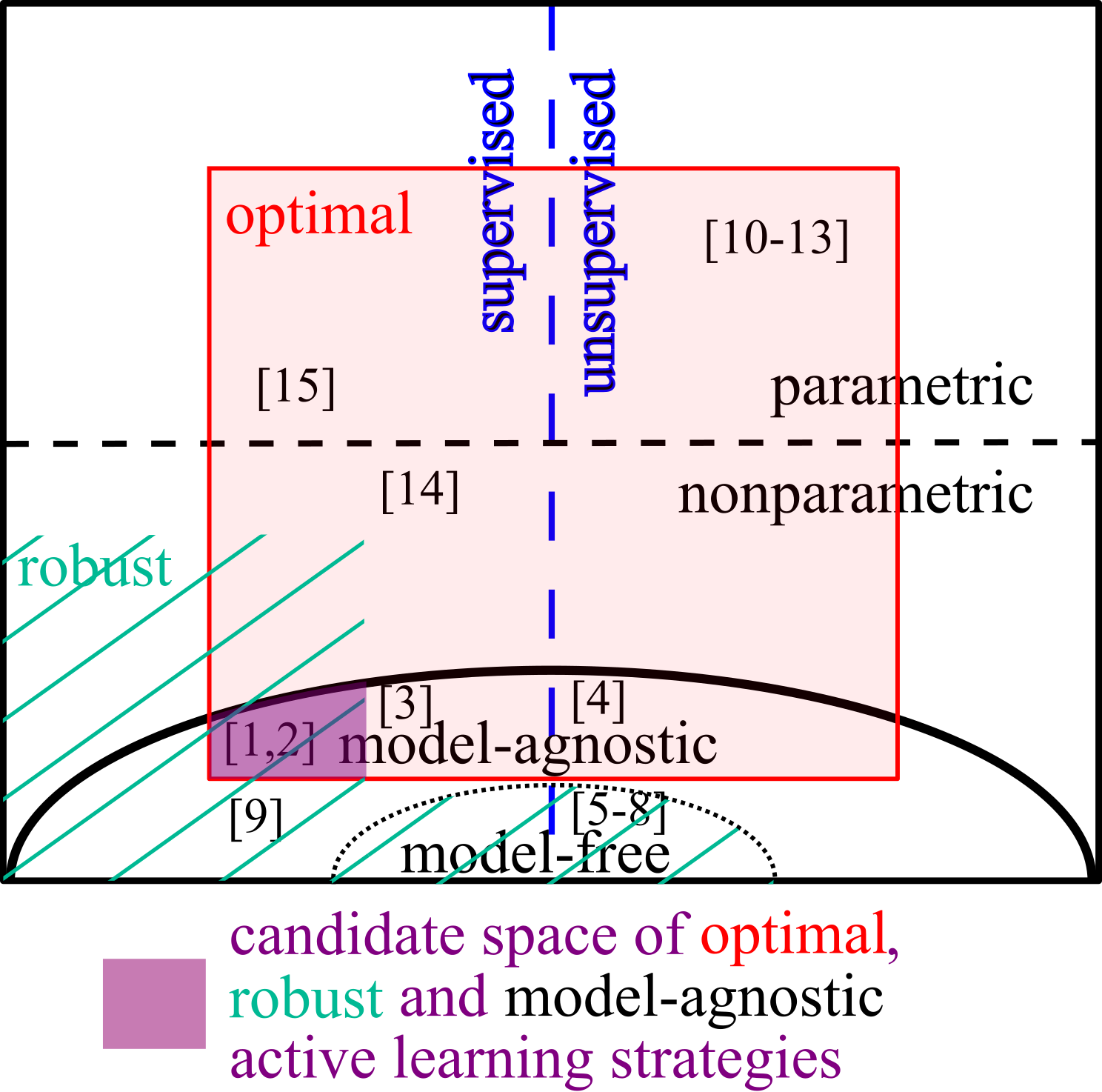}
% \vspace*{-15pt}
% \caption{Relation between the main properties of active learning approaches we focus on in this work. The filled, purple area contains the candidates for our purpose.}
% \vspace*{-15pt}
\label{fig:AL_properties_withReferences}
\end{minipage}
\hfill
\begin{minipage}[tb]{0.32\linewidth}
\vspace{-40pt}
\footnotesize
\begin{enumerate}[{[}1{]}]
 \itemsep0em
 \item Our novel approach
 \item \cite{goetz2018active}
 \item \cite{bull2013spatially}
 \item \cite{seo2000gaussian}
 \item \cite{teytaud2007active}
 \item \cite{yu2010passive}
 \item \cite{wu2019pool}
 \item \cite{liu2021pool}
 \item \cite{kiefer1959optimum}
 \item \cite{mackay1992information}
 \item \cite{he2010laplacian}
 \item \cite{sugiyama2009pool}
 \item \cite{douak2013}
 \item \cite{cohn1997minimizing}
 \item \cite{seung1992query}
\end{enumerate}
\end{minipage}
\vspace*{-15pt}
\caption{Relation between the main properties of active learning approaches we focus on in this work. The filled, purple area contains the candidates for our purpose.}
% \vspace*{-10pt}
\end{figure}

In addition, our sampling scheme is \stationary and \interpretable:\smallskip

\noindent{\bf Further properties of sampling schemes:} We call a sampling scheme
\begin{itemize}
    \item \textbf{stationary}, if the training inputs can be formulated as an independently and identically distributed random sample of a fixed distribution.
    \item \textbf{interpretable}, if the decision making on which labels to query can be visualized to and understood by a domain expert.
\end{itemize}
% \begin{mydef}
% \label{def:ALproperties_additional}
% We call a sampling scheme
% \begin{itemize}
%     \item \textbf{stationary}, if the training inputs can be formulated as an independently and identically distributed random sample of a fixed distribution.
%     \item \textbf{interpretable}, if the decision making on which labels to query can be visualized to and understood by a domain expert.
% \end{itemize}
% \end{mydef}
A \stationary sampling scheme shares the benefits of unsupervised sampling schemes, when the desired terminal training size is large, or when the annotator of the label queries, such as a human expert, is not always available \citep{settles2010active}.
For a \stationary sampling scheme, the data acquisition can be proceeded independently from data annotation -- even if supervised.
In contrast, for example, information-based sampling schemes \citep{mackay1992information} require re-estimation of the information measure after the acquisition of each new sample.

The advantage of an \interpretable sampling scheme is that we can give an explanation for what reason a proposed query seems informative, which a human expert is able to comprehend.
In this case, a domain expert is able to monitor the healthiness of the sampling process, as opposed to a black box sampling process where a potential faulty behavior will reveal in hindsight \citep{lapuschkin2019unmasking,Samek2021explaining}.
Therefore, such transparency makes active learning more appealing in real-world environments, where data acquisition and annotation is expensive, as we can reduce the risk of wasting costs.

\subsection{Properties of the Proposed Sampling Method}
\label{subsec:ALPropertiesOfOptimalSampling}
Our proposed sampling scheme fulfills all five introduced properties:\smallskip

\noindent{(\textbf{optimal})}
\thmref{thm:OptimalSampling} guarantees that, asymptotically, our proposed sampling scheme is the solution to \eqref{eq:optimalTrainingSet}, the minimal \MISE of the \LPS model, which proves optimality.\smallskip

\noindent{(\textbf{robust})} In order to asymptotically minimize \MISE, we rely on the expressions of the leading-order bias- and variance-terms as given in \thmref{thm:asymptoticBiasVarianceForLPS}.
Therefore, we assume certain smoothness of the noise level, the test density and the function to learn, as well as non-vanishing leading terms, almost everywhere. All these assumptions are mild (as opposed to assuming homoscedasticity or a negligible bias), which makes our sampling scheme \robust.\smallskip

\noindent{(\textbf{model-agnostic})} As already mentioned, our sampling scheme is likely to provide the \modelagnosticProperty, since the \LPS model as the base of our objective is almost model-free. We will show in the experiments in \secref{sec:experiments} that this property indeed holds.\smallskip

\noindent{(\textbf{stationary})} Since $\pOptLPS{\LPSorder}{n}$ converges in probability to an asymptotic density, our sampling scheme $\bm{X}_n^* \sim \pOptLPS{\LPSorder}{n}$ is asymptotically stationary. Hence, it features the stationarity property.\smallskip

\noindent{(\textbf{interpretable})}
Instead of having to rely on a black box information score, the closed-form solution of $\pOptLPS{\LPSorder}{n}$ reveals influence of \LFC, noise variance and test density on the optimal sample. These three scalar-valued properties can intuitively be understood by a human expert, which makes our sampling scheme \interpretable.

\subsection{Related Work}
\label{subsec:relatedWork_ALregression}
% what do I want to say in this section?
%(2) a supervised approach such as ours requires an initial training set
%(3) \randomTestSampling, but also reasonable unsupervised approaches can be used for initialization
Recall that our proposed active learning scheme is \optimal, \robust and \modelagnostic, among other properties.
It is therefore especially well designed for the mid- to long-term construction of a meaningful training set in regression tasks where there is scarce domain knowledge, such that we only know few about the regularity of the problem and/or have not determined the final regression model to solve the problem optimally.
%%%%
However, as we have deduced in \secref{subsec:discussionALProperties}, an approach with the specification of our proposed sampling scheme is necessarily supervised.
And like any supervised sampling scheme, we thus require a small, but sufficient amount of training labels for the initial estimation of the sampling criterion.

This initialization could be done by \randomTestSampling, but also by any reasonable unsupervised sampling scheme -- as suggested by \cite{liu2021pool}.
Especially, for the reasons of a consistent performance increase and model flexibility, unsupervised, input space geometric sampling schemes enjoy great popularity \citep{teytaud2007active,yu2010passive,wu2019pool,liu2021pool} in practice. As they are model-free, they are inherently \modelagnostic and \robust by definition.
However also by definition, they are necessarily not \optimal, and thus become inferior to \optimal sampling schemes in the long run:
For example \cite{teytaud2007active} aim to make the training set as diverse as possible.
While such approaches show advantages in the early stages of the training set construction, they become inferior to \optimal sampling schemes as soon as the input space is well represented.
In summary, unsupervised, input space geometric sampling schemes and sampling schemes with the same specification as our proposed framework play a complementary role:
While both categories are \robust and \modelagnostic, the prior one works from scratch, whereas the latter one is \optimal.

% discuss "reasonable" unsupervised sampling scheme
% model-based unsupervised => never robust, not necessarily model-agnostic
%(4) unsupervised, model-based approaches can even be used in the long run, if we know that the model fits
Let us also take a look at unsupervised, model-based sampling schemes that could be used, among other things, for the initialization of a supervised sampling scheme.
%%%%%
Model-based, unsupervised approaches eliminate the dependence on the labels by imposing strong model assumptions. For example, 
in linear parametric regression a correct model specification is assumed such that bias can be considered negligible:
Here, the minimization of the expected generalization loss can be translated to maximizing information gain. Some approaches encode information via the variance of the model parameters at the current training state (see, for example, \cite{sugiyama2009pool}). Also the theory of optimum experimental design
(see, for example, \cite{kiefer1959optimum}) follows this approach, using the \emph{Fisher information} as a measure to minimize the variance. These methods have
since been extended beyond simple linear methods to kernel methods and modified to take into account regularization (see \cite{he2010laplacian}).
\cite{mackay1992information} follow this idea in a Bayesian setting.
In the nonparametric regime such as Gaussian process regression, \cite{seo2000gaussian} assume strong correlation between the shape of the applied kernel and the predictive variance.

A parametric, unsupervised approach is not \modelagnostic, making it no good candidate for the initialization of a supervised approach. Additionally, both, parametric and nonparametric, unsupervised approaches are not \robust in our definition. Therefore also a nonparametric, unsupervised should be applied with caution when there is almost no domain knowledge that may justify the model assumptions.
On the other hand, when the model assumptions are justified the respective models can also serve reasonably as the final prediction model. Especially a parametric approach has a \MISE that decays at the rate $O(n^{-1})$, which is superior to the convergence of any nonparametric model.

Unsupervised sampling schemes often do not require intensive recalculations of the sampling criterion after each query. Additionally, as they do not rely on the labels to query, they will not suffer from a potential bottleneck at the label annotation, which could, for example, be a human that is not always available.
Therefore they share the essential benefits of a stationarity sampling scheme , like our approach.

Another advantage of input space geometric sampling schemes is that they are typically \interpretable. In contrast, the sampling criterion based on a parametric model tells us which new sample candidate is currently regarded as most informative. When the basis-functions are not trivial, the reason for this rating is a black box that cannot be understood by human.

% supervised approaches
In the domain of supervised sampling schemes several approaches are model-based but not strictly \optimal.
For example, \cite{cohn1997minimizing} discussed how for a nonparametric approach such as \emph{locally weighted regression} one can sample either to minimize predictive bias or variance. While any combination of both will result in a \robust sampling scheme, the question was left open how to combine both in order to achieve the true minimization of the joint error components.
Another example for this are \emph{Query by committee} approaches, where candidates are scored depending on the disagreement between several models maintained in parallel (see, for example, \cite{seung1992query}).

While supervised sampling schemes are typically at least weakly \optimal (through a heuristic approximation), they are not necessarily \robust. For example, \cite{douak2013} queries labels where the prediction error is considered largest, which is reasonable under homoscedastic noise assumptions. 
When accidentally applied in a heteroscedastic scenario, as soon as the true function is coarsely rendered, the sampling process becomes degenerate, as it collapses to the point of largest noise level.

% finally discuss competitors
% in short: what are these two competitors and for what reason we choose them
In the following, we will set our focus of the discussion on the one hand to the category of approaches that are \optimal, \robust and nonparametric, like our proposed active learning framework. In this category, \cite{goetz2018active} most recently proposed an active learning strategy that is based on purely random trees.
On the other hand, we will discuss the wavelet-based approach of \cite{bull2013spatially},
since it provides theoretical guarantees to exceed the minimax-convergence rate of nonparametric active learning approaches that operate on a general function space like $\diffableFunctions{\inputSpace}{\LPSorder+1}$ \citep{willett2006ALrates}. They achieve this by making use of a more sophisticated segmentation of this very function space, where the segmentation is done with respect to local complexity of the function.
Note that the approach of \cite{bull2013spatially} is not \robust, as it assumes homoscedastic noise.% and a uniform test distribution.

% then a bit more detailed introduction of the competitors in respective subsections, also setting up a notation that we can reuse in experimental section
\subsubsection{A random tree/forest active learning approach}
\label{subsubsec:goetz}
\cite{goetz2018active} choose a Mondrian tree \fMT as the underlying model of their active learning approach.
The tree is first set up by partitioning the input space into cuboids.
Then, \fMT is a constant mean prediction over each such cuboid.
Goetz et al.~found the following law for the lifetime hyperparameter $\lambda(n^{1/(2+d)}-1)$ to be optimal. It controls the expected number of splits of the cuboids. 
As soon as the random tree is set up, there is a model induced bias which is unaffected by the training sample. Accordingly, Goetz et al.~minimize the remaining variance for their active learning scheme, which gives an optimal training density \pMT that is proportional to the root \MSE times test input marginal $q$ on each respective cuboid.
Given an initial, randomly chosen training set $\sim q$, the criterion can be estimated simply, using the cuboid-wise empirical variances and sample counts.
Here, Goetz et al.~suggest to set the initial sample size to half the terminal sample size.
In the experiments, we will compare to this supervised, nonparemetric active learning approach, since it fulfills the requirements we have imposed in this work: It is \optimal and \robust, and will likely provide \modelagnosticProperty as well.
%% also discuss forest extension here
A single tree is quite a rudimentary model in terms of prediction performance.
By setting up a Mondrian forest, that is, an ensemble of random trees where we average their responses, the performance greatly improves.
The main shortcoming of a single Mondrian tree is the default prediction, whenever a cuboid contains no training data at all. When combining several trees, we only need to fall back to the default, when for a test input the associated cuboids of all respective trees are simultaneously empty.
Therefore, by increasing the number of trees, we can deploy a larger $\lambda$ in the law of the lifetime hyperparameter.
Regarding active learning, Goetz et al.~suggest to sample from \pMF, the average of the optimal densities of each individual tree.
Note that this heuristic sampling scheme for the Mondrian forest does not preserve theoretical guarantees of optimality. 

\subsubsection{A wavelet-based approach}
\label{subsubsec:bull}
It is well known that a nonparametric active learning approach such as ours that operates on a general function space like $\diffableFunctions{\inputSpace}{\LPSorder+1}$
is subject to the nonparametric minimax-convergence rate, given by $\upperBoundedInProbability{\noexpand\Big}{n^{-\frac{2(\LPSorder+1)}{2(\LPSorder+1)+d}}}$, which is already obtained by $\randomTestSampling$. In particular, it is impossible for nonparametric active learning to increase in rate beyond $\randomTestSampling$ over this function space.
Despite this, \cite{bull2013spatially} have shown that for a more sophisticated segmentation of the function space faster learning rates can be theoretically obtained on adequate subsets of $\diffableFunctions{\inputSpace}{\LPSorder+1}$:
Indeed, the model with best, uniform performance over such a subset can exceed the performance of the model that is best over the whole space $\diffableFunctions{\inputSpace}{\LPSorder+1}$.
This segmentation is done with respect to local complexity of the function.
While they do not come up with an estimate for that enhanced learning rate, they provide an active learning scheme for a wavelet-based prediction model that will asymptotically converge at this rate:
The amplitude of wavelet coefficients are used to rank sub-intervals of the input space. Then the training set is refined by deterministically adding samples proportional to the reciprocal of these ranks.
Furthermore \cite{bull2013spatially} state that the subspace of functions for which no increased rate can be obtained is negligible.

Bull et al.~assume a homoscedastic noise structure (making it a non-robust approach) and a uniform test density (leaving it sub-optimal for other test distributions). Also the generalization beyond 1-dimensional input data remains unclear.
Yet, the improvement of Bull's approach beyond the minimax-convergence rate that our proposed active learning framework underlies can be a strong advantage over our sampling scheme.
We will therefore also compare the work of Bull et al.~to our framework in \secref{subsec:doppler} on a dataset of inhomogeneous local function complexity that we adopted from their paper.
Our experiment shows that the theoretical rate increase -- even though appealing in the asymptotic limit -- does not manifest itself in this example at moderate training sizes, which active learning is concerned with. And so, our active learning framework compares favorably Bull's approach in this case.

\section{Experiments}
\label{sec:experiments}
In the following experiments we will visualize our theoretical results, compare to approaches from related work and underpin our claim that our active learning scheme preserves its meaningfulness across models. In particular, we compare to a wavelet-based and a random tree based active learning approach, where we adopt for each approach a toy-regression problem on which they are designed to work. By means of a fair comparison, and to demonstrate the \modelagnosticProperty of our approach, we will furthermore apply the actively chosen training sets to an \emph{\RBF network}, respectively a \emph{random forest} model.

\subsection{Measuring the Active Learning Performance}
\label{subsec:evalAL}

As already mentioned in \secref{subsec:relatedWork_ALregression} it is impossible for our active learning scheme to exceed the convergence rate of \randomTestSampling\footnote{Recall that we have defined \randomTestSampling as i.i.d.~sampling from the test distribution in \secref{sec:intro}.}, since \LPS is based on the general function space $\diffableFunctions{\inputSpace}{\LPSorder+1}$.
More precisely, for any fixed training density $p$ with $\bm{X}_n^{} \sim p$, we know that, according to \thmref{thm:asymptoticBiasVarianceForLPS}, there is a constant $C_{\LPSorder}^{p}$ with respect to $n$ such that
\begin{align*}
 \textstyle\MISE_{\LPSorder}^{}\left(q| \bm{X}_n^{}\right) &=\textstyle \mathop{\mathlarger{\int}}_{\hspace*{-5pt}\inputSpace} \MSE_{\LPSorder}^{}\left(x, \LOBfunctionOfLPS{\LPSorder}{n}(x) | \bm{X}_{n}\right) q(x) dx\\
 &=\textstyle C_{\LPSorder}^{p} n^{-\frac{2(\LPSorder+1)}{2(\LPSorder+1)+d}}(1 + \convergenceInProbability{}{1}),
\end{align*}
which holds, in particular, for $p \equiv q$.

Yet, we still can benefit from active learning:
We can reduce the required sample amount to reach a certain prediction performance by a constant percentage, over \randomTestSampling.
For a regression model $\widehat{f}$ that cannot increase in rate, and a training density $p$,
let us define by $\relSampleSize{\widehat{f}}{p} > 0$ the relative required sample size such that for all large n and $n' = \relSampleSize{\widehat{f}}{p} n$ we obtain
$\textstyle\MISE\left(q,\widehat{f}| \bm{X}_{n}\right) \equiv \MISE\left(q,\widehat{f}| \bm{X}_{n'}'\right)$,
where $\bm{X}_{n'}' \sim p$ and $\bm{X}_{n} \sim q$.
The smaller $\relSampleSizeSymbol$ is, the better is the training density $p$. That means, for a reasonable active learning scheme $\relSampleSizeSymbol \leq 1$ should hold.

In the case of \LPS, it is $\textstyle\relSampleSize{\fOpt{\LPSorder}}{p} = \big[C_{\LPSorder}^{p}\big/C_{\LPSorder}^{q}\big]^\frac{2(\LPSorder+1)+d}{2(\LPSorder+1)}$.
Asymptotically, we can express the ratio as 
% $\textstyle\frac{C_{\LPSorder}^{p}}{C_{\LPSorder}^{q}} = \frac{\MISE_{\LPSorder}^{}\left(q| \bm{X}_n'\right)}{\MISE_{\LPSorder}^{}\left(q| \bm{X}_n^{}\right)}$
$\textstyle\left.C_{\LPSorder}^{p} \big/ C_{\LPSorder}^{q}\right. = \left.\MISE_{\LPSorder}^{}\left(q| \bm{X}_n'\right)\big/\MISE_{\LPSorder}^{}\left(q| \bm{X}_n^{}\right)\right.$
, such that we can estimate it as the mean error ratio between $\bm{X}_{n}' \sim p$ and $\bm{X}_{n} \sim q$ at large, equal training sizes.
We then average this estimate over several repetitions.
\begin{myRemark}
The predictions of the Mondrian tree and forest models are locally constant, and so they share the $\relSampleSizeSymbol$ of \LPS of order $Q = 0$. That is,
\[\textstyle \relSampleSize{\fMT}{p} = \left[\frac{\MISE\left(q, \fMT| \bm{X}_n'\right)}{\MISE\left(q, \fMT| \bm{X}_n^{}\right)}\right]^\frac{2+d}{2},\;\;\text{and}\quad\relSampleSize{\fMF}{p} = \left[\frac{\MISE\left(q, \fMF| \bm{X}_n'\right)}{\MISE\left(q, \fMF| \bm{X}_n^{}\right)}\right]^\frac{2+d}{2}.\]
\end{myRemark}

\subsection{Two Dimensional Heteroscedastic Toy-Data}
\label{subsec:heteroscedastic}
We start with a two-dimensional toy-example, following the experiments of \cite{goetz2018active} in order to show that our approach generalizes to multivariate problems. Furthermore, we compare to the active learning performance of Goetz et al.
Note that the approach of \cite{bull2013spatially} does not naturally generalize to higher dimensions, such that we do not compare with their approach in this first experiment.

All reported values of relative required sample sizes $\relSampleSizeSymbol$ with respect to \randomTestSampling are calculated as described in \secref{subsec:evalAL}.
Let
\begin{align*}
p(y | x) &=  \Gauss{y}{f(x)}{v(x)},\quad\text{where}\\
f(x) &= C\sin\Big(2\pi\frac{\pnorm{x}{}}{\sqrt{d}}\Big), \quad\text{and}\quad v(x) = \left\{\begin{array}{lr}
        25, &\forall 1\leq i\leq d: x_i^{} > 1/d,\\
        1, & \text{else},
        \end{array}\right.
\end{align*}
for $C = 100$, $x\in\inputSpace = [0,1]^d$ and $d = 2$. The test inputs $x \sim q = \uniformDist{\inputSpace}$ are uniformly distributed.
An example of the dataset is given to the left in \figref{fig:heteroscedastic_DatasetBandwidthsNoiseOptimalDensities}.
\fig{heteroscedastic_DatasetBandwidthsNoiseOptimalDensities}{1}{The heteroscedastic experiment: (Left) Exemplary dataset and initial noise estimate at $n_0=2^{10}$, using \eqref{eq:robustLocalNoiseStandardDeviationAdaptedMultivariate}. Our proposed terminal \LOB and optimal training density estimates based on \LLS (middle) and \LCS (right), using equations \eqref{eq:LOBlepskiEstimate} and \eqref{eq:optimalSamplingFiniteLPS}.}{0.8cm}{0.0cm}

First, note that $f\in\diffableFunctions{(0,1)}{\infty}$, such that we could apply our approach with arbitrary degree \LPSorder. We will delimit our discussion to the cases $\LPSorder = 1, 3$, where we refer to the respective \LPS model as local linear smoothing (\LLS) for $\LPSorder=1$ and local cubic smoothing (\LCS) for $\LPSorder=3$.
The experimental results are based on $20$ repetitions.
Applying the Gaussian kernel $k$, we implement our proposed active learning procedure as described in \secref{subsec:algoSummary}.
We start with $n_0^{} = 2_{}^{10}$ equidistantly spaced samples
and choose the hyperparameters $\kappa = 2.58, \supportInteriorStdDevs = 1$, and the constants $C_{v}, C_{s}, C_{\sigma}, C_{\delta}$ in order to obtain $l_{n_0} = 2^6$, $s_{n_0} = 2^\frac{2}{3}$, $\underline{\sigma}_{n_0} = 5\times 10^{-2}$ and $\delta_{n_0} = 0.1$.
% The above is specification for $\LPSorder = 3$; In case of $\LPSorder = 1$, we applied $s_{n_0} = 2$ and $\underline{\sigma}_{n_0} = 2.5\times 10^{-2}$
% In particular, this is achieved for $C_{v} = \tilde{C}_{v}n_0^{-\frac{2}{2+d}}, C_{s} = n_0^{\frac{\LPSorder + 1}{2(\LPSorder + 1)+d}} (\tilde{C}_{s}^{\frac{2(\LPSorder + 1)+d}{2}}-1), C_{\sigma} = \tilde{C}_{\sigma}n_0^{\frac{2}{2(\LPSorder+1)+d}}, C_{\delta} = \tilde{C}_{\delta}n_0^{\frac{1}{d(d+1)}}$, where $\tilde{C}_{s} = 2^\frac{2}{3}$, $\tilde{C}_{\sigma} = 0.05$, $\tilde{C}_{\delta} = 0.1$, $\tilde{C}_{v} = 2^6$.
%%%%
To the left in \figref{fig:heteroscedastic_DatasetBandwidthsNoiseOptimalDensities} we show an example of our noise estimate over the initial training set.

The proposed optimal sampling densities for \LPS of order $1$ and $3$ are shown in the middle, respectively to the right panel in \figref{fig:heteroscedastic_DatasetBandwidthsNoiseOptimalDensities}. The prediction performance over increasing training size can be seen to the left in \figref{fig:heteroscedastic_errors_LPS_MF}, where we compare our active sampling scheme to \randomTestSampling.
As this figure suggests, we find consistent sample savings within the \LPS model class, which we can quantify via estimation of the relative required sample size with respect to \randomTestSampling, as described in \secref{subsec:evalAL}.
We obtain the values $\textstyle\relSampleSize{\fOpt{1}}{\approxPOptLPS{1}{n}} = 0.77\pm0.05$ and $\textstyle\relSampleSize{\fOpt{3}}{\approxPOptLPS{3}{n}} = 0.66\pm0.07$, which means that we can save about $23$, respectively $34$ percent of samples, using the \LPS model of order $Q=1$ and $Q=3$, when sampling according to our proposed optimal density estimate $\approxPOptLPS{\LPSorder}{n}$ instead of sampling from the test distribution.
Hence, in accordance with \thmref{thm:OptimalSampling}, this shows the superiority of our proposed active learning framework.

\fig{heteroscedastic_errors_LPS_MF}{1}{The heteroscedastic experiment: (Left) The median \RMSE obtained by \LLS and \LCS at several training sizes, comparing \randomTestSampling to the respectively proposed optimal sampling. (Right) The \RMSE of Mondrian forests under \randomTestSampling, according to the active learning approach \pMF of \cite{goetz2018active} and according to the optimal sampling scheme of \LLS and \LCS. The error bars show the $95\%$ confidence interval.}{0.8cm}{0.0cm}

In order to analyze the transferability of our sampling scheme, we now combine our proposed sampling scheme with the Mondrian forest model.
As a preconsideration, let us take an isolated look at the Mondrian tree and forest model when applying the active learning approach of Goetz et al.: 
We found $\lambda = 2.5$ in the law of the lifetime hyperparameter of the Mondrian tree to work well. As described in \secref{subsubsec:goetz}, we draw half of the terminal training size at random from $q$, from which the optimal density \pMT for the tree is estimated. The remaining samples are drawn subsequently in a way that approaches this density.
The average estimated optimal density \pMT of the Mondrian tree model can be seen to the left in \figref{fig:heteroscedastic_OptimalDensities_MT_MF}.
\pMT is larger, where the noise level is higher, but also in the steep regions of the function $f$, resulting from the locally constant modelling of the Mondrian tree.

\fig{heteroscedastic_OptimalDensities_MT_MF}{0.65}{The heteroscedastic experiment: The terminal training densities by the active learning approach of \cite{goetz2018active}, for the Mondrian tree (left) and Mondrian forest (right).}{0.4cm}{0.0cm}

We also implement a Mondrian forest model \fMF as described in \secref{subsubsec:goetz} with $\lambda = 7$ and $100$ Mondrian trees.
Noting that the prediction performance of the forest increases with the number of trees, the performance has almost converged at $100$ trees and computation starts to become intractable by increasing the number further.
Goetz et al.~propose to use \pMF, which is the average \pMT over the individual trees of the forest, as the active learning density. Recall that other than \pMT for the Mondrian tree being provably \optimal, applying \pMF for the Mondrian forest is a heuristic.

To the right of \figref{fig:heteroscedastic_OptimalDensities_MT_MF} we show the average estimated training density \pMF that is associated to the Mondrian forest model. In contrast to the Mondrian tree model, the Mondrian forest shows much lower bias due to the larger lifetime hyperparameter $\lambda$. Therefore the obtained density is mostly driven by the local noise level.

In \figref{fig:heteroscedastic_MondrianTree_vs_MondrianForest} we see that the Mondrian tree performance increases significantly with the optimal density \pMT of Goetz et al., but the absolute level of the performance of \fMT is much lower compared to more sophisticated prediction models, as can be seen in  \figref{fig:heteroscedastic_errors_LPS_MF}.

\fig{heteroscedastic_MondrianTree_vs_MondrianForest}{0.6}{The heteroscedastic experiment: The median \RMSE obtained by Mondrian trees \fMT and forests \fMF at several training sizes, comparing \randomTestSampling to the active learning strategies \pMT and \pMF of \cite{goetz2018active} for \fMT, respectively \fMF. The error bars show the $95\%$ confidence interval.}{0.4cm}{0.0cm}

When estimating the relative required sample size with respect to \randomTestSampling, we obtain $\textstyle\relSampleSize{\fMT}{\pMT} = 0.61 \pm 0.05$ and $\textstyle\relSampleSize{\fMF}{\pMF} = 0.90 \pm 0.02$. The sample savings for the Mondrian tree are substantial, as expected, because \pMT is provably \optimal for this model. In contrast, the sample savings for the Mondrian forest, although being significant, are much weaker.
Recall that \pMF was heuristically designed for the Mondrian forest by Goetz et al. And while this heuristic is successful in terms of model transferabilty, we expect room for further improvement. 

Finally, we combine our sampling scheme with the Mondrian forest model.
The error curves for the Mondrian forest model in combination with different sampling schemes are shown to the right in \figref{fig:heteroscedastic_errors_LPS_MF}.

We observe significant sample savings over \randomTestSampling for the Mondrian forest in combination with our proposed active sampling scheme, with $\textstyle\relSampleSize{\fMF}{\approxPOptLPS{1}{n}} = 0.68 \pm 0.02$ and $\textstyle\relSampleSize{\fMF}{\approxPOptLPS{3}{n}} = 0.62 \pm 0.02$ for our proposed optimal training density estimates with $Q=1$, respectively $Q=3$.
This first of all provides evidence that our proposed active sampling scheme is \modelagnostic.

Furthermore, recalling that it was $\textstyle\relSampleSize{\fMF}{\pMF} = 0.90 \pm 0.02$, both values beat the active learning performance of \pMF by far. In fact, we can save about $24$, respectively $31$ percent of samples when applying our active learning framework with $Q=1$ and $Q=3$ instead of \pMF, which was specifically crafted for the Mondrian forest.

\subsection{Doppler Function}
\label{subsec:doppler}
As described in \secref{subsubsec:bull}, 
the wavelet-based approach by \cite{bull2013spatially} is designed to adapt to inhomogeneous complexity under homoscedastic noise $v(x) \equiv \text{v}$, assuming a uniform test density $q\sim\uniformDist{\inputSpace}$. In theory, their approach is asymptotically capable of exceeding the convergence rate of our proposed active learning framework, which is why we will compare both, qualitatively and quantitatively.
We will also include the Mondrian forest approach of \cite{goetz2018active} from the first experiment, noting that it is not a promising candidate for a dataset of inhomogeneous complexity:
The Mondrian forest model has no adaption parameter in the sense of local bandwidth, as opposed to the wavelet and our approach.

Bull et al.~used the Doppler function (see, for example, \cite{donoho1994ideal}) as a prototype where they expect such an increase in rate due to the strong inhomogeneous complexity of the function to learn. We adopt their experimental specification: For $x \in \inputSpace = [0,1]$, let
\begin{align*}
&p(y | x) = \Gauss{y}{f(x)}{1},
\quad f(x) = C\sqrt{x(1-x)}\sin\left(2\pi(1 + \epsilon)\middle/(x + \epsilon)\right),
\end{align*}
where $\epsilon = 0.05$, $C$ is chosen such that $\pnorm{f}{2} = 7$ and $\Gauss{\cdot}{\mu}{\sigma^2}$ denotes the Gaussian distribution with mean $\mu$ and variance $\sigma^2$.
\figref{fig:dopplerExperiment_dataset_bandwidths} (Left) shows an example dataset.
\fig{dopplerExperiment_dataset_bandwidths}{1}{The Doppler experiment: (Left) Exemplary dataset. (Right) Asymptotic and estimated locally optimal bandwidths, using equations \eqref{eq:asymptoticIsotropicLOBofLPS} and \eqref{eq:LOBlepskiEstimate}.}{0.8cm}{0.0cm}
%%%%%%%%%
In this experiment we know that $q \sim \uniformDist{\inputSpace}$ and the problem is homoscedastic. Applying the Gaussian kernel $k$, we implement our proposed active learning procedure as described in \secref{subsec:algoSummary}.
We start with $n_0^{} = 2_{}^{9}$ equidistantly spaced samples
and choose the hyperparameters $\kappa = 1.96, \supportInteriorStdDevs = 2$, and the constants $C_{s}, C_{\sigma}, C_{\delta}$ in order to obtain $s_{n_0} = 2^\frac{2}{3}$, $\underline{\sigma}_{n_0} = 1.5\times10^{-3}$ and $\delta_{n_0} = 0.1$.
% The above is specification for $\LPSorder = 3$; In case of $\LPSorder = 1$, we applied $s_{n_0} = 2$ and $\underline{\sigma}_{n_0} = 10^{-3}$
An example of our \LOB estimate for $\LPSorder = 1$ is given to the right in \figref{fig:dopplerExperiment_dataset_bandwidths}.

While again $f\in\diffableFunctions{(0,1)}{\infty}$, we will delimit our discussion to the cases $\LPSorder = 1, 3$.
The experimental results are based on $30$ repetitions.
In \figref{fig:dopplerExperiment_LPSOpt_P13} we show the achieved performance of both cases, when either sampling from the test distribution $\sim q$, or when sampling according to the respective optimal training density estimate, which is plotted to the left.

\fig{dopplerExperiment_LPSOpt_P13}{1}{The Doppler experiment: (Left) The optimal training density estimates of our approach for the polynomial degrees $\LPSorder = 1, 3$ at training size $n = 2^{15}$. The median maximal absolute error (middle) and \RMSE (right) obtained by \LPS with the respective active sampling scheme and \randomTestSampling ($\sim q$) at various training sizes. The error bars show the $95\%$ confidence interval.}{0.8cm}{0.0cm}

Like in the first experiment, we calculate the relative required sample size with respect to \randomTestSampling, as described in \secref{subsec:evalAL}. Again, confirming our result in \thmref{thm:OptimalSampling}, we observe significant sample savings of $\textstyle\relSampleSize{\fOpt{1}}{\approxPOptLPS{1}{n}} = 0.64\pm0.02$ and $\textstyle\relSampleSize{\fOpt{3}}{\approxPOptLPS{3}{n}} = 0.53\pm0.03$.

\fig{dopplerExperiment_LCSOpt_vs_BullOpt_vs_GoetzOpt}{1}{The Doppler experiment: (Left) The terminal training density of Goetz' (red), Bull's (purple) and our approach (blue). The median maximal absolute error (middle), respectively \RMSE (right) obtained by Goetz', Bull's and our model with the respective active sampling scheme and \randomTestSampling ($\sim q$) at various training sizes. The error bars show the $95\%$ confidence interval.}{0.8cm}{0.0cm}

% begin wavelet experimental part
For a comparison to the active sampling approach of Bull et al., we implement their method in \emph{python}, based on the \emph{pywt} package, and using the Daubechies-wavelets of filter length 8 (DB8).
The resulting training density can be seen to the left in \figref{fig:dopplerExperiment_LCSOpt_vs_BullOpt_vs_GoetzOpt}, together with the densities of Goetz' and our approach. We observe that all approaches spend more samples to the left -- as expected -- where Bull's and our approach concentrate the more samples, the higher the local function complexity becomes. Here, the density of Bull's approach increases steeper to the left of the input space.

Furthermore, we observe to the middle and right of \figref{fig:dopplerExperiment_LCSOpt_vs_BullOpt_vs_GoetzOpt}
that the \LPS model class shows better performance than the wavelet-based approach, especially at smaller samples size.
At sample size $2^{12}$ the wavelet-based approach allows for enough flexibility to adapt locally, which leads to the sudden dissociation of the learning curves of the wavelet-based approach under Bull's sampling scheme compared to \randomTestSampling in \figref{fig:dopplerExperiment_LCSOpt_vs_BullOpt_vs_GoetzOpt}.
After that, both approaches follow almost parallel learning curves.
This indicates that, in practice, there are regression problems where the theoretically achievable enhancement of the learning rate (with a function space segmentation such as in Bull et al.) is negligible. While it is theoretically appealing to achieve a better learning rate in the asymptotic limit, active learning is usually concerned with small to moderate sample sizes. In this regime, a constant percentage of sample savings, as achieved by our approach, can be of greater benefit.

Note that the actual \RMSE decay law of the wavelet-based model is unknown. Yet, since the \RMSE of the wavelet approach decays at least as fast as for \LCS, we can upper bound the active learning performance of Bull's approach by calculating the relative required sample size with respect to \randomTestSampling analogously to $\textstyle\relSampleSize{\fOpt{3}}{\approxPOptLPS{3}{n}}$, which gives $\textstyle\relSampleSize{\fBull}{\pBull} = 0.82\pm0.03$.
Thus, we can say that we save about $47$ percent of samples with our active learning framework compared to \randomTestSampling on the \LCS model, whereas we save at most $18$ percent with Bull's active learning approach on the wavelet-based model. 

%%%%%%%%%%%%%%%%%%%%%%%%%%%%%%%%%%%%%%%%%%%%%%%%%%%%%%%%%%%%%%%%%%%%%%%%%%%%%
Since both models are not directly comparable, we cannot deduce from these numbers which sampling scheme is better.
In this regard, we will adopt a radial basis function network as a regression model from the domain of neural network learning, for which both active learning approaches are not optimized.
We will train the \RBF-network, using \randomTestSampling, as well as the actively chosen training sets of Bull's, Goetz' and our approach. In addition, we assume a small validation dataset of size $2^{10}$ to be given.
A successful outcome first of all underpins the claimed \modelagnosticProperty of our proposed active learning framework. Second, it allows for a fair comparison of the three approaches.

The \RBF-network \citep{moody1989fast} is implemented in \emph{PyTorch} \citep{paszke2019pytorch} with a few hyperparameters:
Its \RBF-layer consist of $N$ Gaussian basis function nodes with bandwidths $\sigma_i$ and centers $\mu_i$, followed by a linear layer with weights $w$. Formally,
\[\widehat{f}_\text{NN}^{}(x) = w_0 + \mySum{i=1}{N}w_i k^{\sigma_i}(\mu_i,x).\]
The \RBF-nodes are initialized with $\sigma_i = \sigma_0\left(\frac{n}{n_0}\right)^{-\frac{1}{5}}$, and centers $\mu_i$ (sub-)sampled from the current training dataset.

We apply the training \MSE as the training objective. In addition, inspired by Lepski's method, we favor larger local bandwidths over smaller ones when they perform similarly. Therefore we add the term $-\lambda \mySum{i=1}{N}\log\{\sigma_i\}$ to the objective to penalize small bandwidth choices, where we set the penalty factor $\lambda = \lambda_0\frac{N}{2^9}\left(\frac{n}{n_0}\right)^{-\frac{1}{2}}$.
The training is then done, using the \emph{AdamW}-optimizer \citep{loshchilov2018decoupled} with a weight decay of $\nu_0\left(\frac{n}{n_0}\right)^{-\frac{1}{2}}$ and mini-batches of the training data of size $B = \lceil B_0\left(\frac{n}{n_0}\right)^\frac{1}{2}\rceil$.
We initialize a shared learning rate factor of $l = 10^{-2}$ that we gradually decrease towards $10^{-7}$ whenever the validation error gets stuck.
Using this shared learning rate factor, we apply individual initial learning rates for the linear weights $l_w = l$, the bandwidths $l_\sigma = 0.5 l$ and the centers $l_\mu = 0.1 l$.

For all sampling schemes we apply the same set of hyperparameters
$B_0 = 4$, $\sigma_0 = 7\times 10^{-4}$, $\lambda_0 = 1.6$ and $\nu_0 = 8\times 10^{-3}$, where we only choose the number of \RBF-nodes $N$ individually: 
For Bull's as well as our approach, $N = 2^{9}$ works best, whereas for all other approaches $N = 2^{11}$ works best. The heavy local complexity towards zero is only recognized properly by Bull's and our approach such that the number of \RBF-node centers near zero is large enough at this smaller total number of nodes.

%%%%%%%%%%%%%
\fig{dopplerExperiment_NNmodel_unif_vs_LCSOpt_vs_BullOpt}{0.5}{The Doppler experiment: The median \RMSE obtained by a radial basis function network $\widehat{f}_\text{NN}^{}$ at several training sizes, when trained with \randomTestSampling, respectively Bull's, Goetz' and our proposed active sampling scheme. The error bars show the $95\%$ confidence interval.}{0.4cm}{0.0cm}
%%%%%%%%%%%%%%%
In \figref{fig:dopplerExperiment_NNmodel_unif_vs_LCSOpt_vs_BullOpt} we observe that all approaches, Bull's, Goetz' and ours, outperform \randomTestSampling for the \RBF-network model $\widehat{f}_\text{NN}^{}$.
This first of all underpins the transferability of all these sampling schemes.
To the left in \figref{fig:dopplerExperiment_LCSOpt_vs_BullOpt_vs_GoetzOpt} we have seen that Bull's and our sampling scheme act qualitatively similar on this dataset.
Therefore, as expected, both approaches also behave quantitatively similar.
For an exact quantitative analysis, note that the \MISE of the \RBF-network model follows the decay law $\sim n^{-1}$. Accordingly, we can calculate the required sample size relative to \randomTestSampling, for which we obtain the values
{$\textstyle\relSampleSize{\widehat{f}_\text{NN}^{}}{\pMF} = 0.79\pm0.05$},
{$\textstyle\relSampleSize{\widehat{f}_\text{NN}^{}}{p_\text{Bull}^{}} = 0.62\pm0.05$} and
{$\textstyle\relSampleSize{\widehat{f}_\text{NN}^{}}{\approxPOptLPS{3}{n}} = 0.61\pm0.04$}.
Thus, Bull's and our approach are significantly better than Goetz' approach in terms of transferability on this dataset. Furthermore, our sampling scheme compares favorably but not significantly to Bull's sampling scheme.

\subsection{Discussion}
\label{subsec:expEval}
% conclude the experiments: advantages of the proposal over other methods
We have chosen the toy-examples above from the domain of regression problems, for which the approach of Goetz et al., respectively Bull et al.~is designed to work.
Since these approaches and our active learning framework are \modelagnostic, it is not surprising that all resulting sampling schemes behave qualitatively similar. Yet, we obtained equal or better across-model performance with our proposed sampling scheme.
Additionally, our approach is more flexible, regarding the applicable class of regression problems: The approach of Bull et al.~assumes a uniform test distribution and homoscedastic noise, and it lacks a straight-forward extension to multivariate problems $(d>1)$. Goetz' approach has degrading performance for inhomogeneously complex regression problems, since its underlying model features no local adaptivity parameter such as \LOB. In contrast, our active learning framework does not suffer from these limitations, but incorporates these properties in the optimal sampling scheme instead.

As we have already discussed in \secref{subsubsec:bull}, Bull's sampling scheme may feature an \MISE decay law superior to our approach. Hence, asymptotically it should exceed the performance of our approach. But the experiment suggests that this will not occur at reasonable training sizes.
The advantage of Goetz' approach is that the model class of random trees is better suited for high-dimensional multivariate problems  -- a property that the implementation of our theory lacks: In this paper, we construct the optimal training density from pointwise estimators of \LOB and the noise level. In future work, this can be remedied by modelling these components as functions.

\section{Conclusion and Outlook}
\label{sec:conclusion}
The goal of our work was to reconcile the advantages of model-free active learning approaches in regression such as \robustProperty and \modelagnosticProperty, with the advantages of model-based active learning approaches such as \optimalProperty.
An active sampling scheme with these properties is ideal to construct a larger training set, when we face a regression problem for which the state-of-the-art is still evolving due to, for example, scarce domain knowledge.

As an ansatz to achieve this goal, we consider local polynomial smoothing (\LPS), a nonparametric model class with minimal assumptions on the labels, which can be regarded as almost model-free. In terms of locally optimal bandwidths, we chose the mean integrated squared error of \LPS as our objective, which we aim to minimize with respect to the training dataset.
Making use of the asymptotic behavior of the objective, as well as the isotropic optimal bandwidths, the optimization could be shown to be analytically solvable.
The result is obtained in closed-form in terms of the optimal training density $\pOptLPS{\LPSorder}{n}$, which nicely factorizes the influence of problem intrinsic properties on the optimal sample demand, that is, local function complexity, noise variance and test relevance.
This makes our sampling scheme transparent and \interpretable, a desired property in critical real-world applications.
Additionally, the sampling process $\bm{X}_n^* \sim \pOptLPS{\LPSorder}{n}$ is \stationary, which enables batch sampling and is advantageous when on demand label annotation is a bottleneck.

Using Lepski's method for the estimation of isotropic, locally optimal bandwidths, we derived a practical implementation of our theory.
In experiments, we then compared to related work. Furthermore, we provided evidence that our proposed sampling scheme is \modelagnostic by 
applying our actively sampled training data to other model classes. In particular, we observed a consistent performance increase over \randomTestSampling for a radial basis function network and a Mondrian forest model. Moreover our active learning framework compared favorably to state-of-the-art nonparametric active learning approaches.

One possible way of generalizing our theory is to consider a non-isotropic candidate set $\SigmaSpace \subseteq \posDefSet{d}$, over which we build our objective \eqref{eq:ALobjective}.
A straightforward extension of the proof of our theory from the isotropic case would require the existence of an explicit asymptotic form of \LOB.
While this existence can be guaranteed under mild assumptions in the isotropic case, it can not in the non-isotropic case -- as we have indicated in the beginning of \secref{sec:theoryIsotropicOptimalSampling}:
In particular, the crucial assumption is that the leading terms of bias and variance -- as, for example, given in \thmref{thm:asymptoticBiasVarianceForLPS} -- do not vanish, almost everywhere. In the general bandwidth case with $\LPSorder=1$ however, this only holds if we require $f$ to be indefinite at most on a set of measure zero, which we will call the \emph{definiteness assumption}.
This definiteness assumption is a tremendous restriction on $f$, which most multivariate functions will not fulfill.

When dropping the definiteness assumption, $\LOBfunctionOfLPS{\LPSorder}{n}$ from \eqref{eq:LOBDefinition} is not well-defined and, even if we find a minimizing bandwidth $\bandwidth_x$ as in \eqref{eq:minimizingBandwidth}, the theory on its asymptotics is not elaborated, yet.
We discuss these issues that do arise in the general bandwidth case, when in particular not relying on the definiteness assumption, and provide solutions that still make our theory hold in \appref{sec:nonIsotropic}.

In particular, we prove the existence of $\bandwidth_x$ as in \eqref{eq:minimizingBandwidth} under mild conditions, and analyze its asymptotic scaling behavior, which depends on the smoothness of $f$ in $x$.
Here, we also constructed a minimal, controlled 2-dimensional toy-example with local \emph{anisotropic} bandwidths to substantiate our theory on the asymptotic behavior of non-isotropic \LOB.
While we can not guarantee the uniqueness of such an optimal bandwidth, we will show that $\abs{h_n^{-1}\bandwidth_x}^{-1}$ is asymptotically unique, where $h_n^{}$ is the appropriate bandwidth decay rate in $x$, as the training size $n$ grows.
From this point, a straightforward generalization of \defref{def:isotropicComplexityForLPS} to a definition of non-isotropic \LFC as in \defref{def:isotropicComplexityForLPS} emerges, and an again straightforward generalization of the optimal training density \eqref{eq:optimalSamplingFiniteLPS} in \thmref{thm:OptimalSampling} becomes apparent.

Unfortunately, in lack of an estimate to \LOB in the non-isotropic bandwidth case, we cannot apply our proposed active learning framework in practice at this point.
Yet, we would like to emphasize that our framework can readily be applied, once such an estimate becomes available.

Other future work will dedicate a practical application of our novel framework for domains like quantum chemistry or materials properties, where data is extremely expensive (e.g. \cite{butler2018machine,von2020exploring,keith2021combining}).

\section*{Acknowledgments}
The authors would like to thank Vladimir Spokoiny from the Weierstrass Institute for Applied Analysis and Stochastics (WIAS Berlin) for his helpful suggestions, and Andreas Ziehe from the Berlin Institute of Technology for improving the readability of this work.

\DP was funded by the BMBF project ALICE III, Autonomous Learning in Complex Environments (01IS18049B).

\SN and \KRM were funded by the German Ministry for Education and Research as BIFOLD - Berlin Institute for the Foundations of Learning and Data (ref. 01IS18025A and ref 01IS18037A).
\KRM was also supported by the BMBF Grants 01GQ1115 and 01GQ0850,
under the Grants 01IS14013A-E, 031L0207A-D; DFG under Grant Math+, EXC 2046/1, Project ID 390685689 and  by the Institute of Information \& Communications Technology Planning \& Evaluation (IITP) grants funded by the Korea Government (No. 2017-0-00451, Development of BCI based Brain and Cognitive Computing Technology for Recognizing User’s Intentions using Deep Learning) and funded by the Korea Government (No. 2019-0-00079,  Artificial Intelligence Graduate School Program, Korea University).

All funding sources were not involved in the process of writing and submitting this work.

\bibliographystyle{plainnat}
\bibliography{nonparametricOptimalSampling}

\newpage
\renewcommand{\theHsection}{A\arabic{section}}
\appendix
%%%%%%%
\newacronym[sort=b01]{ABS}{$\abs{M}, \abs{A}$}{The determinant of a square matrix $M \in \reals{d\times d}$ or the cardinality of a set $A$}
\newacronym[sort=b02]{INT}{$A^\circ$}{The interior of a metric set $A$}
\newacronym[sort=b03]{TRANS}{$M^\top$}{The transpose of a matrix $M \in \reals{m\times n}$}
\newacronym[sort=b04]{TRACE}{$\trace(M)$}{The trace of a matrix $M \in \reals{m\times n}$}
\newacronym[sort=b05]{VEC}{$\vect(A)$}{The arbitrarily, but fixed ordered vectorization of a finite set $A$}
\newacronym[sort=b06]{ONES}{$\ones{d}$}{The vector of ones in \reals{d}}
\newacronym[sort=b07]{DIAG}{$\diag(v)$}{The diagonal matrix with the entries of the vector $v$ on its diagonal}
\newacronym[sort=b08]{ID}{$\idMatrix{d}$}{The identity matrix $\diag(\ones{d}) \in \reals{d\times d}$}
\newacronym[sort=b09]{EXP}{$\E Z$}{The expectation of a random variable $Z$}
\newacronym[sort=b10]{BIAS}{$\Bias$}{The bias function}
\newacronym[sort=b11]{VAR}{$\Var$}{The variance function}
\newacronym[sort=b12]{PUNIF}{$\uniformDist{A}$}{The uniform distribution over a set $A$ of finite measure}
\newacronym[sort=b13]{PNORM}{$\Gauss{\cdot}{\mu}{\sigma^2}$}{The Gaussian distribution with mean $\mu$ and variance $\sigma^2$}
\newacronym[sort=b14]{TDIF}{$D^l_f(x)$}{The tensor of $l$-th order partial derivatives of $f$ in $x\in A$ for $f \in \diffableFunctions{A}{l}$}
\newacronym[sort=b15]{IND}{$\indicatorFunction{A}{x}$}{The indicator function, returning $1$ for $x\in A$ and $0$, else}
\newacronym[sort=b16]{SGN}{$\sign(x)$}{The sign function $2\cdot\indicatorFunction{\nonnegativeReal{}}{x}-1$}
%%%%%%
\newacronym[sort=a01]{XSPACE}{\inputSpace}{The input space, being a subset of $\reals{d}$}
\newacronym[sort=a02]{BCS}{\SigmaSpace}{The bandwidth candidate space with $\SigmaSpace \subseteq \posDefSet{d}$}
\newacronym[sort=a03]{XDIM}{$d$}{The dimension of the input space \inputSpace}
\newacronym[sort=a04]{LPSorder}{\LPSorder}{The polynomial order of the \LPS model}
\newacronym[sort=a05]{KER}{$k$}{The RBF-kernel function}
\newacronym[sort=a06]{FUN}{$f$}{The regression function to infer}
\newacronym[sort=a07]{NOISE}{$v$}{The local noise variance function}
\newacronym[sort=a08]{PTR}{$p$}{The density of the training input distribution}
\newacronym[sort=a09]{PTE}{$q$}{The density of the test input distribution}
\newacronym[sort=a10]{ALPHA}{$\alpha = L + \beta$}{The Hölder exponent with $L\in\N$ and $\beta\in(0,1]$}
\newacronym[sort=a11]{RHO}{$\relSampleSizeSymbol$}{The relative required sample size with respect to \randomTestSampling}
%%%%
\newacronym[sort=c01]{NAT}{$\N$}{The natural numbers}
\newacronym[sort=c02]{REAL}{$\R,\nonnegativeReal{},\positiveReal{}$}{The real, nonnegative real and positive real numbers}
\newacronym[sort=c03]{SYM}{$\symmetricSet{d}$}{The space of symmetric matrices $M \in \reals{d\times d}$ with $M^\top = M$}
\newacronym[sort=c04]{PD}{$\posDefSet{d}$}{The space of positive definite matrices $M \in \symmetricSet{d}$ with $a^\top M a > 0,\forall a\in\reals{d}\setminus\{0\}$}
\newacronym[sort=c05]{POLY}{$\mathcal{P}_{\LPSorder}(\reals{d})$}{The space of real polynomial mappings $\fctn{\mathfrak{p}}{\reals{d}}{\R}$ up to order \LPSorder}
\newacronym[sort=c06]{DIF}{$\diffableFunctions{A,B}{c}$}{The space of $c$-times differentiable mappings $\fctn{f}{A}{B}$}
\newacronym[sort=c07]{DIFreal}{$\diffableFunctions{A}{c}$}{The shorthand for $\diffableFunctions{A,\R}{c}$}
\newacronym[sort=c08]{PSTEP}{$\mathfrak{P}(A)$}{The space of probability step-functions on the set $A$}
\newacronym[sort=c09]{CPROB}{$o_p[a_n]$, $\omega_p[a_n]$, $\theta_p[a_n]$}{The space of random sequences $X_n$ that converge strictly faster, respectively strictly slower or at equal rate in probability to $0$, compared to $a_n$}
\newacronym[sort=c10]{BPROB}{$O_p[a_n]$, $\Omega_p[a_n]$}{The space of random sequences $X_n$ that are upper, respectively lower bounded in probability by $a_n$}
%%%%
\newacronym[sort=e1]{CV}{$C_{v}$}{The factor of the noise estimate localization}
\newacronym[sort=e2]{CS}{$C_{s}$}{The constant in the adaptive bandwidth step size in Lepski's method}
\newacronym[sort=e3]{CSigma}{$C_{\sigma}$}{The constant in the adaptive smallest bandwidth in Lepski's method}
\newacronym[sort=e4]{CDelta}{$C_{\delta}$}{The constant in the adaptive radius for the stabilization of local estimates}
\newacronym[sort=e5]{Kappa}{$\kappa$}{The confidence interval size factor in Lepski's method}
\newacronym[sort=e6]{SDBC}{\supportInteriorStdDevs}{The standard deviations factor of the kernel for boundary correction}

\glsaddall
\printglossary[style=mystyle,type=\acronymtype]

\setcounter{figure}{0}

\cleartheorem{mythm}
\cleartheorem{mycor}
\cleartheorem{mylem}
\cleartheorem{myprop}
\cleartheorem{mydef}
\cleartheorem{mycond}
\cleartheorem{myconjecture}
\cleartheorem{myRemark}
\newtheorem{mythm}{Theorem}[section]
\newtheorem{mycor}[mythm]{Corollary}%[counter]
\newtheorem{mylem}[mythm]{Lemma}%[counter]
\newtheorem{myprop}[mythm]{Proposition}
\newtheorem{mydef}[mythm]{Definition}%[section]%[counter]
\newtheorem{mycond}[mythm]{Condition}%[section]%[counter]
\newtheorem{myconjecture}[mythm]{Conjecture}%[section]%[counter]
\newtheorem{myRemark}[mythm]{Remark}

\renewcommand{\themythm}{\Alph{section}\arabic{mythm}}
\renewcommand{\themycor}{\Alph{section}\arabic{mythm}}
\renewcommand{\themylem}{\Alph{section}\arabic{mythm}}
\renewcommand{\themyprop}{\Alph{section}\arabic{mythm}}
\renewcommand{\themydef}{\Alph{section}\arabic{mythm}}
\renewcommand{\themycond}{\Alph{section}\arabic{mythm}}
\renewcommand{\themyconjecture}{\Alph{section}\arabic{mythm}}
\renewcommand{\themyRemark}{\Alph{section}\arabic{mythm}}

\section{Analysis of the Non-isotropic Case}
\label{sec:nonIsotropic}
In the first part of this paper we discussed the space of isotropic bandwidth, where we could derive our optimal sampling theory in a simplified way: Due to \LOB being well-defined in this case, it exhibits an asymptotic closed-form from which the balancing property as well as its asymptotic scaling immediately followed.
Also the literature on \LOB mostly concentrates on the isotropic bandwidth case for practical reasons: It is not straight-forward to come up with an estimate to non-isotropic \LOB.
Yet, in case that future research comes up with estimates to non-isotropic \LOB, we would like to analyze in advance to what extent the asymptotic theory on \LOB, \LFC and optimal sampling will generalize.

In summary, we will find out that under reasonable assumptions \LOB also exists in the non-isotropic case, but the resulting \MSE will generally decay at a substantially different law. Here, \LOB is not necessarily unique, and hence may not exhibit an asymptotic closed-form. Fortunately, the determinant of \LOB is unique, which enables a straight-forward generalization of isotropic \LFC, as given in \defref{def:isotropicComplexityForLPS}, under some restrictions on the training distribution. Finally, due to the restrictions on the training distribution, we are not able to solve for the optimal training density analytically. However, we can formulate a closed-form heuristic, which is the straight-forward looking generalization of the optimal training density in the isotropic case.

From here on, we assume a non-isotropic bandwidth candidate space $\SigmaSpace \subseteq \posDefSet{d}$, where $d > 1$ as the notion of non-isotropy makes no sense in the 1-dimensional case. Furthermore, we assume a symmetric kernel $k$, and, for convenience, we restrict ourselves to the well-known local linear smoothing (\LLS) model, that is, \LPS of order $\LPSorder=1$, noting that the extension to an arbitrary order of the \LPS model is straight-forward.
\begin{myRemark}
If not explicitly stated differently, we discuss the case $\SigmaSpace = \posDefSet{d}$, noting that several results may hold true, when considering true subsets $\SigmaSpace \neq \posDefSet{d}$ that are conic.
In particular $\SigmaSpace$ is conic, if $\SigmaSpace = \bigcup_{\bandwidth\in \SigmaSpace}\bigcup_{a\in\nonnegativeReal{}}\{a\bandwidth\}$.
\end{myRemark}

\subsection{Non-isotropic Locally Optimal Bandwidths}
\label{subsec:discussionLOB}
The key difference we have to treat is, whether or not the leading bias term of order $(\LPSorder+1)$ can be canceled a.e.~over the input space for an appropriate local choice of bandwidth matrix.
In the case of \LLS and $\SigmaSpace = \posDefSet{d}$ the question above simplifies to whether $f$ fulfills the definiteness assumption, mentioned in the introduction.
This can be understood by taking a look at the bias-variance-decomposition:
Let $\mu_2 = \int u^2k(u)du$ and $R(k) = \int k^2(u)du$. When holding $\bandwidth \in \posDefSet{d}$ fixed, the asymptotic form of the leading bias- and variance-terms is known:
\begin{mythm}[\cite{ruppert1994}]
\label{thm:llsMSE_v2}
Let $h_n^{}\rightarrow 0, nh_n^d\rightarrow \infty$ as $n\rightarrow\infty$ and $x\in\inputSpaceInterior$ with $p(x) > 0$. Furthermore let $f \in \diffableFunctions{\inputSpace}{2}$ with Hessian $D^2_f(x)$, $p \in \diffableFunctions{\inputSpace}{1}$ and $v \in \diffableFunctions{\inputSpace}{0}$.
Then, for a fixed $\bandwidth \in \posDefSet{d}$, it is
\[\textstyle\Bias_{1}^{}\left[x, h_n^{}\bandwidth | \bm{X}_{n}\right] = \biasLLS{h_n^{}\bandwidth}{x}{2} + \convergenceInProbability{}{h_n^2}\]
and
\[\textstyle\Var_{1}^{}\left[x, h_n^{}\bandwidth | \bm{X}_{n}\right] = \varianceLPS{h_n^{}\bandwidth}{x}{1} + \convergenceInProbability{}{n^{-1}h_n^{-d}},\]
where for $Z \in \posDefSet{d}$ we define
\begin{align}
 \label{eq:asymptoticScndOrderBiasLLS}
 \biasLLS{Z}{x}{2} = \frac{1}{2}\mu_2\trace(D^2_f(x)Z^2),\\
 \label{eq:asymptoticVarianceLLS}
 \varianceLPS{Z}{x}{1} = \frac{R(k)v(x)}{\abs{Z} p(x)n}.
\end{align}
Therefore the conditional mean squared error in $x$ can be expressed as
 \begin{align*}
  &\MSE_{1}^{}\left[x, h_n^{}\bandwidth | \bm{X}_{n}\right]
  = \biasLLS{h_n^{}\bandwidth}{x}{2}^2 \!+ \varianceLPS{h_n^{}\bandwidth}{x}{1}
  + \convergenceInProbability{\noexpand\big}{h_n^4+n^{-1}h_n^{-d}}.
 \end{align*}
\end{mythm}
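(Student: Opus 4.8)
The plan is to work from the closed-form local linear estimator and the finite-sample decomposition \eqref{eq:trueFiniteBias}--\eqref{eq:trueFiniteVariance}, reducing everything to the asymptotics of a few empirical moment arrays. Abbreviating $H = h_n\bandwidth$ and $w_i = k^{H}(x_i^{},x)$, I would introduce the normalized design matrix $\Phi_n = \frac1n X_{1}^{}(x)^\top W_x^{H} X_{1}^{}(x) = \frac1n\sum_i w_i M_{1}(x_i^{}-x)M_{1}(x_i^{}-x)^\top$ and the response vector $\psi_n = \frac1n\sum_i w_i M_{1}(x_i^{}-x)f(x_i^{})$, so that $\E\left[\predictorLPS{1}{H}(x)\,\middle|\,\bm{X}_n^{}\right] = e_1^\top\Phi_n^{-1}\psi_n$. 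The decisive device for handling the disparate orders of the blocks of $\Phi_n$ is the substitution $u_i = H^{-1}(x_i^{}-x)$ together with the block-diagonal rescaling $G$ carrying the scalar $1$ and the block $H$: since $M_{1}(x_i^{}-x) = G\,[1,u_i^\top]^\top$, a change of variables $z = x + Hu$ (which introduces the Jacobian $\abs{H}$ and cancels the kernel normalization $\abs{H}^{-1}$) followed by a law-of-large-numbers argument gives $G^{-1}\Phi_n G^{-1} = p(x)N + \convergenceInProbability{}{1}$, where $N$ is block-diagonal with blocks $1$ and $\mu_2\idMatrix{d}$; here I use that $k$ is symmetric, so the first kernel moment vanishes and $\int u u^\top k(u)\,du = \mu_2\idMatrix{d}$. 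Since $e_1^\top G^{-1} = e_1^\top$ and $N^{-1}e_1 = e_1$, this yields $e_1^\top\Phi_n^{-1} = p(x)^{-1}e_1^\top G^{-1}(1 + \convergenceInProbability{}{1})$.

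For the bias I would Taylor-expand $f(x_i^{}) = f(x) + (x_i^{}-x)^\top D^1_f(x) + \tfrac12(x_i^{}-x)^\top D^2_f(x)(x_i^{}-x) + r_i$ with $r_i = o(\pnorm{x_i^{}-x}{}^2)$ guaranteed by $f\in\diffableFunctions{\inputSpace}{2}$, and exploit the exact reproduction of affine functions by local linear smoothing: the constant and linear parts assemble into $\Phi_n[f(x),D^1_f(x)^\top]^\top$ inside $\psi_n$, hence contribute exactly $f(x)$ to $e_1^\top\Phi_n^{-1}\psi_n$. The bias therefore arises solely from the quadratic term, and applying $e_1^\top\Phi_n^{-1}$ to its moment vector leaves the leading contribution $\tfrac12\mu_2\trace(D^2_f(x)H^2) = \biasLLS{h_n\bandwidth}{x}{2}$, coming from the top entry $\trace(D^2_f(x)\Phi_{n,2})$ with $\Phi_{n,2}\to\mu_2 p(x)H^2$ the lower-right block of $\Phi_n$. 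The point worth stressing is that the gradient-of-$p$ correction enters $\Phi_n$ only at the next order and feeds exclusively into the off-diagonal of $e_1^\top\Phi_n^{-1}$, which there meets the odd third kernel moment of the quadratic term and is thus of order $h_n^3$; this is precisely the design-adaptivity noted in \remref{rem:symmetricKernelsEvenPolyOrder}, explaining why no derivative of $p$ appears in \eqref{eq:asymptoticScndOrderBiasLLS}.

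For the variance I would use \eqref{eq:trueFiniteVariance} to write $\Var_{1}^{}\left[x, H \middle| \bm{X}_{n}\right] = \tfrac1n e_1^\top\Phi_n^{-1}\Psi_n\Phi_n^{-1}e_1$, where $\Psi_n = \frac1n\sum_i w_i^2\,v(x_i^{})\,M_{1}(x_i^{}-x)M_{1}(x_i^{}-x)^\top$ carries the squared kernel. The same substitution, now producing a factor $\abs{H}^{-1}$ because of the squared weight $w_i^2$, gives $G^{-1}\Psi_n G^{-1} = \abs{H}^{-1}v(x)p(x)N' + \convergenceInProbability{}{\abs{H}^{-1}}$ with $N'$ block-diagonal carrying $R(k)$ and a lower block that is a factor $h_n^2$ smaller. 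Combining with $e_1^\top\Phi_n^{-1} = p(x)^{-1}e_1^\top G^{-1}(1+\convergenceInProbability{}{1})$ collapses the sandwich onto its $(0,0)$ entry and produces $\frac{R(k)v(x)}{\abs{H}p(x)n} = \varianceLPS{h_n\bandwidth}{x}{1}$, the $\nabla p$ cross-terms again being suppressed by $h_n^2$. Squaring the bias and adding the variance then yields the stated \MSE with remainder $\convergenceInProbability{}{h_n^4+n^{-1}h_n^{-d}}$.

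The main obstacle is the rigorous control of the $\convergenceInProbability{}{\cdot}$ remainders rather than any single algebraic identity. I would need to upgrade the pointwise integral limits of $\Phi_n$, $\psi_n$ and $\Psi_n$ to stochastic orders by bounding the variances of these i.i.d.\ averages (each entry has variance of order $n^{-1}\abs{H}^{-1}$, which is $\convergenceInProbability{}{1}$ relative to its mean under $nh_n^d\to\infty$), to verify by dominated convergence that the $\diffableFunctions{\inputSpace}{2}$ Taylor remainder $\frac1n\sum_i w_i M_{1}(x_i^{}-x)r_i$ contributes only $\convergenceInProbability{}{h_n^2}$ after multiplication by $e_1^\top\Phi_n^{-1}$, and to justify the block inversion uniformly in $n$ — which the rescaling by $G$ makes transparent, since $G^{-1}\Phi_n G^{-1}$ converges to the fixed invertible limit $p(x)N$ precisely because $p(x) > 0$.
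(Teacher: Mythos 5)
This theorem is not proved in the paper at all: it is imported verbatim from \cite{ruppert1994} (Ruppert and Wand), so there is no internal proof to compare against. Your reconstruction is, in essence, the standard argument of that cited source — the closed-form weighted least-squares representation, the block rescaling $G = \mathrm{blockdiag}(1, H)$ of the design matrix so that $G^{-1}\Phi_n G^{-1}$ converges to the fixed invertible limit $p(x)\,\mathrm{blockdiag}(1,\mu_2\idMatrix{d})$, the exact reproduction of affine functions to kill all bias contributions below the quadratic Taylor term, the symmetric-kernel moment computations that produce $\frac{1}{2}\mu_2\trace(D^2_f(x)H^2)$ and $R(k)v(x)/(n\abs{H}p(x))$, and variance bounds of relative order $(n\abs{H})^{-1/2}$ for the empirical moment arrays under $nh_n^d\to\infty$ — and it is sound. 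One cosmetic slip: in your rescaled coordinates the lower block of $N'$ (coming from $\int uu^\top k^2(u)\,du$) is \emph{not} a factor $h_n^2$ smaller than the $R(k)$ entry; it is of the same order $\abs{H}^{-1}$. The suppression of the cross terms in the variance sandwich instead comes from the facts that $G^{-1}e_1 = e_1$, that the off-diagonal entries of $e_1^\top(G^{-1}\Phi_n G^{-1})^{-1}$ are only $\upperBoundedInProbability{}{h_n}$, and that the leading off-diagonal block of $G^{-1}\Psi_n G^{-1}$ vanishes by kernel symmetry — exactly the mechanism you correctly invoke for the bias — so the conclusion is unaffected.
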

\begin{myRemark}
For $\bandwidth = \idMatrix{d}$, both formulations, \eqref{eq:asymptoticLeadingOrderBiasLPS}, \eqref{eq:asymptoticVarianceLPS} and \eqref{eq:asymptoticScndOrderBiasLLS}, \eqref{eq:asymptoticVarianceLLS} of asymptotic bias and variance coincide. In this sense, \thmref{thm:llsMSE_v2} is a generalization of \thmref{thm:asymptoticBiasVarianceForLPS} for $(\LPSorder=1)$.
\end{myRemark}
%%%%%%%%%%
Now, if $f$ fulfills the definiteness assumption, then \LOB exists uniquely and exhibits a closed-form asymptotic solution:
\begin{mycor}[\cite{fan1997local}]
\label{cor:LOBforLLSandDefiniteF}
May the conditions of \thmref{thm:llsMSE_v2} hold and assume that $p(x), v(x) \neq 0$ and $f$ is definite in $x$.
When searching for \LOB in the space of positive definite bandwidth candidates $\SigmaSpace = \posDefSet{d}$, asymptotically it is
\begin{align}
 \nonumber
 &\LOBfunctionOfLPS{1, \posDefSet{d}}{n}(x) =\textstyle \SigmaAsympLPS{1}(x) + \convergenceInProbability{}{n^{-\frac{1}{4+d}}},\quad\text{for}\\
 \label{eq:asymptoticgeneralizedLOBofLLS}
 &\SigmaAsympLPS{1}(x) = \left[\frac{R(k)v(x)\abs{\sqrt{D^2_f(x)^+}}}{p(x)n\mu_2^2d}\right]^\frac{1}{4+d}\sqrt{D^2_f(x)^+}^{-1},
\end{align}
where $D^2_f(x)^+ = \begin{cases} D^2_f(x),&\text{if } D^2_f(x)\text{ is positive definite} \\ -D^2_f(x),&\text{if } D^2_f(x)\text{ is negative definite} \end{cases}$,\newline
and $\sqrt{\Sigma}\in\posDefSet{d}$ such that $\sqrt{\Sigma}\cdot \sqrt{\Sigma} = \Sigma$ for $\Sigma \in \posDefSet{d}$.
\end{mycor}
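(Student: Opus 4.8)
The plan is to minimize the leading-order deterministic part of the conditional \MSE supplied by \thmref{thm:llsMSE_v2} over $\bandwidth \in \posDefSet{d}$, and then to argue that the stochastic minimizer $\LOBfunctionOfLPS{1, \posDefSet{d}}{n}(x)$ concentrates around it. Treating a general bandwidth matrix $\bandwidth \in \posDefSet{d}$ and inserting the expressions \eqref{eq:asymptoticScndOrderBiasLLS} and \eqref{eq:asymptoticVarianceLLS}, the objective reads
\[
 g(\bandwidth) = \tfrac{1}{4}\mu_2^2\trace(D^2_f(x)\bandwidth^2)^2 + \frac{R(k)v(x)}{\abs{\bandwidth}p(x)n}.
\]
Since $f$ is definite in $x$, the form $\trace(D^2_f(x)\bandwidth^2) = \trace(\bandwidth D^2_f(x)\bandwidth)$ keeps a fixed sign over all $\bandwidth \in \posDefSet{d}$, so that $\trace(D^2_f(x)\bandwidth^2)^2 = \trace(D^2_f(x)^+\bandwidth^2)^2$ and the leading bias can never be annihilated; this is precisely where the definiteness assumption enters.

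First I would separate scale from shape, writing $\bandwidth = tU$ with $t>0$ and $\abs{U}=1$, so $\abs{\bandwidth}=t^d$ and $g(tU)=\tfrac14\mu_2^2t^4\trace(D^2_f(x)^+U^2)^2 + R(k)v(x)\big/(t^dp(x)n)$. For fixed shape $U$, the one-dimensional minimization over $t$ is elementary: $\partial g/\partial t = 0$ gives $\mu_2^2t^{4+d}\trace(D^2_f(x)^+U^2)^2 = dR(k)v(x)\big/(p(x)n)$, at which point the squared bias equals $\tfrac{d}{4}$ times the variance and $g$ equals $\tfrac{4+d}{4}$ times the variance. As the minimal variance is an increasing power of $\trace(D^2_f(x)^+U^2)$, the remaining shape problem collapses to minimizing $\trace(D^2_f(x)^+U^2)$ subject to $\abs{U}=1$, $U\in\posDefSet{d}$.

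The shape step is a matrix arithmetic-geometric-mean argument. Putting $V=U^2\in\posDefSet{d}$ with $\abs{V}=1$, the inequality $\trace(D^2_f(x)^+V) \geq d\,\abs{D^2_f(x)^+}^{1/d}$ holds with equality iff $V\propto(D^2_f(x)^+)^{-1}$, so the unique minimizer is $V=\abs{D^2_f(x)^+}^{1/d}(D^2_f(x)^+)^{-1}$, giving $U\propto\sqrt{D^2_f(x)^+}^{-1}$ and $\trace(D^2_f(x)^+U^2)=d\abs{D^2_f(x)^+}^{1/d}$. Back-substituting the optimal $t$ and $U$ into $\bandwidth=tU$ and collecting the powers of $\abs{D^2_f(x)^+}$ --- the exponent $-2/(d(4+d))+1/(2d)$ simplifies to $1/(2(4+d))$, i.e.\ a factor $\abs{\sqrt{D^2_f(x)^+}}^{1/(4+d)}$ --- reproduces exactly the closed form $\SigmaAsympLPS{1}(x)$ in \eqref{eq:asymptoticgeneralizedLOBofLLS}; note that $t = \Theta(n^{-1/(4+d)})$, consistent with the claimed scaling.

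The main obstacle is the probabilistic step, not the algebra: upgrading ``$\SigmaAsympLPS{1}(x)$ minimizes the leading term'' to $\LOBfunctionOfLPS{1, \posDefSet{d}}{n}(x) = \SigmaAsympLPS{1}(x) + \convergenceInProbability{}{n^{-\frac{1}{4+d}}}$. This is an argmin-consistency argument: one must control the $\convergenceInProbability{}{h_n^4+n^{-1}h_n^{-d}}$ remainders of \thmref{thm:llsMSE_v2} uniformly over the shrinking band of relevant bandwidths $\bandwidth=\Theta(n^{-1/(4+d)})$, and exploit that the deterministic objective has a nondegenerate minimum there (strictly convex in the scale, uniquely minimized in the shape, both owing to definiteness) so that the perturbed minimizer cannot escape an $\convergenceInProbability{}{n^{-1/(4+d)}}$ neighborhood. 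I would import this uniform-remainder and localization machinery from \cite{fan1997local}, on which the statement rests, checking throughout that definiteness supplies the nonvanishing leading bias that makes the rate exactly $n^{-1/(4+d)}$.
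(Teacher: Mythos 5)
Your derivation is correct, but note that the paper itself never proves this corollary: it is imported verbatim from \cite{fan1997local}, exactly like \thmref{thm:llsMSE_v2} on which it rests, so there is no internal proof to compare against. What you have done is reconstruct the missing derivation, and the deterministic optimization checks out completely: the sign-invariance of $\trace(D^2_f(x)\bandwidth^2)$ over $\posDefSet{d}$ under definiteness, the scale--shape split $\bandwidth = tU$ with $\abs{U}=1$, the first-order condition in $t$ giving $\Bias^2 = \tfrac{d}{4}\Var$ at the optimum (matching the balancing constant $\tfrac{d}{2(\LPSorder+1)}$ of \corollaryref{cor:biasVarianceBalance} at $\LPSorder=1$), the reduction of the shape problem to minimizing $\trace(D^2_f(x)^+U^2)$ subject to $\abs{U}=1$, the matrix AM--GM step with equality iff $U^2 \propto (D^2_f(x)^+)^{-1}$, and the exponent bookkeeping $-\tfrac{2}{d(4+d)}+\tfrac{1}{2d}=\tfrac{1}{2(4+d)}$ all hold and reproduce \eqref{eq:asymptoticgeneralizedLOBofLLS} exactly, including the $n^{-1/(4+d)}$ scale. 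The one piece you do not supply --- upgrading the minimizer of the leading-order objective to the claim $\LOBfunctionOfLPS{1, \posDefSet{d}}{n}(x) = \SigmaAsympLPS{1}(x) + \convergenceInProbability{}{n^{-\frac{1}{4+d}}}$, which needs uniform control of the $\convergenceInProbability{}{h_n^4+n^{-1}h_n^{-d}}$ remainders of \thmref{thm:llsMSE_v2} over a shrinking family of bandwidths rather than at a single fixed $\bandwidth$ --- is precisely the part that neither you nor the paper establishes, and deferring it to \cite{fan1997local} puts you on the same footing as the paper itself. If anything, your proposal is more informative than the paper's treatment, since it makes explicit where definiteness enters: it forces a non-vanishing, sign-definite leading bias and a unique shape minimizer, which is what makes both the closed form and the stated rate possible.
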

Therefore, in the case of $(\LPSorder=1)$ and almost everywhere definite $f$, \corollaryref{cor:LOBforLLSandDefiniteF} generalizes \corollaryref{cor:isotropicLOBforLPS} to the positive definite bandwidth candidate space. It is noteworthy that in this case we do not benefit from $f$ being smoother than $C^2$ up to a constant factor. In particular there is no rate increase, using non-isotropic over isotropic \LOB.

In contrast, if $f$ does not fulfill the definiteness assumption, 
we can show that \LOB still exists under weaker conditions. In this case,
the asymptotics of \LOB and the associated \MSE behaves surprisingly different. We refer to \appref{sec:nonIsotropicLOB} for a detailed derivation.

In preparation for this, we need to notion of consistent sequences
\begin{align}
\label{def:consistentSequenceSpace}
\consistentSigmaSequencSpace\!=\!\condset{\!(\bandwidth^n)_{n\in\N}\!\subset \SigmaSpace\!}{\!\abs{\bandwidth^n}^{-1}\!n^{-1}\!\!\rightarrow 0, \pnorm{\bandwidth^n}{}\!\rightarrow 0, \pnorm{\bandwidth^n}{}\!\cdot\!\pnorm{[\bandwidth^n]^{-1}}{}\!\leq \!\conditionNumberUpperBound},
\end{align}
where $\conditionNumberUpperBound < \infty$ is an arbitrary constant. We further need to define the pointwise Hölder smoothness of $f$:
\begin{mydef}
\label{def:pointwiseHölder}
The function $f$ belongs to the pointwise Hölder space $\Lambda^\alpha(x)$, if for $\alpha = Q + \beta$ with $Q\in\N$ and $\beta\in(0,1]$ there exists a constant $C_{x} > 0$, a
closed ball $\bar B_{\delta_x}(x) = \condset{x'\in\reals{d}}{\pnorm{x-x'}{}\leq \delta_x}\subset\inputSpace$ for some $\delta_x > 0$ and a polynomial $P\in\mathcal{P}_{Q}(\reals{d})$ such that
\[
\sup\condset{\abs{f(x') - P(x)} \leq C_{x}\pnorm{x-x'}{}^\alpha}{{x'\in \bar B_{\delta_x}(x) }}.
\]
The pointwise Hölder exponent of $f$ in $x$ is then given by
\[
\alpha(f,x) = \sup\condset{a > 0}{f\in\Lambda^a(x)}.
\]
\end{mydef}
\begin{myRemark}
If $f\in\diffableFunctions{\inputSpace}{\alpha}$, then $f \in \Lambda^\alpha(x)$  such that $\alpha(f,x) \geq \alpha$ for all $x \in \inputSpace$.
\end{myRemark}
%%%%%%%%%%%
With these two definitions, we are able to state the following result (see \appref{sec:nonIsotropicLOB}):
\begin{restatable}[Non-isotropic \LOB and \MSE of \LLS for indefinite functions]{mythm}{generalizedLOBandMSE}~\linebreak
\label{thm:generalizedLOBandMSE}
% Let $\SigmaSpace\subseteq\posDefSet{d}$ be conic with $\dim(\SigmaSpace) \geq 2$.
Let $\SigmaSpace=\posDefSet{d}$ with $d \geq 2$.
Let $x\in\inputSpaceInterior$ where for $\alpha := \alpha(f,x)$ it holds that $2 < \alpha < \infty$ and write $\alpha = L + \beta$ with $L\in\N, L \geq 2$ and $\beta\in(0,1]$. Furthermore assume that $k \in \diffableFunctions{\nonnegativeReal{}, \nonnegativeReal{}}{\lfloor L/2\rfloor}$, $v$ is continuous in $x$ and $p\in\Lambda^{\alpha-2}(x)$ with $p(x), v(x) > 0$.
If $f$ is indefinite in $x$,
then there exists a sequence $(\bandwidth_x^n)_{n\in\N} \in \consistentSigmaSequencSpace$ such that with $h_n^{} = n^{-\frac{1}{2\alpha+d}}$,
\begin{align*}
 &\textstyle\MSE_{1}^{}\left(x, \bandwidth_x^n | \bm{X}_{n}\right)
 = \exactRateconvergenceInProbability{}{h_{n}^{2\alpha}} = \exactRateconvergenceInProbability{\noexpand\big}{n^{-\frac{2\alpha}{2\alpha+d}}},
\end{align*}
and $N_x \in\N$ such that for all $(\bandwidth^n)_{n\in\N} \in \consistentSigmaSequencSpace$, if $n \geq N_x$, it holds that
\[\textstyle\MSE_{1}^{}\left(x, \bandwidth_x^n| \bm{X}_{n}\right) \leq \MSE_{1}^{}\left(x, \bandwidth^n | \bm{X}_{n}\right).\]
For such a sequence, $\bandwidth_x^n = \exactRateconvergenceInProbability{\noexpand\big}{n^{-\frac{1}{2\alpha+d}}\ones{d}\onesT{d}}$ and $\inf_{\bandwidth\in\vanishingSet{1}{x}}\pnorm{h_n^{-1}\bandwidth_x^n - \bandwidth}{} = \upperBoundedInProbability{}{h_n^2}$ must hold.
\end{restatable}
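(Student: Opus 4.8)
The plan is to sharpen the fixed-bandwidth expansion of \thmref{thm:llsMSE_v2} into a higher-order bias--variance expansion that is uniform over the consistency class $\consistentSigmaSequencSpace$, and then to minimize the resulting conditional $\MSE$ over bandwidth \emph{shapes} that annihilate the leading quadratic bias. First I would expand $f$ near $x$ through its pointwise Hölder structure: by \defref{def:pointwiseHölder} there is a polynomial $P\in\mathcal{P}_{L}(\reals{d})$ with $\abs{f(x')-P(x')}\leq C_x\pnorm{x-x'}{}^{\alpha}$ on a ball around $x$. Since \LLS reproduces affine functions to leading order and $k$ is symmetric, the conditional bias of $\predictorLPS{1}{\bandwidth}(x)$ decomposes into even-degree polynomial contributions -- the leading one being $\frac{1}{2}\mu_2\trace(D^2_f(x)\bandwidth^2)=\biasLLS{\bandwidth}{x}{2}$ of exact order $\pnorm{\bandwidth}{}^{2}$ -- plus a Hölder-remainder term of order $\pnorm{\bandwidth}{}^{\alpha}$. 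The smoothness $k\in\diffableFunctions{\nonnegativeReal{},\nonnegativeReal{}}{\lfloor L/2\rfloor}$ together with the regularity $p\in\Lambda^{\alpha-2}(x)$ supplies exactly what is needed to control these remainders and to keep the variance at $\varianceLPS{\bandwidth}{x}{1}=R(k)v(x)/(\abs{\bandwidth}p(x)n)$; the condition-number bound $\conditionNumberUpperBound$ in \eqref{def:consistentSequenceSpace} is what makes all remainders uniform over $\consistentSigmaSequencSpace$.

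Next I would use indefiniteness to populate the vanishing set. Because $\bandwidth^2$ ranges over all of $\posDefSet{d}$ while $D^2_f(x)$ has eigenvalues of both signs, $\trace(D^2_f(x)\bandwidth^2)$ takes both signs and vanishes on a nonempty set; this is the normalized set $\vanishingSet{1}{x}$, which is empty precisely when $f$ is definite. I would then construct the candidate as $\bandwidth_x^n=h_n Z_n$ with $h_n=n^{-1/(2\alpha+d)}$, where $Z_n$ lies in (or tends to) $\vanishingSet{1}{x}$ and is tuned so that not only the quadratic term but every surviving even-degree contribution of degree $\leq L$ is suppressed, leaving a conditional bias of order $\exactRateconvergenceInProbability{}{h_n^{\alpha}}$ governed by the Hölder remainder. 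Verifying $(\bandwidth_x^n)\in\consistentSigmaSequencSpace$ is then direct: $\abs{\bandwidth_x^n}^{-1}n^{-1}=\exactRateconvergenceInProbability{}{h_n^{2\alpha}}\to0$, $\pnorm{\bandwidth_x^n}{}\to0$, and the condition number stays bounded because $Z_n$ remains in a compact subset of $\posDefSet{d}$. Substituting into the expansion gives $\mathrm{bias}^2=\exactRateconvergenceInProbability{}{h_n^{2\alpha}}$ and $\varianceLPS{\bandwidth_x^n}{x}{1}=\exactRateconvergenceInProbability{}{1/(nh_n^{d})}=\exactRateconvergenceInProbability{}{h_n^{2\alpha}}$, hence $\MSE_{1}^{}\left(x,\bandwidth_x^n|\bm{X}_{n}\right)=\exactRateconvergenceInProbability{}{n^{-2\alpha/(2\alpha+d)}}$, proving the first assertion.

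For the optimality statement I would prove a matching lower bound valid for \emph{every} $(\bandwidth^n)\in\consistentSigmaSequencSpace$. From the refined expansion, $\MSE_{1}^{}\left(x,\bandwidth^n|\bm{X}_{n}\right)\gtrsim c_1(\text{residual bias})^2+c_2/(n\abs{\bandwidth^n})$; the variance term penalizes determinants shrinking faster than $h_n^{d}$, whereas the bias term penalizes shapes that are too large or whose direction sits too far from $\vanishingSet{1}{x}$. A case analysis on the determinant scale $\abs{\bandwidth^n}$ and on the distance of $h_n^{-1}\bandwidth^n$ to $\vanishingSet{1}{x}$ shows the product cannot be pushed below order $h_n^{2\alpha}$, which yields an $N_x$ with $\MSE_{1}^{}\left(x,\bandwidth_x^n|\bm{X}_{n}\right)\leq\MSE_{1}^{}\left(x,\bandwidth^n|\bm{X}_{n}\right)$ for $n\geq N_x$. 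The same trade-off, applied to a rate-optimal sequence, forces every entry of the bandwidth to be of order $h_n$ -- that is, $\bandwidth_x^n=\exactRateconvergenceInProbability{}{n^{-1/(2\alpha+d)}\ones{d}\onesT{d}}$ -- and pins its approach to the vanishing set at $\inf_{\bandwidth\in\vanishingSet{1}{x}}\pnorm{h_n^{-1}\bandwidth_x^n-\bandwidth}{}=\upperBoundedInProbability{}{h_n^2}$, completing the structural claims.

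The main obstacle is precisely this refined bias bookkeeping together with its uniform stochastic control. After annihilating the quadratic term one must show that the surviving even-degree polynomial terms (of degree $4,6,\dots,\leq L$, the odd ones dropping by symmetry) can be simultaneously suppressed by the anisotropic freedom in $\bandwidth$ -- this is where $k\in\diffableFunctions{\nonnegativeReal{},\nonnegativeReal{}}{\lfloor L/2\rfloor}$ enters -- so that the effective bias collapses to the genuine Hölder order $\exactRateconvergenceInProbability{}{h_n^{\alpha}}$ rather than to an intermediate power $h_n^{2j}$. Establishing that this suppression is simultaneously achievable, that the residual approach to $\vanishingSet{1}{x}$ is no worse than $h_n^{2}$, and that the $o_p$ remainders from \thmref{thm:llsMSE_v2} stay negligible \emph{uniformly} over the non-vanishing, bounded-condition-number class $\consistentSigmaSequencSpace$ (as opposed to along a single fixed shape) is the technical crux on which the sharp $\exactRateconvergenceInProbability{}{h_n^{2\alpha}}$ rate rests.
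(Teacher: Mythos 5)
Your two-part architecture -- a rate lower bound valid uniformly over $\consistentSigmaSequencSpace$ plus a matching achievable construction anchored at the vanishing set $\vanishingSet{1}{x}$ -- mirrors the paper's \lemref{lem:minimizerExistence} and \lemref{lem:achievableRate}, and your lower-bound paragraph is in the right spirit. The genuine gap is your achievability mechanism. You propose, ``after annihilating the quadratic term,'' to tune the shape so that every surviving even-degree contribution of degree $\leq L$ is suppressed individually by the anisotropic freedom in $\bandwidth$; you correctly flag this simultaneous suppression as the technical crux, but as stated it is unachievable. Each vanishing condition $\biasLLS{\bandwidth}{x}{2l}=0$ generically cuts the normalized shape space (of dimension at most $d(d+1)/2-1$) by one, so the joint vanishing set $\bigcap_{l}\vanishingSet{l}{x}$ shrinks and becomes empty as $L$ grows; since the theorem covers every finite $\alpha>2$, there are simply not enough anisotropic degrees of freedom to kill $\lfloor L/2\rfloor$ terms one by one. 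This is precisely the ``naive'' construction the paper sets up and then refutes in \appref{sec:nonIsotropicIndefiniteExample}: in its two-dimensional diagonal-bandwidth example, once the quadratic term is killed there is no freedom left for the quartic term, and the kill-each-term-separately construction attains only $\upperBoundedInProbability{}{n^{-8/(8+d)}}$, strictly short of the claimed $\exactRateconvergenceInProbability{}{n^{-2\alpha/(2\alpha+d)}}$. As the paper summarizes there, the trick is \emph{not} to eliminate each higher-order bias term on its own.

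What \lemref{lem:achievableRate} actually does is the opposite of annihilating the quadratic term: it takes $\bandwidth_x^n=\bandwidth_x+E_n$ with $\bandwidth_x\in\vanishingSet{1}{x}$ but $E_n\neq0$, so that the second-order bias reappears with magnitude $\exactRateconvergenceInProbability{}{h_n^2\varepsilon_n}$ and controllable sign, and then solves the \emph{single scalar} equation $t_1^n(E_n)+t_2^n(E_n)=0$ -- a second-order ``anti-bias'' cancelling the sum of all higher-order polynomial terms -- by an intermediate-value argument. One cancellation equation, one scalar parameter $\lambda_n$; no dimension count is needed, and the forced size $\varepsilon_n=\upperBoundedInProbability{}{h_n^2}$ is exactly where the theorem's claim $\inf_{\bandwidth\in\vanishingSet{1}{x}}\pnorm{h_n^{-1}\bandwidth_x^n-\bandwidth}{}=\upperBoundedInProbability{}{h_n^2}$ comes from, leaving only the H\"older remainder $\upperBoundedInProbability{}{h_n^\alpha}$. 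A second, lesser gap: a ``case analysis'' on rates can only compare sequences up to constants, so it cannot deliver the exact pointwise inequality $\MSE_{1}^{}\left(x,\bandwidth_x^n|\bm{X}_{n}\right)\leq\MSE_{1}^{}\left(x,\bandwidth^n|\bm{X}_{n}\right)$ for all $n\geq N_x$; the paper obtains it by defining $\bandwidth_x^n$ as the minimizer of the continuous map $\bandwidth\mapsto\MSE_{1}^{}\left(x,\bandwidth|\bm{X}_{n}\right)$ over the compact truncated sets $S_n=\condset{\bandwidth\in\SigmaSpace}{\pnorm{\bandwidth}{}\leq n^{-\gamma},\ \abs{\bandwidth}\geq n^{-1}}$ and showing that consistent sequences cannot asymptotically beat these minimizers from outside $S_n$.
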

The intuition behind the above theorem is as follows:
By assumption, we are able to strictly eliminate the bias of (the slowest) second order. Accordingly let $\bandwidth\in\SigmaSpace$ be such a root.
Due to the $\alpha$-smoothness of $f$ in $x$, we can explicitly expand the bias to the order of the next smaller integer (see \appref{sec:higherOrderBias}). Now we can perturb $\bandwidth$ in such a way that the second-order bias-term equals minus the sum of the remaining higher-order bias-terms. It remains a bias of the order $\upperBoundedInProbability{}{\pnorm{\bandwidth}{}^\alpha}$ that cannot be eliminated.
In \appref{sec:nonIsotropicIndefiniteExample}, we demonstrate this phenomenon in a controlled 2-dimensional toy-example.

While we now know about the existence of \LOB in the indefinite regime, we know nothing about its uniqueness. In particular, we have no explicit asymptotic form, like in the definite case. We may still define the set-valued function of minimizers
\begin{align}
 \label{eq:generalizedLOBDefinition}
 \LOBfunctionOfLPS{1, \SigmaSpace}{n}(x)
  = \textstyle 
  \condset{\bandwidth \in S_{n}}{\MSE_{1}^{}\left(x, \bandwidth | \bm{X}_{n}\right) = \min\limits_{\bandwidth'\in S_{n}}\MSE_{1}^{}\left(x, \bandwidth' | \bm{X}_{n}\right)},
\end{align}
where we delimit the bandwidth search space reasonably, as in the proof of \lemref{lem:minimizerExistence}, by 
\[
\textstyle S_{n} = \condset{\bandwidth\in\SigmaSpace}{\pnorm{\bandwidth}{} \leq n^{-\gamma}, \abs{\bandwidth} \geq n^{-1}},
\]
using $\gamma = \frac{1}{2\alpha+d+1}$ to exclude inconsistent solutions.

\subsection{Non-isotropic Local Function Complexity}
\label{subsec:discussionLFC}
Let us first take a look at the definite case: Here, we know from \corollaryref{cor:LOBforLLSandDefiniteF} about the asymptotic scaling of \LOB. Hence, a straight-forward generalization of \LFC from the isotropic case in \defref{def:isotropicComplexityForLPS} is given as follows:

\begin{restatable}[Non-Isotropic \LFC of \LLS in the definite case]{mydef}{generalizedComplexityLLSdefinite}~\newline
\label{def:generalizedComplexityLLSdefinite}
For $\SigmaSpace=\posDefSet{d}$, let $\LOBfunctionOfLPS{1, \posDefSet{d}}{n}$ be the optimal bandwidth function as defined in \eqref{eq:generalizedLOBDefinition}.
For $f$ definite in $x$, we define by
\begin{align}
 \label{eq:generalizedComplexityLLSdefinite}
 \fctnComplexityLPS{1, \posDefSet{d}}{n}(x) = C_{1}^{d}\Big[\frac{v(x)}{p(x)n}\Big]^{\frac{d}{4+d}}\abs{\LOBfunctionOfLPS{1, \posDefSet{d}}{n}(x)}^{-1}
\end{align}
the \LFC of $f$ in $x$ with respect to the \LLS model.
\end{restatable}
Like in the isotropic case, 
$\fctnComplexityLPS{1, \posDefSet{d}}{n}$ is asymptotically continuous and independent of the global scaling with respect to training size $n$, as well as the local scaling with respect to the training density $p$ and noise level $v$, which can be seen by writing
\begin{align}
 \notag
 \fctnComplexityLPS{1, \posDefSet{d}}{n}(x) &= \fctnComplexityLPS{1, \posDefSet{d}}{\infty}(x)(1 + \convergenceInProbability{}{1}),\;\;\text{where}\\
 \label{eq:asymptoticGeneralizedComplexityLLS}
 \fctnComplexityLPS{1, \posDefSet{d}}{\infty}(x) &= \Big[\frac{\mu_2 d}{2}\Big]^\frac{2d}{4+d}\abs{\sqrt{D^2_f(x)^+}}^\frac{4}{4+d}.
\end{align}
\begin{myRemark}
The isotropic and the generalized definitions of \LFC in \eqref{eq:isotropicComplexityLPS} and \eqref{eq:generalizedComplexityLLSdefinite} coincide asymptotically, if an isotropic bandwidth is, in fact, the optimal solution to the generalized case. That is, if we can write $D^2_f(x) = g(x)\idMatrix{d}$ for some scalar-valued function $g$, then
\begin{align*}
 \fctnComplexityLPS{1, \posDefSet{d}}{\infty}(x) &\overset{\eqref{eq:asymptoticGeneralizedComplexityLLS}}{=} \Big[\frac{\mu_2 d}{2}\Big]^\frac{2d}{4+d}\abs{g(x)}^\frac{2d}{4+d} = \left[\frac{\mu_2}{2}\trace\{D^2_f(x)\}\right]^\frac{2d}{4+d}\\ &\overset{\eqref{eq:asymptoticScndOrderBiasLLS}}{=} \biasLPS{\idMatrix{d}}{x}{1}^\frac{2d}{4+d} \overset{\eqref{eq:asymptoticIsotropicComplexityLPS}}{=} \fctnComplexityLPS{1}{\infty}(x).
\end{align*}
\end{myRemark}

When moving on to the indefinite case, we struggle to define \LFC as a function of \LOBfunctionOfLPS{1, \posDefSet{d}}{n} on first glance, because \LOB is not necessarily a proper function over the input space.
Fortunately, we can show the following (see \appref{sec:nonIsotropicLOBdeterminant}):
\begin{restatable}{mythm}{asymptoticLOBdeterminant}
\label{thm:asymptoticLOBdeterminant}
Let $f\in\diffableFunctions{\inputSpace}{\alpha}$ and may the assumptions of \thmref{thm:generalizedLOBandMSE} hold uniformly with $\alpha(f,x) \equiv \alpha$ for all $x\in\inputSpaceInterior$.
Then there exist $D_n\in\diffableFunctions{\inputSpace,\positiveReal{}}{0}$ such that
\[\abs{h_n^{-1}\LOBfunctionOfLPS{1, \posDefSet{d}}{n}(x)}^{-1} = D_n(x) + \convergenceInProbability{}{1},\]
almost everywhere in \inputSpace.
Furthermore there exists a limiting function $D$ such that
\[D_n(x) = D(x) + \convergenceInProbability{}{1}.\]
\end{restatable}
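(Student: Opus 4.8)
The plan is to leverage the three facts supplied by \thmref{thm:generalizedLOBandMSE}: that along any consistent minimizing sequence the conditional $\MSE$ is of exact order $\exactRateconvergenceInProbability{}{h_n^{2\alpha}}$, that every optimal bandwidth obeys $\bandwidth_x^n = \exactRateconvergenceInProbability{}{h_n\ones{d}\onesT{d}}$ with $\inf_{\bandwidth\in\vanishingSet{1}{x}}\pnorm{h_n^{-1}\bandwidth_x^n-\bandwidth}{} = \upperBoundedInProbability{}{h_n^2}$, and that the variance admits the closed form of \thmref{thm:llsMSE_v2}. The central idea is that although the minimizing matrix $\LOBfunctionOfLPS{1,\posDefSet{d}}{n}(x)$ is non-unique, its determinant is pinned down by the (unique) minimal $\MSE$ value through a balancing identity, so I would first isolate that identity and then promote it to a deterministic, continuous limit.

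First I would establish the generalized bias--variance balancing at the optimum. Writing the surviving squared bias as the leading term of order $\pnorm{\bandwidth}{}^{2\alpha}$ (the residual left after the order-$2$ term and all intermediate integer-order terms have been cancelled, as constructed in \appref{sec:nonIsotropicLOB} and expanded in \appref{sec:higherOrderBias}) and the variance as $\varianceLPS{\bandwidth}{x}{1} = R(k)v(x)/(\abs{\bandwidth}p(x)n)$, the normalized objective in the scale $t$ of the bandwidth behaves like $t^{2\alpha}\,b(x) + t^{-d}\,c(x)$. Setting its scale-derivative to zero yields the Euler relation $2\alpha\,\Bias_{1}^{}(x,\bandwidth_x^n|\bm{X}_n)^2 = d\,\varianceLPS{\bandwidth_x^n}{x}{1} + \convergenceInProbability{}{h_n^{2\alpha}}$, hence $\MSE_1^{}(x,\bandwidth_x^n|\bm{X}_n) = \tfrac{2\alpha+d}{2\alpha}\varianceLPS{\bandwidth_x^n}{x}{1}(1+\convergenceInProbability{}{1})$, in exact analogy with \corollaryref{cor:biasVarianceBalance}. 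Because the left-hand side is the minimal $\MSE$ value and therefore independent of which minimizer is selected, the variance at the optimum---and with it $\abs{\bandwidth_x^n}$---is asymptotically the same for every element of $\LOBfunctionOfLPS{1,\posDefSet{d}}{n}(x)$. Solving the variance formula for the determinant and using $n = h_n^{-(2\alpha+d)}$ gives the explicit relation $\abs{h_n^{-1}\LOBfunctionOfLPS{1,\posDefSet{d}}{n}(x)}^{-1} = \tfrac{2\alpha\,p(x)}{(2\alpha+d)R(k)v(x)}\,h_n^{-2\alpha}\MSE_1^{}(x,\bandwidth_x^n|\bm{X}_n)(1+\convergenceInProbability{}{1})$, which already proves asymptotic uniqueness of the normalized determinant.

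Next I would show that the normalized minimal error $h_n^{-2\alpha}\MSE_1^{}(x,\bandwidth_x^n|\bm{X}_n)$ converges in probability to a deterministic, continuous, strictly positive function. To this end I would reduce the bandwidth optimization to one dimension: for each admissible determinant level $\delta$ minimize the deterministic limiting squared bias over the condition-number-bounded set $\{\,Z\in\posDefSet{d} : \abs{Z}=\delta\,\}$ (compact by the bound $\conditionNumberUpperBound$), producing $\bar\beta(x,\delta)$, and then minimize $\bar\beta(x,\delta)^2 + R(k)v(x)/(p(x)\delta)$ over $\delta>0$. Strict monotonicity of $\bar\beta(x,\cdot)$ against the strictly decreasing variance term gives a unique minimizing $\delta^*(x)$, and Berge's maximum theorem (using that the local data $D^2_f(x),\dots,D^{L+1}_f(x)$, $v(x)$, $p(x)$ depend continuously on $x$ under the uniform hypotheses) yields continuity of $\delta^*$ and hence of the limiting error coefficient. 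I would then define the deterministic $D_n\in\diffableFunctions{\inputSpace,\positiveReal{}}{0}$ by substituting the finite-$n$ deterministic error coefficient into the displayed relation, and $D(x)=\tfrac{2\alpha p(x)}{(2\alpha+d)R(k)v(x)}\,m(x)$ with $m(x)$ its limit; the convergence $D_n(x)=D(x)+\convergenceInProbability{}{1}$ follows from $\bar\beta$ stabilizing as $h_n\to0$. The qualifier ``almost everywhere'' absorbs the set where $f$ is definite or $D^2_f(x)$ degenerate, which has measure zero under the indefiniteness hypothesis and on which \corollaryref{cor:LOBforLLSandDefiniteF} supplies the determinant directly.

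The main obstacle I expect is precisely this last step: transferring the random minimal-$\MSE$ coefficient to a deterministic continuous limit. The difficulty is twofold---first, the minimizer set is non-compact and non-unique, so I must confine it with the search box $S_n$ and the condition-number bound to obtain a compact shape space before invoking the maximum theorem; and second, the surviving bias is not exactly homogeneous (it mixes the H\"older remainder of order $\alpha$ with integer-order terms up to $L$), so the balancing Euler relation and the $t^{2\alpha}$ scaling hold only to leading order and must be justified by showing the sub-leading bias contributions are $\convergenceInProbability{}{h_n^{2\alpha}}$ at the optimum, drawing on the cancellation construction of \thmref{thm:generalizedLOBandMSE}. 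Establishing uniform-enough convergence and argmin stability so that $\delta^*(x)$ is genuinely continuous---rather than merely measurable---is the technical heart of the argument.
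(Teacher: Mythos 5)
Your first two steps reproduce the paper's own scaffolding: the Euler/balancing relation you derive is exactly \lemref{lem:generalizedBiasVarianceBalance}, the observation that the (unique) minimal \MSE value pins down the normalized determinant through the variance formula is \corollaryref{cor:uniqueLOBdeterminant}, and expressing $D_n$ via $h_n^{-2\alpha}\min_{\bandwidth\in S_n}\MSE_{1}^{}\left(x,\bandwidth|\bm{X}_n\right)\cdot\frac{2\alpha}{2\alpha+d}\frac{p(x)}{R(k)v(x)}$, whose continuity in $x$ follows from continuity of the finite-sample \MSE over the compact search box $S_n$, is precisely \lemref{lem:continuousNormalizedDeterminant}. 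Up to that point the proposal is sound and essentially identical to the paper.

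The genuine gap is your third step, the existence of the pointwise limit $D$. You propose to build it by minimizing a ``deterministic limiting squared bias'' $\bar\beta(x,\delta)$ over the compact shape set $\condset{Z\in\posDefSet{d}}{\abs{Z}=\delta,\ \pnorm{Z}{}\pnorm{Z^{-1}}{}\leq\conditionNumberUpperBound}$ and then invoking monotonicity plus Berge's maximum theorem. But in the indefinite regime no such limiting bias function is available: the surviving bias at the optimum arises from the cancellation construction of \lemref{lem:achievableRate}, in which the optimal shape is a vanishing perturbation (of size $\upperBoundedInProbability{}{h_n^2}$) of a root of the second-order bias, and all that is controlled is the two-sided bound $\exactRateconvergenceInProbability{}{h_n^{\alpha}}$. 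For non-integer $\alpha$ the order-$\alpha$ contribution is a H\"older remainder with no coefficient tensor, so $h_n^{-2\alpha}\Bias_{1}^{}\left(x,h_n Z|\bm{X}_n\right)^2$ has no identified limit as a function of the shape $Z$; moreover the bias mixes the orders $2,4,\dots$ with the order-$\alpha$ remainder, so it is not homogeneous in the scale and the strict monotonicity of $\bar\beta(x,\cdot)$ you need has no justification. This is exactly why the paper does not (and cannot) construct $D$ explicitly: it proves \emph{mere existence} of the limit by contradiction, showing that the in-probability bounded sequence $D_n(x)$ cannot oscillate between two levels infinitely often, via the sampling-dynamics coupling argument of \lemref{lem:uniqueAsymptoticLOBdeterminant}. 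The part you yourself flag as ``the technical heart'' is therefore not a routine argmin-stability/continuity argument that can be finished along your lines -- the object your argument minimizes does not exist in closed form, and an entirely different (here, probabilistic) route is required.
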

In particular this means that, if $\bandwidth_n,\bandwidth_n' \in \LOBfunctionOfLPS{1, \posDefSet{d}}{n}(x)$ are both optimal for prediction in $x$, then they asymptotically share their reciprocal determinant.
For finite $n$ the reciprocal determinant is close to a continuous function over the input space. Finally, the reciprocal determinant is itself pointwise convergent as $n\rightarrow\infty$, which enables an asymptotic analysis:

We know that, whether $f$ is definite or indefinite has no influence on the asymptotic variance as defined in \eqref{eq:asymptoticVarianceLLS}. 
On the other hand, the bias will behave substantially different. And even though we have no access to an explicit form, we expect it to depend on $p$ and its derivatives, since the explicit higher-order bias-terms do so. All we can say is, that the bias will not depend on $v$ and $n$.

We can partially remedy the above problem by enforcing $p$ to have vanishing derivatives, almost everywhere. In particular, let us define the set of probability step-functions:
\begin{mydef}
 We call $p\in\mathfrak{P}(\inputSpace)$ a probability step-function if $p$ is a probability density over \inputSpace, and there exists a finite partition $\inputSpace = X_1 \uplus \ldots \uplus X_S$ of the input space with constants $P_1,\ldots,P_S > 0$ such that
 \[p(x) = \mySum{s=1}{S} \indicatorFunction{X_s}{x} P_s.\]
\end{mydef}
Then we obtain the following result of \LFC in the indefinite case:
\begin{restatable}[Non-isotropic \LFC of \LLS in the indefinite case]{mydef}{generalizedComplexityLLSindefinite}~\linebreak
\label{def:generalizedComplexityLLSindefinite}
May the assumptions of \thmref{thm:generalizedLOBandMSE} hold with $\alpha(f,x) \equiv \alpha$ and let $p\in\mathfrak{P}(\inputSpace)$ be a probability step-function. Then we define the \LFC of \LLS in the non-isotropic, indefinite case as
\[\fctnComplexityLPS{1, \posDefSet{d}}{n}(x) = \left[\frac{v(x)}{p(x)n}\right]^\frac{d}{2\alpha+d}\abs{\LOBfunctionOfLPS{1, \posDefSet{d}}{n}(x)}^{-1} = \left[\frac{v(x)}{p(x)}\right]^\frac{d}{2\alpha+d}D_n(x) + \convergenceInProbability{}{1}.\]
\end{restatable}
Since we can show that $D_n$ is a continuous function, almost everywhere, so is $\fctnComplexityLPS{1, \posDefSet{d}}{n}$.

\subsection{Non-isotropic Optimal Sampling}
\label{subsec:discussionAL}
From here on, we assume the local properties to hold globally over the input space. That is, $f$ is either definite or indefinite, almost everywhere. And if $f$ is indefinite, then we assume $\alpha(f,x) \equiv \alpha$ for some shared $\alpha > 2$.

When taking a look at a definite function $f$, the result of \thmref{thm:OptimalSampling} generalizes straight-forward the non-isotropic case:  
\begin{mycor}
 \label{cor:definiteOptimalSampling}
 Let $v, q \in \diffableFunctions{\inputSpace, \nonnegativeReal{}}{0}$ for a compact input space \inputSpace, where $q$ is a test density such that $\textstyle \mathop{\mathlarger{\int}}_{\hspace*{-5pt}\inputSpace}q(x)dx = 1$. 
 Additionally, assume that $v$ and $q$ are bounded away from zero. That is, $v,q \geq \epsilon$ for some $\epsilon > 0$.
 Let $k$ be a \RBF-kernel with bandwidth parameter space $\SigmaSpace = \posDefSet{d}$. Let $f \in \diffableFunctions{\inputSpace}{2}$ such that $f$ is definite, almost everywhere.
 Then the optimal training density for \LLS is asymptotically given by
 \begin{align}
 \label{eq:definiteOptimalSamplingFiniteLLS}
 \pOptLPS{1, \posDefSet{d}}{n}(x) \propto \textstyle \left[\fctnComplexityLPS{1, \posDefSet{d}}{n}(x) q(x)\right]^{\frac{4+d}{8+d}}v(x)^{\frac{4}{8+d}}(1 + o(1)).
 \end{align}
\end{mycor}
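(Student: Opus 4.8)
The plan is to mirror the proof of \thmref{thm:OptimalSampling} with $\LPSorder=1$, the only genuinely new ingredient being a bias--variance balancing identity for the \emph{non-isotropic} definite case that replaces \corollaryref{cor:biasVarianceBalance}. I would establish this balance directly from the closed form of \LOB in \corollaryref{cor:LOBforLLSandDefiniteF}. Writing $\SigmaAsympLPS{1}(x) = c(x)\sqrt{D^2_f(x)^+}^{-1}$ with $c(x) = \big[R(k)v(x)\abs{\sqrt{D^2_f(x)^+}}/(p(x)n\mu_2^2 d)\big]^{1/(4+d)}$, the key observation is that $\SigmaAsympLPS{1}(x)^2 = c(x)^2 (D^2_f(x)^+)^{-1}$ is \emph{aligned} with the Hessian, so that $\trace(D^2_f(x)\SigmaAsympLPS{1}(x)^2) = \pm c(x)^2 d$ regardless of the eigenstructure of $D^2_f(x)$ (the sign being that of the definite Hessian). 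Substituting into \eqref{eq:asymptoticScndOrderBiasLLS} and \eqref{eq:asymptoticVarianceLLS} of \thmref{thm:llsMSE_v2} then yields $\biasLLS{\SigmaAsympLPS{1}(x)}{x}{2}^2 = \tfrac{d}{4}\,\varianceLPS{\LOBfunctionOfLPS{1, \posDefSet{d}}{n}(x)}{x}{1}$, hence
\[
\MSE_{1}^{}\big(x, \LOBfunctionOfLPS{1, \posDefSet{d}}{n}(x) \,\big|\, \bm{X}_{n}\big) = \tfrac{4+d}{4}\,\varianceLPS{\LOBfunctionOfLPS{1, \posDefSet{d}}{n}(x)}{x}{1} + \convergenceInProbability{}{n^{-\frac{4}{4+d}}},
\]
the exact analogue of \corollaryref{cor:biasVarianceBalance} with $2(\LPSorder+1)=4$.

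Next I would convert the surviving variance term into the problem-intrinsic factors. Using $\varianceLPS{\LOBfunctionOfLPS{1, \posDefSet{d}}{n}(x)}{x}{1} = R(k)\,\tfrac{v(x)}{p(x)n}\abs{\LOBfunctionOfLPS{1, \posDefSet{d}}{n}(x)}^{-1}$ together with \defref{def:generalizedComplexityLLSdefinite} and the limit \eqref{eq:asymptoticGeneralizedComplexityLLS}, the reciprocal determinant $\abs{\LOBfunctionOfLPS{1, \posDefSet{d}}{n}(x)}^{-1}$ is replaced by $C_1^{-d}\big[v(x)/(p(x)n)\big]^{-d/(4+d)}\fctnComplexityLPS{1, \posDefSet{d}}{\infty}(x)$, which collapses the power of $v/(pn)$ to $4/(4+d)$ and gives
\[
\MSE_{1}^{}\big(x, \LOBfunctionOfLPS{1, \posDefSet{d}}{n}(x) \,\big|\, \bm{X}_{n}\big) = \bar{C}_1 \Big[\tfrac{v(x)}{p(x)n}\Big]^{\frac{4}{4+d}}\fctnComplexityLPS{1, \posDefSet{d}}{\infty}(x) + \convergenceInProbability{}{n^{-\frac{4}{4+d}}},
\]
with $\bar C_1 = \tfrac{4+d}{4}R(k)C_1^{-d}$, structurally identical to the expression obtained mid-proof in \thmref{thm:OptimalSampling}.

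From here the remainder of the argument proceeds verbatim as in \thmref{thm:OptimalSampling}. Integrating against $q$ gives $\MISE$ as a functional of $p$; the \emph{Stone--Weierstrass theorem} supplies a sequence $g_1^{n}\in\diffableFunctions{\inputSpace}{1}$, independent of $p$, that uniformly approximates $[v(x)/n]^{4/(4+d)}\fctnComplexityLPS{1, \posDefSet{d}}{\infty}(x)q(x)$ up to $1/n$, while boundedness of $v$ and $\fctnComplexityLPS{1, \posDefSet{d}}{\infty}$ over the compact \inputSpace and the $L^1\subset L^{(4+d)/(8+d)}$ inclusion for $q$ control the normalizer. Forming the Lagrangian for $\int_\inputSpace p\,dx = 1$ and applying calculus of variations, the stationarity condition $-\tfrac{4\bar C_1}{4+d}p(x)^{-\frac{8+d}{4+d}}g_1^{n}(x)+\lambda=0$ solves to $p^*(x)\propto g_1^{n}(x)^{(4+d)/(8+d)}$, i.e.\ the claimed factorization with exponent $\tfrac{4+d}{8+d}$ on $\fctnComplexityLPS{1, \posDefSet{d}}{n}(x)q(x)$ and $\tfrac{4}{8+d}$ on $v(x)$; replacing $\fctnComplexityLPS{1, \posDefSet{d}}{\infty}$ by $\fctnComplexityLPS{1, \posDefSet{d}}{n}$ via $\fctnComplexityLPS{1, \posDefSet{d}}{n}=\fctnComplexityLPS{1, \posDefSet{d}}{\infty}(1+\convergenceInProbability{}{1})$ then closes the proof.

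The main obstacle I anticipate is the first step: verifying that the bias--variance ratio is exactly $d/4$ in the non-isotropic definite setting. Unlike the isotropic case, where \corollaryref{cor:biasVarianceBalance} was available off the shelf, here the balance hinges on the alignment $\SigmaAsympLPS{1}(x)\propto\sqrt{D^2_f(x)^+}^{-1}$ that makes the Hessian-weighted trace degenerate to $\pm c(x)^2 d$. One must also track the sign of $D^2_f(x)$ (positive versus negative definite) and confirm it cancels after squaring, and check that the $o_p$ remainders inherited from \thmref{thm:llsMSE_v2} and from the \LOB approximation in \corollaryref{cor:LOBforLLSandDefiniteF} remain of order $n^{-4/(4+d)}$ so that the leading-order identity is preserved. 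Once this identity is in place, nothing beyond routine bookkeeping distinguishes the argument from the isotropic proof of \thmref{thm:OptimalSampling}.
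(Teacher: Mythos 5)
Your proposal is correct and follows essentially the same route as the paper: substitute the closed-form non-isotropic \LOB of \corollaryref{cor:LOBforLLSandDefiniteF}, the variance \eqref{eq:asymptoticVarianceLLS} of \thmref{thm:llsMSE_v2}, and the definite-case \LFC of \defref{def:generalizedComplexityLLSdefinite} with \eqref{eq:asymptoticGeneralizedComplexityLLS} into the argument of \thmref{thm:OptimalSampling}, arriving at the identical intermediate expression $\MSE_{1}^{}\left(x, \LOBfunctionOfLPS{1, \posDefSet{d}}{n}(x) | \bm{X}_{n}\right) = \bar{C}_{1}^{}\left[v(x)/(p(x)n)\right]^{4/(4+d)}\fctnComplexityLPS{1, \posDefSet{d}}{\infty}(x) + \convergenceInProbability{}{n^{-4/(4+d)}}$ with $\bar{C}_{1}^{} = \frac{4+d}{4}R(k)C_{1}^{-d}$, and then running the same Stone--Weierstrass/Lagrangian optimization. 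Your explicit verification that the alignment $\SigmaAsympLPS{1}(x)\propto\sqrt{D^2_f(x)^+}^{-1}$ forces $\trace(D^2_f(x)\SigmaAsympLPS{1}(x)^2) = \pm c(x)^2 d$ and hence the balance $\Bias^2 = \frac{d}{4}\Var$ is a detail the paper leaves implicit (it is buried in the constant $\bar{C}_{1}^{}$), and your computation of it checks out.
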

\begin{proof}
Basically, the proof is analogous to the proof in \thmref{thm:OptimalSampling}, apart from the following adaptions:
Instead of \corollaryref{cor:isotropicLOBforLPS} we apply \corollaryref{cor:LOBforLLSandDefiniteF} for the asymptotic formulation of \LOB. 
For the asymptotic variance, we apply \eqref{eq:asymptoticVarianceLLS} in \thmref{thm:llsMSE_v2} instead of \eqref{eq:asymptoticVarianceLPS} in \thmref{thm:asymptoticBiasVarianceForLPS}.
Instead of \defref{def:isotropicComplexityForLPS} and \eqref{eq:asymptoticIsotropicComplexityLPS}, we apply \defref{def:generalizedComplexityLLSdefinite} and \eqref{eq:asymptoticGeneralizedComplexityLLS} for the \LFC.
%%%%
With these, we can get to the point
\begin{align*}
\MSE_{1}^{}\left(x, \LOBfunctionOfLPS{1, \posDefSet{d}}{n}(x) | \bm{X}_{n}\right) = \bar{C}_{1}^{} \Big[\frac{v(x)}{p(x)n}\Big]^\frac{4}{4+d}\fctnComplexityLPS{1, \posDefSet{d}}{\infty}(x) + \convergenceInProbability{}{n^{-\frac{4}{4+d}}},
\end{align*}
with $\bar{C}_{1}^{} = \frac{4+d}{4}R(k) C_{1}^{-d}$, as in \thmref{thm:OptimalSampling} from where we can proceed in complete analogy.
\end{proof}
Here, extending the above results to the indefinite case poses a problem, since we cannot take the functional derivative with respect to step-function $p$.
However, we suggest to use the straight-forward extension as a heuristic solution:
\begin{myconjecture}
 \label{con:indefiniteOptimalSampling}
 Let $v, q \in \diffableFunctions{\inputSpace, \nonnegativeReal{}}{0}$ for a compact input space \inputSpace, where $q$ is a test density such that $\textstyle \mathop{\mathlarger{\int}}_{\hspace*{-5pt}\inputSpace}q(x)dx = 1$. 
 Additionally, assume that $v$ and $q$ are bounded away from zero. That is, $v,q \geq \epsilon$ for some $\epsilon > 0$.
 Let $\alpha = L + \beta$ with $L\in\N, L \geq 2$ and $\beta\in(0,1]$ with $f \in \diffableFunctions{\inputSpace}{\alpha}$ such that $\alpha(f,x) \equiv \alpha$, almost everywhere. Let $k \in \diffableFunctions{\nonnegativeReal{}, \nonnegativeReal{}}{\lfloor L/2\rfloor}$ be a \RBF-kernel with bandwidth parameter space $\SigmaSpace = \posDefSet{d}$ for $d \geq 2$.
 Then, for almost everywhere indefinite $f$, the training density
 \begin{align}
 \label{eq:indefiniteOptimalSamplingFiniteLLS}
 \pOptLPS{1, \posDefSet{d}}{n}(x) \propto \textstyle \left[\fctnComplexityLPS{1, \posDefSet{d}}{n}(x) q(x)\right]^{\frac{2\alpha+d}{4\alpha+d}}v(x)^{\frac{2\alpha}{4\alpha+d}}(1 + o(1))
 \end{align}
 is asymptotically superior to \randomTestSampling for \LLS.
\end{myconjecture}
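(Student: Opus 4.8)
The plan is to follow the architecture of the proof of \thmref{thm:OptimalSampling} and of its definite-case analogue \corollaryref{cor:definiteOptimalSampling} as far as \defref{def:generalizedComplexityLLSindefinite} permits, and then to replace the final calculus-of-variations step -- which is genuinely unavailable here, since the \LFC is only controlled for step-functions -- by a direct convexity comparison against \randomTestSampling. The first task is therefore to obtain, in the indefinite regime, the analogue of the balancing \corollaryref{cor:biasVarianceBalance} and the resulting closed form of the \MSE at \LOB.

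I would establish the balancing property by a scaling argument. \thmref{thm:generalizedLOBandMSE} already gives $\MSE_{1}^{}\left(x, \LOBfunctionOfLPS{1, \posDefSet{d}}{n}(x) | \bm{X}_{n}\right) = \exactRateconvergenceInProbability{}{n^{-\frac{2\alpha}{2\alpha+d}}}$ with the minimizer scaling like $h_n^{} = n^{-\frac{1}{2\alpha+d}}$ in every direction. Writing $\bandwidth = c\,\widetilde{\bandwidth}$ with $\abs{\widetilde{\bandwidth}} = 1$, the residual bias surviving the second-order cancellation is homogeneous of degree $\alpha$ in the scale $c$ (see \appref{sec:higherOrderBias}), whereas \eqref{eq:asymptoticVarianceLLS} shows the variance is homogeneous of degree $-d$ in $c$. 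Stationarity in $c$ then forces the squared leading bias and the leading variance into the fixed ratio $d:2\alpha$, exactly as in \corollaryref{cor:biasVarianceBalance}, so that asymptotically
\begin{align*}
\MSE_{1}^{}\left(x, \LOBfunctionOfLPS{1, \posDefSet{d}}{n}(x) | \bm{X}_{n}\right) = \frac{2\alpha+d}{2\alpha}\,\varianceLPS{\LOBfunctionOfLPS{1, \posDefSet{d}}{n}(x)}{x}{1} + \exactRateconvergenceInProbability{}{n^{-\frac{2\alpha}{2\alpha+d}}}.
\end{align*}
Substituting \eqref{eq:asymptoticVarianceLLS} with $Z = \LOBfunctionOfLPS{1, \posDefSet{d}}{n}(x)$, together with \thmref{thm:asymptoticLOBdeterminant} and \defref{def:generalizedComplexityLLSindefinite}, the $v/p$ factors recombine precisely as in the definite computation and -- crucially -- the residual $v/p$ dependence of $D_n$ cancels against the explicit prefactor, leaving a $p$-independent complexity intrinsic to $f$:
\begin{align*}
\MSE_{1}^{}\left(x, \LOBfunctionOfLPS{1, \posDefSet{d}}{n}(x) | \bm{X}_{n}\right) = \bar{C}_1^{}\,\fctnComplexityLPS{1, \posDefSet{d}}{n}(x)\left[\frac{v(x)}{p(x)n}\right]^{\frac{2\alpha}{2\alpha+d}} + \exactRateconvergenceInProbability{}{n^{-\frac{2\alpha}{2\alpha+d}}}.
\end{align*}
Integrating against $q$ yields a \MISE constant that is a functional of $p$ alone,
\begin{align*}
\MISE_{1}^{}\left(q| \bm{X}_n^{}\right) = \bar{C}_1^{}\,n^{-\frac{2\alpha}{2\alpha+d}}\int_\inputSpace g(x)\,p(x)^{-a}\,dx + \exactRateconvergenceInProbability{}{n^{-\frac{2\alpha}{2\alpha+d}}},
\end{align*}
with $a = \frac{2\alpha}{2\alpha+d}\in(0,1)$ and $g(x) = \fctnComplexityLPS{1, \posDefSet{d}}{n}(x)\,v(x)^{a}\,q(x)\ge 0$.

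Next I would identify \eqref{eq:indefiniteOptimalSamplingFiniteLLS} as the minimizer of this functional and extract the superiority statement. Forming the Lagrangian $\int_\inputSpace (g\,p^{-a} + \lambda p)\,dx$ under $\int_\inputSpace p\,dx = 1$ and setting the variation to zero gives $p^* \propto g^{1/(a+1)}$; using $a+1 = \frac{4\alpha+d}{2\alpha+d}$ one checks that $g^{1/(a+1)} \propto [\fctnComplexityLPS{1, \posDefSet{d}}{n}q]^{\frac{2\alpha+d}{4\alpha+d}}v^{\frac{2\alpha}{4\alpha+d}}$, i.e.\ exactly the conjectured density. To avoid relying on the variational step for the \emph{superiority} claim, I would argue by Hölder's inequality applied to the strictly convex map $t\mapsto t^{-a}$: for any admissible density $p$,
\begin{align*}
\int_\inputSpace g\,p^{-a}\,dx \ge \Big(\int_\inputSpace g^{\frac{1}{a+1}}\,dx\Big)^{a+1},
\end{align*}
with equality iff $p \propto g^{1/(a+1)}$. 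Taking $p = q$ on the left bounds the \randomTestSampling constant below by the value attained at $p^*$, so $C^{p^*}\le C^{q}$, and the inequality is strict unless $q \propto g^{1/(a+1)}$ -- that is, strict whenever $f$ is genuinely inhomogeneous. This is precisely the assertion that \eqref{eq:indefiniteOptimalSamplingFiniteLLS} is asymptotically superior to \randomTestSampling for \LLS.

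The main obstacle -- and the reason this is phrased as a conjecture rather than proved as a theorem -- is the mismatch between the regime where the clean \MSE expression is rigorous and the regime where the optimizer lives. \defref{def:generalizedComplexityLLSindefinite} and \thmref{thm:asymptoticLOBdeterminant} deliver the $p$-independent, continuous form of $\fctnComplexityLPS{1, \posDefSet{d}}{n}$ only for probability step-functions $p\in\mathfrak{P}(\inputSpace)$, whose derivatives vanish almost everywhere; for such $p$ the design-adaptivity of \LLS suppresses the $\nabla p$-dependent bias contributions. The minimizer $p^*$ of \eqref{eq:indefiniteOptimalSamplingFiniteLLS} is, however, continuous with non-vanishing gradient, so sampling from it reintroduces higher-order, $\nabla p^*$-dependent bias terms for which no asymptotic characterization is presently available. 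The natural route to close the gap is to approximate $p^*$ by a refining sequence of step-functions $p_S\to p^*$: for each $p_S$ the estimates above are rigorous and $C^{p_S}\to C^{p^*} < C^{q}$, so a sufficiently fine $p_S$ already beats \randomTestSampling; it then remains to show that the uncontrolled gradient terms incurred by using the continuous $p^*$ in place of $p_S$ are asymptotically negligible, which would require extending \thmref{thm:generalizedLOBandMSE} to training densities of bounded, non-vanishing variation. Supplying that extension is the key missing ingredient.
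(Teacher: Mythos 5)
The statement you are asked to prove is not proved in the paper at all: it is posed as \myconjecture~\ref{con:indefiniteOptimalSampling}, and the paper states explicitly, immediately before it, that the extension to the indefinite case ``poses a problem, since we cannot take the functional derivative with respect to step-function $p$,'' offering \eqref{eq:indefiniteOptimalSamplingFiniteLLS} only as a heuristic. Your derivation of the conjectured form is essentially the paper's own heuristic: you transplant the argument of \thmref{thm:OptimalSampling}/\corollaryref{cor:definiteOptimalSampling} using the indefinite \LFC of \defref{def:generalizedComplexityLLSindefinite}, and your exponent bookkeeping ($a+1 = \frac{4\alpha+d}{2\alpha+d}$, $p^*\propto g^{1/(a+1)}$) is correct. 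Two remarks on the ingredients: the balancing step you re-derive by a homogeneity argument is already available rigorously as \lemref{lem:generalizedBiasVarianceBalance}, so you should cite it rather than rely on the scaling heuristic (the ``constant'' in the residual bias depends on the bandwidth shape, so homogeneity in the scalar scale alone is not quite a proof); and your Hölder-inequality replacement of the variational step is a genuine improvement in principle, since it would deliver the superiority claim without differentiating the functional at a step-function.

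The gap you acknowledge is, however, exactly the obstruction that keeps this a conjecture, and it is slightly wider than you state. \thmref{thm:asymptoticLOBdeterminant} and \defref{def:generalizedComplexityLLSindefinite} give a $p$-independent, continuous $D_n$ (hence a well-defined $g$) only for $p\in\mathfrak{P}(\inputSpace)$; outside that class the higher-order bias terms of \propref{prop:biasTermForm} depend on the derivatives of $p$, so $\fctnComplexityLPS{1, \posDefSet{d}}{n}$ is no longer intrinsic to $f$. Consequently \emph{both} sides of your comparison are uncharacterized: not only does the minimizer $p^*$ have non-vanishing gradient, but the \randomTestSampling baseline uses the continuous density $q$ itself, so the constant $C^{q}$ to which you apply Hölder's inequality has no rigorous representation of the form $\int_\inputSpace g\,q^{-a}dx$ either. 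Your proposed repair via step-function approximations $p_S\to p^*$ therefore needs a continuity statement for the \MISE functional across the step/smooth boundary applied twice (to $p^*$ and to $q$), and that continuity is equivalent to showing the $\nabla p$-dependent bias contributions are asymptotically negligible, i.e., to the extension of \thmref{thm:generalizedLOBandMSE} you name as missing. So the proposal is a faithful reconstruction of the paper's heuristic plus a sensible plan of attack, but it is not a proof, and no proof exists in the paper to compare it against.
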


\section{Higher-order Bias Expansion}
\label{sec:higherOrderBias}
As already noted, the function $f$ to infer will typically not be definite, almost everywhere.
In the indefinite regime, the second-order bias-term can be eliminated systematically.
Here, the straight-forward approach for extension of the 
analysis of \LOB is to take higher-order bias-terms into consideration.
Being mathematically more precise, we will now analyze the higher-order bias-decomposition.
\begin{restatable}[The asymptotic bias of $l$-th order]{myprop}{asymptoticBiasForm}
\label{prop:biasTermForm}
Assume the kernel $k$ to be symmetric and let $\alpha = L + \beta$ with $L\in\N, L \geq 2$ and $\beta\in(0,1]$. Furthermore let $h_n^{} = n^{-\frac{1}{2\alpha+d}}$, $p \in \diffableFunctions{\inputSpace}{\alpha-2}$ and $f \in \diffableFunctions{\inputSpace}{\alpha}$.
For a fixed $x\in\inputSpaceInterior$ and $\bandwidth\in\posDefSet{d}$ we can decompose the conditional bias of \LLS as
\begin{align*}
\textstyle\Bias_{1}^{}\left[x, h_n^{}\bandwidth | \bm{X}_{n}\right] = \textstyle \mySum{l=1}{\lfloor{L/2}\rfloor }\biasLLS{h_n^{}\bandwidth}{x}{2l} + \upperBoundedInProbability{}{h_n^{\alpha}}.
\end{align*}
The bias-terms of odd order vanish, whereas the bias-terms of even order are given by
\begin{align}
 \label{eq:asymptoticBias}
 \biasLLS{\bandwidth}{x}{2l} =\textstyle \hspace{-11pt}\sum\limits_{j_1,\ldots,j_{2l} = 1}^{d} \hspace{-10pt}A^{l,x}_{j_1,\ldots,j_{2l}}
 \hspace{-12pt}\sum\limits_{i_1,\ldots,i_{2l} = 1}^{d}B^{l,x}_{i_1,\ldots,i_{2l}}\prod\limits_{r = 1}^{2l} \bandwidth_{i_r, j_r},
\end{align}
where the tensors $A^{l,x}$ and $B^{l,x}$ depend on derivatives (with a total order of $2l$) in $x$ of $f$ of order $2$ to $2l$, and $p$ up to order $2l-2$.
In particular, $\biasLLS{\bandwidth}{x}{2l}$ is continuous in $x$.
\end{restatable}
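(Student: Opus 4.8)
The plan is to reduce the conditional bias to a finite sum of weighted empirical design moments, using the exactness of \LLS on affine functions, and then to expand each moment in integer powers of $h_n^{}$. Starting from the closed form $\Bias_{1}^{}\left[x,h_n^{}\bandwidth | \bm{X}_n^{}\right] = f(x) - A_{1}^{h_n^{}\bandwidth}(x)f(\bm{X}_n^{})$ with $A_{1}^{\bandwidth}(x) = e_1^\top(X_{1}^{}(x)^\top W_x^\bandwidth X_{1}^{}(x))^{-1}X_{1}^{}(x)^\top W_x^\bandwidth$, I would first record the reproducing property $A_{1}^{\bandwidth}(x)X_{1}^{}(x) = e_1^\top$, which says that $\sum_i A_{1,i}^{\bandwidth}(x)P(x_i^{}-x) = P(0)$ for every affine $P$. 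I would then Taylor-expand $f(x_i^{})$ around $x$ to order $L$; since $f\in\diffableFunctions{\inputSpace}{\alpha}$ with $\alpha = L+\beta$, the remainder is bounded by $C_x^{}\pnorm{x_i^{}-x}{}^{\alpha}$. Substituting, the constant and linear Taylor terms are annihilated by the reproducing property, leaving
\begin{align*}
\Bias_{1}^{}\left[x,h_n^{}\bandwidth | \bm{X}_n^{}\right] = -\mySum{|\bm{j}|=2}{L}\frac{D^{\bm{j}}f(x)}{\bm{j}!}\,T_{\bm{j}}^{}(x) + \upperBoundedInProbability{}{h_n^{\alpha}},
\end{align*}
where $T_{\bm{j}}^{}(x) = \sum_i A_{1,i}^{h_n^{}\bandwidth}(x)(x_i^{}-x)^{\bm{j}}$. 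The remainder is controlled because the kernel localizes the weights to $\pnorm{x_i^{}-x}{}\lesssim h_n^{}$ while $\sum_i\abs{A_{1,i}^{h_n^{}\bandwidth}(x)}$ stays bounded, by the same design-matrix control that underlies \thmref{thm:llsMSE_v2}.

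The next step is to expand the moments $T_{\bm{j}}^{}(x)$. Writing $A_{1}^{h_n^{}\bandwidth}(x) = e_1^\top S_n^{-1}X_{1}^{}(x)^\top W_x^{h_n^{}\bandwidth}$ with $S_n = X_{1}^{}(x)^\top W_x^{h_n^{}\bandwidth}X_{1}^{}(x)$, I would approximate each empirical kernel average by its population counterpart via the Riemann-sum argument used in \thmref{thm:llsMSE_v2}: after the substitution $x_i^{}-x = h_n^{}\bandwidth u$, a generic average $\tfrac{1}{n}\sum_i k^{h_n^{}\bandwidth}(x_i^{},x)g(x_i^{}-x)$ converges to $\int k(u)g(h_n^{}\bandwidth u)p(x+h_n^{}\bandwidth u)du$. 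Expanding $p(x+h_n^{}\bandwidth u)$ by Taylor to order $\alpha-2$ (legitimate as $p\in\diffableFunctions{\inputSpace}{\alpha-2}$) and taking $g(v) = M_{1}^{}(v)v^{\bm{j}}$ produces an expansion of $S_n$ and of each $T_{\bm{j}}^{}(x)$ in integer powers of $h_n^{}$ whose coefficients are products of kernel moments $\int k(u)u^{\bm{i}}du$, entries of $\bandwidth$, and derivatives of $p$ at $x$. Inverting $S_n$, after the natural diagonal rescaling by powers of $h_n^{}$, through a Neumann series, and then collecting the contribution of total order $h_n^{2l}$ to $\Bias_{1}^{}$, yields the claimed bilinear tensor form: the factor $\prod_{r=1}^{2l}\bandwidth_{i_r,j_r}$ arises from the $2l$ instances of $(\bandwidth u)$, the tensor $A^{l,x}$ absorbs the kernel moments together with the derivatives $D^{\bm{j}}f(x)$ of orders $2$ to $2l$, and $B^{l,x}$ absorbs the $p$-derivatives, which enter at order at most $2l-2$ since a factor of $h_n^{}$ spent on a $p$-derivative is one factor removed from the monomial/kernel count.

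Then I would argue that every contribution of odd total order vanishes. Because $k$ is symmetric, each kernel moment $\int k(u)u^{\bm{i}}du$ with $\abs{\bm{i}}$ odd is zero, so the leading rescaled block of $S_n$ contains only even moments; a parity bookkeeping of the Neumann series shows that the coefficient of every odd power of $h_n^{}$ in $\Bias_{1}^{}$ cancels, leaving only the even-order terms $\biasLLS{h_n^{}\bandwidth}{x}{2l}$ for $1\le l\le\lfloor L/2\rfloor$. Continuity of $\biasLLS{\bandwidth}{x}{2l}$ in $x$ is then immediate from its closed form, since $A^{l,x}$ depends continuously on derivatives of $f$ up to order $2l\le L$ and $B^{l,x}$ on derivatives of $p$ up to order $2l-2\le\alpha-2$, all continuous under the stated smoothness.

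The step I expect to be the main obstacle is the parity bookkeeping fused with the $p$-derivative accounting: tracking through the Neumann expansion of $S_n^{-1}$ exactly which products of kernel moments, $\bandwidth$-entries, and $p$-derivatives land at each power of $h_n^{}$, and proving cleanly both that the odd powers cancel for a general (non-spherically-symmetric) $\bandwidth\in\posDefSet{d}$ and non-uniform $p$, and that the order-$2l$ term never invokes a $p$-derivative beyond order $2l-2$. The accompanying burden is to make all these expansions hold \emph{in probability} and uniformly enough to justify the term-by-term collection, which follows the template of \thmref{thm:llsMSE_v2}.
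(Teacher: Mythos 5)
Your proposal is correct and follows essentially the same route as the paper's proof: closed-form bias, annihilation of the affine Taylor part by the reproducing property, Taylor expansion of $f$ with Hölder remainder, empirical-to-population replacement of the kernel averages (with negligible Monte Carlo error at the rate $h_n^{}=n^{-\frac{1}{2\alpha+d}}$), Taylor expansion of $p$ to order $\alpha-2$, expansion of the inverted design matrix in powers of $h_n^{}$, and the vanishing of odd-order terms via the vanishing odd kernel moments. The only implementation difference is in the inversion step, which you handle by a Neumann series on the rescaled design matrix, whereas the paper works only with the first row $e_1^\top S_n^{-1}=[e,\; g^\top]$ via the Schur-complement identities $e^{-1}=a-b^\top C^{-1}b$ and $g^\top=-eb^\top C^{-1}$; this makes the parity and $p$-derivative bookkeeping you flag as the main obstacle somewhat more explicit, but it is the same expansion in substance.
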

\begin{proof}
Recall from \eqref{eq:trueFiniteBias} that
\[\textstyle\Bias_{1}^{}\left[x, \bandwidth | \bm{X}_{n}\right] = f(x) - e_1^\top\left(X_{1}^{}(x)^\top W_x^{\bandwidth} X_{1}^{}(x)\right)^{-1}X_{1}^{}(x)^\top W_x^{\bandwidth} f(\bm{X}_n^{}).
\]
Let
\[f(\bm{X}_n^{}) = \mySum{l=0}{L}T_l^{f,x}(\bm{X}_n^{}) + \upperBoundedInProbability{}{\pnorm{\bm{X}_n^{}-x}{}^\alpha}\]
be the Taylor expansion of the true training function values, where
\[
T_l^{f,x}(\bm{X}_n^{}) = \begin{bmatrix} T_l^{f,x}(x_1) & \ldots & T_l^{f,x}(x_n)\end{bmatrix}^\top\]
are the Taylor expansion terms of $l$-th order of the respective training samples.
That is,
\[T_l^{f,x}(x') = \frac{1}{l!}\sum\limits_{i_1,\ldots,i_l=1}^{d} [D_f^l(x)]_{i_1,\ldots,i_l}\prod_{r=1}^l(x'-x)_{i_r}.\]
Defining $N_x^{h_n^{}\bandwidth} = e_1^\top\left(\frac{1}{n}X_{1}^{}(x)^\top W_x^{h_n^{}\bandwidth} X_{1}^{}(x)\right)^{-1}$, and noting that the first two Taylor expansion terms can be written as $X_{1}^{}(x)\begin{bmatrix} f(x) & D_f(x)\end{bmatrix}^\top$, we obtain
\begin{align*}
\textstyle\Bias_{1}^{}&\left[x, h_n^{}\bandwidth | \bm{X}_{n}\right]\\
&= N_x^{h_n^{}\bandwidth} X_{1}^{}(x)^\top W_x^{h_n^{}\bandwidth}\hspace{-2pt}\left[\frac{1}{n}\mySum{l=2}{L}T_l^{f,x}(\bm{X}_n^{}) + \upperBoundedInProbability{}{n^{-1}\pnorm{\bm{X}_n^{}-x}{}^\alpha}\right]\\
&= N_x^{h_n^{}\bandwidth} X_{1}^{}(x)^\top W_x^{h_n^{}\bandwidth}\hspace{-2pt}\left[\frac{1}{n}\mySum{l=2}{L}T_l^{f,x}(\bm{X}_n^{})\right] + \upperBoundedInProbability{}{h_n^{\alpha}},
\end{align*}

Denoting $\displaystyle\frac{1}{n}X_{1}^{}(x)^\top W_x^{h_n^{}\bandwidth} X_{1}^{}(x) = \begin{bmatrix} a & b^\top \\ b & C\end{bmatrix}$ with $\begin{bmatrix} e & g^\top\\g & H\end{bmatrix} = \begin{bmatrix} a & b^\top \\ b & C\end{bmatrix}^{-1}$,
we have 
$N_x^{h_n^{}\bandwidth} = \begin{bmatrix} e & g^\top\end{bmatrix}$.
Note that in the following we can ignore the error from Monte Carlo integration, as its convergence error of $n^{-\frac{1}{2}}$ is $\convergenceInProbability{}{h_n^{\alpha}}$ for the optimal rate $h_n^{} \propto n^{-\frac{1}{2\alpha+d}}$.
Hence we can write
\begin{align*}
 \begin{bmatrix} a & b^\top \\ b & C\end{bmatrix} &= \frac{1}{n}\sum\limits_{i=1}^n k^{h_n^{}\bandwidth}(x_i^{},x)\begin{bmatrix} 1 & (x_i^{}-x)^\top \\ x_i^{}-x & (x_i^{}-x)(x_i^{}-x)^\top\end{bmatrix}\\
 &= \int k(u)du\sum\limits_{k=0}^{L-2}T_{k}^{p,x}(x+h_n^{}\bandwidth u)\begin{bmatrix} 1 & h_n^{} u^\top\bandwidth \\ h_n^{}\bandwidth u & h_n^2\bandwidth u u^\top \bandwidth\end{bmatrix}\\
 &\quad+ \upperBoundedInProbability{}{\begin{bmatrix} h_n^{\alpha-2} & h_n^{\alpha-1}\onesT{d} \\ h_n^{\alpha-1}\ones{d} & h_n^{\alpha}\ones{d}\onesT{d}\end{bmatrix}}\\
  &= \begin{bmatrix} 1 & 0 \\ 0 & h_n^{}\bandwidth \end{bmatrix}
  \sum\limits_{l=0}^{\lfloor (L-2)/2 \rfloor}\begin{bmatrix} P_a^{2l}(h_n^{}\bandwidth) & 0\\ 0 & P_C^{2l}(h_n^{}\bandwidth)\end{bmatrix}
  \begin{bmatrix} 1 & 0 \\ 0 & h_n^{}\bandwidth \end{bmatrix}\\
  &\quad+ \begin{bmatrix} 1 & 0 \\ 0 & h_n^{}\bandwidth \end{bmatrix}
  \sum\limits_{l=0}^{\lfloor (L-3)/2 \rfloor}\begin{bmatrix} 0 & P_b^{2l+1}(h_n^{}\bandwidth)^\top\\ P_b^{2l+1}(h_n^{}\bandwidth) & 0\end{bmatrix}
  \begin{bmatrix} 1 & 0 \\ 0 & h_n^{}\bandwidth \end{bmatrix}\\
  &\quad+ \upperBoundedInProbability{}{\begin{bmatrix} h_n^{\alpha-2} & h_n^{\alpha-1}\onesT{d} \\ h_n^{\alpha-1}\ones{d} & h_n^{\alpha}\ones{d}\onesT{d}\end{bmatrix}},
\end{align*}
where all odd-order terms in $u$ vanish when integrating with respect to an \RBF-kernel, and
\begin{align*}
 &P_a^{2l}(Z) := \int k(u)du T_{2l}^{p,x}(x+Zu)\\
 &P_b^{2l+1}(Z) := \int k(u)du T_{2l+1}^{p,x}(x+Zu)u\\
 &P_C^{2l}(Z) := \int k(u)du T_{2l}^{p,x}(x+Zu)uu^\top.
\end{align*}

The denoted monomial order of these terms corresponds to the contained number of bandwidth matrices.
The expansions of $a,b$ and $C$ depend on derivatives of $p$ up to order $L-2$).
As a function of $h_n$, when applying the chain-rule, there exists an expansion (in root $h = 0$)
\[C^{-1} = (h_n^{}\bandwidth)^{-1}\sum\limits_{l=0}^{\lfloor (L-2)/2 \rfloor}P_{C^{-1}}^{2l}(h_n^{}\bandwidth)(h_n^{}\bandwidth)^{-1} + \upperBoundedInProbability{}{h_n^{\alpha-4}\ones{d}\onesT{d}},\]
that can be constructed in terms of the expansion of $C$. Thus it will also depend on derivatives of $p$ up to order ($L-2$).
Aligning the adequate terms, we can therefore expand
\begin{align*}
 b^\top C^{-1} b = \sum\limits_{l=1}^{\lfloor (L-2)/2 \rfloor}P_{b^\top C^{-1} b}^{2l}(h_n^{}\bandwidth) + \upperBoundedInProbability{}{h_n^{\alpha-2}}.
\end{align*}
Using the fact that $e^{-1} = a - b^\top C^{-1}b$, we also have
\begin{align*}
 e^{-1} = P_{a}^{0}(h_n^{}\bandwidth) + \sum\limits_{l=1}^{\lfloor (L-2)/2 \rfloor} \left[P_{a}^{2l}(h_n^{}\bandwidth) - P_{b^\top C^{-1} b}^{2l}(h_n^{}\bandwidth)\right] + \upperBoundedInProbability{}{h_n^{\alpha-2}},
\end{align*}
where  $P_{a}^{0}(h_n^{}\bandwidth) = p(x)$.
Applying the chain-rule, the expansion of
\[e = \sum\limits_{l=0}^{\lfloor (L-2)/2 \rfloor}P_{e}^{2l}(h_n^{}\bandwidth) + \upperBoundedInProbability{}{h_n^{\alpha-2}}\]
exists and also depend on derivatives of $p$ up to order ($L-2$).
Note that $P_{e}^{0}(h_n^{}\bandwidth) = p(x)^{-1}$.

Using the fact that $g^\top = -eb^\top C^{-1}$, with adequate alignment of the terms of their expansions, it is
\begin{align*}
 g^\top = \sum\limits_{l=0}^{\lfloor (L-3)/2  \rfloor}P_{g}^{2l+1}(h_n^{}\bandwidth)^\top(h_n^{}\bandwidth)^{-1} + \upperBoundedInProbability{}{h_n^{\alpha-3}\onesT{d}}
\end{align*}
which does depend on derivatives of $p$ up to order ($L-2$). Note that, if $L \geq 3$, then
$P_{g}^{1}(h_n^{}\bandwidth)^\top = -D_p^1(x)p(x)^{-2}h_n^{}\bandwidth$.
Therefore
\begin{align*}
N_x^{h_n^{}\bandwidth} = &\begin{bmatrix} \sum\limits_{l=0}^{\lfloor (L-2)/2  \rfloor}P_{e}^{2l}(h_n^{}\bandwidth)& \sum\limits_{l=0}^{\lfloor L-3)/2  \rfloor}P_{g}^{2l+1}(h_n^{}\bandwidth)^\top \end{bmatrix}\begin{bmatrix} 1 & 0 \\ 0 & h_n^{}\bandwidth \end{bmatrix}^{-1}\\
+ &\upperBoundedInProbability{}{\begin{bmatrix} h_n^{\alpha-2} & h_n^{\alpha-3}\onesT{d}\end{bmatrix}}.
\end{align*}
Now we can also expand
\begin{align*}
\frac{1}{n}X_{1}^{}(x)^\top &W_x^{h_n^{}\bandwidth} T_l^{f,x}(\bm{X}_n^{}) = \frac{1}{n}\mySum{i=1}{n}\hspace{-2pt}k^{h_n^{}\bandwidth}(x,x_i^{})T_l^{f,x}(x_i^{})\begin{bmatrix} 1 \\ x_i^{}-x\end{bmatrix}\\
= &\begin{bmatrix} 1 & 0 \\ 0 & h_n^{}\bandwidth \end{bmatrix}\int k(u)du T_l^{f,x}(x+h_n^{}\bandwidth u) \cdot \sum\limits_{r=0}^{L-l} T_{r}^{p,x}(x+h_n^{}\bandwidth u)
\begin{bmatrix} 1 \\  u\end{bmatrix}\\
&+ \upperBoundedInProbability{}{\begin{bmatrix} h_n^{\alpha} & h_n^{\alpha+1}\onesT{d}\end{bmatrix}^\top}.
\end{align*}
Note that here also all odd-order terms in $u$ vanish for \RBF-kernels.
Aggregating the polynomial terms of $\frac{1}{n}X_{1}^{}(x)^\top W_x^{h_n^{}\bandwidth}\hspace{-2pt}\left[\mySum{l=2}{L}T_l^{f,x}(\bm{X}_n^{})\right]$ of total order $2l$, we obtain
\begin{align*}
b_{2l}&(x,h_n^{}\bandwidth)\\
&= \begin{bmatrix} 1 & 0 \\ 0 & h_n^{}\bandwidth \end{bmatrix}\int k(u)du
\begin{bmatrix} \sum\limits_{r=2}^{2l} T_r^{f,x}(x+h_n^{}\bandwidth u)T_{2l-r}^{p,x}(x+h_n^{}\bandwidth u) \\ \sum\limits_{r=2}^{2l-1} T_r^{f,x}(x+h_n^{}\bandwidth u)T_{2l-1-r}^{p,x}(x+h_n^{}\bandwidth u)u \end{bmatrix}.
\end{align*}
Subsequently, when multiplying $\frac{1}{n}X_{1}^{}(x)^\top W_x^{h_n^{}\bandwidth}\hspace{-2pt}\left[\mySum{l=2}{L}T_l^{f,x}(\bm{X}_n^{})\right]$ and $N_x^{h_n^{}\bandwidth}$, any combination of an explicit term with an $O_p$-term is in $\upperBoundedInProbability{}{h_n^{\alpha}}$.
Particularly, multiplying $\upperBoundedInProbability{}{\begin{bmatrix} h_n^{\alpha-2} & h_n^{\alpha-3}\onesT{d}\end{bmatrix}}$ with any $b_{2l}$-term is $\upperBoundedInProbability{}{h_n^{\alpha}}$, because $b_{2}(x,h_n^{}\bandwidth) = \upperBoundedInProbability{}{\begin{bmatrix} h_n^2 & 0 \end{bmatrix}^\top}$ and $b_{4}(x,h_n^{}\bandwidth) = \upperBoundedInProbability{}{h_n^4\ones{d+1}}$.

Finally, we can decompose the conditional bias as
\begin{align*}
\textstyle\Bias_{1}^{}\left[x, h_n^{}\bandwidth | \bm{X}_{n}\right] = \mySum{l=1}{\lfloor L/2\rfloor}\biasLLS{h_n^{}\bandwidth}{x}{2l} + \upperBoundedInProbability{}{h_n^{\alpha}},
\end{align*}
where the bias of order $2l$ is given by
\begin{align*}
&\biasLLS{Z}{x}{2l} = \sum\limits_{m=1}^{l} \begin{bmatrix} P_{e}^{2(l-m)}(Z) & P_{g}^{2(l-m)+1}(Z)^\top \end{bmatrix}\cdot\\
&\begin{bmatrix} \sum\limits_{r=2}^{2m}\int T_r^{f,x}(x+Z u)T_{2m-r}^{p,x}(x+Z u)k(u)du \\ 
\sum\limits_{r=2}^{2m-1}\int T_r^{f,x}(x+Z u)T_{2m-1-r}^{p,x}(x+Z u)uk(u)du \end{bmatrix}.
\end{align*}
Via construction, $\biasLLS{\bandwidth}{x}{2l}$ depends on derivatives of $f$ of second or higher-order, and $p$ (with a total order of $2l$).
The maximal orders are therefore ($2l-2$) for $p$ and $2l$ for $f$.
Denote by $A^{l,x}$ and $B^{l,x}$ the adequate coefficient-tensors. Then we can write
\begin{align*}
% \label{eq:generalFormulaEvenOrderBias}
 \biasLLS{h_n^{}\bandwidth}{x}{2l} = h_n^{2l} \hspace{-11pt}\sum\limits_{j_1,\ldots,j_{2l} = 1}^{d} \hspace{-10pt}A^{l,x}_{j_1,\ldots,j_{2l}}
 \hspace{-12pt}\sum\limits_{i_1,\ldots,i_{2l} = 1}^{d}B^{l,x}_{i_1,\ldots,i_{2l}}\prod\limits_{r = 1}^{2l} \bandwidth_{i_r, j_r}.\qedAtBottomLine
\end{align*}
\end{proof}

Note that all odd-order bias-terms vanish by symmetry of the kernel $k$.
The higher-order bias-decomposition enormously simplifies under the assumption of uniformly distributed inputs:
\begin{restatable}{mycor}{uniformBiasTermForm}
\label{cor:uniformBiasTermForm}
May the assumptions of \propref{prop:biasTermForm} hold.
If additionally $p\sim\uniformDist{\inputSpace}$ is uniformly distributed, the asymptotic even-order bias-terms simplify to
\begin{align*}
\biasLLS{\bandwidth}{x}{2l}
= \textstyle\hspace{-10pt}\sum\limits_{i_1,\ldots,i_{2l} = 1}^d\hspace{-10pt}\frac{\mu(i_1,\ldots,i_{2l})}{(2l)!}\hspace{-10pt}\sum\limits_{j_1,\ldots,j_{2l}=1}^{d}\hspace{-10pt}[D_f^{2l}(x)]_{j_1,\ldots,j_{2l}} \prod_{r=1}^{2l}\bandwidth_{i_r j_r},
\end{align*}
where $\mu(i_1,\ldots,i_{2l}) = \prod\limits_{p = 1}^{d} \mu_{c_p(i_1,\ldots,i_{2l})}$ for $\mu_c = \int u^ck(u)du$ the moments of the \RBF-kernel $k$ with $\mu_0 = 1$ (such that $k$ is a probability density on $\reals{d}$) and $c_p(i_1,\ldots,i_{2l}) = \abs{\condset{r}{i_r = p}}$.
\end{restatable}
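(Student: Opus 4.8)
The plan is to specialize the closed form for $\biasLLS{Z}{x}{2l}$ obtained at the close of the proof of \propref{prop:biasTermForm} to the uniform design $p\sim\uniformDist{\inputSpace}$, exploiting that a constant density annihilates almost every building block of that expression. First I would recall that the order-$2l$ bias was written there as
\begin{align*}
&\biasLLS{Z}{x}{2l} = \sum_{m=1}^{l}\begin{bmatrix} P_{e}^{2(l-m)}(Z) & P_{g}^{2(l-m)+1}(Z)^\top \end{bmatrix}\cdot\\
&\begin{bmatrix} \sum_{r=2}^{2m}\int T_r^{f,x}(x+Z u)T_{2m-r}^{p,x}(x+Z u)k(u)du \\ \sum_{r=2}^{2m-1}\int T_r^{f,x}(x+Z u)T_{2m-1-r}^{p,x}(x+Z u)uk(u)du \end{bmatrix},
\end{align*}
where $P_e,P_g$ come from the expansion of $N_x^{Z}$ and the $T^{p,x}_\bullet$ are the Taylor terms of the training density.

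Second I would feed in uniformity. When $p\equiv\text{const}$ we have $T_{k}^{p,x}\equiv 0$ for every $k\ge 1$, while $T_{0}^{p,x}\equiv p(x)$. Tracing this through the matrix expansion of $\tfrac1n X_{1}(x)^\top W_x^{Z}X_{1}(x)$, the off-diagonal blocks $P_b^{2l+1}$ and the higher diagonal corrections $P_a^{2l},P_C^{2l}$ (for $l\ge1$) all vanish, so that asymptotically $b\equiv 0$, $a=p(x)$ and $C=p(x)\mu_2 Z^2$. Consequently $g^\top=-e\,b^\top C^{-1}\equiv 0$, hence $P_g^{2(l-m)+1}\equiv 0$, and $e=a^{-1}=p(x)^{-1}$, so that $P_e^{2(l-m)}=0$ unless $m=l$, where $P_e^{0}=p(x)^{-1}$. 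This collapses the outer sum to the single term $m=l$ and discards the entire lower (gradient) row; in the surviving integral $\sum_{r=2}^{2l}T_r^{f,x}T_{2l-r}^{p,x}$ only $r=2l$ contributes, since $T_{2l-r}^{p,x}=0$ unless $r=2l$, leaving $T_0^{p,x}\equiv p(x)$. The prefactor $p(x)^{-1}$ and this $p(x)$ cancel, yielding the compact identity
\[
\biasLLS{\bandwidth}{x}{2l} = \int T_{2l}^{f,x}(x+\bandwidth u)\,k(u)\,du.
\]

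Third I would expand the top-order Taylor polynomial. Writing $T_{2l}^{f,x}(x+\bandwidth u)=\tfrac{1}{(2l)!}\sum_{j_1,\dots,j_{2l}}[D_f^{2l}(x)]_{j_1,\dots,j_{2l}}\prod_{r=1}^{2l}(\bandwidth u)_{j_r}$ and inserting $(\bandwidth u)_{j_r}=\sum_{i_r}\bandwidth_{i_r j_r}u_{i_r}$, I pull the deterministic factor $\prod_{r}\bandwidth_{i_r j_r}$ out of the integral, so that everything reduces to the single mixed kernel moment $\int\prod_{r=1}^{2l}u_{i_r}\,k(u)\,du$. Because $k$ is a spherically symmetric \RBF-kernel, this moment factorizes across coordinates into $\prod_{p=1}^{d}\mu_{c_p(i_1,\dots,i_{2l})}$, with $c_p$ the multiplicity of the index $p$ among $i_1,\dots,i_{2l}$, which is exactly $\mu(i_1,\dots,i_{2l})$; in particular every monomial containing an odd coordinate power integrates to zero. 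Re-collecting the $1/(2l)!$ prefactor and reordering the two index sums reproduces the claimed formula.

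The main obstacle I anticipate is not the cancellation algebra but the bookkeeping in the second step: one must verify carefully that uniformity forces $b$, $P_g$, and the higher $P_e,P_a,P_C$ corrections to vanish at the relevant order, so that the clean identity $\biasLLS{\bandwidth}{x}{2l}=\int T_{2l}^{f,x}(x+\bandwidth u)k(u)\,du$ holds rather than merely up to $\upperBoundedInProbability{}{h_n^{\alpha}}$ noise. The remaining subtlety is justifying the coordinate-wise factorization and odd-moment vanishing of $\int\prod_r u_{i_r}k(u)\,du$ directly from spherical symmetry of $k$, which is routine but should be stated explicitly since it is what produces the moment product $\mu(i_1,\dots,i_{2l})$.
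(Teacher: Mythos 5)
Your proposal is correct and follows essentially the same route as the paper's proof: specialize the explicit order-$2l$ bias expression from \propref{prop:biasTermForm} to uniform $p$, observe that all Taylor terms $T_k^{p,x}$ with $k\geq 1$ vanish so that only the term $P_e^0(\bandwidth)\int T_{2l}^{f,x}(x+\bandwidth u)\,T_0^{p,x}(x+\bandwidth u)\,k(u)\,du = \int T_{2l}^{f,x}(x+\bandwidth u)\,k(u)\,du$ survives, and then expand the Taylor polynomial and evaluate the kernel moments. Your intermediate bookkeeping (vanishing of $b$, $P_g$, and the higher $P_a,P_C,P_e$ corrections) is simply a more explicit rendering of the paper's one-line observation that this is the only non-vanishing term, and your final moment computation coincides with the paper's, including its implicit use of the coordinate-wise factorization of the mixed moments.
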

\begin{proof}
In the case of $p\sim\uniformDist{\inputSpace}$, the only non-vanishing term in $\biasLLS{\bandwidth}{x}{2l}$ from \eqref{eq:asymptoticBias} is
\begin{align*}
\underbrace{P_{e}^{0}(\bandwidth)}_{p(x)^{-1}} &\int T_{2l}^{f,x}(x+\bandwidth u)\underbrace{T_{0}^{p,x}(x+\bandwidth u)}_{p(x)}k(u)du
= \int T_{2l}^{f,x}(x+\bandwidth u)k(u)du\\
&= \frac{1}{(2l)!}\sum\limits_{j_1,\ldots,j_{2l}=1}^{d}\hspace{-5pt}[D_f^{2l}(x)]_{j_1,\ldots,j_{2l}}\int k(u) \prod_{r=1}^{2l}(\bandwidth u)_{j_r}du\\
&= \hspace{-10pt}\sum\limits_{i_1,\ldots,i_{2l} = 1}^d\hspace{-10pt}\frac{\mu(i_1,\ldots,i_{2l})}{(2l)!}\hspace{-10pt}\sum\limits_{j_1,\ldots,j_{2l}=1}^{d}\hspace{-10pt}[D_f^{2l}(x)]_{j_1,\ldots,j_{2l}} \prod_{r=1}^{2l}\bandwidth_{i_r j_r}.\qedAtBottomLine
\end{align*}
\end{proof}
\begin{myRemark}
The second-order bias of \LLS does not depend on the training density $p$ in general. Therefore it holds for an arbitrary, smooth enough $p$ that
\begin{align*}
 &\biasLLS{\bandwidth}{x}{2}\\
 &= \hspace{-4pt}\sum\limits_{i_1,i_2 = 1}^d\hspace{-4pt}\frac{\mu(i_1,i_2)}{2}\hspace{-4pt}\sum\limits_{j_1,j_2=1}^{d}\hspace{-0pt}[D_f^{2}(x)]_{j_1,j_2} \prod_{r=1}^{2}\bandwidth_{i_r j_r}
 = \hspace{-0pt}\sum\limits_{i = 1}^d\hspace{-0pt}\frac{\mu_2}{2}\hspace{-4pt}\sum\limits_{j_1,j_2=1}^{d}\hspace{-0pt}[D_f^{2}(x)]_{j_1,j_2} \prod_{r=1}^{2}\bandwidth_{i j_r}\\
 &= \frac{\mu_2}{2}\sum\limits_{i = 1}^d\left[\bandwidth D_f^{2}(x) \bandwidth\right]_{i,i} = \frac{\mu_2}{2}\trace(\bandwidth D_f^{2}(x) \bandwidth) = \frac{\mu_2}{2}\trace(D_f^{2}(x)\bandwidth^2).
\end{align*}
\end{myRemark}
While this simplifies form of the bias is still quite tedious, all appearing terms are now at least given in explicit form. This enables the construction of a toy-example, as given in the next section.

\section{A minimal indefinite Toy-Example}
\label{sec:nonIsotropicIndefiniteExample}
As a preconsideration, let $\vanishingSet{l}{x} = \condset{\bandwidth\in\posDefSet{d}}{\biasLLS{\bandwidth}{x}{2l} = 0}$ be the vanishing sets of the respective, even-order bias-terms.
We observe that $\vanishingSet{1}{x} = \condset{\bandwidth\in\posDefSet{d}}{\trace(D^2_f(x)\bandwidth^2) = 0}$ is a sub-manifold of $\posDefSet{d}$ of at most $(\dim(\posDefSet{d})-1)$ dimensions for $D^2_f(x)\neq 0$.
Letting $\SigmaSpace\subseteq\posDefSet{d}$ be our bandwidth candidate space, we thus assume that the candidate subspace $\SigmaSpace \cap \bigcap_{l = 0}^{L} T_l^x$ which simultaneously eliminates bias-terms up to order $2L$ will shrink and finally vanish as we increase $L$.
%%%%%%%%
Under mild assumptions on $f$ and $p$ we can assume that this subspace decreases in dimension by at least $1$ with each consecutive conditioning on the elimination of the next higher-order bias-term. We end up with a bias-term of maximal order $2D$ that cannot be eliminated anymore, where $D = \dim(\SigmaSpace) \leq \frac{d(d+1)}{2}$. This suggests that we could obtain minimax-optimal convergence for a generic $f\in\diffableFunctions{\inputSpace}{2D}$, when applying \LLS with \SigmaSpace:
While being infeasible from the practical point-of-view, we could construct \LOB by first finding the subspace of \SigmaSpace that simultaneously eliminates as many leading-order bias terms as possible, followed by optimizing the trade-off between the first non-vanishing bias-term and the variance over this bandwidth candidate subspace.

The reasoning above however turns out to be naively pessimistic, since \LOB does not necessarily lie in the specified terminal subspace.
The true behavior of \LOB is surprisingly different, and we will now construct a toy-example to reveal the principle.

Let $\inputSpace = [0,1]^2$ with uniformly distributed inputs $p \sim \uniformDist{\inputSpace}$ and observations $p(y | x) =  \Gauss{y}{f(x)}{v(x)}$, with
\begin{align*}
f(x) = \exp\{a x_1\} + \log(0.1+b x_2)\quad\text{and}\qquad v(x) = 10^{-4},
\end{align*}
where we have set $a = 2$ and $b = 3$. An example dataset can be seen to the left in \figref{fig:biasCancelingExperiment_dataset_anisotropicNaiveLOBestimate}.
\fig{biasCancelingExperiment_dataset_anisotropicNaiveLOBestimate}{1}{The bias canceling experiment: An example dataset (left). The two components (middle and right) of the naive anisotropic bandwidth construction up to the global scaling in $n$, as a means to estimate anisotropic \LOB.}{0.8cm}{0.0cm}
Furthermore let $k(z) = \exp\{-0.5\pnorm{z}{}^2\}$ be the Gaussian kernel and $\SigmaSpace = \condset{\diag(\sigma_1,\sigma_2)}{\sigma_1,\sigma_2 > 0}$ the set of anisotropic bandwidths.
% the naive approach
The Hessian of $f$ is given by
\[D^2_f(x)_{i_1,i_2} = \begin{cases} D^2_f(x)_1 := a^2\exp\{a x_1\}&, i_1=i_2=1, \\ D^2_f(x)_2 := -b^2(0.1+b x_2)^{-2}&, i_1=i_2=2, \\ 0&\text{, else},\end{cases}\]
whereas the fourth-order partial derivatives are given by
\[D^4_f(x)_{i_1,\ldots,i_4} = \begin{cases} D^4_f(x)_1 := a^4\exp\{a x_1\}&, i_1=\ldots=i_4=1, \\ D^4_f(x)_2 := -6b^4(0.1+b x_2)^{-4}&, i_1=\ldots=i_4=2, \\ 0&\text{, else}.\end{cases}\]
% and the sixth-order partial derivatives are given by
% \[D^6_f(x)_{i_1,\ldots,i_6} = \begin{cases} D^6_f(x)_1 := a^6\exp\{a x_1\}&, i_1=\ldots=i_6=1 \\ D^6_f(x)_2 := -120b^6(0.1+b x_2)^{-6}&, i_1=\ldots=i_6=2 \\ 0&\text{, else}\end{cases}.\]
Recall from \eqref{eq:asymptoticScndOrderBiasLLS} that the second-order bias is given by
$\biasLLS{\bandwidth}{x}{2} = \frac{1}{2}\mu_2\trace(D^2_f(x)\bandwidth^2)$. In the isotropic case, $\biasLLS{\sigma\idMatrix{d}}{x}{2}$ will vanish almost nowhere. 
Letting $\bandwidth_2(x) \equiv \idMatrix{d}$, and when optimizing
\begin{align*}
 \MSE_{1}^{}&\left(x, \sigma\bandwidth_2(x) | \bm{X}_{n}\right)\\
  &= \biasLLS{\sigma\bandwidth_2(x)}{x}{2}^2 + \varianceLPS{\sigma\bandwidth_2(x)}{x}{1}
  + \convergenceInProbability{}{\sigma^4+n^{-1}\sigma^{-d}}
\end{align*}
with respect to $\sigma$, the optimum is obtained for $\sigma^* = h_{2,n}^{}\sigma_2^*(x)$, where $h_{2,n}^{} = n^{-\frac{1}{4+d}}$ and 
\[\sigma_2^*(x) = \left[\frac{R(k)v(x)d}{4p(x)\abs{\bandwidth_2(x)}\biasLLS{\bandwidth_2(x)}{x}{2}^2}\right]^{\frac{1}{4+d}}.\]
Defining
\[\bandwidth_2^n(x) = h_{2,n}^{}\sigma_2^*(x)\bandwidth_2(x),\]
the associated performance law is then given by
\[\MSE_{1}^{}\left(x, \bandwidth_2^n(x) | \bm{X}_{n}\right) = \upperBoundedInProbability{}{h_{2,n}^4} = \upperBoundedInProbability{}{n^{-\frac{4}{4+d}}}.\]
Note that the choice of the spatial component $\sigma_2^*(x)$ only influences the \MSE by a constant factor, whereas the global decay rate $h_{2,n}^{}$ affects the decay law of the \MSE.

Further note that, if the leading bias-term vanishes for some $x\in\inputSpace$, the asymptotic analysis suggests $\sigma_2^*(x_i)\rightarrow\infty$ as $x_i\rightarrow x$.
In contrast, when dealing with finite data, the remaining higher-order bias -- which is not treated in this case -- would prevent the real \LOB from exploding.
Such roots of the leading bias-term -- even if only occurring on a set of measure zero -- will therefore negatively impact the \MSE on a substantial share of the input space.
Since our example suffers from such roots, i.e., for $x_1 = a^{-1}\log(b^2(0.1+b x_2)^{-2}a^{-2})$, and since we focus on the decay law as opposed to the optimal constant of the \MSE, we suggest to trade off the general scaling of the bias in $\bandwidth$ and $\abs{\bandwidth}$ in a label-agnostic way, being more robust.
Therefore we will replace $\biasLLS{\bandwidth_2(x)}{x}{2}$ with $\pnorm{\bandwidth_2(x)}{}^2$ in $\sigma_2^*$.

%%%%%%%%%%%%%%%%%%%
In the anisotropic case, when following the naive approach, we would first estimate $\vanishingSet{1}{x} = \condset{\bandwidth\in\SigmaSpace\!}{\!\biasLLS{\bandwidth}{x}{2} = 0} = \condset{\bandwidth\in\SigmaSpace\!}{\!\trace(D^2_f(x)\bandwidth^2) = 0}$,
which is given in particular by $\vanishingSet{1}{x} = \condset{\sigma\bandwidth_4(x)}{\sigma > 0}$, where 
\[\bandwidth_4(x) = \diag(1, s_4(x))\quad\text{with}\qquad s_4(x) = \sqrt{-D^2_f(x)_1/D^2_f(x)_2}.\]
Over $\vanishingSet{1}{x}$, we then optimize 
\begin{align*}
 \MSE_{1}^{}&\left(x, \sigma\bandwidth_4(x) | \bm{X}_{n}\right)\\
  &= \biasLLS{\sigma\bandwidth_4(x)}{x}{4}^2 + \varianceLPS{\sigma\bandwidth_4(x)}{x}{1}
  + \convergenceInProbability{}{\sigma^8+n^{-1}\sigma^{-d}}
\end{align*}
with respect to $\sigma$, where
$\biasLLS{\sigma\bandwidth_4(x)}{x}{4} = \frac{\mu_4}{24}\sigma^4(D^4_f(x)_1 + s_4(x)^4D^4_f(x)_2) = \upperBoundedInProbability{}{\sigma^4\pnorm{\bandwidth_4(x)}{}^4}$.
%$(\exp\{x_1\} - 6(0.1+x_2)^{-4}s_4(x)^4)$.
Hence, the optimum is obtained for $\sigma^* = h_{4,n}^{}\sigma_4^*(x)$, where $h_{4,n}^{} = n^{-\frac{1}{8+d}}$ and 
\[\sigma_4^*(x) = \left[\frac{R(k)v(x)d}{8p(x)\abs{\bandwidth_4(x)}\biasLLS{\bandwidth_4(x)}{x}{4}^2}\right]^{\frac{1}{8+d}}.\]
% \[\sigma_4^*(x) = \left[\frac{R(k)v(x)d}{8p(x)\abs{\bandwidth_4(x)}\pnorm{\bandwidth_4(x)}{}^8}\right]^{\frac{1}{8+d}}.\]
\fig{biasCancelingExperiment_ratioBias4vsBias2cancelingBandwidth}{1}{The bias canceling experiment: The ratio of the two components of the improved and the naive anisotropic bandwidth construction, evaluated at $h_n = 0.1$.}{0.8cm}{0.0cm}
%%%%%%%%
Therefore the optimal bandwidth in $x$ with this construction is asymptotically given by \[\bandwidth_4^n(x) = h_{4,n}^{}\sigma_4^*(x)\bandwidth_4(x),\]
with the associated improved performance law
\[\MSE_{1}^{}\left(x, \bandwidth_4^n(x) | \bm{X}_{n}\right) = \upperBoundedInProbability{}{h_{4,n}^8} = \upperBoundedInProbability{}{n^{-\frac{8}{8+d}}}.\]
For the same stability reasons as in the isotropic case, and since we focus on the decay-law, we replace $\biasLLS{\bandwidth_4(x)}{x}{4}$ with $\pnorm{\bandwidth_4(x)}{}^4$ in $\sigma_4^*$.
We have plotted this bandwidth construction in \figref{fig:biasCancelingExperiment_dataset_anisotropicNaiveLOBestimate} (middle and right).

While the construction above seems intuitive through its straight-forward calculation, it is not optimal.
% the optimal approach
We will now show, that the vanishing second-order bias can be exploited to delete the fourth-order bias. The sequence of bandwidths $(\bandwidth_6^n(x))_{n\in\N}$ that features this property will then follow the performance law
\[\MSE_{1}^{}\left(x, \bandwidth_6^n(x) | \bm{X}_{n}\right) = \upperBoundedInProbability{}{n^{-\frac{12}{12+d}}}.\]
An empirical comparison of \LOB and \MSE decay laws obtained by the three constructions can be found in \figref{fig:biasCancelingExperiment_empirical_vs_asymptoticDecayLaws}.
Note that while this is still not the optimal law, it confirms the sub-optimality of the naive construction and demonstrates the true principle that \LOB follows in non-isotropic scenarios.

\fig{biasCancelingExperiment_empirical_vs_asymptoticDecayLaws}{1}{The bias canceling experiment: Empirical results and theoretically given, asymptotic decay laws of the global scaling (left) and the achieved \MISE (right) in the isotropic, naive anisotropic and improved anisotropic bandwidth case. The experimental results are averaged over 10 repetitions. The \MISE is measured on the interior $[0.1,0.9]^2$ of the input space.}{0.8cm}{0.0cm}

The asymptotic forms of bias and variance were derived for a fixed $\bandwidth$, where only the rate $h_n$ changes with $n$.
For example, for a fixed $\bandwidth$ consider the fourth-order expansion
\[\textstyle\Bias_{1}^{}\left[x, h_n^{}\bandwidth | \bm{X}_{n}\right] = \biasLLS{h_n^{}\bandwidth}{x}{2} + \biasLLS{h_n^{}\bandwidth}{x}{4} + \convergenceInProbability{}{h_n^4}.\]

When we seek for a sequence $\bandwidth_x^n$ that eliminates the fourth-order bias-term by the second-order bias-term, we cannot simply solve the asymptotic equation
\[h_n^4\biasLLS{\bandwidth_x^n}{x}{4} = - h_n^2\biasLLS{\bandwidth_x^n}{x}{2},\]
in order to obtain $\textstyle\Bias_{1}^{}\left[x, h_n^{}\bandwidth_x^n | \bm{X}_{n}\right] = \convergenceInProbability{}{h_n^4}$.
This is because the asymptotic form holds not true in first place for a changing $\bandwidth_x^n$ in $n$.
%%%%%
We address this issue as follows.
For an arbitrary function $F\in\diffableFunctions{\inputSpace}{0}$ consider its Monte Carlo integration
\[\frac{1}{n}\mySum{i=1}{n}\hspace{-2pt}k^{\bandwidth}(x_i^{}-x)F(x_i) = \int k(u)p(x+\bandwidth u)F(x+\bandwidth u)du + O(n^{-\frac{1}{2}}).\]
The proof of \propref{prop:biasTermForm} is based on Taylor-expansions of
several such terms: Recall that
\begin{align*}
\textstyle\Bias_{1}^{}\left[x, h_n^{}\bandwidth | \bm{X}_{n}\right]
&= N_x^{h_n^{}\bandwidth} X_{1}^{}(x)^\top W_x^{h_n^{}\bandwidth}\hspace{-2pt}\left[\frac{1}{n}\mySum{l=2}{L}T_l^{f,x}(\bm{X}_n^{})\right] + \convergenceInProbability{}{h_n^{L}},
\end{align*}
where it was
\[N_x^{h_n^{}\bandwidth} = e_1^\top\left(\frac{1}{n}\sum\limits_{i=1}^n k^{h_n^{}\bandwidth}(x_i^{},x)\begin{bmatrix} 1 & (x_i^{}-x)^\top \\ x_i^{}-x & (x_i^{}-x)(x_i^{}-x)^\top\end{bmatrix}\right)^{-1}\]
and
\[X_{1}^{}(x)^\top W_x^{h_n^{}\bandwidth}\hspace{-2pt}\left[\frac{1}{n}T_l^{f,x}(\bm{X}_n^{})\right] = \frac{1}{n}\sum\limits_{i=1}^n k^{h_n^{}\bandwidth}(x_i^{},x)\begin{bmatrix} 1 \\ x_i^{}-x \end{bmatrix}T_l^{f,x}(x_i).\]

Now, the first derivative of the Gaussian kernel with respect to the bandwidth is given by
\[\frac{\partial k^{\bandwidth}(z)}{\partial \bandwidth} = -k^{\bandwidth}(z)(\idMatrix{d} - \bandwidth^{-1}zz^\top\bandwidth^{-1})\bandwidth^{-1}.\]
For $\bandwidth = \diag(\sigma_1,\ldots,\sigma_d)$, the second derivative of the Gaussian kernel is given by
\[\frac{\partial^2 k^{\bandwidth}(z)}{\partial\sigma_i\partial\sigma_j} = k^{\bandwidth}(z)\left[(1-\frac{z_i^2}{\sigma_i^2})(1-\frac{z_j^2}{\sigma_j^2}) + \delta_{ij}(1-3\frac{z_i^2}{\sigma_i^2})\right].\]
Now, let $x\in\inputSpace$ and $\bandwidth_x = \diag(\sigma_1,\sigma_2)\in\SigmaSpace$ be fixed.
For a sequence $\bandwidth_x^n= \diag(\sigma^n_1,\sigma^n_2)$ we define $\varepsilon_n = \pnorm{\bandwidth_x^{-1}(\bandwidth_x^n - \bandwidth_x)}{}$. If $h_n^{}, \varepsilon_n = \convergenceInProbability{}{1}$, we can expand
\begin{align*}
 &\frac{1}{n}\mySum{i=1}{n}\hspace{-2pt}k^{h_n^{}\bandwidth_x^n}(x_i^{}-x)F(x_i)\\
 &= \frac{1}{n}\mySum{i=1}{n}\hspace{-2pt}F(x_i)k^{h_n^{}\bandwidth_x}(x_i^{}-x)\bigg[
 1 - \mySum{j=1}{d} [1-\frac{z_j^2}{\sigma_j^2}][\frac{\sigma^n_j}{\sigma_j} - 1]\\
 &+ \frac{1}{2}\mySum{j,k=1}{d}\left[ [1-\frac{z_j^2}{\sigma_j^2}][1-\frac{z_k^2}{\sigma_k^2}] + \delta_{j,k}[1-3\frac{z_j^2}{\sigma_j^2}]\right] [\frac{\sigma^n_j}{\sigma_j} - 1][\frac{\sigma^n_k}{\sigma_k} - 1] + \convergenceInProbability{}{\varepsilon_n^2}\bigg]\\
 &= \int du k(u)p(x+h_n^{}\bandwidth_x u)F(x+h_n^{}\bandwidth_x u)\bigg[1 - \mySum{j=1}{d} [1-u_j^2][\frac{\sigma^n_j}{\sigma_j} - 1]\\
 &+ \frac{1}{2}\mySum{j,k=1}{d}\left[[1-u_j^2][1-u_k^2] + \delta_{j,k}[1-3u_j^2]\right] [\frac{\sigma^n_j}{\sigma_j} - 1][\frac{\sigma^n_k}{\sigma_k} - 1] + \convergenceInProbability{}{\varepsilon_n^2}\bigg].
\end{align*}
We first consider $F(z) := T_2^{f,x}(z) = \frac{1}{2}(z-x)^\top D^2_f(x)(z-x)$ in order to deal with the second-order bias.
Here, we can choose the fixed bandwidth $\bandwidth_x = \sigma\bandwidth_4(x) \in \vanishingSet{1}{x}$ such that $\biasLLS{\bandwidth_x}{x}{2} = 0$, which removes the second-order error. Additionally, we set up the candidate solution $\bandwidth_x^n = \sigma\diag(1, s_6^n(x))$.
We then observe that $\frac{\sigma^n_1}{\sigma_1} - 1 = 0$ and $\frac{\sigma^n_2}{\sigma_2} - 1 = \frac{s_6^n(x)}{s_4(x)} - 1 = \varepsilon_n$.
With $p \equiv 1$ it is
\begin{align*}
 &\frac{1}{n}\mySum{i=1}{n}\hspace{-2pt}k^{h_n^{}\bandwidth_x^n}(x_i^{}-x)T_2^{f,x}(x_i) + \convergenceInProbability{}{h_n^{2}\varepsilon_n^2}\\
 &= \int du k(u)T_2^{f,x}(x+h_n^{}\bandwidth_x u)\left[1 - \varepsilon_n[1-u_2^2] + \frac{1}{2}\varepsilon_n^2([1-u_2^2]^2 + [1-3u_2^2])\right] \\
 &= \underbrace{\biasLLS{h_n^{}\bandwidth_x}{x}{2}}_{= 0}(1 - \varepsilon_n + \varepsilon_n^2) + \frac{1}{2}h_n^2\sigma^2\bigg[\varepsilon_n\bigg[D_f^2(x)_1\mu_2^2 + \underbrace{s_4(x)^2D_f^2(x)_2}_{-D_f^2(x)_1}\mu_4\bigg]\\
 &+ \frac{1}{2}\varepsilon_n^2\left[D_f^2(x)_1(\mu_2\mu_4-5\mu_2^2)
 + s_4(x)^2D_f^2(x)_2(\mu_6 - 5\mu_4)\right]\bigg]\\
 &= \frac{1}{2}h_n^2\sigma^2\bigg[\varepsilon_n D_f^2(x)_1\left[\mu_2^2 - \mu_4\right]
 + \frac{1}{2}\varepsilon_n^2 D_f^2(x)_1\left[\mu_2\mu_4-5\mu_2^2
 - \mu_6 + 5\mu_4\right]\bigg].
\end{align*}
Analogously, when dealing with the fourth-order bias, we set
\[F(z) := T_4^{f,x}(z) = \frac{1}{24}\left[(z_1-x_1)^4D_f^4(x)_1 + (z_2-x_2)^4D_f^4(x)_2\right],\]
giving
\begin{align*}
 \frac{1}{n}\mySum{i=1}{n}\hspace{-2pt}k^{h_n^{}\bandwidth_x^n}&(x,x_i^{})T_4^{f,x}(x_i) + \upperBoundedInProbability{}{h_n^{4}\varepsilon_n^2}\\
 =\;&h_n^4\sigma^4\biasLLS{\bandwidth_4(x)}{x}{4}\\
 &- \frac{h_n^4\sigma^4}{24}\int du k(u)[u_1^4D_f^4(x)_1 + u_2^4s_4(x)^4D_f^4(x)_2]\varepsilon_n[1-u_2^2]\\
 =\;&h_n^4\sigma^4\biasLLS{\bandwidth_4(x)}{x}{4}\\
 &+ \frac{h_n^4\sigma^4\varepsilon_n}{24}\left[D_f^4(x)_1[\mu_2\mu_4-\mu_4] + s_4(x)^4D_f^4(x)_2[\mu_6 - \mu_4]\right].
\end{align*}
Finally, we can write $N_x^{h_n^{}\bandwidth} = \begin{bmatrix} 1 + \convergenceInProbability{}{\varepsilon_n^2} & 0\end{bmatrix}$.
Therefore we can expand
\begin{align*}
 \sideset{}{_{1}^{}}\Bias\left(x, h_n^{}\bandwidth_x^n | \bm{X}_{n}\right) = 
 &\frac{1}{2}h_n^2\sigma^2\varepsilon_n D_f^2(x)_1\left[\mu_2^2 - \mu_4\right]\\
 &+ h_n^4\sigma^4\biasLLS{\bandwidth_4(x)}{x}{4} + \upperBoundedInProbability{}{h_n^{2}\varepsilon_n^2 + h_n^{4}\varepsilon_n + h_n^{6}}.
\end{align*}
We can solve $\sideset{}{_{1}^{}}\Bias\left(x, h_n^{}\bandwidth_x^n | \bm{X}_{n}\right) = \upperBoundedInProbability{}{h_n^{2}\varepsilon_n^2 + h_n^{4}\varepsilon_n + h_n^{6}}$ for $s_6^n(x)$, recalling that $\varepsilon_n = s_6^n(x)\big/s_4(x) - 1$, which gives
\[s_6^n(x) = s_4(x)\left(1 - \frac{2h_n^{2}\sigma^2\biasLLS{\bandwidth_4(x)}{x}{4}}{D_f^2(x)_1\left[\mu_2^2 - \mu_4\right]}\right).\]
With this, it is $\varepsilon_n = \upperBoundedInProbability{}{h_n^2}$
such that $\sideset{}{_{1}^{}}\Bias\left(x, h_n^{}\bandwidth_x^n | \bm{X}_{n}\right) = \upperBoundedInProbability{}{h_n^6}$.

Applying $\bandwidth_6^n(x) = \sigma\diag(1, s_6^n(x))$, we finally aggregate all sixth-order bias-terms that arise as $\Bias\left(x, \bandwidth_6^n(x) | \bm{X}_{n}\right) = \sigma^6b_6(x,\bandwidth_4(x)) + \convergenceInProbability{}{\sigma^6}$, where
\begin{align*}
&b_6(x,\bandwidth_4(x)) = \biasLLS{\bandwidth_4(x)}{x}{6}\\ &+\frac{1}{4}(\frac{2\biasLLS{\bandwidth_4(x)}{x}{4}}{D_f^2(x)_1\left[\mu_2^2 - \mu_4\right]})^2D_f^2(x)_1\left[\mu_2\mu_4-5\mu_2^2
 - \mu_6 + 5\mu_4\right]\\
&+ \frac{1}{24}(\frac{2\biasLLS{\bandwidth_4(x)}{x}{4}}{D_f^2(x)_1\left[\mu_2^2 - \mu_4\right]})\left[D_f^4(x)_1(\mu_2\mu_4-\mu_4) + s_4(x)^4D_f^4(x)_2(\mu_6 - \mu_4)\right],
\end{align*}
and $\biasLLS{\bandwidth_4(x)}{x}{6} = \frac{\mu_6}{720}\left[D_f^6(x)_1 + D_f^6(x)_2s_4(x)^6\right]$.
Accordingly, we write
\begin{align*}
\MSE_{1}^{}&\left(x, \bandwidth_6^n(x) | \bm{X}_{n}\right)\\
 &= \sigma^{12}b_6(x,\bandwidth_4(x))^2 + \varianceLPS{\sigma\bandwidth_4(x)}{x}{1}
  + \convergenceInProbability{}{\sigma^{12}+n^{-1}\sigma^{-d}}.
\end{align*}
Here, the optimum is obtained for $\sigma^* = h_{6,n}^{}\sigma_6^*(x)$, where $h_{6,n}^{} = n^{-\frac{1}{12+d}}$ and 
% \[\sigma_6^*(x) = \left[\frac{R(k)v(x)d}{12p(x)\abs{\bandwidth_4(x)}b_6(x,\bandwidth_4(x))^2}\right]^{\frac{1}{12+d}}.\]
\[\sigma_6^*(x) = \left[\frac{R(k)v(x)d}{12p(x)\abs{\bandwidth_4(x)}b_6(x,\bandwidth_4(x))^2}\right]^{\frac{1}{12+d}}.\]
Thus, for $\bandwidth_6^n(x) = h_{6,n}^{}\sigma_6^*(x)\diag(1, s_6^n(x))$, it is
\begin{align*}
 &\MSE_{1}^{}\left(x, \bandwidth_6^n(x) | \bm{X}_{n}\right)
 = \upperBoundedInProbability{}{h_{6,n}^{12}} = \upperBoundedInProbability{}{n^{-\frac{12}{12+d}}}.
\end{align*}
Like in the previous cases we replace $b_6(x,\bandwidth_4(x))$ with $\pnorm{\bandwidth_4(x)}{}^6$ in $\sigma_6^*$.
We have plotted this bandwidth construction relative to the native, anisotropic construction in \figref{fig:biasCancelingExperiment_ratioBias4vsBias2cancelingBandwidth}.

So the trick is not to eliminate each respective higher-order bias-term on its own, but to abuse the slow convergence of the second-order bias-term together with a systematic deviation from its root in order to generate an anti-bias that cancels the effect of higher-order bias-terms.
Note that this phenomenon works only in the direction of canceling higher-order bias from strictly vanishing lower-order bias and not vice versa.
Following the concept of the toy-example, we are now in the position to analyze locally optimal non-isotropic bandwidths of \LLS in the indefinite regime.

\section{Non-isotropic \LOB of \LLS for indefinite functions}
\label{sec:nonIsotropicLOB}
In the following, we focus on the interior of the input space, denoted by \inputSpaceInterior, and assume the kernel function $k$ to be of bounded support.
In particular, let $s>0$ such that $k(t) \equiv 0, \forall t > s$. 
When analyzing the asymptotics of \LPS, we generally make the following consistency requirement on the applied bandwidth matrix as the sample size grows (see \cite{ruppert1994}):
\begin{mydef}
\label{def:consistentBandwidth}
A bandwidth sequence $(\bandwidth^n)_{n\in\N}$ is consistent, if
\begin{align*}
 \abs{\bandwidth^n}^{-1}n^{-1} \rightarrow 0,\qquad\pnorm{\bandwidth^n}{} \rightarrow 0\quad\text{and}\qquad\pnorm{\bandwidth^n}{} \cdot \pnorm{[\bandwidth^n]^{-1}}{} \leq \conditionNumberUpperBound
\end{align*}
for some global constant $1 \leq \conditionNumberUpperBound < \infty$.
\end{mydef}
These requirements are necessary -- though not sufficient -- to make the \LLS predictor \emph{weakly universally consistent} in first place:
The first two requirements are known from the isotropic bandwidth analysis. They make the pure kernel weights consistent, which appear in the Nadaraya-Watson estimator (\LPS of order $\LPSorder = 0$)
\citep[Theorem 5.1 and 5.4]{gyorfi2002distribution}.
When considering \LPS of higher order, it was shown that the \emph{local linear weight function} can be trimmed, relative to the pure kernel weights, in order to make them consistent \citep[Corollary 4]{stone1977consistent}.
In case of non-isotropic bandwidths, a bound on the conditioning number is additionally required to make the local linear weight function bounded:
For an unbounded weight function, trimming would lead to predictions that are arbitrarily off the \LPS model.
\begin{myRemark}
For a specific regression task, when searching for \LOB over the unbounded set \SigmaSpace, it will occur that for a sequence of optimizers $\bandwidth_x^n$ of $\MSE_{1}^{}\left(x, \bandwidth| \bm{X}_{n}\right)$, it is $\pnorm{\bandwidth_x^n}{}, \not\rightarrow 0$ or even $\pnorm{\bandwidth_x^n}{}\rightarrow \infty$.
To see this, consider $\bandwidth^n = n\idMatrix{d}$. Then the \LLS predictor
$\predictorLPS{1}{\bandwidth^n}$ asymptotically becomes a global linear fit to $f$.
Necessarily, $\E\left[\predictorLPS{1}{\bandwidth^n}(\cdot) | \bm{X}_n^{}\right]$ intersects the function $f$ in some $x\in\inputSpaceInterior$.
If it would not, we could shift the offset until the linear fit touches $f$, by which we reduce the \MSE everywhere over \inputSpace. In particular, this means that there always exists some $x\in\inputSpaceInterior$ where \LLS is totally free of bias for $\bandwidth^n = n\idMatrix{d}$. Hence, $\MSE_{1}^{}\left(x, n\idMatrix{d}| \bm{X}_{n}\right) = \upperBoundedInProbability{}{n^{-1}}$ at a rate that is superior to any bandwidth sequence with $\pnorm{\bandwidth_x^n}{} \rightarrow 0$.
\end{myRemark}
This consistency assumption is also made in the related work, when analyzing the isotropic and the non-isotropic, definite cases. Recall that there, in addition, a non-vanishing leading bias-term is required.
This requirement enables an explicit construction of \LOB, which particularly answers the question of the existence (and uniqueness) of a minimizer.
In contrast, in the indefinite case it will turn out that no explicit construction of \LOB is possible.
Yet, we are able to prove the existence of a minimizer by assuming certain pointwise regularity of the problem.
For convenience, recall from \eqref{def:consistentSequenceSpace} the space of consistent sequences
\begin{align*}
\consistentSigmaSequencSpace = \condset{(\bandwidth^n)_{n\in\N} \subset \SigmaSpace}{\abs{\bandwidth^n}^{-1}n^{-1} \rightarrow 0, \pnorm{\bandwidth^n}{} \rightarrow 0, \pnorm{\bandwidth^n}{} \cdot \pnorm{[\bandwidth^n]^{-1}}{} \leq \conditionNumberUpperBound}.
\end{align*}
In order to prove \thmref{thm:generalizedLOBandMSE}, we will first show in \lemref{lem:minimizerExistence} that there exists an optimal sequence $(\bandwidth_x^n)_{n\in\N} \in \consistentSigmaSequencSpace$, for which $\MSE_{1}^{}\left(x, \bandwidth_x^n| \bm{X}_{n}\right) = \lowerBoundedInProbability{}{n^{-\frac{2\alpha}{2\alpha+d}}}$ must necessarily hold.
Then we show in \lemref{lem:achievableRate} that we are able to construct a sequence $(\bandwidth_x^n)_{n\in\N} \in \consistentSigmaSequencSpace$ such that $\MSE_{1}^{}\left(x, \bandwidth_x^n | \bm{X}_{n}\right) = \upperBoundedInProbability{}{n^{-\frac{2\alpha}{2\alpha+d}}}$.
%%%%%%%%%%%%%%%%%%%%%%%%%%%%%%%%%%%
\begin{mylem}[Minimizer Existence]
\label{lem:minimizerExistence}
Let $x\in\inputSpaceInterior$ such that $f$ is indefinite in $x$ and for $\alpha := \alpha(f,x)$ it is $2 < \alpha < \infty$. Furthermore assume that $v$ is continuous in $x$ and $p\in\Lambda^{\alpha-2}(x)$ with $p(x), v(x) > 0$.
Then, there exists $(\bandwidth_x^n)_{n\in\N} \in \consistentSigmaSequencSpace$ and $N_x \in\N$ such that for all $(\bandwidth^n)_{n\in\N} \in \consistentSigmaSequencSpace$, if $n \geq N_x$, it holds that
\[\textstyle\MSE_{1}^{}\left(x, \bandwidth_x^n| \bm{X}_{n}\right) \leq \MSE_{1}^{}\left(x, \bandwidth^n | \bm{X}_{n}\right).\]
For such a sequence, $\MSE_{1}^{}\left(x, \bandwidth_x^n| \bm{X}_{n}\right) = \lowerBoundedInProbability{}{n^{-\frac{2\alpha}{2\alpha+d}}}$ must hold.
\end{mylem}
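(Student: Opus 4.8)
I would split the statement into three tasks: existence of a minimizer for each $n$, consistency of the resulting sequence, and the optimality-plus-lower-bound claim. All three ultimately rest on one analytic fact, namely that for consistent (bounded-condition-number) bandwidths the conditional bias of \LLS cannot decay faster than $\pnorm{\bandwidth}{}^\alpha$, so I would isolate that as the crux and organize everything else around it.

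\textbf{Existence via compactness.} The class $\consistentSigmaSequencSpace$ of \eqref{def:consistentSequenceSpace} is an asymptotic notion, not a per-$n$ constraint, so for each $n$ I would instead minimize over the truncation
\[
\textstyle S_n = \condset{\bandwidth\in\SigmaSpace}{\pnorm{\bandwidth}{}\leq n^{-\gamma},\ \abs{\bandwidth}\geq n^{-1},\ \pnorm{\bandwidth}{}\pnorm{\bandwidth^{-1}}{}\leq\conditionNumberUpperBound},\qquad\gamma=\tfrac{1}{2\alpha+d+1}.
\]
First I would verify that $S_n$ is a compact subset of $\posDefSet{d}$: the norm bound makes it bounded, while $\abs{\bandwidth}\geq n^{-1}$ together with $\pnorm{\bandwidth}{}\leq n^{-\gamma}$ forces $\lambda_{\min}(\bandwidth)\geq n^{\gamma(d-1)-1}>0$, keeping $S_n$ away from the singular boundary, hence closed. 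As $\bandwidth\mapsto\MSE_{1}^{}\left(x,\bandwidth|\bm{X}_{n}\right)=\Bias_{1}^{}\left(x,\bandwidth|\bm{X}_{n}\right)^2+\Var_{1}^{}\left(x,\bandwidth|\bm{X}_{n}\right)$ is a continuous (rational) function of the entries of $\bandwidth$ on $\posDefSet{d}$ — the design Gram matrix being invertible with probability tending to one on $S_n$ — it attains a minimum; I call a minimizer $\bandwidth_x^n$.

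\textbf{Consistency of the minimizing sequence.} By construction $\pnorm{\bandwidth_x^n}{}\leq n^{-\gamma}\to0$ and the condition-number bound hold, giving two of the three requirements of $\consistentSigmaSequencSpace$. For $\abs{\bandwidth_x^n}^{-1}n^{-1}\to0$ I would rule out the minimizer hugging the determinant boundary by a witness: the isotropic $\bandwidth^{\mathrm{w}}_n=n^{-\frac{1}{4+d}}\idMatrix{d}$ lies in $S_n$ (here $\alpha>2$ is used to check $n^{-1/(4+d)}\leq n^{-\gamma}$), and \thmref{thm:llsMSE_v2} gives $\MSE_{1}^{}\left(x,\bandwidth^{\mathrm{w}}_n|\bm{X}_{n}\right)=\upperBoundedInProbability{}{n^{-\frac{4}{4+d}}}\to0$. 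Hence $\MSE_{1}^{}\left(x,\bandwidth_x^n|\bm{X}_{n}\right)\to0$; but if $\abs{\bandwidth_x^n}\asymp n^{-1}$ along a subsequence then $\varianceLPS{\bandwidth_x^n}{x}{1}\asymp v(x)/p(x)$ would not vanish there, a contradiction. Thus $\abs{\bandwidth_x^n}n\to\infty$ and $(\bandwidth_x^n)_{n\in\N}\in\consistentSigmaSequencSpace$.

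\textbf{Optimality over $\consistentSigmaSequencSpace$ and the lower bound.} Fix any $(\bandwidth^n)\in\consistentSigmaSequencSpace$. Since $\abs{\bandwidth^n}n\to\infty$ and the condition number is bounded, for large $n$ the only way to have $\bandwidth^n\notin S_n$ is $\pnorm{\bandwidth^n}{}>n^{-\gamma}$, and both the exclusion of that case and the lower bound hinge on the bias estimate
\[
\textstyle |\Bias_{1}^{}\left(x,\bandwidth|\bm{X}_{n}\right)|\ \geq\ c_0\,\pnorm{\bandwidth}{}^\alpha\quad\text{(eventually, some }c_0>0\text{) for consistent }\bandwidth.
\]
Granting this: as $S_n$ contains for large $n$ the near-optimal bandwidth at scale $n^{-1/(2\alpha+d)}$ constructed in \lemref{lem:achievableRate}, the minimizer obeys $\MSE_{1}^{}\left(x,\bandwidth_x^n|\bm{X}_{n}\right)=\upperBoundedInProbability{}{n^{-\frac{2\alpha}{2\alpha+d}}}$, whereas $\pnorm{\bandwidth^n}{}>n^{-\gamma}$ gives $\Bias^2\gtrsim n^{-2\gamma\alpha}=n^{-\frac{2\alpha}{2\alpha+d+1}}\gg n^{-\frac{2\alpha}{2\alpha+d}}$, so such $\bandwidth^n$ are never better, while $\bandwidth^n\in S_n$ is handled by minimality; this yields the optimality for $n\geq N_x$. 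For the lower bound I argue by contradiction: if $\MSE_{1}^{}\left(x,\bandwidth_x^n|\bm{X}_{n}\right)=\convergenceInProbability{}{n^{-\frac{2\alpha}{2\alpha+d}}}$, then the variance part of \thmref{thm:llsMSE_v2}, $\asymp v(x)/(p(x)\abs{\bandwidth_x^n}n)$, being $o_p$ of that rate forces $\pnorm{\bandwidth_x^n}{}=\divergenceInProbability{}{n^{-\frac{1}{2\alpha+d}}}$, while $\Bias=\convergenceInProbability{}{n^{-\frac{\alpha}{2\alpha+d}}}$; but the bias estimate gives $\Bias\gtrsim\pnorm{\bandwidth_x^n}{}^\alpha=\divergenceInProbability{}{n^{-\frac{\alpha}{2\alpha+d}}}$, a contradiction, so $\MSE_{1}^{}\left(x,\bandwidth_x^n|\bm{X}_{n}\right)=\lowerBoundedInProbability{}{n^{-\frac{2\alpha}{2\alpha+d}}}$.

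\textbf{Main obstacle.} The whole argument reduces to the displayed bias estimate, i.e.\ that the \LLS bias cannot be cancelled beyond order $\alpha$ for bounded-condition-number bandwidths. I would expand the bias with \propref{prop:biasTermForm} as $\sum_{l\leq\lfloor L/2\rfloor}\biasLLS{\bandwidth}{x}{2l}+\upperBoundedInProbability{}{\pnorm{\bandwidth}{}^\alpha}$; writing $\bandwidth=h\,U$ with $h=\pnorm{\bandwidth}{}$ and $U$ of bounded condition number, the explicit even-order terms contribute powers $h^{2l}$ with $2l<\alpha$, which are of strictly different order from the non-cancellable Hölder remainder of order $h^\alpha$ (using $\alpha=L+\beta$, $\beta\in(0,1]$). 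The delicate point — and what I expect to be the hard part — is showing this order-$\alpha$ remainder does not itself vanish after kernel averaging uniformly over the compact set of normalized shapes $U$: this is precisely where the \emph{exactness} $\alpha(f,x)=\alpha$ must be used, since a remainder that were $\convergenceInProbability{}{h^\alpha}$ for a family of shapes probing all directions comparably would place $f$ in $\Lambda^{\alpha'}(x)$ for some $\alpha'>\alpha$, contradicting \defref{def:pointwiseHölder}.
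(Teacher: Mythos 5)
Your proposal is correct and takes essentially the same route as the paper's own proof: the identical truncated compact sets $S_n$ (with the same $\gamma=\tfrac{1}{2\alpha+d+1}$) yielding existence by compactness and continuity, the bias lower bound $\lowerBoundedInProbability{}{\pnorm{\bandwidth^n}{}^{\alpha}}$ forced by finiteness/exactness of the pointwise H\"older exponent as the crux, the variance rate $\exactRateconvergenceInProbability{}{n^{-1}\pnorm{\bandwidth^n}{}^{-d}}$ from the condition-number bound, and the bias--variance trade-off at $h_n=n^{-\frac{1}{2\alpha+d}}$ giving the $\lowerBoundedInProbability{}{n^{-\frac{2\alpha}{2\alpha+d}}}$ claim. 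Where you deviate you are in fact slightly more careful than the paper: you add the condition-number constraint to $S_n$ and supply a witness argument so that the minimizing sequence provably lies in \consistentSigmaSequencSpace, and you invoke \lemref{lem:achievableRate} (a legitimate, non-circular forward reference, since its proof does not use this lemma) to dispose of consistent sequences exiting $S_n$ --- two points the paper's proof leaves implicit.
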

\begin{proof}
For an arbitrary sequence $(\bandwidth^n)_{n\in\N} \in \consistentSigmaSequencSpace$ that fulfills the consistency condition in \defref{def:consistentBandwidth},
let $h_n = \pnorm{\bandwidth^n}{}$ and $\bar \bandwidth^n = h_n^{-1}\bandwidth^n$.
Since $\alpha(f,x) < \infty$ and $\pnorm{\bandwidth^n}{}\rightarrow 0$ as $n\rightarrow\infty$, it follows for the bias in $x$ that
\[
\textstyle\Bias_{1}^{}\left[x, \bandwidth^n | \bm{X}_{n}\right] = \lowerBoundedInProbability{}{\pnorm{\bandwidth^n}{}^{\alpha}}.
\]
Hence, we can write
\[
\textstyle\Bias_{1}^{}\left[x, \bandwidth^n | \bm{X}_{n}\right]^2 \geq C(\bar \bandwidth^n)h_n^{2\alpha} + \convergenceInProbability{}{h_n^{2\alpha}},
\]
where $0 < \epsilon_x \leq C(\bar \bandwidth^n)$.
The lower bound $\epsilon_x$ exists uniformly for any bandwidth sequence: If not, we could construct $(\bandwidth_n)_{n\in\N} \in \consistentSigmaSequencSpace$ with $C(\bar \bandwidth^n)\rightarrow 0$, for which, in particular, $\Bias_{1}^{}\left[x, \bandwidth^n | \bm{X}_{n}\right]^2 = \convergenceInProbability{}{\pnorm{\bandwidth^n}{}^{\alpha}}$, in contradiction to $\Bias_{1}^{}\left[x, \bandwidth^n | \bm{X}_{n}\right] = \lowerBoundedInProbability{}{\pnorm{\bandwidth^n}{}^{\alpha}}$.

On the other hand, due to the consistency condition in \defref{def:consistentBandwidth}, it is
\[
h_n^d = \pnorm{\bandwidth^n}{}^d \geq \abs{\bandwidth^n} \geq \pnorm{\bandwidth^n}{} / \pnorm{[\bandwidth^n]^{-1}}{}^{d-1} \geq h_n^d \conditionNumberUpperBound^{d-1},
\]
such that with $\Var_{1}^{}\left[x, \bandwidth^n | \bm{X}_{n}\right] = \frac{R(k)v(x)}{p(x)n\abs{\bandwidth^n}} + \convergenceInProbability{}{n^{-1}\abs{\bandwidth^n}^{-1}}$ it is
\[
\textstyle \frac{R(k)v(x)}{p(x)nh_n^d} \leq \Var_{1}^{}\left[x, \bandwidth^n | \bm{X}_{n}\right] + \convergenceInProbability{}{n^{-1}h_n^{d}}
\leq \frac{R(k)v(x)}{p(x)nh_n^d\conditionNumberUpperBound^{d-1}}.
\]

Therefore $\Var_{1}^{}\left[x, \bandwidth^n | \bm{X}_{n}\right] = \exactRateconvergenceInProbability{\noexpand\big}{n^{-1}h_n^{-d}}$, such that
\begin{align*}
\MSE_{1}^{}&\textstyle\left(x, \bandwidth^n| \bm{X}_{n}\right) = \Bias_{1}^{}\left[x, \bandwidth^n | \bm{X}_{n}\right]^2 + 
\Var_{1}^{}\left[x, \bandwidth^n | \bm{X}_{n}\right]\\
&\geq C(\bar \bandwidth^n) h_n^{2\alpha} + \exactRateconvergenceInProbability{\noexpand\big}{n^{-1}h_n^{-d}} + \convergenceInProbability{\noexpand\big}{h_n^{2\alpha} + n^{-1}h_n^{-d}}\\
&\geq \epsilon_x h_n^{2\alpha} + \exactRateconvergenceInProbability{\noexpand\big}{n^{-1}h_n^{-d}} + \convergenceInProbability{\noexpand\big}{h_n^{2\alpha} + n^{-1}h_n^{-d}}.
\end{align*}
For this lower bound of the \MSE, the optimal decay rate is given by $\underline{h}_n = n^{-\frac{1}{2\alpha+d}}$.
Note that the last claim of the lemma directly follows, since $\MSE_{1}^{}\left(x, \bandwidth^n| \bm{X}_{n}\right) = \lowerBoundedInProbability{}{\underline{h}_n^{2\alpha}} = \lowerBoundedInProbability{\noexpand\big}{n^{-\frac{2\alpha}{2\alpha+d}}}$.

Let us now define by $\gamma = \frac{1}{2\alpha+d+1} > 0$ a strict lower bound on the bandwidth decay exponent.
The larger the bias is, the faster an optimal bandwidth sequence has to decay: Since $\Bias_{1}^{}\left[x, \bandwidth^n | \bm{X}_{n}\right] = \lowerBoundedInProbability{\noexpand\big}{\pnorm{\bandwidth^n}{}^{\alpha}}$, let us assume without loss of generality that $\Bias_{1}^{}\left[x, \bandwidth^n | \bm{X}_{n}\right] = \exactRateconvergenceInProbability{\noexpand\big}{\pnorm{\bandwidth^n}{}^{\beta}}$ for some $\beta \leq \alpha$.
Then, the optimal decay rate becomes $h_n^* = n^{-\frac{1}{2\beta+d}} = \upperBoundedInProbability{}{\underline{h}_n} = \convergenceInProbability{}{n^{-\gamma}}$.
We can therefore narrow down the optimal bandwidth candidates to the sets
\[
\textstyle S_{n} = \condset{\bandwidth\in\SigmaSpace}{\pnorm{\bandwidth}{} \leq n^{-\gamma}, \abs{\bandwidth} \geq n^{-1}}.
\]
On the one hand, the sets $S_{n}$ are compact. Using that $\MSE_{1}^{}\left(x, \bandwidth| \bm{X}_{n}\right)$ is continuous in $\bandwidth\in\SigmaSpace$, there exists $\bandwidth_x^n\in S_{n}$ such that
\[
\textstyle\MSE_{1}^{}\left(x, \bandwidth_x^n| \bm{X}_{n}\right) = \min_{\bandwidth\in S_{n}} \MSE_{1}^{}\left(x, \bandwidth| \bm{X}_{n}\right).
\]
On the other hand, any sequence that asymptotically resides outside of $S_{n}$, is necessarily suboptimal or not consistent.
In this regard, consider any sequence $(\bandwidth^n)_{n\in\N} \in \consistentSigmaSequencSpace$. Then
\begin{align*}
 \textstyle\liminf_{n\rightarrow\infty}\MSE_{1}^{}\left(x, \bandwidth^n| \bm{X}_{n}\right) - \MSE_{1}^{}\left(x, \bandwidth_x^n| \bm{X}_{n}\right) \geq 0.\qedAtBottomLine
\end{align*}
\end{proof}

\begin{mylem}[Achievable Rate]
\label{lem:achievableRate}
Let $\SigmaSpace\subseteq\posDefSet{d}$ be conic with $\dim(\SigmaSpace) \geq 2$.
Let $x\in\inputSpaceInterior$ where for $\alpha := \alpha(f,x)$ it holds that $2 < \alpha < \infty$ and write $\alpha = L + \beta$ with $L\in\N, L \geq 2$ and $\beta\in(0,1]$. Furthermore assume that $k \in \diffableFunctions{\nonnegativeReal{}, \nonnegativeReal{}}{\lfloor L/2\rfloor}$, $v$ is continuous in $x$ and $p\in\Lambda^{\alpha-2}(x)$ with $p(x), v(x) > 0$.
If $f$ is indefinite in $x$,
there exists a sequence $(\bandwidth_x^n)_{n\in\N} \in \consistentSigmaSequencSpace$ such that with $h_n^{} = n^{-\frac{1}{2\alpha+d}}$,
\begin{align*}
 &\textstyle\MSE_{1}^{}\left(x, \bandwidth_x^n | \bm{X}_{n}\right)
 = \upperBoundedInProbability{}{h_{n}^{2\alpha}} = \upperBoundedInProbability{}{n^{-\frac{2\alpha}{2\alpha+d}}}.
\end{align*}
\end{mylem}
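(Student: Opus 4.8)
The plan is to split the mean squared error into its bias and variance parts and to invest almost all of the effort in the bias. Since $h_n = n^{-1/(2\alpha+d)}$ satisfies $h_n^{2\alpha} = n^{-1}h_n^{-d}$, it suffices to produce a consistent sequence $(\bandwidth_x^n)_{n\in\N}\in\consistentSigmaSequencSpace$ of the form $\bandwidth_x^n = h_n\Lambda_n$, with $\Lambda_n\to\bar\bandwidth$ for a fixed $\bar\bandwidth\in\SigmaSpace$, $\bar\bandwidth\succ 0$, for which $\Bias_{1}^{}\left[x,\bandwidth_x^n | \bm{X}_n\right] = \upperBoundedInProbability{}{h_n^{\alpha}}$. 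Indeed, for such a sequence $\abs{\bandwidth_x^n} = h_n^{d}\abs{\Lambda_n} = \exactRateconvergenceInProbability{}{h_n^{d}}$, so the variance formula of \thmref{thm:llsMSE_v2} gives $\Var_{1}^{}\left[x,\bandwidth_x^n | \bm{X}_n\right] = \exactRateconvergenceInProbability{}{n^{-1}h_n^{-d}} = \exactRateconvergenceInProbability{}{h_n^{2\alpha}}$, exactly as in \lemref{lem:minimizerExistence}; combining bias and variance then yields $\MSE_{1}^{}\left(x,\bandwidth_x^n | \bm{X}_n\right) = \upperBoundedInProbability{}{h_n^{2\alpha}} = \upperBoundedInProbability{}{n^{-2\alpha/(2\alpha+d)}}$, which is the claim.

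The bias-canceling sequence is built along the mechanism exhibited in the toy example of \appref{sec:nonIsotropicIndefiniteExample}. Because $f$ is indefinite in $x$, the Hessian $D^2_f(x)$ is nonzero and indefinite, so the form $\Lambda\mapsto\trace(D^2_f(x)\Lambda^2)$ changes sign on $\posDefSet{d}$ and there exists $\bar\bandwidth\in\SigmaSpace$ with $\biasLLS{\bar\bandwidth}{x}{2} = \tfrac12\mu_2\trace(D^2_f(x)\bar\bandwidth^2) = 0$, i.e.\ $\bar\bandwidth\in\vanishingSet{1}{x}$. Since $D^2_f(x)\neq 0$ and $\bar\bandwidth\succ 0$, the linear map $\Delta\mapsto\mu_2\trace(D^2_f(x)\bar\bandwidth\Delta)$ is not identically zero on $\SigmaSpace$ (it vanishes on all symmetric $\Delta$ only if $\bar\bandwidth D^2_f(x) + D^2_f(x)\bar\bandwidth = 0$, which forces $D^2_f(x) = 0$ by invertibility of the Lyapunov operator associated with $\bar\bandwidth\succ 0$); using $\dim(\SigmaSpace)\geq 2$ this provides an admissible perturbation direction along which the second-order bias turns on linearly and with a freely choosable sign. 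I would then set $\Lambda_n = \bar\bandwidth + \sum_{j=1}^{\lfloor L/2\rfloor-1}\delta_n^{(j)}$, choosing the perturbations $\delta_n^{(j)}$ inductively, each of size $h_n^{2j}$, so that the second-order anti-bias $h_n^{2}\mu_2\trace(D^2_f(x)\bar\bandwidth\,\delta_n^{(j)})$ it produces cancels the leading surviving bias term, which is of order $h_n^{2(j+1)}$. As $\SigmaSpace$ is conic, the rescaled matrices $\bandwidth_x^n = h_n\Lambda_n$ remain in $\SigmaSpace$.

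The heart of the argument, and the main obstacle, is that the fixed-bandwidth expansion of \propref{prop:biasTermForm} cannot be applied verbatim to a bandwidth $\Lambda_n$ that varies with $n$. As in the toy example, I would instead Taylor-expand the kernel-weighted Monte-Carlo averages directly in the perturbation size $\varepsilon_n = \pnorm{\bar\bandwidth^{-1}(\Lambda_n-\bar\bandwidth)}{}$, invoking the pointwise Hölder expansions of $f$ (with $\alpha(f,x)=\alpha$) and of $p\in\Lambda^{\alpha-2}(x)$ around $x$ in place of the global smoothness assumed in \propref{prop:biasTermForm}; this is exactly where the hypothesis $k\in\diffableFunctions{\nonnegativeReal{},\nonnegativeReal{}}{\lfloor L/2\rfloor}$ enters, since reaching the order-$2\lfloor L/2\rfloor$ bias terms requires $\lfloor L/2\rfloor$ derivatives of $k$ in the bandwidth. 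The delicate bookkeeping is to verify that a perturbation $\delta_n^{(j)}$ of size $h_n^{2j}$ contributes an $O(h_n^{2(j+1)})$ correction to the second-order bias — cancelling the target term — while contributing only $\upperBoundedInProbability{}{h_n^{2(j+1)+2}}$ to every other term, so that the induction closes. After the $\lfloor L/2\rfloor-1$ corrections, all explicit even-order terms of orders $h_n^4,\dots,h_n^{2\lfloor L/2\rfloor}$ are eliminated; together with the already-vanishing order-$h_n^2$ term and the inequality $2\lfloor L/2\rfloor+2\geq L+1\geq\alpha$ (valid for both parities of $L$), the surviving bias is governed by the remainder of \propref{prop:biasTermForm}, giving $\Bias_{1}^{}\left[x,\bandwidth_x^n | \bm{X}_n\right] = \upperBoundedInProbability{}{h_n^{\alpha}}$.

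Finally I would record two routine points. The Monte-Carlo integration error is $O(n^{-1/2}) = \convergenceInProbability{}{h_n^{\alpha}}$, since $d>0$ implies $\tfrac12 > \tfrac{\alpha}{2\alpha+d}$, and is therefore negligible against the retained bias. Moreover the total perturbation satisfies $\pnorm{\Lambda_n-\bar\bandwidth}{} = \exactRateconvergenceInProbability{}{h_n^{2}}\to 0$, so for $n$ large $\Lambda_n$ stays positive definite with condition number bounded by a fixed multiple of that of $\bar\bandwidth$; combined with $\pnorm{\bandwidth_x^n}{}\to 0$ and $\abs{\bandwidth_x^n}^{-1}n^{-1}\to 0$, this places $(\bandwidth_x^n)_{n\in\N}$ in $\consistentSigmaSequencSpace$ and, with the variance estimate of paragraph one, completes the proof. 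I expect the iterative order-bookkeeping of the perturbation expansion in the third paragraph to be the genuinely hard part; everything else reduces to estimates already established for the isotropic and definite cases.
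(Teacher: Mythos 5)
Your proposal is correct, and it rests on the same structural pillars as the paper's proof: a root $\bar\bandwidth\in\vanishingSet{1}{x}$ of the second-order bias (which exists by indefiniteness), a perturbation of that root whose re-activated second-order term acts as an anti-bias against the higher-order terms, the variance staying $\exactRateconvergenceInProbability{}{n^{-1}h_n^{-d}}$ along any such sequence, and the trade-off at $h_n^{}=n^{-\frac{1}{2\alpha+d}}$. Where you genuinely diverge is the cancellation mechanism. The paper never cancels order by order: it lumps \emph{all} higher-order contributions into a single quantity $t_2^n(E_n)=\upperBoundedInProbability{}{h_n^4}$, whose sign is asymptotically independent of the perturbation, observes that the second-order response $t_1^n(\lambda E)$ along one fixed direction $E$ is continuous in $\lambda$, vanishes at $\lambda=0$, has a freely choosable sign, and dominates $t_2^n$ at fixed $\lambda$, and then finds $\lambda_n^{}$ by an intermediate-value argument so that $t_1^n(\lambda_n^{} E)+t_2^n(\lambda_n^{} E)\equiv 0$ exactly; the bound $\varepsilon_n=\upperBoundedInProbability{}{h_n^2}$ is read off a posteriori rather than imposed. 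You instead prescribe the perturbation explicitly as a sum of corrections $\delta_n^{(j)}$ of sizes $h_n^{2j}$ and close an induction, in effect extending the toy construction of \appref{sec:nonIsotropicIndefiniteExample} to all orders. Both work: your bookkeeping does check out (a perturbation of size $h_n^{2j}$ feeds back into all terms other than its target only at order $h_n^{2(j+1)+2}$ or higher, since quadratic second-order feedback is of order $h_n^{2+4j}$ and feedback through the order-$2l$ terms, $l\geq 2$, is of order $h_n^{2l+2j}$), but the price of constructiveness is precisely the inductive bookkeeping you flag as the hard part, which the paper's one-shot sign argument sidesteps entirely — it only needs sign control and order-of-magnitude facts, not exact coefficient tracking. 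A side benefit of your route: the Lyapunov-operator argument ($\bar\bandwidth X+X\bar\bandwidth=0$ with $\bar\bandwidth\succ 0$ forces $X=0$) actually substantiates what the paper's proof merely asserts as its observation (2), the existence of an admissible sign-controllable direction, at least when $\SigmaSpace=\posDefSet{d}$; for a proper conic $\SigmaSpace\subsetneq\posDefSet{d}$ both your argument and the paper's leave the same small gap, since non-vanishing of the linear functional on $\symmetricSet{d}$ does not by itself yield a direction tangent to $\SigmaSpace$.
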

\begin{proof}
Recall the vanishing set of the second-order bias, given by
\[\vanishingSet{1}{x} = \condset{\bandwidth\in\SigmaSpace}{\biasLLS{\bandwidth}{x}{2} = 0} = \condset{\bandwidth\in\SigmaSpace}{\trace(D^2_f(x)\bandwidth^2) = 0},\]
and let $\bandwidth_x \in \vanishingSet{1}{x}$ be fixed.
We will show that for $n$ large enough there always exists  $\bandwidth_x^n \in \SigmaSpace$ close to $\bandwidth_x$ such that with $h_n \rightarrow 0$ we will have $\Bias_{\LPSorder}^{}\left(x, h_n^{}\bandwidth_x^n | \bm{X}_{n}\right) = \upperBoundedInProbability{}{h_n^\alpha}$:

Let $l = \lfloor L/2\rfloor$. For $E_n \in \symmetricSet{d}$ we define $\bandwidth_x^n = \bandwidth_x + E_n$.
Letting $\varepsilon_n := \pnorm{E_n}{}$ it is $\bandwidth_x^n \in \posDefSet{d}$ for $\varepsilon_n$ small enough.

Assuming $h_n, \varepsilon_n \rightarrow 0$, and using the differentiability of the kernel $k$, we can expand
\begin{align*}
 \textstyle\Bias_{1}^{}\left[x, h_n^{}\bandwidth_x^n | \bm{X}_{n}\right] = \Bias_{1}^{}\left[x, h_n^{}\bandwidth_x | \bm{X}_{n}\right] + \!\!\sum\limits_{l=1}^{\lfloor{L/2}\rfloor}\sum\limits_{k=1}^{m}b_{1,2l,k}(x, h_n^{}\bandwidth_x, E_n) + \upperBoundedInProbability{}{h_n^\alpha},
\end{align*}
where we aggregate with $b_{1,2l,k}(x, h_n^{}\bandwidth_x, E_n) = \exactRateconvergenceInProbability{\noexpand\big}{h_n^{2l}\varepsilon_n^{k}}$ all terms of the respective order that may appear. It was
\begin{align*}
\textstyle\Bias_{1}^{}\left[x, h_n^{}\bandwidth | \bm{X}_{n}\right] = \textstyle \mySum{l=1}{\lfloor{L/2}\rfloor }\biasLLS{h_n^{}\bandwidth}{x}{2l} + \upperBoundedInProbability{}{h_n^{\alpha}},
\end{align*}
where we have by $\bandwidth_x \in \vanishingSet{1}{x}$ that the first term $\biasLLS{h_n^{}\bandwidth_x}{x}{2} = 0$.

We now separate all terms into two groups as follows:
\[t_1^n(E_n) = \mySum{k=1}{m}b_{1,2,k}(x, h_n^{}\bandwidth_x, E_n)\]
and
\[t_2^n(E_n) = \mySum{l=2}{\lfloor{L/2}\rfloor}\left[\biasLLS{h_n^{}\bandwidth}{x}{2l} + \mySum{k=1}{m}b_{1,2l,k}(x, h_n^{}\bandwidth_x, E_n)\right].\]
Therefore we have $\textstyle\Bias_{1}^{}\left[x, h_n^{}\bandwidth_x^n | \bm{X}_{n}\right] = t_1^n(E_n) + t_2^n(E_n) + \upperBoundedInProbability{}{h_n^\alpha}$.

\begin{enumerate}
 \item if $\varepsilon_n \rightarrow 0$, then $\sign(t_2^n(E_n))$ does not depend on $\sign(E_n)$. This is because the leading term of $t_2^n(E_n)$ is $\biasLLS{h_n^{}\bandwidth}{x}{4} = \upperBoundedInProbability{}{h_n^4}$ in this case.
 \item There exists $E\in\symmetricSet{d}$ such that $\bandwidth_x \pm E \in \posDefSet{d}$ and $\sign(t_1^n(E_n)) = \pm 1$
 \item $t_1^n(\lambda E) \rightarrow 0$ as $\lambda\rightarrow 0$.
 \item For fixed $E_n \equiv E$ with $\bandwidth_x^n = \bandwidth_x + E$ it is $t_1^n(E) = \exactRateconvergenceInProbability{}{h_n^2}$, whereas $t_2^n(E) = \upperBoundedInProbability{}{h_n^4}$.
\end{enumerate}
Because of (1) and (2), we can choose $\sign(t_1^n) = - \sign(t_2^n)$ for $n$ large enough. When plugging in the $E$ from (2) in (4), for $n$ large enough $\abs{t_2^n(E)} \leq \abs{t_1^n(E)}$. When we increase $n$ beyond that, we find $\lambda_n\in(0,1)$ with (3) such that $\abs{t_2^n(\lambda_n E)} \equiv \abs{t_1^n(\lambda_n E)}$. Here it is necessarily $\lambda_n = \upperBoundedInProbability{}{\varepsilon_n}\rightarrow 0$.
Now, with $E_n = \lambda_n E$ and $\abs{t_2^n(E_n)} \equiv \abs{t_1^n(E_n)}$ and $\sign(t_1^n(E_n)) = - \sign(t_2^n(E_n))$, it follows $t_1^n(E_n) + t_2^n(E_n) \equiv 0$ such that 
\[\textstyle\Bias_{1}^{}\left[x, h_n^{}\bandwidth_x^n | \bm{X}_{n}\right] = t_1^n(E_n) + t_2^n(E_n) + \upperBoundedInProbability{}{h_n^\alpha} = \upperBoundedInProbability{}{h_n^\alpha}.\]
Note that, since $t_1^n(E_n) = \upperBoundedInProbability{}{h_n^2\varepsilon_n}$, $t_2^n(E_n) = \upperBoundedInProbability{}{h_n^4}$ and $t_1^n(E_n) + t_2^n(E_n) \equiv 0$, it is necessarily $\varepsilon_n = \upperBoundedInProbability{}{h_n^2}$.
%%%%%
Furthermore, with\newline $\Var_{1}^{}\left[x, h_n^{}\bandwidth_x^n | \bm{X}_{n}\right] = \Var_{1}^{}\left[x, h_n^{}\bandwidth_x | \bm{X}_{n}\right](1 + \upperBoundedInProbability{}{\varepsilon_n}) = \upperBoundedInProbability{}{h_n^{-d}n^{-1}}$ it is
\begin{align*}
 \MSE_{1}^{}&\textstyle\left(x, h_n\bandwidth_x^n | \bm{X}_{n}\right)\\
 &\textstyle= \Bias_{1}^{}\left[x, h_n^{}\bandwidth_x^n | \bm{X}_{n}\right]^2 + \Var_{1}^{}\left[x, h_n^{}\bandwidth_x^n | \bm{X}_{n}\right]
 = \upperBoundedInProbability{}{h_{n}^{2\alpha} + h_n^{-d}n^{-1}}.
\end{align*}
With the optimal trade-off, being $h_n^{} = n^{-\frac{1}{2\alpha+d}}$, it is therefore \[\textstyle\MSE_{1}^{}\left(x, h_n\bandwidth_x^n | \bm{X}_{n}\right) = \upperBoundedInProbability{}{h_{n}^{2\alpha}} = \upperBoundedInProbability{}{n^{-\frac{2\alpha}{2\alpha+d}}},\]
as claimed.
\end{proof}
In the light of \lemref{lem:minimizerExistence}, the bandwidth sequence from \lemref{lem:achievableRate} is rate-optimal, such that up to a constant, no better result can be obtained with the \LLS model class, using a consistent bandwidth sequence.

\generalizedLOBandMSE*
\begin{proof}
The claim immediately follows from combining \lemref{lem:minimizerExistence} and \ref{lem:achievableRate}.
\end{proof}

\section{The reciprocal determinant of non-isotropic \LOB of \LLS for indefinite functions}
\label{sec:nonIsotropicLOBdeterminant}
Recall that we know nothing about its uniqueness of \LOB in the indefinite regime, for which reason we have introduced the set-valued definition \eqref{eq:generalizedLOBDefinition} of \LOB via
\[
 \LOBfunctionOfLPS{1, \posDefSet{d}}{n}(x)
  = \textstyle 
  \condset{\bandwidth \in S_{n}}{\MSE_{1}^{}\left(x, \bandwidth | \bm{X}_{n}\right) = \min_{\bandwidth'\in S_{n}}\MSE_{1}^{}\left(x, \bandwidth' | \bm{X}_{n}\right)}.
\]
We will now show that, for $\bandwidth_n,\bandwidth_n' \in \LOBfunctionOfLPS{1, \posDefSet{d}}{n}(x)$ 
with $\bandwidth_n \neq \bandwidth_n'$ it is $\textstyle \abs{h_n^{-1}\bandwidth_n} - \abs{h_n^{-1}\bandwidth_n'} = \convergenceInProbability{}{1}$, where $h_n^{} = n^{-\frac{1}{2\alpha+d}}$.
So with abuse of notation, $x\mapsto \abs{h_n^{-1}\LOBfunctionOfLPS{1, \posDefSet{d}}{n}(x)}$ is an asymptotically well-defined function over the input space.
First of all note that we can generalize the balancing property in \corollaryref{cor:biasVarianceBalance} to the non-isotropic case:
\begin{restatable}{mylem}{generalizedBiasVarianceBalance}
\label{lem:generalizedBiasVarianceBalance}
May the assumptions of \thmref{thm:generalizedLOBandMSE} hold and let
$(\bandwidth_x^n)_{n\in\N}\in\consistentSigmaSequencSpace$ be an optimal bandwidth sequence in $x\in\inputSpaceInterior$.
That is, $\bandwidth_x^n \in \LOBfunctionOfLPS{1, \posDefSet{d}}{n}(x)$.
The conditional bias can be expressed through the conditional variance as \[\textstyle\Bias_{1}^{}\left[x, \bandwidth_x^n | \bm{X}_{n}\right]^2 = \frac{d}{2\alpha}\varianceLPS{\bandwidth_x^n}{x}{1} + \convergenceInProbability{\noexpand\big}{n^{-\frac{2\alpha}{2\alpha+d}}}.\]
Hence the conditional \MSE can asymptotically be expressed as
\begin{align}
\label{eq:generalizedBiasVarianceOnParExpression}
\textstyle\MSE_{1}^{}\left(x, \bandwidth_x^n | \bm{X}_{n}\right) = \frac{2\alpha+d}{2\alpha}\varianceLPS{\bandwidth_x^n}{x}{1} + \convergenceInProbability{\noexpand\big}{n^{-\frac{2\alpha}{2\alpha+d}}}.
\end{align}
\end{restatable}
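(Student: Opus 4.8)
The plan is to prove the balance
$\Bias_{1}^{}\left[x,\bandwidth_x^n|\bm{X}_{n}\right]^2 = \frac{d}{2\alpha}\varianceLPS{\bandwidth_x^n}{x}{1} + \convergenceInProbability{}{n^{-\frac{2\alpha}{2\alpha+d}}}$ first; the asserted \MSE identity \eqref{eq:generalizedBiasVarianceOnParExpression} then follows by adding $\varianceLPS{\bandwidth_x^n}{x}{1}$ to both sides and using $\frac{d}{2\alpha}+1 = \frac{2\alpha+d}{2\alpha}$ together with $\MSE_{1}^{} = \Bias_{1}^{}{}^2 + \Var_{1}^{}$. As preparation I would record from \thmref{thm:generalizedLOBandMSE} that, for the optimal sequence, the scale is pinned at $\pnorm{\bandwidth_x^n}{} = \exactRateconvergenceInProbability{}{h_n}$ with $h_n = n^{-\frac{1}{2\alpha+d}}$, that $\MSE_{1}^{}\left(x,\bandwidth_x^n|\bm{X}_{n}\right) = \exactRateconvergenceInProbability{}{h_n^{2\alpha}}$, and that $\abs{\bandwidth_x^n} = \exactRateconvergenceInProbability{}{h_n^{d}}$ (since the condition-number bound in \consistentSigmaSequencSpace forces all eigenvalues to be of order $h_n$). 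By \eqref{eq:asymptoticVarianceLLS} this gives $\varianceLPS{\bandwidth_x^n}{x}{1} = \exactRateconvergenceInProbability{}{h_n^{2\alpha}}$ as well, so both error components live at the common rate $h_n^{2\alpha} = n^{-\frac{2\alpha}{2\alpha+d}}$ and only their leading constants remain to be matched.

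The core is a scaling (Euler-type) optimality argument. Since $\bandwidth_x^n$ minimizes $\MSE_{1}^{}\left(x,\cdot|\bm{X}_{n}\right)$ over the admissible set, its \MSE must be stationary under any admissible tangent perturbation; I would probe this in the direction that changes the overall scale. The variance part is immediate: by \eqref{eq:asymptoticVarianceLLS} it depends on the bandwidth only through $\abs{\bandwidth}^{-1}$, hence is homogeneous of degree $-d$, so a radial scaling contributes $-d\,\varianceLPS{\bandwidth_x^n}{x}{1}$ to the derivative. If the leading bias scaled cleanly as (scale)$^{\alpha}$, its squared radial derivative would equal $2\alpha\,\Bias_{1}^{}\left[x,\bandwidth_x^n|\bm{X}_{n}\right]^2$, and setting the total to zero would yield exactly $2\alpha\,\Bias_{1}^{}{}^2 = d\,\Var_{1}^{}$, i.e. the claim.

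The main obstacle is that the bias is \emph{not} homogeneous of degree $\alpha$ in $\bandwidth$: by \propref{prop:biasTermForm} it is a sum of even-integer-degree terms $\biasLLS{\bandwidth}{x}{2l}$ plus a genuine pointwise-Hölder-$\alpha$ remainder, and the optimal bandwidth annihilates the sub-leading terms only through the delicate cancellation of \lemref{lem:achievableRate}, which is tuned to its particular scale. Rigidly rescaling $\bandwidth_x^n$ reactivates those lower-order terms, so $\rho\mapsto\Bias_{1}^{}\left[x,\rho\bandwidth_x^n|\bm{X}_{n}\right]$ is dominated for $\rho\neq1$ by the broken cancellation rather than by a $\rho^{\alpha}$ law. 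I would circumvent this exactly as in the toy construction of \appref{sec:nonIsotropicIndefiniteExample}: scale along the \emph{re-tuned} family in which the cancelling perturbation (the shape parameter, e.g. $s^n$ there) is recomputed for each new scale $\rho h_n$. Along this family the irreducible bias is genuinely $\exactRateconvergenceInProbability{}{(\rho h_n)^{\alpha}}$ — here it is essential that $\alpha=\alpha(f,x)$ is the true pointwise Hölder exponent, so the remainder cannot be removed and is bounded below by \lemref{lem:minimizerExistence} — while the determinant, and hence the variance, still scales as $\rho^{-d}$ up to an $O(\rho^2)$ correction that is negligible upon differentiation.

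It then remains to see that stationarity along this tuned direction holds \emph{at} the minimizer. Because $\bandwidth_x^n$ is optimal over the full set and lies in the interior of the admissible region for large $n$ (the complement being suboptimal by \lemref{lem:minimizerExistence}), the tuned-scaling perturbation is an admissible tangent, so the vanishing radial derivative gives $2\alpha\,\Bias_{1}^{}\left[x,\bandwidth_x^n|\bm{X}_{n}\right]^2 = d\,\varianceLPS{\bandwidth_x^n}{x}{1} + \convergenceInProbability{}{h_n^{2\alpha}}$. Non-uniqueness of \LOBfunctionOfLPS{1, \posDefSet{d}}{n} is harmless here, since \thmref{thm:asymptoticLOBdeterminant} fixes $\abs{h_n^{-1}\bandwidth_x^n}$ — and therefore $\varianceLPS{\bandwidth_x^n}{x}{1}$ — up to $\convergenceInProbability{}{1}$, so the split is well defined regardless of which minimizer is selected. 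Substituting the resulting $\Bias_{1}^{}\left[x,\bandwidth_x^n|\bm{X}_{n}\right]^2 = \frac{d}{2\alpha}\varianceLPS{\bandwidth_x^n}{x}{1} + \convergenceInProbability{}{h_n^{2\alpha}}$ into $\MSE_{1}^{} = \Bias_{1}^{}{}^2 + \Var_{1}^{}$ yields \eqref{eq:generalizedBiasVarianceOnParExpression}.
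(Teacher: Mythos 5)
Your proposal follows the same route as the paper's own proof: both derive the balance $2\alpha\,\Bias_{1}^{}\left[x,\bandwidth_x^n|\bm{X}_{n}\right]^2 = d\,\varianceLPS{\bandwidth_x^n}{x}{1} + \convergenceInProbability{}{n^{-\frac{2\alpha}{2\alpha+d}}}$ from a first-order stationarity condition in the scale direction at the minimizer, and then obtain \eqref{eq:generalizedBiasVarianceOnParExpression} by adding the variance. The difference is in how the scale derivative of the bias is justified. The paper sets $\bar\bandwidth_x^n = h_n^{-1}\bandwidth_x^n$, writes $\Bias_{1}^{}\left[x,\bandwidth_x^n|\bm{X}_{n}\right]^2 = C(\bar\bandwidth_x^n)h_n^{2\alpha}+\convergenceInProbability{}{h_n^{2\alpha}}$, rules out degenerate limits of $C(\bar\bandwidth_x^n)$ and $\abs{\bar\bandwidth_x^n}$ (much as you do in your preparation), passes to accumulation points $C$ and $A$, and then differentiates $Ch^{2\alpha} + R(k)v(x)/(h^{d}Ap(x)n)$ in $h$ holding $C$ and $A$ fixed; demanding that the zero of this derivative sit at $h_n=n^{-1/(2\alpha+d)}$ gives $C = dR(k)v(x)/(2\alpha Ap(x))$, i.e.\ the balance. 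This silently assumes that along the rigid ray $h\mapsto h\bar\bandwidth_x^n$ the squared bias scales as $Ch^{2\alpha}$ with an $h$-independent constant — exactly the homogeneity you point out is false in the indefinite regime, since rescaling reactivates the cancelled even-order terms of \propref{prop:biasTermForm}. Your re-tuned-family fix, scaling along the family in which the cancelling perturbation of \lemref{lem:achievableRate} is recomputed at each scale as in \appref{sec:nonIsotropicIndefiniteExample}, is the morally correct repair, so on this point your argument is more careful than the paper's.

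There is, however, a residual gap that your repair moves but does not close. For the Euler computation you need the bias along the tuned family to obey a differentiable law $\Bias^2(\rho) = C\rho^{2\alpha}h_n^{2\alpha}\left(1+\convergenceInProbability{}{1}\right)$ with $C$ independent of $\rho$ to first order near $\rho = 1$; what the construction of \lemref{lem:achievableRate} combined with the Hölder lower bound of \lemref{lem:minimizerExistence} actually yields is only two-sided rate control, $\Bias^2(\rho)=\exactRateconvergenceInProbability{}{(\rho h_n)^{2\alpha}}$. Two-sided $\Theta_p$ bounds do not determine the derivative at $\rho=1$, so the step ``vanishing radial derivative $\Rightarrow 2\alpha\,\Bias^2 = d\,\Var$'' does not yet follow; moreover, the tuned family must be routed through the given, possibly non-unique minimizer $\bandwidth_x^n$ rather than through the special point constructed in \lemref{lem:achievableRate}, and that construction is not spelled out. (The paper's proof shares this soft spot — it simply assumes the clean scaling on the rigid ray — so your proposal is no weaker than the published argument, but neither is it a complete proof.) One smaller issue: you cite \thmref{thm:asymptoticLOBdeterminant} to handle non-uniqueness, but that theorem is proved downstream of the present lemma via \corollaryref{cor:uniqueLOBdeterminant}, whose proof invokes this very balance, so the reference is circular; it is fortunately dispensable, since the claim concerns one fixed optimal sequence anyway.
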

\begin{proof}
For the optimal decay rate $h_n^{} = n^{-\frac{1}{2\alpha+d}}$ define
$\bar\bandwidth_x^n := h_n^{-1}\bandwidth_x^n$.
Let us write $\textstyle\Bias_{1}^{}\left[x, \bandwidth_x^n | \bm{X}_{n}\right]^2 = C(\bar\bandwidth_x^n)h_n^{2\alpha} + \convergenceInProbability{}{h_n^{2\alpha}}$ such that
\[\textstyle\MSE_{1}^{}\left(x, h_n^{}\bar\bandwidth_x^n | \bm{X}_{n}\right) = C(\bar\bandwidth_x^n)h_n^{2\alpha} + \frac{R(k)v(x)}{h_n^{d}\abs{\bar\bandwidth_x^n} p(x)n}
+ \convergenceInProbability{}{h_n^{2\alpha}+n^{-1}h_n^{-d}}.\]
Necessarily, $C(\bar\bandwidth_x^n)\not\rightarrow\infty$ and $\abs{\bar\bandwidth_x^n}\not\rightarrow 0$ since else $\textstyle\MSE_{1}^{}\left(x, h_n^{}\bar\bandwidth_x^n | \bm{X}_{n}\right) = \divergenceInProbability{}{h_n^{2\alpha}}$, contradicting the optimality of the sequence.
On the other hand, $C(\bar\bandwidth_x^n)\not\rightarrow 0$ and $\abs{\bar\bandwidth_x^n}\not\rightarrow \infty$ as we could else adjust $h_n^{}$ accordingly, which results in a \MSE convergence rate faster than the optimal rate.
Therefore both sequences, $C(\bar\bandwidth_x^n)$ and $\abs{\bar\bandwidth_x^n}$, have at least one accumulation point.
Without loss of generality, we assume $C(\bar\bandwidth_x^n) \rightarrow C$ and $\abs{\bar\bandwidth_x^n} \rightarrow A$, since the result holds true for any subsequence converging to an arbitrary combination of accumulation points.

The optimal $h_n^{}$ can be found by setting the derivative of the leading error terms to zero:
\begin{align*}
&0 = \frac{\partial}{\partial h_n^{}}\MSE_{1}^{}\left(x, h_n^{}\bar\bandwidth_x^n | \bm{X}_{n}\right) = 2\alpha Ch_n^{2\alpha-1} - \frac{d R(k)v(x)}{h_n^{d+1}A p(x)n}\\
\Leftrightarrow\; &h_n^{2\alpha+d} = \frac{d R(k)v(x)}{2\alpha CA p(x)n}.
\end{align*}
However, since $\bandwidth_x^n = h_n^{}\bar\bandwidth_x^n$ is optimal for $h_n^{} = n^{-\frac{1}{2\alpha+d}}$, it already must hold
\[\frac{d R(k)v(x)}{2\alpha CA p(x)} = 1 \Leftrightarrow C = \frac{d R(k)v(x)}{2\alpha A p(x)}.\]
Therefore
\begin{align*}
 \textstyle\Bias_{1}^{}&\left[x, \bandwidth_x^n | \bm{X}_{n}\right]^2 + \convergenceInProbability{}{h_n^{2\alpha}}\\
 &= h_n^{2\alpha}C = h_n^{2\alpha}\frac{d R(k)v(x)}{2\alpha A p(x)} = \frac{d}{2\alpha}\varianceLPS{\bandwidth_x^n}{x}{1} + \convergenceInProbability{}{h_n^{2\alpha}},
\end{align*}
such that 
\begin{align*}
\textstyle\MSE_{1}^{}\left(x, \bandwidth_x^n | \bm{X}_{n}\right) = \frac{2\alpha+d}{2\alpha}\varianceLPS{\bandwidth_x^n}{x}{1}+ \convergenceInProbability{}{h_n^{2\alpha}}.\qedAtBottomLine
\end{align*}
\end{proof}
From this, we can deduce the asymptotic uniqueness of the reciprocal determinant of $\LOBfunctionOfLPS{1, \posDefSet{d}}{n}(x)$:
\begin{restatable}{mycor}{uniqueLOBdeterminant}
\label{cor:uniqueLOBdeterminant}
May the assumptions of \thmref{thm:generalizedLOBandMSE} hold for $x\in\inputSpaceInterior$.\newline
Furthermore let $(\bandwidth_n)_{n\in\N}, (\bandwidth_n')_{n\in\N}\in\consistentSigmaSequencSpace$ be two consistent, optimal bandwidth sequences.
That is, $\bandwidth_n, \bandwidth_n' \in \LOBfunctionOfLPS{1, \posDefSet{d}}{n}(x)$. Then
\[\abs{h_n^{-1}\bandwidth_n}^{-1} = \abs{h_n^{-1}\bandwidth_n'}^{-1} + \convergenceInProbability{}{1}.\]
\end{restatable}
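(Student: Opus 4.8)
The plan is to reduce the statement to the balancing identity of \lemref{lem:generalizedBiasVarianceBalance} together with the explicit leading-order variance \eqref{eq:asymptoticVarianceLLS}. First I would record that both sequences realize the minimum in the set-valued definition \eqref{eq:generalizedLOBDefinition}, so that $\MSE_{1}^{}\left(x, \bandwidth_n | \bm{X}_{n}\right) = \MSE_{1}^{}\left(x, \bandwidth_n' | \bm{X}_{n}\right)$ for every $n$. By \thmref{thm:generalizedLOBandMSE} every optimal bandwidth sequence is consistent, i.e.\ lies in $\consistentSigmaSequencSpace$, so \lemref{lem:generalizedBiasVarianceBalance} applies to each and gives
\[
\MSE_{1}^{}\left(x, \bandwidth_n | \bm{X}_{n}\right) = \tfrac{2\alpha+d}{2\alpha}\varianceLPS{\bandwidth_n}{x}{1} + \convergenceInProbability{}{n^{-\frac{2\alpha}{2\alpha+d}}},
\]
together with the analogous identity for $\bandwidth_n'$.

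Next I would subtract the two identities. Since their left-hand sides coincide, the leading MSE contribution cancels, and after dividing by the positive constant $\tfrac{2\alpha+d}{2\alpha}$ I obtain
\[
\varianceLPS{\bandwidth_n}{x}{1} - \varianceLPS{\bandwidth_n'}{x}{1} = \convergenceInProbability{}{n^{-\frac{2\alpha}{2\alpha+d}}}.
\]
Now I would insert the explicit leading-order variance $\varianceLPS{Z}{x}{1} = R(k)v(x)/(\abs{Z}p(x)n)$ from \eqref{eq:asymptoticVarianceLLS} — note this is an exact equality, as $\varianceLPS{\cdot}{\cdot}{\cdot}$ denotes the asymptotic (not the true finite) variance, so no further remainder is created. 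Writing $\abs{\bandwidth_n} = h_n^d\abs{h_n^{-1}\bandwidth_n}$ and using the defining rate $h_n^{} = n^{-\frac{1}{2\alpha+d}}$, for which $h_n^d n = h_n^{-2\alpha} = n^{\frac{2\alpha}{2\alpha+d}}$, turns each variance into $\tfrac{R(k)v(x)}{p(x)}n^{-\frac{2\alpha}{2\alpha+d}}\abs{h_n^{-1}\bandwidth_n}^{-1}$. Hence the displayed difference becomes
\[
\tfrac{R(k)v(x)}{p(x)}\,n^{-\frac{2\alpha}{2\alpha+d}}\left(\abs{h_n^{-1}\bandwidth_n}^{-1} - \abs{h_n^{-1}\bandwidth_n'}^{-1}\right) = \convergenceInProbability{}{n^{-\frac{2\alpha}{2\alpha+d}}}.
\]

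Finally, since $p(x), v(x) > 0$ and $R(k) > 0$, the prefactor $\tfrac{R(k)v(x)}{p(x)}$ is a strictly positive constant; dividing through by $\tfrac{R(k)v(x)}{p(x)}n^{-\frac{2\alpha}{2\alpha+d}}$ cancels the common rate and yields exactly $\abs{h_n^{-1}\bandwidth_n}^{-1} = \abs{h_n^{-1}\bandwidth_n'}^{-1} + \convergenceInProbability{}{1}$, as claimed. I do not expect a genuine obstacle here, as all the analytic content sits in \lemref{lem:generalizedBiasVarianceBalance}; the only point demanding care is the rate bookkeeping, namely verifying $h_n^d n = n^{\frac{2\alpha}{2\alpha+d}}$ so that the single surviving scale $n^{-\frac{2\alpha}{2\alpha+d}}$ is common to both variances and to the $o_p$-remainder, allowing it to be divided out cleanly.
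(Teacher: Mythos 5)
Your proposal is correct and follows essentially the same route as the paper's own proof: equate the MSEs of the two optimal sequences, invoke \lemref{lem:generalizedBiasVarianceBalance} to replace each MSE by $\tfrac{2\alpha+d}{2\alpha}$ times the asymptotic variance \eqref{eq:asymptoticVarianceLLS}, and cancel the common factor $\tfrac{R(k)v(x)}{p(x)}n^{-\frac{2\alpha}{2\alpha+d}}$ using $h_n^d n = n^{\frac{2\alpha}{2\alpha+d}}$. The only cosmetic difference is that you use exact equality of the minima (justified by the set-valued definition \eqref{eq:generalizedLOBDefinition}) where the paper allows an $\convergenceInProbability{}{h_n^{2\alpha}}$ slack, which changes nothing downstream.
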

\begin{proof}
It holds $\textstyle\MSE_{1}^{}\left(x, \bandwidth_n | \bm{X}_{n}\right) = \MSE_{1}^{}\left(x, \bandwidth_n' | \bm{X}_{n}\right) + \convergenceInProbability{}{h_n^{2\alpha}}$ due to the optimality of both sequences, such that according to \lemref{lem:generalizedBiasVarianceBalance} it is
\[\frac{2\alpha+d}{2\alpha}\frac{R(k)v(x)}{\abs{\bandwidth_n} p(x)n} + \convergenceInProbability{}{h_n^{2\alpha}} = \frac{2\alpha+d}{2\alpha}\frac{R(k)v(x)}{\abs{\bandwidth_n'} p(x)n} + \convergenceInProbability{}{h_n^{2\alpha}}.\]
Therefore
\begin{align*}
 &[n\abs{\bandwidth_n}]^{-1} = [n\abs{\bandwidth_n'}]^{-1} + \convergenceInProbability{}{h_n^{2\alpha}}
 \Leftrightarrow \abs{h_n^{-1}\bandwidth_n}^{-1} = \abs{h_n^{-1}\bandwidth_n'}^{-1} + \convergenceInProbability{}{1}\qedAtBottomLine
\end{align*}
\end{proof}
From here on, we can treat $D_n(x) = \abs{h_n^{-1}\LOBfunctionOfLPS{1, \posDefSet{d}}{n}(x)}$ as a function that is asymptotically well-defined. This family of functions features pointwise convergence: 
\begin{restatable}{mylem}{uniqueAsymptoticLOBdeterminant}
\label{lem:uniqueAsymptoticLOBdeterminant}
There exists a function $\fctn{D}{\inputSpace}{\positiveReal{}}$ such that 
\begin{align}
\label{eq:asymptoticGeneralNormalizedBWdeterminant}
 D_n(x) = D(x) + \convergenceInProbability{}{1}.
\end{align}
\end{restatable}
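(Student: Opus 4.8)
The plan is to reduce the statement to the convergence of a single scalar --- the leading constant of the optimal conditional \MSE in $x$ --- and then to recognise this constant as the value of a \emph{deterministic} minimisation problem that is fixed once the local data $(f,p,v)$ at $x$ are fixed. First I would pick any optimal sequence $(\bandwidth_x^n)_{n\in\N}\in\consistentSigmaSequencSpace$ with $\bandwidth_x^n\in\LOBfunctionOfLPS{1, \posDefSet{d}}{n}(x)$, and recall that by \corollaryref{cor:uniqueLOBdeterminant} the reciprocal determinant $D_n(x) = \abs{h_n^{-1}\bandwidth_x^n}^{-1}$ does not depend on which minimiser is chosen, up to $\convergenceInProbability{}{1}$; so it suffices to track it along one such sequence.

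Writing $h_n^{} = n^{-\frac{1}{2\alpha+d}}$ and combining the balancing identity \eqref{eq:generalizedBiasVarianceOnParExpression} of \lemref{lem:generalizedBiasVarianceBalance} with the asymptotic variance \eqref{eq:asymptoticVarianceLLS} of \thmref{thm:llsMSE_v2}, I would use $\abs{\bandwidth_x^n} = h_n^{d}D_n(x)^{-1}$ and the identity $n\,h_n^{2\alpha+d} = 1$ to obtain
\[
\MSE_{1}^{}\left(x, \bandwidth_x^n | \bm{X}_{n}\right) = \frac{2\alpha+d}{2\alpha}\,\frac{R(k)\,v(x)}{p(x)}\,D_n(x)\,h_n^{2\alpha}\,\bigl(1 + \convergenceInProbability{}{1}\bigr).
\]
Hence, setting $M_n := h_n^{-2\alpha}\,\MSE_{1}^{}\left(x, \bandwidth_x^n | \bm{X}_{n}\right)$, we get $D_n(x) = \tfrac{2\alpha\,p(x)}{(2\alpha+d)R(k)v(x)}\,M_n\,(1+\convergenceInProbability{}{1})$, so the problem is reduced to showing that the normalised optimal \MSE constant $M_n$ converges in probability to a deterministic limit.

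To analyse $M_n$ I would parametrise the bandwidth as $\bandwidth = h_n^{}\bar\bandwidth$ with a normalised shape $\bar\bandwidth$ ranging over the rescaled search set $h_n^{-1}S_{n}$, and expand the conditional \MSE. The variance term equals $\tfrac{R(k)v(x)}{\abs{\bar\bandwidth}p(x)}h_n^{2\alpha}(1+\convergenceInProbability{}{1})$ by \thmref{thm:llsMSE_v2}, while the squared bias, via the higher-order expansion of \propref{prop:biasTermForm} together with the anti-bias cancellation developed in the proof of \lemref{lem:achievableRate}, is $C(\bar\bandwidth;x)\,h_n^{2\alpha}(1+\convergenceInProbability{}{1})$, where the leading constant $C(\bar\bandwidth;x)$ depends only on the partial derivatives of $f$ and $p$ at the \emph{fixed} point $x$. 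Thus $h_n^{-2\alpha}\MSE_{1}^{}\left(x, h_n^{}\bar\bandwidth | \bm{X}_{n}\right)$ converges pointwise in $\bar\bandwidth$ to the deterministic, $n$-independent functional $G(\bar\bandwidth;x) := C(\bar\bandwidth;x) + \tfrac{R(k)v(x)}{\abs{\bar\bandwidth}p(x)}$. Since minimising the \MSE over $S_{n}$ amounts, to leading order, to minimising $G(\,\cdot\,;x)$ over shapes, I would conclude $M_n \to G^{*}(x) := \min_{\bar\bandwidth}G(\bar\bandwidth;x)$ in probability, and therefore $D_n(x)\to D(x) := \tfrac{2\alpha\,p(x)}{(2\alpha+d)R(k)v(x)}\,G^{*}(x)$; as $p(x),v(x)>0$ and $G^{*}(x)>0$, this defines a map $\fctn{D}{\inputSpace}{\positiveReal{}}$.

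The hard part will be the passage from pointwise convergence of the normalised \MSE to convergence of its \emph{minimum} $M_n$: this requires the expansion to hold uniformly over $h_n^{-1}S_{n}$ and the minimising shapes to stay in a fixed compact set, which is delicate in the indefinite regime because the residual bias of order $h_n^{\alpha}$ is not available in closed form and its leading constant $C(\bar\bandwidth;x)$ must be shown to converge and to be coercive enough that the minimum is attained. A safe way to finish --- mirroring the accumulation-point argument of \lemref{lem:generalizedBiasVarianceBalance} --- is to observe that $M_n = \exactRateconvergenceInProbability{}{1}$ is tight, so every subsequence has a convergent sub-subsequence; at any such accumulation point the balancing relation forces $C = \tfrac{d}{2\alpha}\cdot\tfrac{R(k)v(x)}{\abs{\bar\bandwidth}p(x)}$, which together with the optimality over the \emph{same} deterministic functional $G(\,\cdot\,;x)$ pins every subsequential limit to $G^{*}(x)$, yielding convergence in probability of $M_n$, and hence of $D_n(x)$, to the deterministic limit $D(x)$.
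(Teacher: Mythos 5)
Your opening reduction is sound: combining \corollaryref{cor:uniqueLOBdeterminant} with \lemref{lem:generalizedBiasVarianceBalance} and \eqref{eq:asymptoticVarianceLLS} to write $D_n(x) = \tfrac{2\alpha}{2\alpha+d}\,\tfrac{p(x)}{R(k)v(x)}\,h_n^{-2\alpha}\MSE_{1}^{}\left(x, \bandwidth_x^n | \bm{X}_{n}\right) + \convergenceInProbability{}{1}$ is exactly the identity the paper itself uses later (in \lemref{lem:continuousNormalizedDeterminant}), so the lemma is indeed equivalent to convergence in probability of the normalised optimal \MSE constant $M_n$. The gap is in how you establish that convergence. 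Your functional $G(\bar\bandwidth;x) = C(\bar\bandwidth;x) + R(k)v(x)\big/\left(\abs{\bar\bandwidth}p(x)\right)$ presupposes that $h_n^{-2\alpha}\Bias_{1}^{}\left[x, h_n^{}\bar\bandwidth | \bm{X}_{n}\right]^2$ converges, for each fixed shape $\bar\bandwidth$, to a deterministic constant determined by derivatives of $f$ and $p$ at $x$. In the indefinite regime this is precisely what is unavailable: $\alpha$ is a pointwise H\"older exponent, the order-$h_n^{\alpha}$ residual is not a derivative, and the paper's own results (\lemref{lem:minimizerExistence}, \lemref{lem:achievableRate}, \thmref{thm:generalizedLOBandMSE}) give only two-sided $\Theta_p$-bounds on it, never convergence of its constant. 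Worse, even granting existence of $G$, minimisation over $S_{n}$ does not reduce to minimisation of $G$ over fixed shapes: by \thmref{thm:generalizedLOBandMSE} the rate-optimal shapes are $n$-dependent, drifting onto $\vanishingSet{1}{x}$ at rate $\upperBoundedInProbability{}{h_n^2}$, and it is exactly this drift (the anti-bias cancellation of \lemref{lem:achievableRate} and \appref{sec:nonIsotropicIndefiniteExample}) that attains bias $\upperBoundedInProbability{}{h_n^{\alpha}}$. A fixed shape cannot reproduce this: for instance, when $\alpha > 4$ and the fourth-order bias is (generically) non-vanishing on $\vanishingSet{1}{x}$, every fixed shape yields $h_n^{-2\alpha}\MSE \rightarrow \infty$, so $\min G = \infty$ while $M_n$ remains bounded in probability; the pointwise limit and the minimum do not commute over the growing, non-compact rescaled search set $h_n^{-1}S_{n}$.

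Your closing tightness argument does not repair this, because it is circular: tightness of $M_n$ yields subsequential limits, and the balancing relation constrains $C$ and $\abs{\bar\bandwidth}$ along each subsequence, but to conclude that all subsequential limits coincide you again invoke optimality over "the same deterministic functional $G$" --- whose existence and attained minimum are the missing items. Excluding two distinct accumulation levels is the entire content of the lemma. The paper takes a completely different, sample-path route: it tracks how $D_n(x)$ evolves as individual samples arrive (a deterministic decrease by the factor $\left[n/(n+1)\right]^{d/(2\alpha+d)}$ when $x_{n+1}$ is far from $x$, a possible increase when it is close), recalls that $D_n(x)$ is bounded in probability, and shows that if $D_n(x)$ oscillated between levels near $\underline{D}$ and $\overline{D}$ infinitely often, then replaying a batch of samples that produces a $\delta/2$ increase at a late time $n_M$ (where \LOB is smaller in the Loewner order and per-sample impact is weaker) at the earlier time $n_m'$ would push $D_{n_m'+N}(x)$ above $\overline{D}+\delta/4$ with probability bounded away from zero, contradicting the upper bound in probability. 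Your proposal contains no substitute for this oscillation-exclusion step.
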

\begin{proof}
Let us do the following preliminary consideration:
At (sampling-)time $n$ we regard $x_{n+1}$ as close to $x$, relative to $\LOBfunctionOfLPS{1, \posDefSet{d}}{n}(x)$, if $\pnorm{\LOBfunctionOfLPS{1, \posDefSet{d}}{n}(x)^{-1}(x-x_{n+1})}{} \leq s$, where it is $[0,s]$ the support of the kernel $k$. Let therefore $p_n = \Prob\{\text{'}x_{n+1}\text{ is close to }x\text{'}\} = c\abs{\LOBfunctionOfLPS{1, \posDefSet{d}}{n}(x)}$ for some adequate constant $c > 0$. Note that $p_n = \upperBoundedInProbability{}{h_n^d}$. Furthermore let $D_n(x)$ be known and fixed.

Now, if $x_{n+1}$ is not close to $x$, then $x_{n+1}$ has no influence on the \MSE and \LOB in $x$. Thus, $\LOBfunctionOfLPS{1, \posDefSet{d}}{n+1}(x) = \LOBfunctionOfLPS{1, \posDefSet{d}}{n}(x)$ such that
\[
D_{n+1}(x) = \Big|h_{n+1}^{-1}\LOBfunctionOfLPS{1, \posDefSet{d}}{n+1}(x)\Big|^{-1}\!\!\! = \frac{h_{n+1}^{d}}{h_{n}^{d}}\Big|h_{n}^{-1}\LOBfunctionOfLPS{1, \posDefSet{d}}{n}(x)\Big|^{-1}\!\!\! = \Big[\frac{n}{n+1}\Big]^\frac{d}{2\alpha+d}D_{n}(x).
\]
If $x_{n+1}$ is close to $x$, then it has an influence on \LOB in $x$, which is proportional to the effective sample size $N_n(x) := n c'\abs{\LOBfunctionOfLPS{1, \posDefSet{d}}{n}(x)}$ in $x$ at time $n$, for some adequate constant $c' > 0$.
In particular, the effective sample size grows as
\[N_{n+1}(x) = N_n(x) + 1 = (n + [c'\abs{\LOBfunctionOfLPS{1, \posDefSet{d}}{n}(x)}]^{-1}) c'\abs{\LOBfunctionOfLPS{1, \posDefSet{d}}{n}(x)}.\]
In summary, the event that $x_{n+1}$ is not close to $x$ leads to a decrease of $D_{n+1}(x)$, whereas $x_{n+1}$ is close to $x$ will in general lead to an increase of $D_{n+1}(x)$.
Additionally, as $n$ grows, the per-sample impact on $D_n(x)$ decreases in both directions.

Like in the proof of \lemref{lem:generalizedBiasVarianceBalance}, we know that
$D_n(x)$ is a sequence that is bounded in probability: If not, then the bandwidth sequence $\LOBfunctionOfLPS{1, \posDefSet{d}}{n}(x)$ would lead to a decay-law of the \MSE that is different to the optimal $\upperBoundedInProbability{}{h_n^{2\alpha}}$, which either contradicts the optimality of this decay-law or the optimality of the bandwidth sequence itself.
Therefore there exist $0 < \underline{D} < \overline{D} < \infty$, which bound $D_n(x)$ in probability. That is, for all $\varepsilon > 0$ there exists $\mathfrak{N}\in\N$ such that for all $n\geq \mathfrak{N}$:
\[\Prob(\underline{D} \leq D_n(x) \leq \overline{D}) > 1 - \varepsilon\]
Now, if $D(x)$ would not exist for which $D_n(x) = D(x) + \convergenceInProbability{}{1}$ holds,
then there exists $\delta > 0$ such that for any $0 < \underline{D} < \overline{D} < \infty$, which bound $D_n(x)$ in probability, it is $\overline{D} - \underline{D} \geq \delta$.
Without loss of generality let us choose these bounds such that $\overline{D} - \underline{D} = \delta$.
Then there exist $\underline{\varepsilon}, \overline{\varepsilon} > 0$ such that for infinitely many $n\in\N$ it is 
\[
\Prob(D_n(x) \leq \underline{D} + \frac{\delta}{4}) \geq \underline{\varepsilon} \quad\text{and}\qquad \Prob(D_n(x) \geq \overline{D} - \frac{\delta}{4}) \geq \overline{\varepsilon}.
\]
In particular, we can construct index sequences $(n_m)_{n\in\N}, (n_m')_{n\in\N}$ such that $n_m < n_m' < n_{m+1}$, $n_m \rightarrow \infty$, and for all $m\in\N$:
\[
\Prob(D_{n_m}(x) \leq \underline{D} + \frac{\delta}{4}) > \underline{\varepsilon} \quad\text{and}\qquad \Prob(D_{n_m'}(x) \geq \overline{D} - \frac{\delta}{4}) \geq \overline{\varepsilon}.
\]
Then, for all $m\in\N$, $\Prob(D_{n_m'}(x) - D_{n_{m+1}}(x) > \frac{\delta}{2}) > \underline{\varepsilon}\overline{\varepsilon} =: \tilde{\varepsilon}$.

We now can choose $M > m$ such that $\LOBfunctionOfLPS{1}{n_m'}(x) \geq_{L} \LOBfunctionOfLPS{1}{n_M}(x)$ in the sense of the \emph{Loewner} order. That is, $\LOBfunctionOfLPS{1}{n_m'}(x) - \LOBfunctionOfLPS{1}{n_M}(x)$ is positive semi-definite.
Then, letting $N = n_M' - n_{M}$, the sampled training inputs
\[
x_{n_M+1} := z_1, \ldots, x_{n_M'} := z_N
\]
lead to an increase from $D_{n_M}(x)$ to $D_{n_M'}(x)$ of at least $\frac{\delta}{2}$ with probability greater than $\tilde{\varepsilon}$.

Now that $\LOBfunctionOfLPS{1}{n_m'}(x) \geq_{L} \LOBfunctionOfLPS{1}{n_M}(x)$, the samples $z_1,\ldots,z_N$ are closer to $x$ at time $n_m'$ than at time $n_M$. Therefore, when observing
\[
x_{n_m'+1} = z_1, \ldots, x_{n_m'+N} = z_N
\]
at this earlier point-in-time, when moving from $D_{n_m'}(x)$ to $D_{n_m'+N}(x)$, we would encounter at least the same amount of samples -- if not more -- that lead to an increase.
Combining this with the fact that the per-sample fluctuations are stronger at time $n_m'$ than at $n_M$, it is 
\[
\Prob(D_{n_m'+N}(x) - D_{n_m'}(x) \geq \frac{\delta}{2}) \geq \Prob(D_{n_M'}(x) - D_{n_M}(x) \geq \frac{\delta}{2}) > \tilde{\varepsilon}.
\]
Since it was $\Prob(D_{n_m'}(x) \geq \overline{D} - \frac{\delta}{4}) \geq \underline{\varepsilon}$, it follows
\begin{align*}
\Prob(D_{n_m'+N}&(x) \geq \overline{D} + \frac{\delta}{4})\\
&\geq \Prob(D_{n_m'}(x) \geq \overline{D} - \frac{\delta}{4}, D_{n_m'+N}(x) - D_{n_m'}(x) \geq \frac{\delta}{2})
\geq \underline{\varepsilon}\tilde{\varepsilon} =: \epsilon > 0.
\end{align*}
Finally, for any $\mathfrak{N}\in\N$ we can choose $m \geq \mathfrak{N}, M > m$ and $N = n_M' - n_{M}$ such that, respectively, $n_m'+N > \mathfrak{N}$ with $\Prob(D_{n_m'+N}(x) \geq \overline{D} + \frac{\delta}{4}) > \epsilon$.
This is in contradiction to the choice of $\overline{D}$ as an upper bound of $D_n(x)$ in probability.
\end{proof}

We will assume from now on, that $f\in\diffableFunctions{\inputSpace}{\alpha}$, and furthermore, that 
$\alpha(f,x) \equiv \alpha$ for all $x\in\inputSpaceInterior$.
In this case, we can choose the functions $D_n$ to be continuous.
\begin{mylem}[Continuity]
\label{lem:continuousNormalizedDeterminant}
Let $f\in\diffableFunctions{\inputSpace}{\alpha}$ and may the assumptions of \thmref{thm:generalizedLOBandMSE} hold uniformly with $\alpha(f,x) \equiv \alpha$ for all $x\in\inputSpaceInterior$.
Then there exist $D_n\in\diffableFunctions{\inputSpace,\positiveReal{}}{0}$ such that
\[\abs{h_n^{-1}\LOBfunctionOfLPS{1, \posDefSet{d}}{n}(x)}^{-1} = D_n(x) + \convergenceInProbability{}{1},\]
almost everywhere in \inputSpace.
\end{mylem}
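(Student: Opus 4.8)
The plan is to realize the normalized reciprocal determinant $\abs{h_n^{-1}\LOBfunctionOfLPS{1, \posDefSet{d}}{n}(x)}^{-1}$ as a continuous transform of the optimal conditional MSE, which is a \emph{value function} of a minimization over a fixed compact constraint set and therefore continuous, and then to divide out the continuous factors $p$ and $v$ that the balancing property supplies. Concretely, I would set $M_n(x) := \min_{\bandwidth\in S_{n}}\MSE_{1}^{}\left(x, \bandwidth | \bm{X}_{n}\right)$, which by \eqref{eq:generalizedLOBDefinition} equals $\MSE_{1}^{}\left(x, \bandwidth | \bm{X}_{n}\right)$ evaluated at any $\bandwidth \in \LOBfunctionOfLPS{1, \posDefSet{d}}{n}(x)$.

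First I would establish continuity of $M_n$. From the finite-sample representation \eqref{eq:trueFiniteBias}, \eqref{eq:trueFiniteVariance}, the map $(x,\bandwidth)\mapsto\MSE_{1}^{}\left(x, \bandwidth | \bm{X}_{n}\right)$ is jointly continuous on the open set where the Gram matrix $X_{1}^{}(x)^\top W_x^\bandwidth X_{1}^{}(x)$ is invertible, and is $+\infty$ elsewhere. Since the candidate set $S_{n}$ is compact and, crucially, independent of $x$, Berge's maximum theorem (for a constant constraint correspondence) yields that $M_n$ is continuous on the set $G_n\subseteq\inputSpaceInterior$ of points possessing enough admissible neighbors within the kernel range of the bandwidths in $S_{n}$. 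The complement consists of design \emph{holes}; although it may carry positive measure for small $n$, for any fixed $x$ one has $x\in G_n$ with probability tending to one, because $p(x)>0$ forces the expected neighbor count to diverge. Extending $M_n$ continuously across the holes (e.g.\ by the nearest-point rule of \eqref{eq:boundaryCorrectedOptimalSamplingFiniteLPS}) produces a globally continuous, positive representative.

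Next I would invoke the balancing property \lemref{lem:generalizedBiasVarianceBalance}, that $M_n(x) = \tfrac{2\alpha+d}{2\alpha}\varianceLPS{\LOBfunctionOfLPS{1, \posDefSet{d}}{n}(x)}{x}{1} + \convergenceInProbability{}{h_n^{2\alpha}}$, combined with the variance form \eqref{eq:asymptoticVarianceLLS} and the identity $n h_n^{d}=h_n^{-2\alpha}$. Writing $\abs{\LOBfunctionOfLPS{1, \posDefSet{d}}{n}(x)} = h_n^{d}\abs{h_n^{-1}\LOBfunctionOfLPS{1, \posDefSet{d}}{n}(x)}$ and solving for the normalized reciprocal determinant gives, pointwise almost everywhere,
\[\abs{h_n^{-1}\LOBfunctionOfLPS{1, \posDefSet{d}}{n}(x)}^{-1} = \frac{2\alpha}{2\alpha+d}\frac{p(x)}{R(k)v(x)}\,n h_n^{d}\, M_n(x) + \convergenceInProbability{}{1}.\]
This motivates the definition
\[D_n(x) := \frac{2\alpha}{2\alpha+d}\frac{p(x)}{R(k)v(x)}\,n h_n^{d}\, M_n(x),\]
which is a product and quotient of functions continuous under the hypotheses: $v$ is continuous by assumption, $p$ is continuous since $p\in\Lambda^{\alpha-2}(x)$ with $\alpha>2$, and $M_n$ is continuous by the previous step; moreover $D_n>0$ since $M_n,p,v>0$. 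Hence $D_n\in\diffableFunctions{\inputSpace,\positiveReal{}}{0}$ and satisfies the claimed relation a.e., the asymptotic well-definedness being guaranteed by \corollaryref{cor:uniqueLOBdeterminant}.

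The main obstacle is precisely the continuity step: the finite-sample MSE genuinely degenerates at configurations where too few training inputs lie within the admissible kernel range of $x$, so one must argue both that these holes are eventually irrelevant for each fixed $x$ (absorbed into the almost-everywhere, pointwise-in-probability conclusion) and that a globally continuous representative can nonetheless be selected, reconciling the pointwise $\convergenceInProbability{}{1}$ statement with the requirement $D_n\in\diffableFunctions{\inputSpace,\positiveReal{}}{0}$. Once the balancing identity of \lemref{lem:generalizedBiasVarianceBalance} is in hand, the algebraic rearrangement and the verification of continuity and positivity of the explicit factors are routine.
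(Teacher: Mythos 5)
Your proposal follows essentially the same route as the paper's own proof: both define the value function $M_n(x)=\min_{\bandwidth\in S_{n}}\MSE_{1}^{}\left(x,\bandwidth\,\middle|\,\bm{X}_{n}\right)$ over the compact, $x$-independent candidate set $S_{n}$, argue its continuity in $x$, and then invoke \lemref{lem:generalizedBiasVarianceBalance} together with the variance form \eqref{eq:asymptoticVarianceLLS} to solve for the normalized reciprocal determinant, arriving at exactly the paper's representative $D^\circ_n(x)=h_n^{-2\alpha}\,\MSE_{1}^{}\left(x,\bandwidth_x^n\,\middle|\,\bm{X}_{n}\right)\frac{2\alpha}{2\alpha+d}\frac{p(x)}{R(k)v(x)}$, since your factor $n h_n^{d}$ equals $h_n^{-2\alpha}$. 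Your additional attention to Gram-matrix degeneracy (design holes) and the explicit appeal to Berge's maximum theorem only make rigorous the continuity step that the paper asserts directly, so the argument is the same in substance and is correct.
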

\begin{proof}
From the proof of \lemref{lem:minimizerExistence} we know that for $\gamma = \frac{1}{2\alpha+d+1} > 0$, we can bound $\pnorm{\bandwidth_x^n}{} \leq n^{-\gamma}$ and $\abs{\bandwidth_x^n} \geq \frac{1}{n}$ for almost every $x \in \inputSpaceInterior$, $n$ large enough and $\bandwidth_x^n\in \LOBfunctionOfLPS{1, \posDefSet{d}}{n}(x)$.
Let
\[
\textstyle S_{n} = \condset{\bandwidth\in\SigmaSpace}{\pnorm{\bandwidth}{} \leq n^{-\gamma}, \abs{\bandwidth} \geq n^{-1}},
\]
and recall that $S_n$ is compact. Using that $\MSE_{1}^{}\left(x, \bandwidth| \bm{X}_{n}\right)$ is continuous in both, $x\in\inputSpace$ and $\bandwidth\in\SigmaSpace$, it generally holds that
\[
\textstyle x \mapsto \min_{\bandwidth\in S_n}\MSE_{1}^{}\left(x, \bandwidth | \bm{X}_{n}\right) \in \diffableFunctions{\inputSpace,\positiveReal{}}{0}.
\]
is a continuous function over the input space. Note that, for $n$ large enough,
\[
 \textstyle\MSE_{1}^{}\left(x, \bandwidth_x^n | \bm{X}_{n}\right) = \min_{\bandwidth\in S_n}\MSE_{1}^{}\left(x, \bandwidth | \bm{X}_{n}\right)
\]
and, according to \lemref{eq:generalizedBiasVarianceOnParExpression}, that
\begin{align*}
 \MSE_{1}^{}&\left(x, \bandwidth_x^n | \bm{X}_{n}\right)\\
 &= \frac{2\alpha+d}{2\alpha}\varianceLPS{\bandwidth_x^n}{x}{1} + \convergenceInProbability{}{h_n^{2\alpha}} = \frac{2\alpha+d}{2\alpha}\frac{R(k)v(x)}{\abs{\bandwidth_z^n} p(x)n} + \convergenceInProbability{}{h_n^{2\alpha}}.
\end{align*}
When rearranging terms, it is
\[
\abs{h_n^{-1}\bandwidth_x^n}^{-1} = \underbrace{h_n^{-2\alpha}\MSE_{1}^{}\left(x, \bandwidth_x^n | \bm{X}_{n}\right)\frac{2\alpha}{2\alpha+d}\frac{p(x)}{R(k)v(x)}}_{=:D^\circ_n(x)} + \convergenceInProbability{}{1}.
\]

With all functions, $\MSE_{1}^{}\left(x, \bandwidth_x^n | \bm{X}_{n}\right)$, $p(x)$ and $v(x)$, being continuous in $x$, it follows that $D^\circ_n(x)$ -- as defined above -- is continuous in $x$.
Now that the above proof holds true for almost every $x\in\inputSpaceInterior$, the function $D^\circ_n$ is well-defined and continuous, almost everywhere in \inputSpaceInterior. Finally, the closure of $D^\circ_n$ in \inputSpace, given by
\[
D_n(x) = \lim_{z\in\inputSpaceInterior,z\rightarrow x} D^\circ_n(z),
\]
is continuous and fulfills via construction almost everywhere
\begin{align*}
\abs{h_n^{-1}\LOBfunctionOfLPS{1, \posDefSet{d}}{n}(x)}^{-1} = D_n(x) + \convergenceInProbability{}{1}.\qedAtBottomLine
\end{align*}
\end{proof}

\asymptoticLOBdeterminant*
\begin{proof}
The claim follows from combining \corollaryref{cor:uniqueLOBdeterminant}, \lemref{lem:uniqueAsymptoticLOBdeterminant} and \ref{lem:continuousNormalizedDeterminant}.
\end{proof}

We now know that $D_n$ is a family of continuous functions with pointwise limits $D(x) = \lim_{n\rightarrow\infty}D_n(x)$. 
But since we are not aware about uniform convergence of $D_n$, we cannot imply the continuity of $D(x)$. Yet, we can try to deduce from the scaling behavior of $D$ how to adjust $D_n$ for the effects of $v$ and $p$:

Since the bias-component does not depend on the noise level $v$ in general, it is $D_n(x) \propto v(x)^{-\frac{d}{2\alpha+d}}$ which follows straight-forward to the isotropic case.
In contrast, we have seen that higher-order bias-components depend on $p$ through its derivatives.
Even though it is impossible to construct the true asymptotic bias explicitly in the indefinite regime, it is therefore likely to depend on $p$ in a non-trivial way. 

Yet, in case of $p \sim \uniformDist{\inputSpace}$ this problem does not occur, since all derivatives of $p$ vanish. The same argument will hold for training densities that can be written as step-functions
\[p(x) = \mySum{s=1}{S} \indicatorFunction{X_s}{x} P_s,\]
where $\inputSpace = X_1 \uplus \ldots \uplus X_S$ is a partition of the input space with constant density values $P_s > 0$.

\end{document}